\documentclass[11pt, letterpaper]{article}

\pdfoutput=1

\usepackage{geometry}
 \geometry{
 a4paper,
 total={170mm,257mm},
 left=20mm,
 top=20mm,
 }





\usepackage[utf8]{inputenc} 
\usepackage[T1]{fontenc}    
\usepackage{hyperref}       
\usepackage{url}            
\usepackage{booktabs}       
\usepackage{amsfonts}       
\usepackage{nicefrac}       
\usepackage{microtype}      
\usepackage{xcolor}         

\usepackage{tcolorbox}
\usepackage{wrapfig}
\definecolor{darkred}{RGB}{150,0,0}
\definecolor{darkgreen}{RGB}{0,150,0}
\definecolor{darkblue}{RGB}{0,0,150}

\usepackage{amsfonts,amsmath}
\usepackage{amssymb}
\usepackage{mathtools}
\usepackage{amsthm}

\newtheorem{theorem}{Theorem}
\newtheorem{lemma}{Lemma}
\newtheorem{corollary}{Corollary}
\newtheorem{definition}{Definition}
\newtheorem{assumption}{Assumption}

\definecolor{darkred}{RGB}{150,0,0}
\definecolor{darkgreen}{RGB}{0,150,0}
\definecolor{darkblue}{RGB}{0,0,200}
\hypersetup{colorlinks=true, linkcolor=darkred, citecolor=darkgreen, urlcolor=darkblue}

\newcommand{\cln}[1]{\textcolor{red}{}}

\newcommand{\Equation}{Equation}
\newcommand{\Figure}{Figure}
\newcommand{\Theorem}{Theorem}

\newcommand{\Aneg}[1]{\A_{-1:#1}^{-1}}

\newcommand{\lbdb}{\boldsymbol{\lambda}}

\newcommand{\boldy}{\mathbf{y}}

\newcommand{\boldd}{\mathbf{d}}

\newcommand{\boldz}{\mathbf{z}}

\newcommand{\boldmu}{\boldsymbol{\mu}}

\newcommand{\bolda}{\mathbf{A}}

\newcommand{\boldm}{\mathbf{M}}

\newcommand{\boldq}{\mathbf{q}}

\newcommand{\boldcapq}{\mathbf{Q}}

\newcommand{\boldv}{\mathbf{v}}

\newcommand{\boldul}{\mathbf{u}}

\newcommand{\boldb}{\mathbf{b}}

\newcommand{\bolddcap}{\mathbf{D}}

\newcommand{\noonesub}{\setminus 1}

\newcommand{\noisub}{\setminus i}

\newcommand{\hatw}{\widehat{\mathbf{w}}}

\newcommand{\hatW}{\widehat{\mathbf{W}}}

\newcommand{\xtx}{\mathbf{X}^T\mathbf{X}}

\newcommand{\evente}{\mathcal{E}}

\newcommand{\dprimen}{p'(n)}

\newcommand{\lp}{\left}

\newcommand{\rp}{\right}

\newcommand{\epsilonn}{\epsilon_n}

\newcommand{\logtwon}{n}

\newcommand{\minklogn}{\min\{\sqrt{k}, \sqrt{\log(2n)}\}}

\newcommand{\minrho}{\tilde{\rho}_{n,k}}




\newcommand{\Wsvm}{\W_{\text{SVM}}}

\newcommand{\Wls}{\W_{\text{MNI}}}
\newcommand{\wls}{\w_{\text{MNI}}}

\newcommand{\cut}[1]{\textcolor{red}{}}
\newcommand{\W}{\mathbf{W}}
\newcommand{\M}{\mathbf{M}}

\newcommand{\Ub}{\mathbf{U}}

\newcommand{\X}{\mathbf{X}}

\newcommand{\Y}{\mathbf{Y}}

\newcommand{\A}{\mathbf{A}}

\newcommand{\Q}{\mathbf{Q}}

\newcommand{\z}{\mathbf{z}}

\newcommand{\x}{\mathbf{x}}
\newcommand{\ub}{\mathbf{u}}
\newcommand{\w}{\mathbf{w}}

\newcommand{\g}{\mathbf{g}}
\newcommand{\vb}{\mathbf{v}}
\newcommand{\V}{\mathbf{V}}

\newcommand{\y}{\mathbf{y}}

\newcommand{\q}{\mathbf{q}}

\newcommand{\Iden}{\mathbf{I}}

\newcommand{\betab}{\boldsymbol{\beta}}
\newcommand{\normonelambda}{\Vert \mathbf{\lambda} \Vert_1}



\newcommand{\Nn}{\mathcal{N}}
\newcommand{\Lc}{\mathcal{L}}

\newcommand{\Ec}{\mathcal{E}}

\newcommand{\Oc}{\mathcal{O}}




\newcommand{\beq}{\begin{equation}}
\newcommand{\eeq}{\end{equation}}
\newcommand{\bea}{\begin{align}}
\newcommand{\eea}{\end{align}}

\newcommand{\vp}{\vspace{4pt}}

\newcommand{\R}{\mathbb{R}}
\newcommand{\E}{\mathbb{E}}
\newcommand{\Pro}{\mathbb{P}}
\newcommand{\nn}{\notag}

\newcommand{\tn}[1]{\|{#1}\|_{2}}


    

  \newcommand{\Sigmab}{\boldsymbol\Sigma}
  \newcommand{\mub}{\boldsymbol\mu}
    
  \newcommand{\la}{{\lambda}}                     
  \newcommand{\eps}{\epsilon}


\newcommand{\wh}{\hat{\w}}

\newcommand{\wt}{\widetilde}

\newcommand{\ones}{\mathbf{1}}

\providecommand{\norm}[1]{\lVert#1\rVert}
\providecommand{\abs}[1]{\lvert#1\rvert}

\newcommand{\nub}{\boldsymbol{\nu}}

\newcommand{\SU}{\mathsf{SU}}
\newcommand{\CN}{\mathsf{CN}}

\newcommand{\Deltabold}{\boldsymbol{\Delta}}
\newcommand{\Deltahat}{\widehat{\Deltabold}}
\newcommand{\Ebold}{\boldsymbol{E}}
\newcommand{\Ehat}{\widehat{\Ebold}}
\newcommand{\C}{\mathbf{C}}
\newcommand{\uv}{\mathbf{u}}

\newcommand{\ehat}{\boldsymbol{{e}}}
\newcommand{\etilde}{\boldsymbol{\widetilde{e}}}

\newcommand{\xibold}{\boldsymbol{\xi}}
\newcommand{\xitilde}{\widetilde{\xibold}}
\newcommand{\Etilde}{\widetilde{\Ebold}}

\title{Benign Overfitting in Multiclass Classification:\\ All Roads Lead to Interpolation}

\author{Ke Wang\footnote{Primary correspondence to: kewang01@ucsb.edu. CT is also affiliated with the Department of Electrical and Computer Engineering, University of California, Santa Barbara.} \quad Vidya Muthukumar$^{\dagger}$ \quad Christos Thrampoulidis$^{\circ}$\\
\\
\small{$^*$Department of Statistics and Applied Probability, University of California Santa Barbara} \\
\small{$^\dagger$Electrical and Computer Engineering \& Industrial and Systems Engineering, Georgia Institute of Technology} \\
\small{$^\circ$Department of Electrical and Computer Engineering, University of British Columbia}
}

\date{}




\begin{document}

\maketitle

\begin{abstract}
The literature on ``benign overfitting'' in overparameterized models has been mostly restricted to regression or binary classification; however, modern machine learning operates in the multiclass setting.
Motivated by this discrepancy, we study benign overfitting in multiclass linear classification. Specifically, we consider the following training algorithms on separable data: (i) empirical risk minimization (ERM) with cross-entropy loss, which converges to the multiclass support vector machine (SVM) solution; (ii) ERM with least-squares loss, which converges to the min-norm interpolating (MNI) solution; and, (iii) the one-vs-all SVM classifier. 
First, we provide a simple sufficient deterministic condition under which \emph{all} three algorithms lead to classifiers that interpolate the training data and have equal accuracy.
When the data is generated from Gaussian mixtures or a multinomial logistic model, this condition holds under high enough effective overparameterization. 
We also show that this sufficient condition is satisfied under ``neural collapse'', a phenomenon that is observed in training deep neural networks.
Second, we derive novel bounds on the accuracy of the MNI classifier, thereby showing that all three training algorithms lead to benign overfitting under sufficient overparameterization.
Ultimately, our analysis shows that good generalization is possible for SVM solutions beyond the realm in which typical margin-based bounds apply.
\end{abstract}

\addtocontents{toc}{\protect\setcounter{tocdepth}{0}}

\section{Introduction}

Modern deep neural networks are overparameterized (high-dimensional) with respect to the amount of training data.
Consequently, they achieve zero training error even on noisy training data, yet generalize well on test data~\cite{zhang2016understanding}.
Recent mathematical analysis has shown that fitting of noise in regression tasks can in fact be relatively benign for linear models that are sufficiently high-dimensional~\cite{bartlett2020benign,belkin2020two,hastie2019surprises,muthukumar2020harmless,kobak2020optimal}. 
These analyses do not directly extend to classification, which requires separate treatment. 
In fact, recent progress on sharp analysis of interpolating binary classifiers~\cite{muthukumar2021classification,chatterji2020finite,wang2020benign,cao2021risk} revealed high-dimensional regimes in which binary classification generalizes well, but the corresponding regression task does \emph{not} work and/or the success \emph{cannot} be predicted by classical margin-based bounds~\cite{schapire1998boosting,bartlett2002rademacher}.

In an important separate development, these same high-dimensional regimes admit an equivalence of loss functions used for optimization at training time.
The support vector machine (SVM), which arises from minimizing the logistic loss using gradient descent~\cite{soudry2018implicit,ji2019implicit}, was recently shown to satisfy a high-probability equivalence to interpolation, which arises from minimizing the squared loss~\cite{muthukumar2021classification,hsu2020proliferation}.
This equivalence suggests that interpolation is ubiquitous in very overparameterized settings, and can arise naturally as a consequence of the optimization procedure even when this is not explicitly encoded or intended.
Moreover, this equivalence to interpolation and corresponding analysis implies that the SVM can generalize even in regimes where classical learning theory bounds are not predictive.
In the logistic model case~\cite{muthukumar2021classification} and Gaussian binary mixture model case~\cite{chatterji2020finite,wang2020benign,cao2021risk}, it is shown that good generalization of the SVM is possible beyond the realm in which classical margin-based bounds apply.
These analyses lend theoretical grounding to the surprising hypothesis that \textit{squared loss can be equivalent to, or possibly even superior}, to the cross-entropy loss for  classification tasks.
Ryan Rifkin provided empirical support for this hypothesis on kernel machines~\cite{rifkin2002everything,rifkin2004defense}; more recently, corresponding empirical evidence has been provided for state-of-the-art deep neural networks~\cite{hui2020evaluation,poggio2020explicit}. 

These perspectives have thus far been limited to regression and \textit{binary} classification settings.
In contrast, most success stories and surprising new phenomena of modern machine learning have been recorded in \emph{multiclass} classification settings, which appear naturally in a host of applications that demand the ability to automatically distinguish between large numbers of different classes.
For example, the popular ImageNet dataset~\cite{russakovsky2015imagenet} contains on the order of $1000$ classes.
Whether a) good generalization beyond effectively low-dimensional regimes where margin-based bounds are predictive is possible, and b) equivalence of squared loss and cross-entropy loss holds in multiclass settings remained open problems.

This paper makes significant progress towards a complete understanding of the optimization and generalization properties of high-dimensional linear multiclass classification, both for unconditional Gaussian covariates (where labels are generated via a multinomial logistic model), and Gaussian mixture models.
Our contributions are listed in more detail below.

\subsection{Our Contributions}
    \begin{wrapfigure}{h}{0.65\textwidth}
\centering
\vspace{-0.3in}
\includegraphics[width=0.65\textwidth]{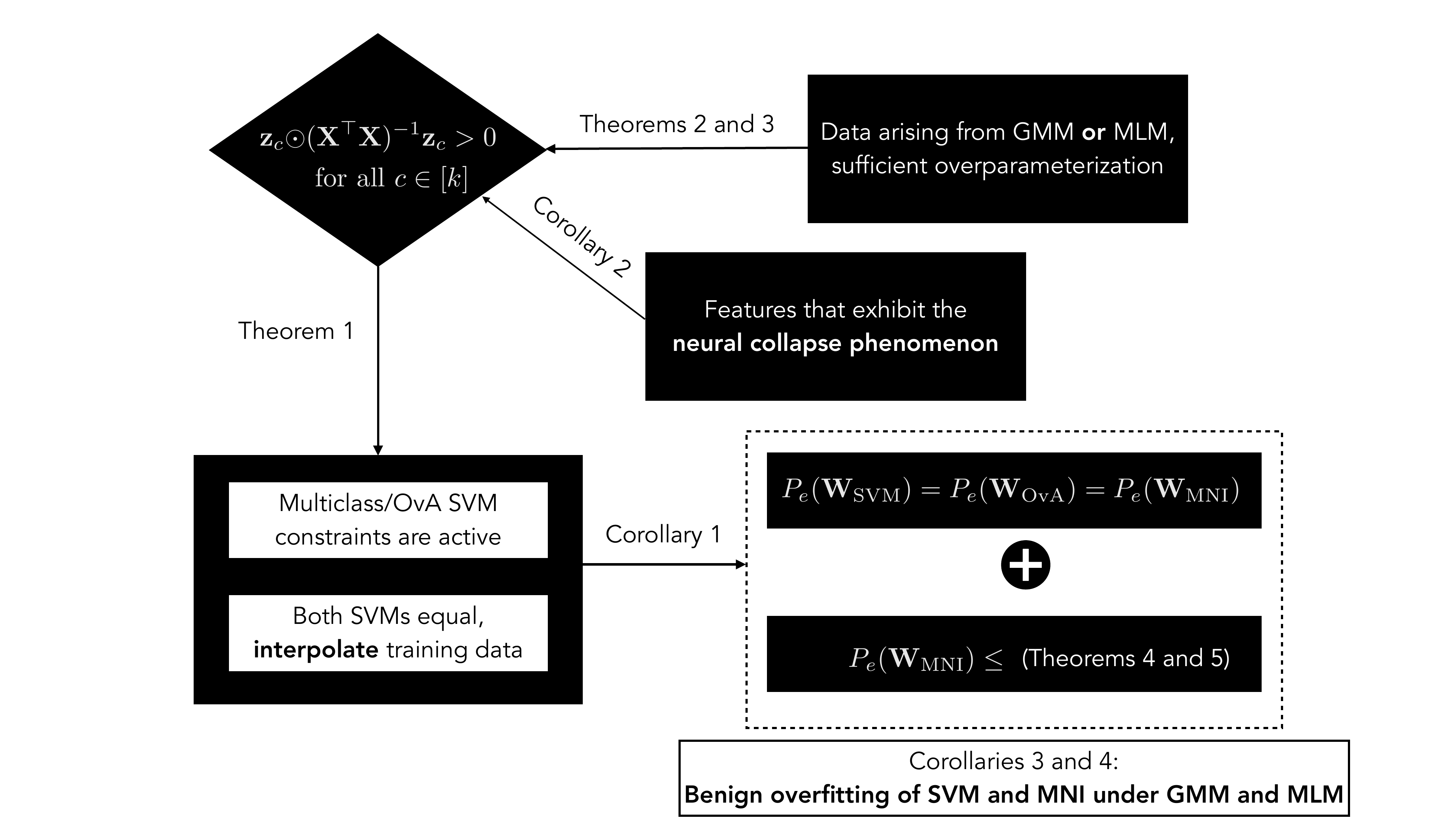}
\caption{Contributions and organization.}
\label{fig:contributions}
\end{wrapfigure}

$\bullet$~~We establish a  \emph{deterministic} sufficient condition under which the multiclass SVM solution has a very simple and symmetric structure: it is identical to the solution of a One-vs-All (OvA) SVM classifier that uses a \emph{simplex-type} encoding for the labels (unlike the classical one-hot encoding).
Moreover, the constraints at both solutions are active. 
Geometrically, this means that all data points are support vectors, and \textit{they interpolate the simplex-encoding vector representation of the labels.} See Figure \ref{fig-barplot} for a numerical illustration confirming our finding. \\
$\bullet$~~This implies a surprising equivalence between traditionally different formulations of multiclass SVM, which in turn are equivalent to the minimum-norm interpolating (MNI) classifier on the one-hot label vectors.
Thus, we show that the outcomes of training with cross-entropy (CE) loss and squared loss are identical in terms of classification error. \\
$\bullet$~~ Next, for data following a Gaussian-mixtures model (GMM) or a Multinomial logistic model (MLM), we show that the above sufficient condition is satisfied with high-probability under sufficient ``effective" overparameterization.
Our sufficient conditions are non-asymptotic and are characterized in terms of the data dimension, the number of classes, and functionals of the data covariance matrix. Our numerical results show excellent agreement with  our theoretical findings.
We also show that the sufficient condition of equivalence of CE and squared losses is satisfied when the ``neural collapse'' phenomenon occurs~\cite{papyan2020prevalence}.
\\
$\bullet$~~ Finally, we provide novel non-asymptotic bounds on the error of the MNI classifier for data generated either from the GMM or the MLM, and identify sufficient conditions under which benign overfitting occurs.
A direct outcome of our results is that benign overfitting occurs under these conditions regardless of whether the cross-entropy loss or squared loss is used during training.

\Figure~\ref{fig:contributions} describes our contributions and their implications through a flowchart.
{To the best of our knowledge, these are the first results characterizing a) equivalence of loss functions, and b) generalization of interpolating solutions in the multiclass setting.} The multiclass setting poses several challenges over and above the recently studied binary case. When presenting our results in later sections, we discuss in detail how our analysis circumvents these challenges.

\subsection{Related Work}


\paragraph{Multiclass classification and the impact of training loss functions} There is a classical body of work on algorithms for multiclass classification, e.g.  \cite{weston1998multi,bredensteiner1999multicategory,dietterich1994solving,crammer2001algorithmic,lee2004multicategory}
and several empirical studies of their comparative performance~\cite{rifkin2004defense,furnkranz2002round,allwein2000reducing} (also see~\cite{hou2016squared,gajowniczek2017generalized,kumar2018robust,bosman2020visualising,demirkaya2020exploring,hui2020evaluation,poggio2020explicit} for recent such studies in the context of deep nets). 
Many of these (e.g. \cite{rifkin2004defense,hui2020evaluation,bosman2020visualising}) have found that least-squares minimization yields competitive test classification performance to cross-entropy minimization. 
\textit{Our proof of equivalence of the SVM and MNI solutions under sufficient overparameterization provides theoretical support for this line of work.}
This is a consequence of the implicit bias of gradient descent run on the CE and squared losses leading to the multiclass SVM~\cite{soudry2018implicit,ji2019implicit} and MNI~\cite{engl1996regularization} respectively.
Numerous classical works investigated consistency \cite{zhang2004statistical,lee2004multicategory,tewari2007consistency,pires2013cost,pires2016multiclass} and finite-sample behavior, e.g.,  \cite{koltchinskii2002empirical,cortes2016structured,lei2015multi,maurer2016vector,lei2019data,maximov2016tight}  of multiclass classification algorithms in the underparameterized regime. 
{In contrast, our primary focus lies in the highly overparameterized regime, where conventional techniques of uniform convergence are inadequate. In Section IV, we elaborate on why classical training-data-dependent bounds based on margin or Rademacher complexity, are insufficient in this regime and cannot yield conclusions about benign overfitting. Recently, in \cite{abramovich2021multiclass}, the authors have studied the problem of feature selection in high-dimensional multiclass classification, identifying an intriguing phase transition as the number of classes increases with dimensions. Our work differs from theirs in that our bounds are relevant to CE minimization without explicit regularization, whereas \cite{abramovich2021multiclass} focuses on CE loss minimization with sparsity penalties to achieve feature selection.}

\paragraph{Binary classification error analyses in overparameterized regime} 
The recent wave of analyses of the minimum-$\ell_2$-norm interpolator (MNI) in high-dimensional linear regression (beginning with~\cite{bartlett2020benign,belkin2020two,hastie2019surprises,muthukumar2020harmless,kobak2020optimal}) prompted researchers to consider to what extent the phenomena of benign overfitting and double descent~\cite{belkin2019reconciling,geiger2020scaling} can be proven to occur in classification tasks.
Even the binary classification setting turns out to be significantly more challenging to study owing to the discontinuity of the 0-1 test loss function. 
Sharp asymptotic formulas for the generalization error of binary classification algorithms in the linear high-dimensional regime have been  derived in several recent works \cite{
huang2017asymptotic,
sur2019modern,mai2019large,salehi2019impact,taheri2020sharp,taheri2020fundamental,
deng2019model,montanari2019generalization,svm_abla,liang2020precise,salehi2020performance,aubin2020generalization,lolas2020regularization,dhifallah2020precise}.
These formulas are solutions to complicated nonlinear systems of equations that typically do not admit closed-form expressions.
A separate line of work provides non-asymptotic error bounds for both the MNI classifier and the SVM classifier~\cite{chatterji2020finite,muthukumar2021classification,wang2020benign,cao2021risk}; in particular,~\cite{muthukumar2021classification} analyzed the SVM in a Gaussian covariates model by explicitly connecting its solution to the MNI solution.
Subsequently,~\cite{wang2020benign} also took this route to analyze the SVM and MNI in mixture models, and even more recently, \cite{cao2021risk} provided extensions of this result to sub-Gaussian mixtures.
While these non-asymptotic analyses are only sharp in their dependences on the sample size $n$ and the data dimension $p$, they provide closed-form generalization expressions in terms of easily interpretable summary statistics.
Interestingly, these results imply good generalization of the SVM beyond the regime in which margin-based bounds are predictive.
Specifically,~\cite{muthukumar2021classification} identifies a separating regime for Gaussian covariates in which corresponding regression tasks would not generalize.
In the Gaussian mixture model, margin-based bounds~\cite{schapire1998boosting,bartlett2002rademacher} (as well as corresponding recently derived mistake bounds on interpolating classifiers~\cite{liang2021interpolating}) would require the intrinsic signal-to-noise-ratio (SNR) to scale at least as $\omega(p^{1/2})$ for good generalization; however, the analyses of~\cite{wang2020benign,cao2021risk} show that good generalization is possible for significantly lower SNR scaling as $\omega(p^{1/4})$.
The above error analyses are specialized to the binary case, where closed-form error expressions are easy to derive~\cite{muthukumar2021classification}.
The only related work applicable to the multiclass case is \cite{thrampoulidis2020theoretical}, which also highlights the numerous challenges of obtaining a sharp error analysis in multiclass settings.
Specifically,~\cite{thrampoulidis2020theoretical} derived sharp generalization formulas for multiclass least-squares in underparameterized settings; extensions to the overparameterized regime and other losses beyond least-squares remained open. Finally, \cite{kini2021phase} recently derived sharp phase-transition thresholds for the feasibility of OvA-SVM on multiclass Gaussian mixture data in the linear high-dimensional regime. However, this does not address the more challenging multiclass-SVM that we investigate here. {To summarize, our paper presents the first generalization bounds for the multiclass-SVM classifier that establish conditions for benign overfitting in the high-dimensional regime. In the process, we establish a connection between multiclass-SVM and multi-class MNI, which poses unique challenges due to the non-uniqueness of defining support vectors in multiclass settings. Our work highlights the richness of the multiclass setting compared to the binary setting, as we demonstrate the equivalence not only for GMM and MLM data but also for data following a simplex equiangular tight-frame (ETF) structure. This geometry structure is only relevant in multiclass settings and arises when training deep-net classifiers with CE loss beyond the zero-training error regime \cite{papyan2020prevalence}.}

\paragraph{Other SVM analyses} The number of support vectors in the \textit{binary} SVM has been characterized in low-dimensional separable and non-separable settings~\cite{dietrich1999statistical,buhot2001robust,malzahn2005statistical} and scenarios have been identified in which there is a vanishing fraction of support vectors, as this implies good generalization\footnote{In this context, the fact that~\cite{muthukumar2021classification,wang2020benign,cao2021risk} provide good generalization bounds in the regime where support vectors proliferate is particularly surprising. In conventional wisdom, a proliferation of support vectors was associated with overfitting but this turns out to not be the case here.}
 via PAC-Bayes sample compression bounds~\cite{vapnik2013nature,graepel2005pac,germain2011pac}.
In the highly overparameterized regime that we consider, perhaps surprisingly, the opposite behavior occurs: \textit{all training points become support vectors with high probability}~\cite{dietrich1999statistical,buhot2001robust,malzahn2005statistical,muthukumar2021classification,hsu2020proliferation}.
In particular,~\cite{hsu2020proliferation}  provided sharp non-asymptotic sufficient conditions for this phenomenon for both isotropic and anisotropic settings.
The techniques in~\cite{muthukumar2021classification,hsu2020proliferation} are highly specialized to the binary SVM and its dual, where a simple complementary slackness condition directly implies the property of interpolation.
In contrast, the complementary slackness condition for the case of multiclass SVM \textit{does not} directly imply interpolation; in fact, the operational meaning of ``all training points becoming support vectors'' is unclear in the multiclass SVM.
\textit{Our proof of deterministic equivalence goes beyond the complementary slackness condition and uncovers a surprising symmetric structure by showing equivalence of multiclass SVM to a simplex-type OvA classifier.} The simplex equiangular tight frame structure that we uncover is somewhat reminiscent of the recently observed neural collapse phenomenon in deep neural networks~\cite{papyan2020prevalence}; indeed, \textit{Section~\ref{sec:ncconnection} shows an explicit connection between our deterministic equivalence condition and the neural collapse phenomenon.}
Further,~\cite{muthukumar2021classification,hsu2020proliferation} focus on proving deterministic conditions for equivalence in the case of labels generated from covariates; the mixture model case (where covariates are generated from labels) turns out to be significantly more involved {due to the \emph{anisotropic} data covariance matrix resulting from even from isotropic noise covariance \cite{wang2020benign,cao2021risk}. As we explain further in Section \ref{sec:linksvm}, the mean vectors of the mixture model introduce  an additional rank-$k$ component that complicates the analysis and requires new ideas.}

\subsection{Organization}

The paper is organized as follows.
Section~\ref{sec:setup} describes the problem setting and sets up notation.
Section~\ref{sec:support-vectors} presents our main results on the equivalence between the multiclass SVM and MNI solutions for two data models: the Gaussian mixture model (GMM) and the multinomial logistic model (MLM). In the same section, we also show the equivalence under the Neural Collapse phenomenon.
Section~\ref{sec:error} presents our error analysis of the MNI solution (and, by our proved equivalence, the multiclass SVM) for the GMM and the MLM, and Section~\ref{sec:benignoverfitting} presents consequent conditions for benign overfitting of multiclass classification.
Finally, Section~\ref{sec:proofsmaintext} presents proofs of our main results; auxiliary proofs are deferred to the appendices.
Please refer to the table of contents (before the appendices) for a more detailed decomposition of results and proofs.

\paragraph*{Notation}~For a vector $\vb \in \mathbb{R}^p$ , let $\tn{\vb} = \sqrt{\sum_{i=1}^p v_i^2}$, $\Vert \vb \Vert_1 = {\sum_{i=1}^p |v_i|}$, $\Vert \vb \Vert_{\infty} = \max_{i}\{|v_i|\}$. $\vb>\mathbf{0}$ is interpreted elementwise. $\ones_m$ / $\mathbf{0}_m$ denote the all-ones / all-zeros vectors of dimension $m$ and $\mathbf{e}_i$ denotes the $i$-th standard basis vector. For a matrix $\mathbf{M}$, $\Vert \mathbf{M} \Vert_2$ denotes its $2 \to 2$ operator norm and $\Vert \mathbf{M} \Vert_F$ denotes the Frobenius norm. $\odot$ denotes the Hadamard product. [$n$] denotes the set $\{1,2,...,n\}$. We also use standard ``Big O'' notations $\Theta(\cdot)$, $\omega(\cdot)$, e.g. see \cite[Chapter 3]{cormen2009introduction}. Finally, we write $\Nn(\boldsymbol{\mu},\Sigmab)$ for the (multivariate) Gaussian distribution of mean $\boldsymbol{\mu}$ and covariance matrix $\Sigmab$, and, $Q(x)=\mathbb{P}(Z>x),~Z\sim\Nn(0,1)$ for the Q-function of a standard normal. Throughout, constants refer to strictly positive numbers that do not depend on the problem dimensions $n$ or $p$. 

\section{Problem setting}\label{sec:setup}
We consider the multiclass classification problem with $k$ classes. 
Let $\x \in \mathbb{R}^p$ denote the feature vector and $y \in [k]$ represent the class label associated with one of the $k$ classes. 
We assume that the training data has $n$ feature$/$label pairs $\{\x_i, y_i\}_{i=1}^n$. 
We focus on the overparameterized regime, i.e. $p > Cn$, and we will frequently consider $p \gg n$.
For convenience, we express the labels using the one-hot coding vector $\y_i \in \mathbb{R}^{k}$, where only the $y_i$-th entry of $\y_i$ is $1$ and all other entries are zero, i.e. $\y_i = \boldsymbol{e}_{y_i}$. 
With this notation, the feature and label matrices are given in compact form as follows: 
$\X = \begin{bmatrix} \x_1 & \x_2 & \cdots & \x_n
    \end{bmatrix} \in \mathbb{R}^{p \times n}$ and $\Y = \begin{bmatrix}
    \y_1 & \y_2 & \cdots & \y_n
    \end{bmatrix} = \begin{bmatrix}
    \vb_1 & \vb_2 & \cdots  \vb_k
    \end{bmatrix}^T
    \in \mathbb{R}^{k \times n},$
where we have defined $\vb_c \in \mathbb{R}^{n},c\in [k]$ to denote the $c$-th row of the matrix $\Y$. 

\subsection{Data models}
We assume that the data pairs $\{\x_i, y_i\}_{i=1}^n$ are independently and identically distributed (IID). We will consider two models for the distribution of $(\x, y)$.
For both models, we define the mean vectors $\{\boldmu_j\}_{j=1}^k \in \mathbb{R}^p$, and the mean matrix is given by
$
    \M := \begin{bmatrix}
    \boldmu_1 & \boldmu_2 & \cdots & \boldmu_k
    \end{bmatrix} \in \mathbb{R}^{p \times k}.$
    
\paragraph{Gaussian Mixture Model (GMM)} In this model, the mean vector $\boldmu_i$ represents the conditional mean vector for the $i$-th class. Specifically, each observation $(\x_i, y_i)$ belongs to to class $c \in [k]$ with probability $\pi_c$ and conditional on the label $y_i$, $\x_i$ follows  a multivariate Gaussian distribution. In summary, we have
\begin{align}
\label{def-gmm}
    \mathbb{P}(y = c) = \pi_c ~~ \text{and} ~~ \x = \boldmu_{y} + \q, ~\q \sim \Nn(\boldsymbol{0}, \Sigmab).
\end{align}
In this work, we focus on the isotropic case $\Sigmab = \Iden_p$. 
Our analysis can likely be extended to the more general anisotropic case, but we leave this to future work. 

\paragraph{Multinomial Logit Model (MLM)} In this model, the feature vector $\x \in \mathbb{R}^p$ is distributed as $\Nn(\boldsymbol{0}, \Sigmab)$, and the conditional density of the class label $y$ is given by the soft-max function. Specifically, we have
\begin{align}
\label{def:mlm}
    \x \sim \Nn(\boldsymbol{0}, \Sigmab) ~~ \text{and} ~~ \mathbb{P}(y = c|\x) = \frac{\exp(\boldmu_c^T\x)}{\sum_{j \in [k]}\exp(\boldmu_j^T\x)}.
\end{align}
For this model, we analyze both the isotropic and anisotropic cases.

\subsection{Data separability}
We consider linear classifiers parameterized by 
$
    \W = \begin{bmatrix}\w_1 & \w_2 & \cdots & \w_k
    \end{bmatrix}^T \in \mathbb{R}^{k \times p}.
$
Given input feature vector $\x$, the classifier is a function that maps $\x$ into an output of $k$ via $\x \mapsto \W\x \in \R^{k}$ (for simplicity, we ignore the bias term throughout).
We will operate in a regime where the training data are linearly separable. 
In multiclass settings, there exist multiple notions of separability. Here, we focus on (i) multiclass separability (also called $k$-class separability) (ii) one-vs-all (OvA) separability, and, recall their definitions below.

\begin{definition}[multiclass and OvA separability] The dataset $\{\x_i,y_i\}_{i\in[n]}$ is multiclass linearly separable when 
\begin{align}
    \exists\W \,:\, (\w_{y_i}-\w_c)^T\x_i\geq 1,~\forall {c\neq y_i, c\in[k]}, \text{ and } \forall i\in[n].
\end{align}
The dataset is one-vs-all (OvA) separable when 
\begin{align}
    \exists\W \,:\,  \w_{c}^T\x_i \begin{cases}
    \geq 1 \text{ if } y_i=c  \\
    \leq -1 \text{ if } y_i\neq c
    \end{cases}
    , \forall {c\in[k]}, \text{ and } \forall i\in[n].
\end{align}
\end{definition}
Under both data models of the previous section (i.e. GMM and MLM), we have $\rm{rank}(\X)=n$ almost surely in the overparameterized regime $p > n$.
This directly implies OvA separability.
It turns out that OvA separability implies multiclass separability, but not vice versa (see \cite{bennett1994multicategory} for a counterexample).

\subsection{Classification error}
Consider a linear classifier $\hatW$ and a fresh sample $(\x, y)$ generated following the same distribution as the training data. 
As is standard, we predict $\hat{y}$ by a ``winner takes it all strategy", i.e. $\hat{y} = \arg\max_{j \in [k]}\hatw_j^T\x$. 
Then, the classification error conditioned on the true label being $c$, which we refer to as the \textit{class-wise classification error}, is defined as
\begin{align}
\label{eq:classerror}
\Pro_{e|c}:=\Pro(\hat{y} \ne y|y =c) = \Pro(\hatw_c^T\x \le \max_{j \ne c}\hatw_j^T\x).
\end{align}
In turn, the \textit{total classification error} is defined as
\begin{align}
\label{eq:totalerror}
\Pro_e := \Pro(\hat{y} \ne y) = \Pro(\arg\max_{j \in [k]}\hatw_j^T\x \ne y) = \Pro(\hatw_y^T\x \le \max_{j \ne y}\hatw_j^T\x).
\end{align}

\subsection{Classification algorithms}
\label{subsec:alg}
Next, we review several different training strategies for which we characterize the total/class-wise classification error in this paper.

\paragraph{Multiclass SVM} Consider training $\W$ by minimizing the cross-entropy (CE) loss
\begin{align*}
    \Lc(\W):=-\log\left(\frac{{e^{\w_{y_i}^T\x_i}}}{{\sum_{c\in[k]}e^{\w_c^T\x_i}}}\right)
\end{align*}
with the gradient descent algorithm (with constant step size $\eta$).
In the separable regime, the CE loss $\Lc(\W)$ can be driven to zero. Moreover,~\cite[Thm.~7]{soudry2018implicit} showed that the normalized iterates $\{\W^t\}_{t\geq 1}$ converge as
\begin{align*}
\lim_{t\rightarrow\infty}\big\|\frac{\W^t}{\log{t}} - \Wsvm\big\|_F=0,
\end{align*} 
where $\Wsvm$ is the solution of the \emph{multiclass SVM} \cite{weston1998multi} given by
\begin{align}\label{eq:k-svm}
    \Wsvm:=\arg\min_\W \norm{\W}_F\quad\text{sub. to}~~ (\w_{y_i}-\w_c)^T\x_i\geq1,~\forall i \in [n], c \in [k] \text{ s.t. } c \neq y_i .
\end{align}
It is important to note that the normalizing factor $\log{t}$ here does $\emph{not}$ depend on the class label; hence, 
in the limit of GD iterations, the  solution $\W^t$ decides the same label as multiclass SVM for any test sample.

\paragraph{One-vs-all SVM} 
In contrast to \Equation~\eqref{eq:k-svm}, which optimizes the hyperplanes $\{\w_c\}_{c \in [k]}$ jointly, the one-vs-all (OvA)-SVM classifier solves $k$ separable optimization problems that maximize the margin of each class with respect to all the rest. Concretely, the OvA-SVM solves the following optimization problem for all $c \in [k]$:
\begin{align}
\label{eq:ova-svm}
    \w_{{\rm OvA},c}:=\arg\min_{\w} \tn{\w}\quad\text{sub. to}~~ \w^T\x_i \begin{cases} \ge 1,&~\text{if}~\y_i =c, \\ \le -1,&~\text{if}~\y_i\neq c,\end{cases} ~~\forall i\in[n].
\end{align}
In general, the solutions to Equations~\eqref{eq:k-svm} and \eqref{eq:ova-svm} are different. 
While the OvA-SVM does not have an obvious connection to any training loss function, its relevance will become clear in Section~\ref{sec:support-vectors}. 
Perhaps surprisingly, we will prove that in the highly overparameterized regime the multiclass SVM solution is identical to a slight variant of \eqref{eq:ova-svm}.

\paragraph{Min-norm interpolating (MNI) classifier} An alternative to the CE loss is the square loss $\Lc(\W):=\frac{1}{2n}\|\Y-\W\X\|_2^2=\frac{1}{2n}\sum_{i=1}^n\tn{\W\x_i - \y_i}^2$.
Since the square loss is tailored to regression, it might appear that the CE loss is more appropriate for classification. Perhaps surprisingly, one of the main messages of this paper is that under sufficient effective overparameterization the two losses actually have equivalent performance.
Our results lend theoretical support to empirical observations of competitive classification accuracy between the square loss and CE loss in practice~\cite{rifkin2002everything,hui2020evaluation,poggio2020explicit}.

Towards showing this, we note that when the linear model is overparameterized (i.e. $p>n$) and assuming $\rm{rank}(\X)=n$ (e.g this holds almost surely under both the GMM and MLM), the data can be linearly interpolated, i.e. the square-loss can be driven to zero. 
Then, it is well-known~\cite{engl1996regularization} that gradient descent with sufficiently small step size and appropriate initialization converges to the minimum-norm -interpolating (MNI) solution, given by:
\begin{align}
\label{eq:lsminnormsolution}
    \Wls := \arg\min_{\W}\norm{\W}_F, ~~ \text{sub. to}~~\X^T\w_c = \vb_c, \forall c\in[k].
\end{align}
Since $\X^T\X$ is invertible, the MNI solution is given in closed form as $\Wls^T=\X(\X^T\X)^{-1}\Y^T$. From here on, we refer to \eqref{eq:lsminnormsolution} as the MNI classifier. 

\section{Equivalence of solutions and geometry of support vectors}
\label{sec:support-vectors}
In this section, we show the equivalence of the solutions of the three classifiers defined above in certain high-dimensional regimes.

\subsection{A key deterministic condition}\label{sec:det_con}
We first establish a key deterministic property of SVM that holds for \emph{generic} multiclass datasets $(\X,\Y)$ (i.e. not necessarily generated by either the GMM or MLM).
Specifically, \Theorem~\ref{lem:key} below derives a sufficient condition (cf.~\eqref{eq:det-con}) under which the multiclass SVM solution has a surprisingly simple structure. 
First, the constraints are \emph{all} active at the optima (cf.~\eqref{eq:equality_ksvm}). 
Second, and perhaps more interestingly, this happens in a very specific way; the feature vectors interpolate a simplex representation of the multiclass labels, as specified below:
\begin{align}\label{eq:symmetry-constraint-scalar}
\hat\w_{c}^T\x_i=z_{ci}:=\begin{cases}
\frac{k-1}{k}  &,~ c=y_i \\
-\frac{1}{k} &,~ c\neq y_i
\end{cases}\, \text{ for all } i \in [n], c \in [k].
\end{align}
To interpret this, define an adjusted $k$-dimensional label vector $\widetilde{\y}_i:=[z_{1i},z_{2i},\ldots,z_{ki}]^T$ for each training sample $i \in [n]$.
This can be understood as a $k$-dimensional vector encoding of the original label $y_i$ that is different from the classical one-hot encoding representation $\y_i$; in particular, it has entries either $-1/k$ or $1-1/k$ (rather than $0$ or $1$). 
We call this new representation a \emph{simplex representation}, based on the following observation. 
Consider $k$ data points that each belong to a different class $1,\ldots,k$, and their corresponding vector representations $\widetilde{\y}_1,\ldots,\widetilde{\y}_k$. Then, it is easy to verify that the vectors $\{\mathbf{0},\widetilde{\y}_1,\ldots,\widetilde{\y}_k\}$ are affinely independent; hence, they form the vectices of a $k$-simplex.

\begin{theorem}\label{lem:key}
For a  multiclass separable dataset with feature matrix $\X=[\x_1,\x_2,\ldots,\x_n]\in\R^{p\times n}$
and label matrix $\Y=[\vb_1,\vb_2,\ldots,\vb_k]^T\in\R^{k\times n}$, denote by $\Wsvm=[\wh_1,\wh_2,\ldots,\wh_k]^T$ the multiclass SVM solution of~\eqref{eq:k-svm}. For each class $c\in[k]$ define vectors $\z_c\in\R^n$  such that
\begin{align}\label{eq:zc}
    \z_c = \vb_c - \frac{1}{k}\ones_n,~c\in[k].
\end{align}
Let $(\X^T\X)^{+}$ be the Moore-Penrose generalized inverse\footnote{Most of the regimes that we study are ultra-high-dimensional (i.e. $p \gg n$), and so $\X^T\X$ is invertible with high probability. Consequently, $(\X^T\X)^{+}$ can be replaced by $(\X^T\X)^{-1}$ in these cases.} of the Gram matrix $\X^T\X$
and assume that the following condition holds
\begin{align}\label{eq:det-con}
    \z_c\odot (\X^T\X)^{+}\z_c > \mathbf{0},\quad \forall c\in[k].
\end{align}
Then, the SVM solution $\Wsvm$ is such that all the constraints in~\eqref{eq:k-svm} are active. That is,
\begin{align}\label{eq:equality_ksvm}
(\wh_{y_i}-\wh_c)^T\x_i= 1,~\forall {c\neq y_i, c\in[k]}, \text{ and }~ \forall i\in[n].
\end{align}
Moreover, the features interpolate the simplex representation. That is,
\begin{align}\label{eq:symmetry-constraint}
    \X^T\wh_c = \z_c,~\forall c\in[k].
\end{align}
\end{theorem}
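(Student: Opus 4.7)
The plan is to exhibit a candidate primal solution that matches \eqref{eq:symmetry-constraint} exactly, and then use the KKT conditions of the convex QP \eqref{eq:k-svm} to certify it as the unique optimum under the deterministic condition \eqref{eq:det-con}.

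\textbf{Step 1 (Candidate construction).} For each class $c\in[k]$, define
\[
\w_c^{\star} := \X (\X^T\X)^{+}\z_c,\qquad \betab_c := (\X^T\X)^{+}\z_c \in\R^n,
\]
so that $\w_c^\star\in\mathrm{col}(\X)$ and $\X^T\w_c^\star = \z_c$ (using $\X^T\X (\X^T\X)^+\z_c = \z_c$ since $\z_c\in\mathrm{row}(\X)$, which is automatic because the overparameterization gives $\mathrm{rank}(\X)=n$). In particular, $(\w_{y_i}^\star - \w_c^\star)^T\x_i = z_{y_i,i} - z_{c,i} = \tfrac{k-1}{k}-(-\tfrac{1}{k}) = 1$ for all $c\neq y_i$, so the candidate is feasible and meets all $(k-1)n$ constraints with equality. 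This already gives \eqref{eq:equality_ksvm} and \eqref{eq:symmetry-constraint}, \emph{provided} we can show $\W^{\star}=\Wsvm$.

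\textbf{Step 2 (Reducing SVM KKT to sign and zero-sum conditions).} The Lagrangian of \eqref{eq:k-svm} introduces multipliers $\lambda_{i,c}\geq 0$ for every $(i,c)$ with $c\neq y_i$. Stationarity in $\w_c$ yields
\[
\w_c \;=\; \sum_{i:\,y_i=c}\Big(\sum_{c'\neq c}\lambda_{i,c'}\Big)\x_i \;-\; \sum_{i:\,y_i\neq c}\lambda_{i,c}\,\x_i \;=\; \X\boldsymbol{\alpha}_c,
\]
where $\alpha_{i,c}$ is forced to equal $\sum_{c'\neq y_i}\lambda_{i,c'}$ when $y_i=c$, and $-\lambda_{i,c}$ when $y_i\neq c$. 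Since $\w_c^\star$ is the unique element of $\mathrm{col}(\X)$ mapping to $\z_c$ under $\X^T$, matching $\X\boldsymbol{\alpha}_c = \w_c^\star$ forces $\boldsymbol{\alpha}_c=\betab_c$. Hence the existence of valid multipliers reduces to two conditions on $\betab_c$: \emph{(i)} a sign condition $\beta_{c,i}\,z_{c,i}\geq 0$ for every $i,c$ (so that $\lambda_{i,c}=-\beta_{c,i}\geq 0$ when $y_i\neq c$, and $\beta_{y_i,i}\geq 0$ on the diagonal); and \emph{(ii)} a consistency condition $\sum_{c\in[k]}\beta_{c,i}=0$ for every $i$ (so that the value $\alpha_{i,y_i}$ dictated by the $c=y_i$ equation agrees with the sum $\sum_{c'\neq y_i}\lambda_{i,c'}$ dictated by the other equations).

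\textbf{Step 3 (Verifying (i) and (ii) from the hypotheses).} Condition (i) is precisely the hypothesis \eqref{eq:det-con}, since $\betab_c\odot\z_c = \z_c\odot(\X^T\X)^{+}\z_c > \mathbf{0}$. For condition (ii), I would use the identity
\[
\sum_{c=1}^{k}\z_c \;=\; \sum_{c=1}^{k}\vb_c \,-\, \ones_n \;=\; \Y^T\ones_k - \ones_n \;=\; \mathbf{0}_n,
\]
which follows from the one-hot structure of $\Y$. Applying $(\X^T\X)^{+}$ gives $\sum_c\betab_c=\mathbf{0}_n$, i.e., $\sum_c\beta_{c,i}=0$ for each $i$, which is (ii). Thus I can set $\lambda_{i,c}^\star := -\beta_{c,i}$ for $c\neq y_i$; these are nonnegative by (i), and the stationarity identity for the diagonal entry $c=y_i$ reads $\beta_{y_i,i} = \sum_{c'\neq y_i}(-\beta_{c',i})$, which is exactly (ii).

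\textbf{Step 4 (Optimality and conclusion).} With $(\W^\star,\{\lambda_{i,c}^\star\})$ I have primal feasibility (Step 1), dual feasibility ($\lambda_{i,c}^\star\geq 0$), stationarity (Step 2--3), and complementary slackness (all constraints active). Since \eqref{eq:k-svm} is a strictly convex program with a unique minimizer, KKT sufficiency gives $\Wsvm=\W^\star$, which yields both \eqref{eq:equality_ksvm} and \eqref{eq:symmetry-constraint}. The main technical subtlety is the bookkeeping of the multiclass Lagrangian in Step 2 --- specifically, showing that the \emph{single} vector $\betab_c$ produced by our candidate must simultaneously serve the role of $\sum_{c'\neq y_i}\lambda_{i,c'}$ on its positive-sign entries and of $-\lambda_{i,c}$ on its negative-sign entries, which is exactly why both the sign condition \eqref{eq:det-con} \emph{and} the zero-sum identity $\sum_c\z_c=\mathbf{0}$ are needed.
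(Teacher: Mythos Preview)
Your proposal is correct and rests on the same two algebraic identities as the paper's proof: the sign condition $\z_c\odot(\X^T\X)^{+}\z_c>\mathbf{0}$ (hypothesis \eqref{eq:det-con}) and the zero-sum identity $\sum_{c}\z_c=\mathbf{0}$ coming from the one-hot structure of $\Y$. The reparameterization $\beta_{c,i}\leftrightarrow\lambda_{i,c}$ you use in Step~2 is exactly the paper's ``key alternative parameterization of the dual.''

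The route, however, is genuinely more direct than the paper's. The paper works on the \emph{dual} side: it rewrites the dual in the $\betab_c$ variables, shows the unconstrained maximizer $\hat\betab_c=(\X^T\X)^{+}\z_c$ is feasible (using (i) and (ii)), and invokes complementary slackness to obtain \eqref{eq:equality_ksvm}. It then needs a \emph{separate} argument for \eqref{eq:symmetry-constraint}: it introduces the auxiliary simplex OvA problem \eqref{eq:sym-cs-svm-sketch}, shows its dual coincides with the SVM dual, and uses strong duality plus strict convexity to conclude $\Wsvm=\W_{\text{OvA}}$. Your primal KKT verification collapses these two stages into one: since your candidate $\W^\star$ satisfies $\X^T\w_c^\star=\z_c$ by construction, \eqref{eq:symmetry-constraint} is immediate once KKT sufficiency certifies $\W^\star=\Wsvm$, and \eqref{eq:equality_ksvm} follows from \eqref{eq:symmetry-constraint}. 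What the paper's approach buys is an explicit view of the dual geometry (in particular, the equivalence of the multiclass SVM and simplex OvA duals, which has independent interest); what your approach buys is brevity and the avoidance of the OvA detour. Note your argument silently uses $\mathrm{rank}(\X)=n$ to get injectivity of $\boldsymbol{\alpha}\mapsto\X\boldsymbol{\alpha}$ in Step~2; this is consistent with the paper's setting but worth stating explicitly.
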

For $k=2$ classes, it can be easily verified that \Equation~\eqref{eq:det-con} reduces to the condition in \Equation~(22) of~\cite{muthukumar2021classification} for the binary SVM. 
Compared to the binary setting, the conclusion for the multiclass case is richer: provided that \Equation~\eqref{eq:det-con} holds, we show that not only are all data points support vectors, but also, they satisfy a set of simplex OvA-type constraints as elaborated above. 
The proof of \Equation~\eqref{eq:symmetry-constraint} is particularly subtle and involved: unlike in the binary case, it does \emph{not} follow directly from a complementary slackness condition on the dual of the multiclass SVM.
A key technical contribution that we provide to remedy this issue is a novel reparameterization of the SVM dual.
The complete proof of Theorem~\ref{lem:key} and this reparameterization is provided in Section~\ref{sec:key_thm_proof_sketch}.

We make a few additional remarks on the interpretation of \Equation~\eqref{eq:symmetry-constraint}.  

First, our proof shows a somewhat stronger conclusion: when \Equation~\eqref{eq:det-con} holds, the multiclass SVM solutions $\hat\w_c, c\in[k]$ are same as the solutions to the following \emph{simplex OvA-type classifier} (cf. \Equation~\eqref{eq:ova-svm}):
\begin{align}\label{eq:sym-cs-svm-sketch}
\min_{\w_c}~\frac{1}{2}\|\w_c\|_2^2\qquad\text{sub. to}~~~ \x_i^T\w_c\begin{cases}\geq \frac{k-1}{k} &, y_i=c,\\
\leq -\frac{1}{k} &, y_i\neq c,\end{cases}    ~~~\forall i\in[n],
\end{align}
for all $c\in[k]$.
We note that the OvA-type classifier above can also be interpreted as a binary cost-sensitive SVM classifier~\cite{masnadi2012cost} that enforces the margin corresponding to all other classes to be $(k-1)$ times smaller compared to the margin for the labeled class of the training data point.
This simplex structure is illustrated in \Figure~\ref{fig-barplot}, which evaluates the solution of the multiclass SVM on a $4$-class Gaussian mixture model with isotropic noise covariance. The mean vectors are set to be mutually orthogonal and equal in norm, with SNR $\tn{\boldmu} = 0.2\sqrt{p}$. 
We also set $n = 50$, $p = 1000$ to ensure sufficient effective overparameterization (in a sense that will be formally defined in subsequent sections).
\Figure~\ref{fig-barplot} shows the inner product $\hatw_c^T\x$ drawn from 8 samples. These inner products are consistent with the simplex OvA structure defined in \Equation~\eqref{eq:symmetry-constraint}, i.e. $\hatw_c^T\x_i = 3/4$ if $y_i = c$ and $\hatw_c^T\x_i = -1/4$ if $y_i \ne c$.
\begin{figure}[t]
    \centering
    \includegraphics[width=0.9\textwidth]{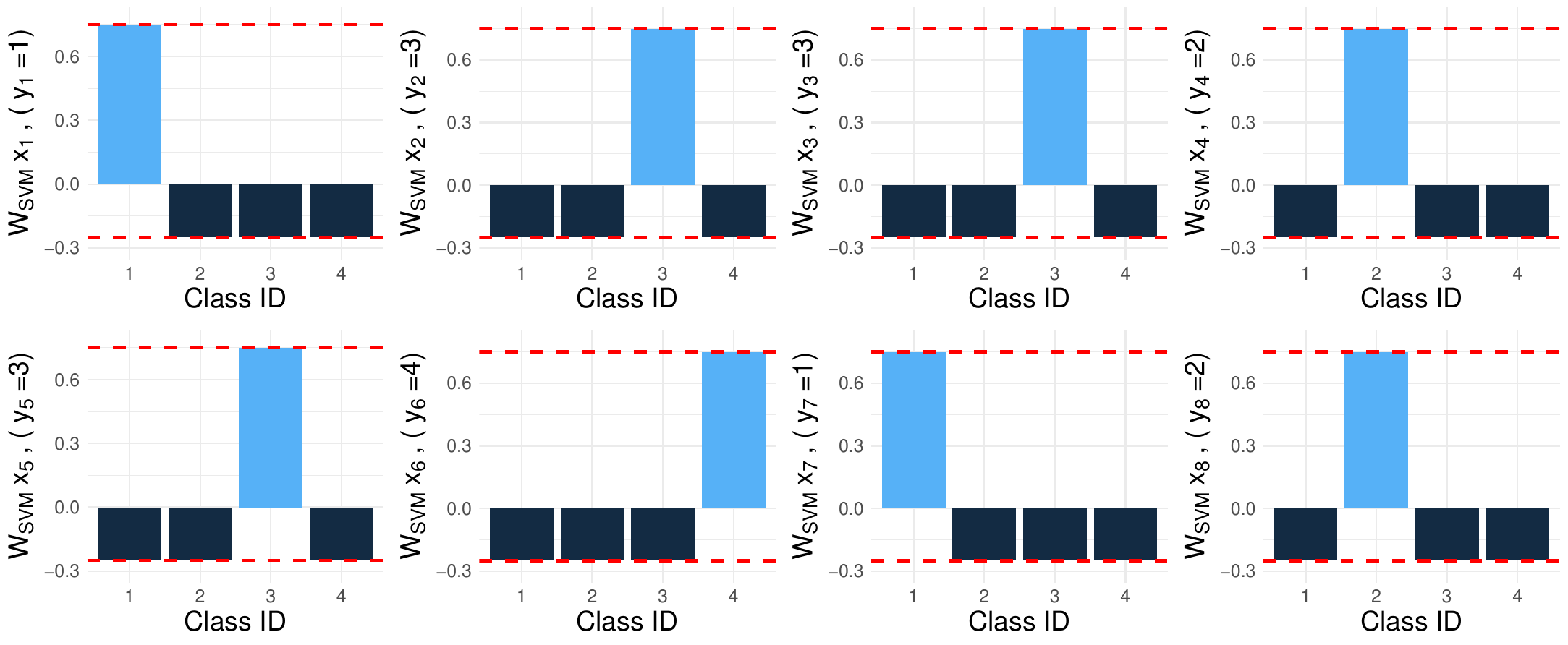}
    \caption{Inner products $\Wsvm\x_c \in \R^{4}$ for features $\x_i$ that each belongs to the  $c$-th class for $c\in[k]$ and $k=4$ total classes. The red lines correspond to the values $(k-1)/k=3/4$ and $-1/k=-1/4$ of the simplex encoding described in Theorem \ref{lem:key}. Observe that the inner products  $\Wsvm\x_c$ match with these values, that is, \Equation~\eqref{eq:symmetry-constraint-scalar} holds.}
    \label{fig-barplot}
\end{figure}

Second, \Equation~\eqref{eq:symmetry-constraint} shows that when \Equation~\eqref{eq:det-con} holds, then the multiclass SVM solution $\Wsvm$ has the same classification error as that of the minimum-norm interpolating solution.
In other words, we can show that the minimum-norm classifiers that interpolate the data with respect to \emph{either} the one-hot representations $\y_i$ or the simplex representations $\widetilde{\y}_i$ of \eqref{eq:symmetry-constraint-scalar} have identical classification performance.
This conclusion, stated as a corollary below, drives our classification error analysis in Section~\ref{sec:error}.

\begin{corollary}[SVM=MNI]\label{cor:SVM=LS} 
Under the same assumptions as in~\Theorem~\ref{lem:key}, and provided that the inequality in \Equation~\eqref{eq:det-con} holds, it holds that $\Pro_{e|c}(\Wsvm)=\Pro_{e|c}(\Wls)$ for all $c\in[k]$. Thus, the total classification errors of both solutions are equal: $\Pro_{e}(\Wsvm)=\Pro_{e}(\Wls)$.
\end{corollary}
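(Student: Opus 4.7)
The plan is to exploit the explicit characterization of $\Wsvm$ provided by Theorem~\ref{lem:key} to show that $\Wsvm$ and $\Wls$ differ by a rank-one, class-independent perturbation, and hence produce identical predictions on every test point.

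First, I would use~\eqref{eq:symmetry-constraint} to conclude that for each $c\in[k]$, $\X^T \hat\w_c = \z_c$. Combined with the stronger observation (stated after Theorem~\ref{lem:key}) that $\hat\w_c$ solves the simplex OvA program~\eqref{eq:sym-cs-svm-sketch} with all constraints active, KKT stationarity forces $\hat\w_c$ to lie in the column span of $\X$. Together with the interpolation equations, this gives the closed form
\begin{equation*}
\hat\w_c \;=\; \X(\X^T\X)^{+}\z_c, \qquad c\in[k].
\end{equation*}
The MNI classifier, by its definition in~\eqref{eq:lsminnormsolution}, admits the analogous closed form $\w_{\text{MNI},c} = \X(\X^T\X)^{+}\vb_c$.

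Next, I would substitute $\z_c = \vb_c - \tfrac{1}{k}\ones_n$ from~\eqref{eq:zc} into the formula for $\hat\w_c$ and define the class-independent vector $\s := \X(\X^T\X)^{+}\ones_n \in \R^p$. Linearity of the expression in $\z_c$ immediately yields
\begin{equation*}
\hat\w_c \;=\; \w_{\text{MNI},c} - \frac{1}{k}\s, \qquad c\in[k].
\end{equation*}
Consequently, for any feature vector $\x\in\R^p$ and any class $c\in[k]$,
\begin{equation*}
\hat\w_c^T\x \;=\; \w_{\text{MNI},c}^T\x \;-\; \frac{1}{k}\s^T\x,
\end{equation*}
and the subtracted term $\tfrac{1}{k}\s^T\x$ does not depend on $c$. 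Taking $\arg\max_c$ on both sides therefore gives $\arg\max_c \hat\w_c^T\x = \arg\max_c \w_{\text{MNI},c}^T\x$, so the two classifiers assign the same label to every input.

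Finally, since the winner-take-all predictions coincide pointwise, the events $\{\hat y_{\text{SVM}}\neq y\}$ and $\{\hat y_{\text{MNI}}\neq y\}$ are identical for a fresh sample $(\x,y)$, regardless of how this sample is generated. Conditioning on $y=c$ and taking probabilities yields $\Pro_{e|c}(\Wsvm)=\Pro_{e|c}(\Wls)$ for all $c\in[k]$, and marginalizing gives $\Pro_e(\Wsvm)=\Pro_e(\Wls)$. The only step requiring care is the derivation of the closed form $\hat\w_c = \X(\X^T\X)^{+}\z_c$, since one must rule out components of $\hat\w_c$ orthogonal to the column span of $\X$; this is handled by invoking KKT for the OvA reformulation~\eqref{eq:sym-cs-svm-sketch}, which has already been established by Theorem~\ref{lem:key}. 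Everything else is an algebraic consequence of the simplex shift $\z_c = \vb_c - \tfrac{1}{k}\ones_n$.
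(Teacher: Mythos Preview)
Your proposal is correct and follows essentially the same approach as the paper. The paper packages the shift argument into a slightly more general lemma (invariance of the argmax under any affine relabeling $\vb_c\mapsto\alpha\vb_c+\beta\ones_n$ with $\alpha>0$), then applies it with $\alpha=1,\beta=-1/k$; your version specializes directly to this case and is otherwise identical, including the use of KKT to place $\hat\w_c$ in the column span of $\X$ so that $\hat\w_c=\X(\X^T\X)^{+}\z_c$.
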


The corollary follows directly by combining Theorem \ref{lem:key} with the following lemma applied with the choice $\alpha = 1,\beta = -1/k$. 
We include a detailed proof below for completeness.

\begin{lemma}
For constants $\alpha > 0, \beta$, consider the MNI-solution ${\w}^{\alpha,\beta}_c = \X(\X^\top\X)^{+}(\alpha\vb_c+\beta\boldsymbol{1}), c\in[k]$ corresponding to a target vector of labels $\alpha\vb_c+\beta\ones_n$. Let ${\Pro}_{e|c}^{\alpha,\beta}, c\in[k]$ be the class-conditional classification errors of the classifier $\w^{\alpha,\beta}$. Then, for any different set of constants $\alpha'>0,\beta'$, it holds that ${\Pro}_{e|c}^{\alpha,\beta}= {\Pro}_{e|c}^{\alpha',\beta'}, \forall c\in[k]$.
\end{lemma}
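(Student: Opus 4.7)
The plan is to exploit the linearity of the MNI solution with respect to its target labels, together with the invariance of the argmax classifier under (i) multiplication by a positive scalar and (ii) addition of a class-independent offset.

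First, I would decompose the MNI solution by linearity. Since $\X(\X^T\X)^{+}$ does not depend on the labels, we may write
\begin{equation*}
    \w^{\alpha,\beta}_c \;=\; \X(\X^T\X)^{+}(\alpha\vb_c + \beta\ones_n) \;=\; \alpha\,\w^{1,0}_c + \beta\,\ub,
\end{equation*}
where $\ub := \X(\X^T\X)^{+}\ones_n$ is common to every class $c\in[k]$. For any test feature $\x\in\R^p$, taking inner products yields
\begin{equation*}
    (\w^{\alpha,\beta}_c)^T\x \;=\; \alpha\,(\w^{1,0}_c)^T\x + \beta\,\ub^T\x.
\end{equation*}
The second summand depends on $\x$ but not on $c$, and $\alpha>0$ is a strictly positive $c$-independent scalar.

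Next, I would invoke the fact that $\arg\max_{c\in[k]}$ is invariant both under multiplication by any positive scalar and under addition of any term that is constant in $c$. Applying this to the display above gives
\begin{equation*}
    \arg\max_{c\in[k]} (\w^{\alpha,\beta}_c)^T\x \;=\; \arg\max_{c\in[k]} (\w^{1,0}_c)^T\x,
\end{equation*}
almost surely (one can break ties arbitrarily in a fixed, $\alpha,\beta$-independent manner, e.g.\ by smallest index; the tie set has measure zero under either of the two data models considered). In particular, the same identity holds with $(\alpha,\beta)$ replaced by $(\alpha',\beta')$, so both classifiers predict the same label on every $\x$ outside a null set.

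Finally, since the predicted label $\hat y$ coincides almost surely for the two parameter choices, the events $\{\hat y\neq y \mid y=c\}$ are identical up to a null set. Taking probabilities under the conditional distribution of $(\x,y)\mid y=c$ gives $\Pro_{e|c}^{\alpha,\beta}=\Pro_{e|c}^{\alpha',\beta'}$ for every $c\in[k]$, which is the claim. There is essentially no analytical obstacle here; the only point requiring any care is recording that $\alpha>0$ is necessary (otherwise the argmax flips to an argmin), and that the class-independent shift $\beta\,\ub^T\x$ is harmless precisely because classification is governed by relative, not absolute, scores.
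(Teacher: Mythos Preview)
Your proof is correct and follows essentially the same approach as the paper: decompose $\w^{\alpha,\beta}_c$ by linearity into $\alpha\,\w^{1,0}_c$ plus a class-independent vector, then use the invariance of $\arg\max$ under positive scaling and common additive shifts. The paper phrases the invariance via the equivalent event $\{\hatw_c^\top\x \le \max_{j\neq c}\hatw_j^\top\x\}$ rather than $\arg\max$, but the content is identical.
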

\begin{proof}
Note that $\w_c^{\alpha=1,\beta=0}=\mathbf{w}_{\text{MNI},c}, c\in[k]$ and for arbitrary $\alpha>0,\beta$, we have: $\w_c^{\alpha,\beta}=\alpha \mathbf{w}_{\text{MNI},c}+\beta\X(\X^\top\X)^{+}\ones$. Moreover, it is not hard to check that  $\mathbf{w}_{\text{MNI},c}^\top\x \le \max_{j \ne c}\mathbf{w}_{\text{MNI},j}^\top\x$ if and only if $(\alpha{\wls}_c+\mathbf{b})^\top\x \le \max_{j \ne c}(\alpha\mathbf{w}_{\text{MNI},j}+\mathbf{b})^\top\x$, for any $\mathbf{b}\in\R^p$. The claim then follows by choosing $\mathbf{b} = \beta\X(\X^\top\X)^{+}\mathbf{1}$ and noting that $\alpha>0,\beta$ were chosen arbitrarily.
\end{proof}

\subsection{Connection to effective overparameterization}
\label{sec:linksvm}

\Theorem~\ref{lem:key} establishes a \textit{deterministic} condition that applies to any multiclass separable dataset as long as the data matrix $\X$ is full-rank.
In this subsection, we show that the inequality in \Equation~\eqref{eq:det-con} occurs with high-probability under both the GMM and MLM data models provided that there is sufficient effective overparameterization. 

\subsubsection{Gaussian mixture model}\label{sec-linkgmm}
We assume a {nearly equal-energy, equal-prior} setting as detailed below. 
{\begin{assumption}[Nearly equal energy/prior]
\label{ass:equalmuprob}
We assume that the norms of the mean vectors are at the same order, i.e. for some large {enough} constants $\{C_i\}_{i=1}^4$, 
there exists a vector $\mub$ such that the mean vectors satisfy  $(1-\frac{1}{C_1})\tn{\mub} \le \tn{\boldmu_c} \le (1+\frac{1}{C_2})\tn{\mub}, \forall c \in [k]$ (equivalently, we have $C_1 \leq \frac{\tn{\boldmu_c}}{\tn{\boldmu_{c'}}} \leq C_2$ for all $c,c' \in [k]$ and large enough constants $C_1,C_2 >0$). Moreover, the class priors are also at the same order, i.e. they satisfy $(1-\frac{1}{C_3})\frac{1}{k} \le \pi_c \le (1+\frac{1}{C_4})\frac{1}{k}, \forall c \in [k]$ (equivalently, we have $C_3 \leq \frac{\pi_c}{\pi_{c'}} \leq C_4$ for all $c,c'\in[k]$ and large enough constants $C_3,C_4 > 0$).
\end{assumption}
}
\begin{theorem}
\label{thm:svmgmm}
Assume that the training set follows a multiclass GMM with $\Sigmab =\mathbf{I}_p$, Assumption~\ref{ass:equalmuprob} holds, and the number of training samples $n$ is large enough. There exist constants $c_1,c_2,c_3>1$ and $C_1,C_2>1$ such that \Equation~\eqref{eq:det-con} holds with probability at least $1-\frac{c_1}{n}-c_2ke^{-\frac{n}{c_3k^2}}$, provided that
\begin{align}
\label{eq:thmsvmgmm}
    p > C_1k^3n\log(kn)+n-1\quad\text{ and }\quad p>C_2k^{1.5}n\sqrt{\logtwon}\tn{\boldmu}.
\end{align}
\end{theorem}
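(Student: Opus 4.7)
The strategy is to establish the entry-wise sign condition $z_{ci}\cdot((\X^T\X)^{-1}\z_c)_i>0$ for every pair $(i,c)\in[n]\times[k]$ via a leave-one-out decomposition combined with Gaussian concentration and standard random-matrix bounds. Writing $\X=\M\Y+\Q$ with $\Q\in\R^{p\times n}$ having i.i.d.\ $\mathcal{N}(0,1)$ entries, a Schur-complement identity applied to the Gram matrix gives
\begin{align*}
    \left((\X^T\X)^{-1}\z_c\right)_i = \frac{z_{ci}-\xi_i}{\x_i^T(\Iden_p-\mathbf{P}_{-i})\x_i},\qquad \xi_i := \x_i^T\X_{-i}(\X_{-i}^T\X_{-i})^{-1}\z_{c,-i},
\end{align*}
where $\X_{-i},\z_{c,-i}$ denote the feature matrix and target vector with the $i$th column/entry removed, and $\mathbf{P}_{-i}$ is the orthogonal projector onto the column span of $\X_{-i}$. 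The denominator is strictly positive almost surely whenever $p>n$, so the condition \eqref{eq:det-con} reduces to $z_{ci}(z_{ci}-\xi_i)>0$; since $|z_{ci}|\geq 1/k$, it suffices to prove $|\xi_i|<1/k$ uniformly in $(i,c)$.

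Splitting $\x_i=\boldmu_{y_i}+\q_i$ with $\q_i$ independent of the leave-one-out quantities, and setting $\vb_{-i}:=\X_{-i}(\X_{-i}^T\X_{-i})^{-1}\z_{c,-i}$, one writes $\xi_i=\boldmu_{y_i}^T\vb_{-i}+\q_i^T\vb_{-i}$ and controls each piece separately. Standard Wishart-type concentration yields $(\X_{-i}^T\X_{-i})^{-1}\preceq 2\Iden/p$ whenever $p$ is large enough relative to $n$ and the signal-induced perturbation of the Gram matrix, which in turn gives $\|\vb_{-i}\|_2^2=\z_{c,-i}^T(\X_{-i}^T\X_{-i})^{-1}\z_{c,-i}\lesssim n/(kp)$, where $\|\z_{c,-i}\|_2^2\lesssim n/k$ via a Chernoff bound on the class counts $n_c$ under the equal-prior assumption. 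Conditionally on $\X_{-i}$, $\q_i^T\vb_{-i}\sim\mathcal{N}(0,\|\vb_{-i}\|_2^2)$, so a Gaussian tail bound plus a union bound over $(i,c)$ controls $\max_{i,c}|\q_i^T\vb_{-i}|$. For the mean-dependent term, one further substitutes $\X_{-i}=\M\Y_{-i}+\Q_{-i}$ into $\vb_{-i}$, separating a deterministic piece (bounded using equal energy and the observation that $\Y_{-i}\z_{c,-i}$ admits an explicit structured form proportional to $\boldmu_c$ minus a uniform average of the means) from a second Gaussian piece (bounded by another subgaussian tail bound, and responsible for the $\|\boldmu\|$ scaling in the second condition).

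Combining these estimates, the uniform bound $|\xi_i|<1/k$ holds with the stated failure probability whenever the two overparameterization conditions in \eqref{eq:thmsvmgmm} are both satisfied: $p\gtrsim k^3 n\log(kn)$ absorbs the noise-only contribution $|\q_i^T\vb_{-i}|$ together with union bounds over $[n]\times[k]$ and the control of the Gram inverse, while $p\gtrsim k^{1.5}n\sqrt{\log n}\,\|\boldmu\|$ controls the mean--noise cross contribution arising in the expansion of $\boldmu_{y_i}^T\vb_{-i}$. The main obstacle is obtaining sharp \emph{entry-wise} rather than operator-norm control on $\xi_i$: a naive operator-norm estimate would require $p\gtrsim n^2$, so the argument crucially exploits the Gaussian conditioning of $\q_i$ and the two-value structure of $\z_c$ (entries in $\{-1/k,(k-1)/k\}$) through subgaussian concentration applied to specific vectors rather than worst-case directions. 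A secondary subtlety is that the label $y_i$ enters $\z_{c,-i}$ through the random class count $n_c$, introducing mild dependence with $\x_i$; this is untangled by the leave-one-out device combined with the Chernoff estimates on $\{n_c\}_{c\in[k]}$.
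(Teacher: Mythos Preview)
Your leave-one-\emph{sample}-out reduction to $|\xi_i|<1/k$ is a reasonable opening move and is in fact the route the paper takes for the MLM case (Theorem~\ref{thm:svmmlmaniso}), but it diverges from the paper's GMM proof and, as written, has a genuine gap exactly where the GMM-specific difficulty sits. The paper does not leave out a sample; it leaves out one \emph{mean component} at a time. Writing $\bolda_c=(\Q+\sum_{j\le c}\boldmu_j\vb_j^T)^T(\Q+\sum_{j\le c}\boldmu_j\vb_j^T)$ and applying the Woodbury identity, quadratic forms in $\bolda_c^{-1}$ are expressed recursively through those in $\bolda_{c-1}^{-1}$. The target is decomposed as $z_{ci}\,\e_i^T\bolda_k^{-1}\z_c=\sum_j z_{ci}\tilde z_{j(c)}\,g_{ji}^{(k)}$ with $g_{ji}^{(k)}=\e_i^T\bolda_k^{-1}\vb_j$, and each $g_{ji}^{(k)}$ comes with a Sherman--Morrison-type denominator $s_{jj}^{(-j)}(\tn{\boldmu}^2-t_{jj}^{(-j)})+(1+h_{jj}^{(-j)})^2\asymp 1+\frac{n}{kp}\tn{\boldmu}^2$. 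It is this extra $\tn{\boldmu}^2$ in the denominator that makes the final sufficient condition scale \emph{linearly} in $\tn{\boldmu}$.

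Your treatment of the mean-dependent term $\boldmu_{y_i}^T\vb_{-i}$ does not capture this shrinkage. With $\vb_{-i}=\X_{-i}\w$, $\w=(\X_{-i}^T\X_{-i})^{-1}\z_{c,-i}$, and $\X_{-i}=\M\Y_{-i}+\Q_{-i}$, your split produces $\boldmu_{y_i}^T\M\Y_{-i}\w$ and $\boldmu_{y_i}^T\Q_{-i}\w$. For the second piece a conditional sub-Gaussian bound is not available, since $\Q_{-i}$ also sits inside $\w$ through $\X_{-i}^T\X_{-i}$ (Cauchy--Schwarz does suffice here, but that is not what you invoke). The real obstruction is the first piece: the inverse Gram still lies between $\Y_{-i}^T$ and $\z_{c,-i}$, so the ``structured form of $\Y_{-i}\z_{c,-i}$'' is not directly usable, and your only remaining tool is the operator-norm bound $(\X_{-i}^T\X_{-i})^{-1}\preceq 2\Iden/p$. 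That yields $|\boldmu_{y_i}^T\M\Y_{-i}\w|\lesssim \tn{\boldmu}^2\cdot n/(kp)$, and forcing this below $1/k$ requires $p\gtrsim n\tn{\boldmu}^2$---a \emph{quadratic} dependence on $\tn{\boldmu}$ that is not implied by the theorem's linear hypothesis $p>C_2k^{1.5}n\sqrt{\log n}\,\tn{\boldmu}$. (Concretely: for fixed $k$, take $p=n^{10}$ and $\tn{\boldmu}\asymp n^{9}$; both hypotheses of \eqref{eq:thmsvmgmm} hold, yet $n\tn{\boldmu}^2\asymp n^{19}\gg p$.) Closing this gap means showing that $\z_c$ effectively lives in the spiked eigenspace of $\X_{-i}^T\X_{-i}$, where eigenvalues scale like $p+\tn{\boldmu}^2 n/k$; but establishing that amounts to sharp control of quadratic forms such as $\vb_\ell^T(\X_{-i}^T\X_{-i})^{-1}\vb_m$ and $\vb_\ell^T(\X_{-i}^T\X_{-i})^{-1}\boldd_m$, which is precisely what the paper's recursive Lemma~\ref{lem:ineqforanoj} delivers by peeling off the $k$ mean components one at a time.
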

\Theorem~\ref{thm:svmgmm} establishes a set of two conditions under which 
\Equation~\eqref{eq:det-con} and the conclusions of \Theorem~\ref{lem:key} hold, i.e. $\Wsvm = \Wls$.
The first condition requires sufficient overparameterization $p=\Omega(k^3n\log(kn)),$ 
while the second one requires that the signal strength is not too large. Intuitively, we can understand these conditions as follows. Note that \Equation~\eqref{eq:det-con} is satisfied provided that the inverse Gram matrix $(\X^T\X)^{-1}$ is ``close'' to identity, or any other positive-definite diagonal matrix. 
Recall from \Equation~\eqref{def-gmm} that $\X = \M\Y + \Q= \sum_{j=1}^k\boldmu_j\vb_j^T + \Q$ where $\Q$ is a $p\times n$ standard Gaussian matrix. The first inequality in \Equation~\eqref{eq:thmsvmgmm} (i.e. a lower bound on the data dimension $p$) is sufficient for $(\Q^T\Q)^{-1}$ to have the desired property; the major technical challenge is that $(\X^T\X)^{-1}$ involves additional terms that intricately depend on the label matrix $\Y$ itself.
Our key technical contribution is showing that these extra terms do \emph{not} drastically change the desired behavior, provided that the norms of the mean vectors (i.e. signal strength) are sufficiently small. At a high-level we accomplish this with a recursive argument as follows. Denote $\X_0=\Q$ and $\X_i=\sum_{j=1}^{i}\boldmu_j\vb_j^T + \Q$ for $i\in[k]$. Then, at each stage $i$ of the recursion, we show how to bound quadratic forms involving $\big(\X_i^T\X_i\big)^{-1}$ using bounds established previously at stage $i-1$ on quadratic forms involving $\big(\X_{i-1}^T\X_{i-1}\big)^{-1}$.
A critical property for the success of our proof strategy is the observation that the rows of $\Y$ are always orthogonal, that is, $\vb_i^T\vb_j = 0$, for $i \ne j$. 
The complete proof of the theorem is given in Section~\ref{sec:pforderoneoutline}.

We first present numerical results that support the conclusions of \Theorem~\ref{thm:svmgmm}. 
(In all our figures, we show averages over $100$ Monte-Carlo realizations, and the error bars show the standard deviation at each point.) 
\Figure~\ref{fig-svmgmm01sm}(a) plots the fraction of support vectors satisfying \Equation~\eqref{eq:symmetry-constraint} as a function of training size $n$. 
We fix dimension $p=1000$ and class priors $\pi=\frac{1}{k}$. 
To study how the outcome depends on the number of classes $k$ and signal strength $\tn{\boldmu}$, we consider $k=4,7$ and three equal-energy scenarios where $\forall c\in[k]: \tn{\boldmu_c}=\tn{\mub} = \mu\sqrt{p}$ with $\mu = 0.2, 0.3, 0.4$.
Observe that smaller $\mu$ results in larger proportion of support vectors for the same value of $n$. 
To verify our theorem's second condition (on the signal strength) in \Equation~\eqref{eq:thmsvmgmm}, \Figure~\ref{fig-svmgmm01sm}(a) also plots the same set of curves over a re-scaled axis $k^{1.5}n^{1.5}\tn{\boldmu}/p$. The six curves corresponding to different settings nearly overlap in this new scaling, showing that the condition is order-wise tight. In \Figure~\ref{fig-svmgmm01sm}(b), we repeat the experiment in  \Figure~\ref{fig-svmgmm01sm}(a) for different values of $k=3$ and $k=6$. Again, these curves nearly overlap when the x-axis is scaled according to the second condition on signal strength in \Equation~\eqref{eq:thmsvmgmm}. 
We conjecture that our second condition on the signal strength is tight up to an extra $\sqrt{n}$ factor, which we believe is an artifact of the analysis\footnote{Support for this belief comes from the fact that~\cite{wang2020benign} shows that $p>C_2\|\mub\|_2n$ is sufficient for the SVM = interpolation phenomenon to occur in the case of GMM and \emph{binary} classification.}.
 We also believe that the $k^3$ factor in the first condition can be relaxed slightly to $k^2$ (as in the MLM case depicted in \Figure~\ref{fig-svmmlm01sm}, which considers a rescaled $x$-axis and shows \textit{exact} overlap of the curves for all values of $k$).
Sharpening these dependences on both $k$ and $n$ is an interesting direction for future work.

\begin{figure}[t]
\centering
\begin{minipage}[b]{0.49\linewidth}
  \centering
  \centerline{\includegraphics[width=8cm]{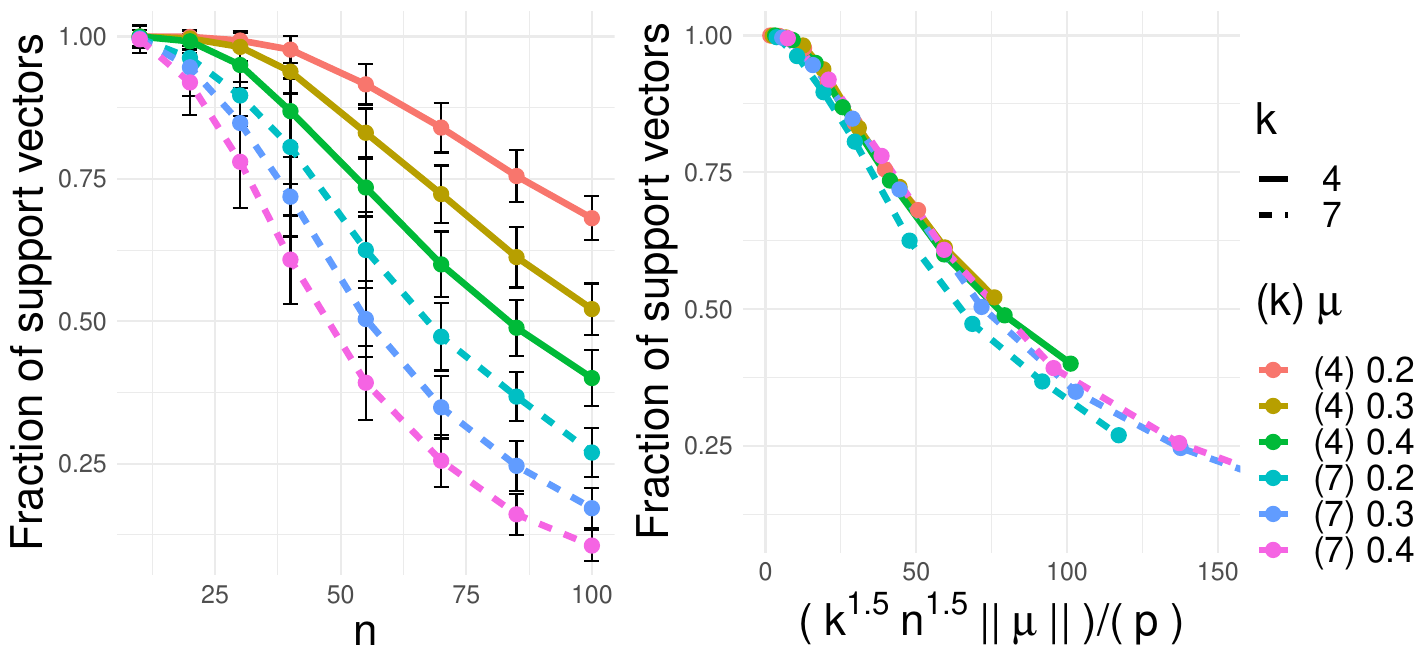}}
  \centerline{(a) $k = 4$ and $7$}\medskip
\end{minipage}
\begin{minipage}[b]{0.49\linewidth}
  \centering
  \centerline{\includegraphics[width=8cm]{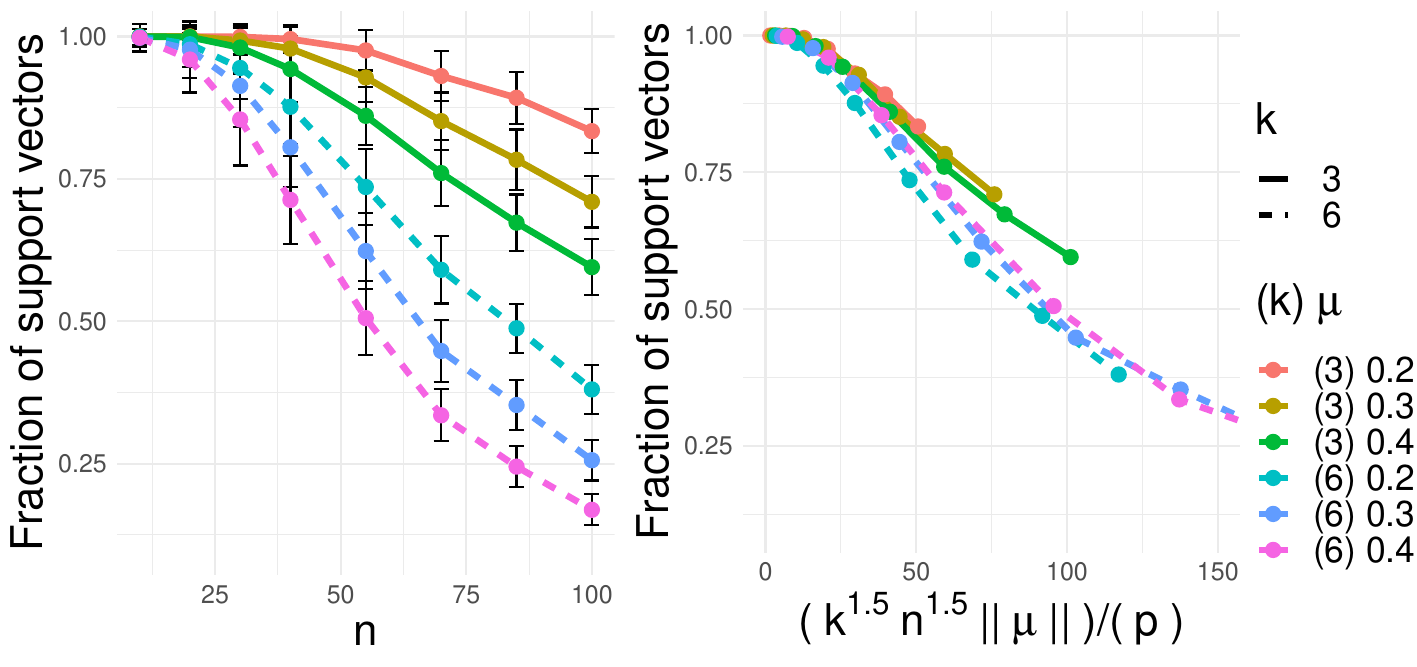}}
  \centerline{(b) $k = 3$ and $6$}\medskip
\end{minipage}
 \caption{Fraction of training examples satisfying \Equation~\eqref{eq:symmetry-constraint} (also called ``support vectors'') in the GMM case. The error bars show the standard deviation. Figure (a) considers $k = 4$ and $7$, and Figure (b) considers $k =3$ and $6$. On the legend, ``(4) 0.3'' corresponds to $k = 4$ and $\tn{ \boldmu }/\sqrt{p} =0.2$. Observe that the curves nearly overlap when plotted versus $k^{1.5}n^{1.5}\tn{\boldmu}/p$ as predicted by the second condition in \Equation~\eqref{eq:thmsvmgmm} of \Theorem~\ref{thm:svmgmm}.}
    \label{fig-svmgmm01sm}
\end{figure}

\subsubsection{Multinomial logistic model}
\label{sec-linkmlm}
We now consider the MLM data model and anisotropic data covariance.
Explicitly, the eigendecomposition of the covariance matrix is given by $\Sigmab = \sum_{i=1}^p \lambda_i\boldul_i\boldul_i^T$, where $\lbdb = [\lambda_1,  \cdots, \lambda_p]$.
We also define the effective dimensions $d_2 :={\Vert \lbdb \Vert_1^2}/{\Vert \lbdb \Vert_2^2}$ and $d_\infty:={\Vert \lbdb \Vert_1}/{\Vert \lbdb \Vert_\infty}$. 
The following result contains sufficient conditions for the SVM and MNI solutions to coincide.
\begin{theorem}
\label{thm:svmmlmaniso}
Assume $n$ training samples following the MLM defined in~\eqref{def:mlm}. There exist constants $c$ and $C_1, C_2 >1$ such that \Equation~\eqref{eq:det-con} holds
 with probability at least $(1 - \frac{c}{n})$ provided that
\begin{align}\label{eq:thmanisolink}
d_{\infty} > C_1k^2n\log(kn) \text{ and } d_2 > C_2(\log(kn) + n).
\end{align}
In fact, the only conditions we require on the generated labels is conditional independence.

For the isotropic case $\Sigmab = \mathbf{I}_p$, this implies that
\Equation~\eqref{eq:det-con} holds
 with probability at least $(1 - \frac{c}{n})$ provided that
\begin{align}\label{eq:thmisolink}
p > C_1k^2n\log ({k}n).
\end{align}
%
\end{theorem}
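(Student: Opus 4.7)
The plan is to verify the deterministic condition~\eqref{eq:det-con} with high probability under the MLM via a leave-one-out reformulation combined with a Gaussian tail argument that crucially exploits the conditional independence of labels. First, I would apply block matrix inversion to rewrite
\[
[(\X^T\X)^{-1}\z_c]_i \;=\; \frac{z_{ci} - \x_i^T \boldw_c^{(-i)}}{s_i},
\]
where $s_i := \x_i^T\bigl(\Iden_p - \X_{-i}(\X_{-i}^T\X_{-i})^{-1}\X_{-i}^T\bigr)\x_i > 0$ almost surely in the overparameterized regime, $\X_{-i}$ is $\X$ with the $i$-th column removed, and $\boldw_c^{(-i)} := \X_{-i}(\X_{-i}^T\X_{-i})^{-1}\z_{c,-i}$ is the leave-one-out MNI estimator. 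Thus~\eqref{eq:det-con} reduces to the requirement $z_{ci}(z_{ci} - \x_i^T\boldw_c^{(-i)}) > 0$ for every $i,c$. Since $|z_{ci}| \in \{1/k,(k-1)/k\}$ and $1/k \le (k-1)/k$ for $k\ge 2$, it suffices to prove the simpler sufficient bound $|\x_i^T\boldw_c^{(-i)}| < 1/k$ for all $i\in[n],\,c\in[k]$.

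Next I would exploit the defining MLM structure: since $(\x_i,y_i)$ are i.i.d.\ and $y_j\mid\X$ depends on $\x_j$ alone, the vector $\x_i$ is independent of the pair $(\X_{-i},y_{-i})$, and hence of the estimator $\boldw_c^{(-i)}$. Conditional on $(\X_{-i},y_{-i})$, $\x_i^T\boldw_c^{(-i)}$ is therefore a centered Gaussian with variance
\[
\sigma_{ci}^2 \;:=\; (\boldw_c^{(-i)})^T\Sigmab\,\boldw_c^{(-i)} \;\le\; \|\Sigmab\|_{\mathrm{op}}\cdot\|\boldw_c^{(-i)}\|_2^2 \;=\; \lambda_1\cdot \z_{c,-i}^T(\X_{-i}^T\X_{-i})^{-1}\z_{c,-i}.
\]
Notice that only the conditional independence of the $y_j$'s is invoked here, in agreement with the remark preceding~\eqref{eq:thmisolink}; the specific softmax form is never used.

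The third and quantitative step controls $\sigma_{ci}^2$ via standard anisotropic Gram-matrix concentration of the type established in \cite{bartlett2020benign,hsu2020proliferation}: under $d_\infty \gtrsim n$ and $d_2 \gtrsim n + \log(nk)$, one can show that $c_1\|\lbdb\|_1 \Iden_{n-1} \preceq \X_{-i}^T\X_{-i} \preceq c_2\|\lbdb\|_1 \Iden_{n-1}$ uniformly over $i\in[n]$ with probability at least $1 - c/n$. Since $\|\z_{c,-i}\|_2^2 \le n$, this produces the uniform bound $\sigma_{ci}^2 \le Cn/d_\infty$ on the good event. A Gaussian tail bound then gives, conditionally on this event,
\[
\mathbb{P}\!\left(|\x_i^T\boldw_c^{(-i)}| \ge 1/k \,\big|\, \X_{-i}, y_{-i}\right) \;\le\; 2\exp\!\left(-\frac{d_\infty}{2Ck^2 n}\right),
\]
so a union bound over the $nk$ pairs $(i,c)$, combined with $d_\infty > C_1 k^2 n\log(kn)$, drives the total bad-event probability to $o(1/n)$, yielding~\eqref{eq:thmanisolink}. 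The isotropic case is then immediate: $\Sigmab = \Iden_p$ gives $d_\infty = d_2 = p$, so the dominant condition collapses to $p > C_1 k^2 n\log(kn)$, which is~\eqref{eq:thmisolink}.

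The main technical obstacle is Step~3: obtaining the quantitative Gram-matrix concentration with the sharp thresholds $d_\infty > Cn$ and $d_2 > C(n+\log(nk))$ \emph{uniformly} over the $n$ leave-one-out Gram matrices. While this is qualitatively standard, matching the stated effective-dimension conditions requires careful Hanson--Wright-type control of the quadratic forms $\x_i^T(\X_{-i}^T\X_{-i})^{-1}\x_i$ and the off-diagonal entries $\x_i^T\x_j$ in the anisotropic setting. Once this is in hand, the rest of the argument is essentially a clean consequence of the independence between $\x_i$ and $\boldw_c^{(-i)}$ together with the fact that the simplex encoding values are separated from zero by at least $1/k$.
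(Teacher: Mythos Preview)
Your proposal is correct and follows essentially the same route as the paper's proof: both use the block-matrix-inversion/leave-one-out reformulation to reduce \eqref{eq:det-con} to $\frac{1}{z_{ci}}\z_{c,\setminus i}^T(\X_{-i}^T\X_{-i})^{-1}\X_{-i}^T\x_i<1$, exploit the independence of $\x_i$ from $(\X_{-i},y_{-i})$ to get a conditional (sub-)Gaussian tail with the $k^2$ factor coming from $|z_{ci}|\ge 1/k$, and finish with a union bound over $nk$ events. The only cosmetic difference is in how the conditional variance is controlled---the paper invokes \cite[Lemma~4]{hsu2020proliferation} directly, whereas you bound $(\boldw_c^{(-i)})^T\Sigmab\,\boldw_c^{(-i)}\le \lambda_1\,\z_{c,-i}^T(\X_{-i}^T\X_{-i})^{-1}\z_{c,-i}$ and then appeal to Gram-matrix eigenvalue concentration; both yield the same $O(n/d_\infty)$ bound under the stated effective-dimension conditions.
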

The sufficient conditions in \Theorem~\ref{thm:svmmlmaniso} require that the spectral structure in the covariance matrix $\Sigmab$ has sufficiently slowly decaying eigenvalues (corresponding to sufficiently large $d_2$), and that it is not too ``spiky'' (corresponding to sufficiently large $d_{\infty}$). 
When $\Sigmab = \mathbf{I}_p$, the conditions reduce to sufficient overparameterization.
For the special case of $k=2$ classes, our conditions reduce to those in \cite{hsu2020proliferation} for binary classification.
The dominant dependence on $k$, given by $k^2$, is a byproduct of the ``unequal'' margin in \Equation~\eqref{eq:symmetry-constraint-scalar}. 
\Figure~\ref{fig-svmmlm01sm} empirically verifies the sharpness of this factor. 

The proof of Theorem~\ref{thm:svmmlmaniso} is provided in Appendix~\ref{sec:svm_mlm_sm}. 
We now numerically validate our results in Theorem \ref{thm:svmmlmaniso} in \Figure~\ref{fig-svmmlm01sm}, focusing on the isotropic case. We fix $p=1000$, vary $n$ from $10$ to $100$ and the numbers of classes from $k = 3$ to $k=6$. We choose orthogonal mean vectors for each class with equal energy $\tn{\boldmu}^2=p$. The left-most plot in \Figure~\ref{fig-svmmlm01sm} shows the fraction of support vectors satisfying \Equation~\eqref{eq:symmetry-constraint} as a function of $n$. Clearly, smaller number of classes $k$ results in higher proportion of support vectors with the desired property for the same number of measurements $n$. To verify the condition in \Equation~\eqref{eq:thmisolink}, the middle plot in \Figure~\ref{fig-svmmlm01sm} plots the same curves over a re-scaled axis $k^{2}n\log(kn)/p$ (as suggested by \Equation~\eqref{eq:thmisolink}). We additionally draw the same curves over $kn\log(kn)/p$ in the right-most plot of \Figure~\ref{fig-svmgmm01sm}. Note the overlap of the curves in the middle plot.
We now numerically validate our results in Theorem \ref{thm:svmmlmaniso} in \Figure~\ref{fig-svmmlm01sm}, focusing on the isotropic case. We fix $p=1000$, vary $n$ from $10$ to $100$ and the numbers of classes from $k = 3$ to $k=6$. We choose orthogonal mean vectors for each class with equal energy $\tn{\boldmu}^2=p$. The left-most plot in \Figure~\ref{fig-svmmlm01sm} shows the fraction of support vectors satisfying \Equation~\eqref{eq:symmetry-constraint} as a function of $n$. Clearly, smaller number of classes $k$ results in higher proportion of support vectors with the desired property for the same number of measurements $n$. To verify the condition in \Equation~\eqref{eq:thmisolink}, the middle plot in \Figure~\ref{fig-svmmlm01sm} plots the same curves over a re-scaled axis $k^{2}n\log({k}n)/p$ (as suggested by \Equation~\eqref{eq:thmisolink}). We additionally draw the same curves over $kn\log({k}n)/p$ in the right-most plot of \Figure~\ref{fig-svmgmm01sm}. Note the overlap of the curves in the middle plot.
\begin{figure}[t]
    \centering
    \includegraphics[width=0.75\textwidth]{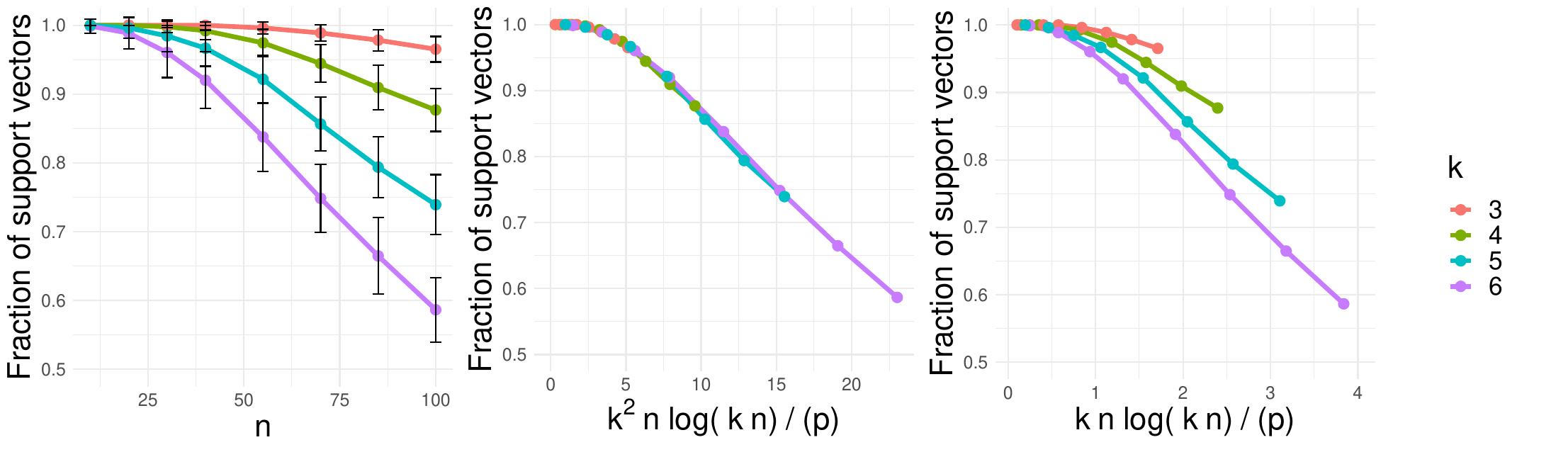}
    \caption{Fraction of training examples satisfying equality in the simplex label representation in \Equation~\eqref{eq:symmetry-constraint}  in the MLM case with $\Sigmab = \mathbf{I}_p$. The middle plot shows that the curves overlap when plotted versus $k^{2}n\log({k}n)/p$ as predicted by \Equation~\eqref{eq:thmisolink}.}
    \label{fig-svmmlm01sm}
\end{figure}


\subsection{Connection to Neural Collapse}\label{sec:ncconnection}
In this section, we provide a distinct set of sufficient conditions on the feature vectors that guarantee \Equation~\eqref{eq:det-con}, and hence the conclusions of Theorem \ref{lem:key} hold. Interestingly, these sufficient conditions
relate to the recently discovered, so called \emph{neural-collapse} phenomenon that is empirically observed in the training process of overparameterized deep nets~\cite{papyan2020prevalence} (see also e.g. \cite{NC1,NC2,NC3,NC4,NC5,NC6,NC7,NC8} for several recent follow-ups). 

\begin{corollary}\label{thm:neuralcol}
Recall the notation in Theorem \ref{lem:key}. Assume exactly balanced data, that is $|\{i:y_i=c\}|=n/k$ for all $c\in[k]$. Also, assume that the following two conditions hold:
\begin{itemize}    
    \item \textbf{Feature collapse (NC1):} For each $c\in[k]$ and all $i\in[n]:y_i=c$, it holds that $\x_i=\mub_c$, 
    where $\mub_c \triangleq \frac{k}{n}\sum_{i: y_i = c}\x_i$ is the ``mean'' vector of the corresponding class.
    \item \textbf{Simplex ETF structure (NC2)}: The matrix of mean vectors $\M:=[\mub_1,\ldots,\mub_k]_{p\times k}$ is the matrix of a simplex Equiangular Tight Frame (ETF), i.e. for some orthogonal matrix $\mathbf{U}_{p\times k}$ (with $\mathbf{U}^T\mathbf{U}=\mathbf{I}_k$) and $\alpha\in\R$, it holds that 
    \begin{align}
        \M = \alpha\sqrt{\frac{k}{n}} \mathbf{U}\left(\mathbf{I}_k - \frac{1}{k}\mathbf{1}\mathbf{1}^T\right).
    \end{align}
\end{itemize}
Then, the sufficient condition \eqref{eq:det-con} of Theorem \ref{lem:key} holds for the Gram matrix $\X^T\X$.
\end{corollary}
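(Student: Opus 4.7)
The plan is to exploit the extreme rigidity of the data under NC1 and NC2: feature collapse turns $\X$ into a rank-$k$ matrix built by repeating each class-mean $n/k$ times, and the simplex ETF structure makes $\M^T\M$ proportional to a centering projector. Writing $\X^T\X$ as a clean Kronecker product will let me diagonalize everything in closed form and, crucially, recognize $\z_c$ as a (rescaled) eigenvector of the pseudoinverse, so that $(\X^T\X)^+\z_c$ is a positive multiple of $\z_c$ itself. Since $\z_c$ has every entry nonzero, the Hadamard product $\z_c\odot(\X^T\X)^+\z_c$ is strictly positive, giving~\eqref{eq:det-con}.

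Concretely, after reordering the samples by class, I would first write $\X = \M\,\boldsymbol{B}$ with $\boldsymbol{B}=\mathbf{I}_k\otimes\oneb_{n/k}^T \in \R^{k\times n}$, which is exactly NC1. Similarly the label row vector $\vb_c$ becomes $\eb_c\otimes\oneb_{n/k}$, so $\z_c = \boldsymbol{\xi}_c\otimes\oneb_{n/k}$ with $\boldsymbol{\xi}_c := \eb_c-\tfrac{1}{k}\oneb_k$. Note that $\oneb_k^T\boldsymbol{\xi}_c=0$, i.e.\ $\boldsymbol{\xi}_c$ lies in the range of the centering matrix $P_k := \mathbf{I}_k-\tfrac{1}{k}\oneb_k\oneb_k^T$. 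From NC2 and the fact that $\mathbf{U}^T\mathbf{U}=\mathbf{I}_k$ and $P_k$ is idempotent,
\begin{equation*}
\M^T\M \;=\; \alpha^2\,\frac{k}{n}\,P_k.
\end{equation*}

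Plugging in gives the Gram matrix as a Kronecker product:
\begin{equation*}
\X^T\X \;=\; \boldsymbol{B}^T\M^T\M\,\boldsymbol{B} \;=\; \alpha^2\,\frac{k}{n}\,\bigl(P_k\otimes \mathbf{J}_{n/k}\bigr),
\end{equation*}
where $\mathbf{J}_{n/k}=\oneb_{n/k}\oneb_{n/k}^T$. Both $P_k$ and $\mathbf{J}_{n/k}$ are PSD, so the Moore–Penrose inverse factors as $(\X^T\X)^+ = \tfrac{k}{\alpha^2 n}\bigl(P_k^+\otimes \mathbf{J}_{n/k}^+\bigr) = \tfrac{k}{\alpha^2 n}\bigl(P_k\otimes \tfrac{k^2}{n^2}\mathbf{J}_{n/k}\bigr)$. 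Applying this to $\z_c=\boldsymbol{\xi}_c\otimes\oneb_{n/k}$ and using $P_k\boldsymbol{\xi}_c=\boldsymbol{\xi}_c$ together with $\mathbf{J}_{n/k}\oneb_{n/k}=\tfrac{n}{k}\oneb_{n/k}$, the Kronecker product collapses to
\begin{equation*}
(\X^T\X)^+\z_c \;=\; \frac{1}{\alpha^2}\,\bigl(\boldsymbol{\xi}_c\otimes\oneb_{n/k}\bigr)\;=\;\frac{1}{\alpha^2}\,\z_c.
\end{equation*}
Hence $\z_c\odot(\X^T\X)^+\z_c = \tfrac{1}{\alpha^2}\,\z_c\odot\z_c$, whose entries are $z_{ci}^2/\alpha^2\in\{((k-1)/k)^2/\alpha^2,(1/k)^2/\alpha^2\}$, all strictly positive (the simplex ETF is nontrivial, so $\alpha\neq 0$). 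This verifies~\eqref{eq:det-con}, and the conclusions of Theorem~\ref{lem:key} follow.

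The only step I expect to require care is the pseudoinverse identity for a Kronecker product of PSD matrices; I would either cite the standard fact $(A\otimes B)^+=A^+\otimes B^+$ or verify it directly through the spectral decompositions of $P_k$ and $\mathbf{J}_{n/k}$. Beyond that, the argument is purely algebraic and the ordering of samples by class is a harmless relabelling (the condition~\eqref{eq:det-con} is permutation-equivariant, so the reduction loses nothing).
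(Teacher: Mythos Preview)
Your proposal is correct and follows essentially the same route as the paper: reorder samples by class, write $\X^T\X$ as a Kronecker product of the centering projector $P_k$ with an all-ones block, take the pseudoinverse, and observe that $\z_c$ is an eigenvector so that $(\X^T\X)^+\z_c=\tfrac{1}{\alpha^2}\z_c$ and hence $\z_c\odot(\X^T\X)^+\z_c=\tfrac{1}{\alpha^2}\z_c\odot\z_c>\mathbf{0}$. One small arithmetic slip to fix: the scalar in front of $P_k^+\otimes\mathbf{J}_{n/k}^+$ should be $\tfrac{n}{\alpha^2 k}$ rather than $\tfrac{k}{\alpha^2 n}$; after absorbing $\mathbf{J}_{n/k}^+=\tfrac{k^2}{n^2}\mathbf{J}_{n/k}$ this yields the correct prefactor $\tfrac{k}{\alpha^2 n}=\tfrac{1}{\alpha^2 m}$ and your stated final identity.
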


\begin{proof}
For simplicity, denote the sample size of each class as $m:=n/k$.
Without loss of generality under the corollary's assumptions, let the columns of the feature matrix $\X$ be ordered such that $\X=[\M,\M,\ldots,\M]=\M\otimes\mathbf{1}_m^T$. Accordingly, we have $\z_c=\left(\mathbf{e}_c\otimes\ones_m\right)-\frac{1}{k}\left(\ones_k\otimes\ones_m\right)$ where $\mathbf{e}_c$ is the $c$-th basis vector in $\R^k$.
Then, the feature Gram matrix is computed as
\begin{equation}\label{eq:gram_nc_proof}
\X^T\X=\left(\M^T\M\right)\otimes\left(\mathbf{1}_m\mathbf{1}_m^T\right) = \frac{{\alpha^2}}{m}\left(\mathbf{I}_k-\frac{1}{k}\mathbf{1}_k\mathbf{1}_k^T\right)\otimes\left(\mathbf{1}_m\mathbf{1}_m^T\right).
\end{equation}

Observe here that we can write $\left(\mathbf{I}_k-\frac{1}{k}\mathbf{1}_k\mathbf{1}_k^T\right)=\V\V^T$ for $\V \in \mathbb{R}^{k \times (k-1)}$ having orthogonal columns (i.e. $\V^T\V=\mathbf{I}_{k-1}$) and $\V^T\ones_k=\mathbf{0}_k$. Using this and the fact  that $(\V\V^T)^{+}=(\V\V^T)$, it can be checked from \eqref{eq:gram_nc_proof} that 
\begin{equation}\label{eq:gram_inv_nc_proof}
(\X^T\X)^{+}=\frac{1}{\alpha^2 m} \left(\mathbf{I}_k-\frac{1}{k}\mathbf{1}_k\mathbf{1}_k^T\right)\otimes\left(\mathbf{1}_m\mathbf{1}_m^T\right).
\end{equation}
Putting things together, we get, for any $c\in[k]$, that
\begin{align}\nn
(\X^T\X)^{+}\z_c = \frac{1}{\alpha^2 m} \left(\big(\mathbf{I}_k-\frac{1}{k}\mathbf{1}_k\mathbf{1}_k^T\big)\otimes\left(\mathbf{1}_m\mathbf{1}_m^T\right)\right) \left(\mathbf{e}_c\otimes\ones_m\right) = \frac{1}{\alpha^2}\left(\mathbf{e}_c-\frac{1}{k}\mathbf{1}_k\right)\otimes \mathbf{1}_m = \frac{1}{\alpha^2}\z_c.
\end{align}
Therefore, it follows immediately that
$$
 \z_c\odot \M^{+}\z_c = \frac{1}{\alpha^2}\z_c\odot \z_c > \mathbf{0},
$$
as desired. This completes the proof.
\end{proof}

It might initially appear that the structure of the feature vectors imposed by the properties NC1 and NC2 is too specific to be relevant in practice. To the contrary, \cite{papyan2020prevalence} showed via a principled experimental study that these properties occur at the last layer of overparameterized deep nets across several different data sets and DNN architectures. Specifically, the experiments conducted in \cite{papyan2020prevalence} suggest that training overparameterized deep nets on classification tasks with CE loss in the absence of weight decay (i.e. without explicit regularization) results in learned feature representations in the final layer that converge\footnote{Here, ``convergence'' is with respect to an increasing number of training epochs. Since the architecture is overparameterized, it can perfectly separate the data. Hence, the training 0-1 error can be driven to zero. Nevertheless, training continues despite having achieved zero 0-1 training error, since the CE loss continues to drop. \cite{papyan2020prevalence} refers to this regime as the terminal phase of training (TPT). In sum,~\cite{papyan2020prevalence} show that neural collapse is observed in TPT.} to the ETF structure described by NC1 and NC2. Furthermore, it was recently shown in \cite{NC8} that the neural collapse phenomenon continues to occur when the last-layer features of a deep net are trained with the recently proposed \emph{supervised contrastive loss} (SCL) function~\cite{khosla2020supervised} and a linear model is independently trained on these learned last-layer features.
(In fact,~\cite{NC8,khosla2020supervised} showed that this self-supervised procedure can yield superior generalization performance compared to CE loss.)

To interpret Corollary \ref{thm:neuralcol} in view of these findings, consider the following two-stage classification training process:
\begin{itemize}
\item First, train (without weight-decay and continuing training beyond the interpolation regime) the last-layer feature representations of an overparameterized deep-net with either CE or SCL losses. 
\item Second, taking as inputs those learned feature representations of the first stage, train a linear multiclass classifier (often called the ``head'' of the deep-net)  with CE loss. 
\end{itemize}
Then, from Corollary \ref{thm:neuralcol}, the resulting classifier from this two-stage process interpolates the simplex label representation, and the classification accuracy is the same as if we had used the square loss in the second stage of the above training process.
{Thus, our results lend strong theoretical justification to the empirical observation that \emph{square-loss and CE loss yield near-identical performance in large-scale classification tasks~\cite{rifkin2002everything,rifkin2004defense,hui2020evaluation,poggio2020explicit}}.}

\section{Generalization bounds}\label{sec:error}

In this section, we derive non-asymptotic bounds on the error of the MNI classifier for data generated from both GMM and MLM, as well as a natural setting in which the class means follow the simplex-ETF geometry.

\subsection{Gaussian mixture model}\label{sec:errorGMM}
We present classification error bounds under the additional assumption of {mutually incoherent} means.
{\begin{assumption}[Mutually incoherent means]
\label{ass:orthomean}
Let $M = \max_{i \ne j}\frac{|\boldmu_i^T\boldmu_j|}{\tn{\boldmu_i}\tn{\boldmu_j}}$ be the mutual coherence of mean vectors. Then, we assume that there exists a large absolute constant $C > 0$ such that $M \le 1/C$.
\end{assumption}
We remark that mutual incoherence assumptions have appeared in a completely different context, i.e.~across feature vectors, in the compressive sensing literature (e.g. for sparse signal recovery)~\cite{donoho2005stable,tropp2006just}.
There, the number of feature vectors is typically greater than the dimension of each feature vector and so the mutual incoherence suffers from fundamental lower bounds~\cite{welch1974lower}.
In our setting, the incoherence assumption  applies to the class-mean vectors. Note that the number of mean vectors ($k$) is always smaller than the dimension of each vector ($p$) and so Welch's lower bound does not apply, making our assumption reasonable.
}
\begin{theorem}
\label{thm:classerrorgmm}
Let {Assumptions~\ref{ass:equalmuprob} and~\ref{ass:orthomean},} as well as the condition in \Equation~\eqref{eq:thmsvmgmm} hold. Further assume constants $C_1,C_2
,C_3>1$ such that $\big(1-\frac{C_1}{\sqrt{n}}-\frac{C_2n}{p}\big)\tn{\boldmu} > C_3\minklogn.$
Then, there exist additional constants $c_1,c_2,c_3$ and $C_4 > 1$ such that both the MNI solution $\Wls$ and the multiclass SVM solution $\Wsvm$ satisfy
\begin{align}
\Pro_{e|c} \leq  (k-1)\exp{\left(-\tn{\boldmu}^2\frac{\left(\left(1-\frac{C_1}{\sqrt{n}}-\frac{C_2n}{p}\right)\tn{\boldmu}-C_3\minklogn\right)^2}{C_4\left(1+\frac{kp}{n\tn{\boldmu}^2}\right)}\right)}
\end{align}
with probability at least $1-\frac{c_1}{n} - c_2ke^{-\frac{n}{c_3k^2}}$, for every $c \in [k]$.
Moreover, the same bound holds for the total classification error $\Pro_{e}$.
\end{theorem}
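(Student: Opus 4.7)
The plan is to leverage the deterministic equivalence established in Corollary \ref{cor:SVM=LS}: under the hypothesis \eqref{eq:thmsvmgmm} of Theorem \ref{thm:svmgmm}, condition \eqref{eq:det-con} holds with probability at least $1-c_1/n - c_2 k e^{-n/(c_3k^2)}$, so on that event $\Wsvm = \Wls$ (in the sense that their class-conditional errors coincide). Therefore it suffices to derive the stated bound for $\Wls$, and the bound for $\Wsvm$ follows for free on the same high-probability event. A union bound will then fold the two failure probabilities together.

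The next step is to analyze $\Pro_{e|c}(\Wls)$ for a fixed class $c \in [k]$, conditional on a training set $(\X,\Y)$ drawn from the GMM. Write the test sample as $\x = \boldmu_c + \q$ with $\q \sim \Nc(\mathbf{0},\Iden_p)$ independent of $(\X,\Y)$, and use $\Wls^T = \X(\X^T\X)^{-1}\Y^T$ to write, for any $j \neq c$,
\begin{align*}
(\hatw_c - \hatw_j)^T \x \;=\; \underbrace{(\vb_c-\vb_j)^T(\X^T\X)^{-1}\X^T \boldmu_c}_{S_{c,j}(\X,\Y)}
\;+\; \underbrace{(\vb_c-\vb_j)^T(\X^T\X)^{-1}\X^T \q}_{N_{c,j}(\X,\Y;\q)}.
\end{align*}
Conditional on the training data, $N_{c,j}$ is a zero-mean Gaussian in $\q$ with variance $\sigma_{c,j}^2 := (\vb_c-\vb_j)^T(\X^T\X)^{-1}(\vb_c-\vb_j)$. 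Hence a standard Gaussian tail bound gives $\Pro(\hatw_c^T\x \le \hatw_j^T\x \mid \X,\Y) \le \exp(-S_{c,j}^2 / (2\sigma_{c,j}^2))$ whenever $S_{c,j} > 0$, and a union bound over the $k-1$ competing classes yields $\Pro_{e|c}(\Wls\mid\X,\Y) \le (k-1)\exp(-\min_{j\neq c} S_{c,j}^2/(2\sigma_{c,j}^2))$.

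The bulk of the work, and the main obstacle, is then to show that with high probability over the training data, uniformly in $j \neq c$, (i) $S_{c,j} \gtrsim \tn{\boldmu}\cdot\bigl((1-C_1/\sqrt n - C_2 n/p)\tn{\boldmu} - C_3\minklogn\bigr)$, and (ii) $\sigma_{c,j}^2 \lesssim 1 + kp/(n\tn{\boldmu}^2)$. To carry this out I would use the decomposition $\X = \M\Y + \Q$ with $\Q$ having i.i.d.\ $\Nc(0,1)$ entries and orthogonal rows of $\Y$ (since exactly balanced labels up to fluctuations). Plugging this in, $\X^T\boldmu_c = \Y^T\M^T\boldmu_c + \Q^T\boldmu_c = \tn{\boldmu}^2 \vb_c + \Q^T\boldmu_c$ (by orthogonal-equal-energy means), so the signal term splits as $\tn{\boldmu}^2 (\vb_c-\vb_j)^T(\X^T\X)^{-1}\vb_c + (\vb_c-\vb_j)^T(\X^T\X)^{-1}\Q^T\boldmu_c$; the first piece is essentially $\tn{\boldmu}^2 \cdot \z_c \odot (\X^T\X)^{-1}\vb_c$ summed over the class-$c$ coordinates, which we control using the same recursive quadratic-form estimates developed for the proof of Theorem \ref{thm:svmgmm} (bounding $(\X^T\X)^{-1}$ as a perturbation of $(\Q^T\Q)^{-1}$ stage by stage in the rank-one updates $\mub_j\vb_j^T$), while the cross-term is a Gaussian chaos in $\Q$ whose magnitude is on the order of $\tn{\boldmu}\cdot\minklogn$ after invoking Hanson--Wright-type concentration together with a control on $\|(\X^T\X)^{-1}\|_2$. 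For $\sigma_{c,j}^2$, the same perturbation analysis yields $\sigma_{c,j}^2 \approx (\vb_c-\vb_j)^T(\Q^T\Q)^{-1}(\vb_c-\vb_j)\cdot(1+o(1))$, and since $\Q^T\Q$ is a Wishart with $p \gg n$, its inverse concentrates near $p^{-1}\Iden_n$ giving $\sigma_{c,j}^2 \lesssim \|\vb_c-\vb_j\|_2^2/p \asymp n/(kp)$ up to the correction coming from the signal terms (which produces the additive $1$ via Sherman--Morrison). Combining (i) and (ii), taking the ratio, and absorbing constants yields the exponent in the theorem; the total error $\Pro_e$ is bounded identically since the class-conditional bound is uniform in $c$ and the priors sum to one.

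The hardest part is the simultaneous control of $(\X^T\X)^{-1}$ in two distinct quadratic forms that share randomness (the signal form uses columns $\vb_c$, while the noise variance uses $\vb_c-\vb_j$), and ensuring the error terms picked up from the recursion do not swamp the leading $\tn{\boldmu}^2$ signal once divided by $\sigma_{c,j}$. This is precisely where the assumption $\bigl(1-C_1/\sqrt n - C_2 n/p\bigr)\tn{\boldmu} > C_3 \minklogn$ enters: it ensures the Gaussian-chaos cross-term and the perturbative slack from the recursive argument are strictly dominated by the ``clean'' signal $\tn{\boldmu}^2$, so that $S_{c,j}$ remains positive and of the advertised order with high probability. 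Once this is in hand, the final probability bound is obtained by intersecting the success event for the equivalence \eqref{eq:det-con} with the success events for the signal lower bound and the noise variance upper bound, each of which fails with probability at most $O(1/n) + O(k e^{-n/(c_3 k^2)})$.
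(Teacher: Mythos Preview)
Your approach is essentially the paper's: reduce $\Pro_{e|c}$ to a sum of Gaussian tails $Q\bigl((\hatw_c-\hatw_j)^T\boldmu_c/\tn{\hatw_c-\hatw_j}\bigr)$ over $j\neq c$, then lower-bound the numerator and upper-bound the denominator via the same recursive quadratic-form machinery ($s^{(\cdot)},t^{(\cdot)},h^{(\cdot)}$ etc.) built for Theorem~\ref{thm:svmgmm}, and finally invoke the SVM$=$MNI equivalence. Two small points of divergence are worth noting. First, the paper replaces $\tn{\hatw_c-\hatw_j}^2$ by $2(\hatw_c^T\hatw_c+\hatw_j^T\hatw_j)=2(\vb_c^T(\X^T\X)^{-1}\vb_c+\vb_j^T(\X^T\X)^{-1}\vb_j)$ precisely to \emph{avoid} the cross-term $\vb_c^T(\X^T\X)^{-1}\vb_j$, which you propose to handle directly; this works too (the cross $s$-terms are already controlled by the recursion), but the paper's shortcut keeps the algebra shorter. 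Second, your stated intermediate scalings (i) and (ii) are off in absolute size: in fact $S_{c,j}\asymp \tn{\boldmu}^2\cdot s_{cc}^{(j)}/\det_j\asymp \tn{\boldmu}^2\cdot\frac{n/(kp)}{\tn{\boldmu}^2 n/(kp)+1}$ and $\sigma_{c,j}^2\asymp s_{cc}^{(j)}/\det_j$, so neither is of order $\tn{\boldmu}^2$ nor $1+kp/(n\tn{\boldmu}^2)$ individually. The common factor $s_{cc}^{(j)}/\det_j$ cancels in the ratio $S_{c,j}^2/\sigma_{c,j}^2$, yielding the claimed exponent, so the plan is sound---just be careful to carry the Woodbury denominator $\det_j$ through both the signal and the variance when you execute it.
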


For large enough $n$, \Theorem~\ref{thm:classerrorgmm} reduces to the results in \cite{wang2020benign} when $k = 2$ (with slightly different constants). There are two major challenges in the proof of Theorem~\ref{thm:classerrorgmm}, which is presented in Appendix~\ref{sec:classerrorgmmproof}. First, in contrast to the binary case the classification error does \emph{not} simply reduce to bounding correlations between vector means $\mub_c$ and their estimators $\wh_c$. Second, just as in the proof of \Theorem~\ref{thm:svmgmm}, technical complications arise from the multiple mean components in the training data matrix $\X$.
We use a variant of the recursion-based argument described in Section~\ref{sec:pforderoneoutline} to obtain our final bound.

{
\subsubsection{A possible extension to anisotropic noise covariances}\label{sec:non-isotropic}

Up to this point, we have concentrated on GMM data with isotropic noise,~i.e. the noise covariance matrix in {\Equation~\eqref{def-gmm}} is such that $\Sigmab=\mathbf{I}_p$. It is crucial to note that, even in the case of isotropic noise, the entire data covariance matrix $\E[\x\x^T]$ for GMM data is anisotropic, as it exhibits spikes in the direction of the mean vectors. 
Thus, it already models highly correlated features.
This already makes the analyses challenging both at the level of establishing equivalence of SVM to MNI as well as deriving generalization bounds for the MNI (analogous to the challenges faced in the initial analyses of benign overfitting for regression~\cite{bartlett2020benign,hastie2019surprises}). 
Based on this, we now make a brief comment on the possibility of extending Theorem~\ref{thm:classerrorgmm} to anisotropic GMM data. 
Although a comprehensive examination is beyond the scope of this paper, we provide evidence that our analysis can serve as a foundation for such extensions.

As a starting point, we note that the necessary and sufficient equivalence conditions of Theorem~\ref{lem:key} still hold (as they are deterministic and require no assumptions on the data). We sketch here a possible proof argument to work from Theorem~\ref{lem:key} and prove a variant of Theorem~\ref{thm:svmgmm} for anisotropic noise covariance. 
Let  $\Sigmab = \V\mathbf{\Lambda}\V^T$ be the covariance eigen-decomposition, where $\V$ is orthogonal and $\mathbf{\Lambda}$ is diagonal {with entries given by the eigenvalues $\{\lambda_j\}_{j=1}^p$}. With this, we can project the mean vectors $\mub_c$ of the GMM to the space spanned by the eigenvector basis $\vb_j, j\in[p]$ (aka columns of $\V$). Concretely, we can express $\boldmu_c$ as $\sum_{j=1}^p\beta_j\vb_j$.
Then, we may use this decomposition to prove a variant of Lemma \ref{lem:ineqforanoj}. Recall that to prove the higher-order terms in Lemma \ref{lem:ineqforanoj}, we need to start from deriving bounds for 0-order terms in Lemma \ref{lem:ineqforazero}. When $\Sigmab$ is anisotropic, the bounds in Lemma \ref{lem:ineqforazero} will have two main changes. First, the bounds will involve {\textit{signal strength in the direction of $\Sigmab$} defined as $\sum_{j=1}^p\lambda_j\beta_j^2$.} This is the result of projecting the mean vectors to the space spanned by the eigenvectors of $\Sigmab$. Second, the bounds will include effective ranks, e.g. $r_k := (\sum_{i>k}^p \lambda_i)/\lambda_{k+1}$ and $R_k := (\sum_{i>k}^p \lambda_i)^2/(\sum_{i>k}^p \lambda_i^2)$. Effective ranks play important role in benign overfitting and the equivalence between SVM and MNI \cite{bartlett2020benign, muthukumar2021classification}.
{Lemma 4 in \cite{wang2020benign} provides bounds for the 0-order terms in Lemma \ref{lem:ineqforazero} under anisotropic covariance. We show one examples here to see the adjustment. The upper bound for $t_{jj}^{(0)}$ changes from $\frac{C_1n\tn{\boldmu}^2}{p}$ to $\frac{C_2n\sum_{j=1}^p\lambda_j\beta_j^2}{\normonelambda}$, where $\mathbf{\lambda}$ is the vector with $\lambda_i$ as entries. Note that $\frac{n\sum_{j=1}^p\lambda_j\beta_j^2}{\normonelambda}$ becomes $\frac{n\tn{\boldmu}^2}{p}$ when $\Sigmab = \mathbf{I}_p$. Similar changes apply to other terms in Lemma \ref{lem:ineqforazero}. The 0-order bounds in Lemma \ref{lem:ineqforazero} can then be used to derive higher-order bounds in Lemma \ref{lem:ineqforanoj}. Similar to the binary results in \cite{muthukumar2021classification, wang2020benign}, the equivalence between MNI and SVM requires large effective ranks and benign overfitting requires large signal strength in the direction of $\Sigmab$. However, a detailed analysis of this general setting is beyond the scope of this paper.} 

}

\subsection{Multinomial logistic model}\label{sec:errorMLM}

In this section, we present our error analysis of the MNI classifier when data is generated by the MLM.
Importantly, for this case we consider more general anisotropic structure in the covariance matrix $\Sigmab:= \boldsymbol{U} \boldsymbol{\Lambda} \boldsymbol{U}^\top$.
We begin by carrying over the assumptions made from the binary-case analysis in~\cite{muthukumar2021classification}, beginning with a natural assumption of $s$-sparsity.

\begin{assumption}[$s$-sparse class means]\label{as:ssparse}
We assume that all of the class means $\boldmu_c, c \in [k]$ are $s$-sparse in the basis given by the eigenvectors of $\Sigmab$.
In other words, we have
\begin{align*}
    \boldsymbol{U}^{-1} \boldmu_{c,j} = 0 \text{ if } j > s .
\end{align*}
\end{assumption}
This $s$-sparse assumption is also made in corresponding works on regression (e.g. for the results for the anisotropic case in~\cite{hastie2019surprises}) and shown to be necessary in an approximate sense for consistency of MSE of the minimum-$\ell_2$-norm interpolation arising from bias~\cite{tsigler2020benign}.
Next, we make a special assumption of bi-level structure in the covariance matrix.
\begin{assumption}[Bi-level ensemble]\label{as:bilevel}
We assume that the eigenvalues of the covariance matrix, given by $\lbdb$, have a {bilevel} structure.
In particular, our bi-level ensemble is parameterized by $(n,m,q,r)$ where $m > 1$, $0 \leq r < 1$ and $0 < q < {(m-r)}$.
We set parameters $p = n^m$, $s = n^r$ and $a = n^{-q}$.
Then, the eigenvalues of the covariance matrix are given by
\begin{align*}
    \lambda_j = \begin{cases}
        \lambda_H := \frac{ap}{s},\; 1 \leq j \leq s \\
        \lambda_L := \frac{(1-a)p}{p-s},\; \text{ otherwise.}
    \end{cases}
\end{align*}
We will fix $(m,q,r)$ and study the classification error as a function of $n$.
While the bi-level ensemble structure is not in principle \textit{needed} for complete statements of results, it admits particularly clean characterizations of classification error rates as well as easily interpretable conditions for consistency\footnote{See~\cite{muthukumar2021classification} for additional context on the bi-level ensemble and examples of its manifestation in high-dimensional machine learning models.}.
\end{assumption}

Assumption \ref{as:bilevel} splits the covariance spectrum in a small set of large eigenvalues $\la_H$ and the remaining large set of small eigenvalues $\la_L$. 
The bi-level ensemble is friendly to consistency of the MNI solution for three reasons: a) the number of small eigenvalues is much larger than the sample size, b) the ratio between the large-valued and small-valued eigenvalues grows with the sample size $n$, and c) the number of large-valued eigenvalues is exponentially small relative to the sample size $n$.
Note that condition a) facilitates benign overfitting of noise (as first pointed out in the more general anisotropic case by~\cite{bartlett2020benign}), while conditions b) and c) facilitate signal recovery.
To verify these conditions more quantitatively, note that: a) the number of small eigenvalues is on the order of $p \gg n$, b) the ratio between the large-valued and small-valued eigenvalues can be verified to be on the order of $n^{m - q - r}$ which grows with $n$, and c) the number of large-valued eigenvalues is equal to $s=n^r$, which is exponentially smaller than $n$.

Finally, we imbue the above assumptions with an equal energy and orthogonality assumption, as in the GMM case.
These assumptions are specific to the multiclass task, and effectively subsume Assumption~\ref{as:ssparse}.
\begin{assumption}[Equal energy and orthogonality]\label{as:mlmfull}
We assume that the class means are equal energy, i.e. $\tn{\boldmu} = 1/\sqrt{\lambda_H}$ for all $c \in [k]$, and are orthogonal, i.e. $\boldmu_i^\top \boldmu_j = 0$ for all $i \neq j \in [k]$.
Together with Assumptions~\ref{as:ssparse} and~\ref{as:bilevel}, a simple coordinate transformation gives us
\begin{align*}
    \boldmu_c &= \frac{1}{\sqrt{\lambda_H}} \boldsymbol{e}_{j_c} \; \text{ for some $j_c \in [s]$, $j_c \neq j_{c'}$ for all $c \neq c' \in [k]$, and } \\
    \Sigmab &= \boldsymbol{\Lambda}
\end{align*}
without loss of generality.
The normalization by the factor $\frac{1}{\sqrt{\lambda_H}}$ is done to ensure that the signal strength is equal to $1$, i.e. $\E[(\x^\top \boldmu_c)^2] = 1$ for all $c \in [k]$.
\end{assumption}
Under these assumptions, we state our main result for the total classification error of MLM.
Our error bounds will be on the \textit{excess} risk over and above the Bayes error rate incurred by the optimal classifier 
$\{\hatw_c = \boldmu_c\}_{c \in [k]}$, which we denote by $\Pro_{e,\mathsf{Bayes}}$. 
\begin{theorem}\label{thm:classerrormlm}
Under Assumptions~\ref{as:bilevel} and~\ref{as:mlmfull}, there is a universal constant $c_k$ (that may depend on $k$, but not $n$ or $p$) such that the total excess classification error of $\Wls$ and $\Wsvm$ under the MLM model is given by
\begin{align}\label{eq:mlmerror}
\Pro_e - \Pro_{e,\mathsf{Bayes}} &\leq k^2 \left(\frac{1}{2} - \frac{1}{\pi} \mathsf{tan}^{-1}(\mathsf{SNR}(n))\right),\;\text{ where } \\
   \mathsf{SNR}(n) &\geq c_k (\log n)^{-1/2} \cdot n^{\frac{\min\{(m-1),(2q+r-1),(2q + 2r - 3/2)\}}{2} + (1 - r) - q},\; q > (1 - r) \nonumber
\end{align}
for $q > 1 - r$.
\end{theorem}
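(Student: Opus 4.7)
The plan is to analyze the MNI classifier $\Wls$ directly and transfer the bound to $\Wsvm$ via Corollary~\ref{cor:SVM=LS} (combined with Theorem~\ref{thm:svmmlmaniso}) whenever its hypotheses apply on the bi-level ensemble. The first step is a union-bound decomposition of the excess error into ordered class pairs. Since the Bayes-optimal rule under the MLM is $\arg\max_c \boldmu_c^\top \x$, for any classifier $\hatW$,
\begin{align*}
\Pro_e - \Pro_{e,\mathsf{Bayes}} \;\leq\; \sum_{c\ne j}\Big| \Pro\!\big((\hatw_c-\hatw_j)^\top\x<0,\,y=c\big) - \Pro\!\big((\boldmu_c-\boldmu_j)^\top\x<0,\,y=c\big)\Big|.
\end{align*}
Using $\Pro(y=c\mid\x)\le 1$, each term is further bounded by $\Pro\!\big(\mathrm{sign}((\hatw_c-\hatw_j)^\top\x)\ne \mathrm{sign}((\boldmu_c-\boldmu_j)^\top\x)\big)$. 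This yields $k(k-1)$ pairwise sign-disagreement probabilities and accounts for the $k^2$ prefactor in \eqref{eq:mlmerror}.

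Next, crucially exploiting the fact that under the MLM $\x\sim\Nn(\mathbf{0},\Sigmab)$ is \emph{unconditionally} Gaussian (a feature absent in the GMM), each per-pair probability, conditional on the training data, equals the classical Gaussian angular quantity. If $\theta_{c,j}$ denotes the angle between $\hatw_c-\hatw_j$ and $\boldmu_c-\boldmu_j$ in the $\Sigmab$-inner product, then
\begin{align*}
\Pro\!\big(\mathrm{sign}((\hatw_c-\hatw_j)^\top\x)\ne \mathrm{sign}((\boldmu_c-\boldmu_j)^\top\x)\big) \;=\; \frac{\theta_{c,j}}{\pi} \;=\; \tfrac{1}{2}-\tfrac{1}{\pi}\tan^{-1}\!\big(\mathsf{SNR}_{c,j}\big),
\end{align*}
where $\mathsf{SNR}_{c,j}:=\cot(\theta_{c,j})$. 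Taking the worst pair reproduces the structural form of the bound in \eqref{eq:mlmerror}, so the remaining task is a high-probability lower bound on $\mathsf{SNR}_{c,j}$.

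For the SNR lower bound, I would work in the eigenbasis of $\Sigmab$, where Assumption~\ref{as:mlmfull} places each $\boldmu_c$ along a distinct coordinate $j_c\in[s]$. Writing $\hatw_c-\hatw_j=\X(\X^\top\X)^{-1}(\vb_c-\vb_j)$, the numerator and denominator of $\mathsf{SNR}_{c,j}$ decompose coordinatewise into (i) a \emph{signal survival} along $j_c,j_j$, (ii) a \emph{contamination} on the remaining signal-bearing coordinates in $[s]$, and (iii) a \emph{bulk/tail} term aggregated over coordinates $j>s$. Each component is amenable to a leave-one-out/Sherman--Morrison expansion of $(\X^\top\X)^{-1}$ combined with Gaussian concentration, mirroring the binary bi-level analysis of~\cite{muthukumar2021classification}. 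Plugging in the bi-level parameters $p=n^m$, $s=n^r$, $a=n^{-q}$ produces the two claimed SNR scalings; the dichotomy $q\lessgtr 1-r$ corresponds to whether the effective noise is dominated by the bulk ($q<1-r$) or by the spike block ($q>1-r$).

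The main obstacle is the coupling between $\X$ and $\Y$ under the MLM: unlike the GMM case of Theorem~\ref{thm:classerrorgmm}, one cannot write $\X$ as a deterministic-mean part plus independent Gaussian noise, so the recursive argument of Section~\ref{sec:pforderoneoutline} does not transfer. To sidestep this, I would condition on the $k$ signal coordinates $\{x_{i,j_c}\}_{c\in[k],i\in[n]}$ of every sample: under Assumption~\ref{as:mlmfull} the softmax depends only on these coordinates, so labels become measurable under this conditioning, while the remaining $p-k$ bulk coordinates stay i.i.d.\ Gaussian at eigenvalue $\lambda_L$. Standard Gaussian-tail concentration on the bulk then delivers the variance bound needed for the tail term, and the conditioned signal block---handled via orthogonality and equal energy of the means---controls signal survival and contamination. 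A final high-probability union bound over the $k^2$ pairs, together with the $\Wsvm$--$\Wls$ equivalence from Corollary~\ref{cor:SVM=LS}, completes the proof.
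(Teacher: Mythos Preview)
Your high-level architecture matches the paper closely: decompose the excess risk into pairwise sign-disagreement events, convert each to the $\tfrac12-\tfrac1\pi\tan^{-1}$ form via Gaussian geometry, and then lower-bound the per-pair SNR by controlling a survival numerator against a contamination denominator through a leave-one-out / Sherman--Morrison expansion of $(\X^\top\X)^{-1}$. The paper's Lemma~\ref{lem:sucmulticlass} does exactly your first two steps (though via the cleaner route $\Pro_e\le\Pro(\hat y\neq y^*)+\Pro_{e,\mathsf{Bayes}}$ rather than your $|\Pro(\cdot)-\Pro(\cdot)|$ decomposition, which as written is not obviously valid and would need separate justification).

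The substantive gap is in your treatment of the $\X$--$\Y$ coupling. Conditioning on the $k$ signal coordinates does decouple the bulk from the labels and lets Gaussian tail bounds for the contamination term go through essentially as in the binary case. But it does \emph{not} sidestep the survival computation: after your conditioning, the survival numerator $\uvtilde_1^\top \A_{-1}^{-1}\y_1$ still concentrates around $c(k)\cdot\mathrm{tr}(\A_{-1}^{-1})$ with $c(k):=\E[\tilde u_{1,i}\,y_{1,i}]$, and the entire argument hinges on $c(k)>0$. Under the multinomial softmax this is a nontrivial fact---$\tilde u_{1,i}=(u_{1,i}-u_{2,i})/\sqrt2$ and $y_{1,i}=v_{1,i}-v_{2,i}$ are coupled through all $k$ logits---and ``orthogonality and equal energy of the means'' does not deliver it. The paper isolates exactly this as the central new difficulty of the MLM over the binary constant-noise model, and computes $c(k)$ explicitly via Stein's lemma applied to $g(\mathbf U)=(e^{U_1}-e^{U_2})/\sum_{c} e^{U_c}$, showing $\E[\partial g/\partial U_1]>0$. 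Your proposal needs an equivalent step; without it the survival lower bound---and hence the SNR scalings in~\eqref{eq:mlmerror}---do not follow.
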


The proof of Theorem \ref{thm:classerrormlm} is presented in Section \ref{sec:proofclasserrormlm}.
We will show in the subsequent Section~\ref{sec:benignoverfitting} that, although the rate in Equation~\eqref{eq:mlmerror} is worse in its dependence on $q$ and $r$ than for the equivalent binary classification problem, the conditions for benign overfitting turn out to coincide in the regime where we keep $k$ constant with respect to $n$.
{\subsection{Means following the simplex-ETF geometry}
Next, we derive generalization bounds under an entirely different assumption on the geometry of mean vectors. Specifically, we consider the setting in which the mean vectors follow the simplex ETF geometry structure that was discussed in Section~\ref{sec:ncconnection}. Recall, this setting is particularly interesting and relevant to practice, as the ETF geometry describes the geometry of learnt class-mean embeddings of deep-nets when trained with the CE loss to completion (i.e.,~beyond achieving zero $0-1$ training error)~\cite{papyan2020prevalence}.}
{
\begin{theorem}
\label{cor:classerrorgmmnc}
Let the nearly equal energy/prior Assumption~\ref{ass:equalmuprob} and the conditions in \Equation~\eqref{eq:thmsvmgmm} hold. Additionally, assume the means form a ETF structure, i.e. $\mub_i^T\mub_i = -(k-1)\mub_i^T\mub_j$, for $i \ne j$. Further assume constants $C_1,C_2
,C_3>1$ such that $\big(1-\frac{C_1}{\sqrt{n}}-\frac{C_2n}{p}\big)\tn{\boldmu} > C_3\minklogn.$
Then, there exist additional constants $c_1,c_2,c_3$ and $C_4 > 1$ such that both the MNI solution $\Wls$ and the multiclass SVM solution $\Wsvm$ satisfy
\begin{align}
\Pro_{e|c} \leq  (k-1)\exp{\left(-\tn{\boldmu}^2\frac{\left(\left(1-\frac{C_1}{\sqrt{n}}-\frac{C_2n}{p}\right)\tn{\boldmu}-C_3\minklogn\right)^2}{C_4\left(1+\frac{kp}{n\tn{\boldmu}^2}\right)}\right)}
\end{align}
with probability at least $1-\frac{c_1}{n} - c_2ke^{-\frac{n}{c_3k^2}}$, for every $c \in [k]$.
Moreover, the same bound holds for the total classification error $\Pro_{e}$.
\end{theorem}
{The proof of this theorem is provided in Appendix \ref{sec:etfgenproof}. The non-zero inner products between $\mub_i^T$ and $\mub_j$ contribute ``negatively'' to the signal $\mub_i^T\mub_i$. This negative contribution can be negated because of the simplex ETF structure $\mub_i^T\mub_i = -(k-1)\mub_i^T\mub_j$, hence the bounds in \Theorem~\ref{thm:classerrorgmm} still hold.} 
}
\section{Conditions for benign overfitting}\label{sec:benignoverfitting}



Thus far, we have studied the classification error of the MNI classifier under the GMM data model (\Theorem~\ref{thm:classerrorgmm}), and shown equivalence of the multiclass SVM and MNI solutions (Theorems~\ref{lem:key},~\ref{thm:svmgmm} and Corollary~\ref{cor:SVM=LS}). 
Combining these results, we now provide sufficient conditions under which the classification error of the multiclass SVM solution (also of the MNI) approaches $0$ as the number of parameters $p$ increases.
First, we state our sufficient conditions for harmless interpolation under the GMM model --- these arise as a consequence of Theorem~\ref{thm:classerrorgmm}, and the proof is provided in Appendix~\ref{sec:benignproof}. 

\begin{corollary}
\label{cor:benigngmm}
Let the same assumptions as in \Theorem~\ref{thm:classerrorgmm} hold. Then, for finite number of classes $k$ and sufficiently large sample size $n$, there exist positive constants $c_i$'s and $C_i$'s $> 1$, such that the multiclass SVM classifier $\Wsvm$ in ~\eqref{eq:k-svm} satisfies the {simplex} interpolation constraint in~\eqref{eq:symmetry-constraint} and its total classification error approaches $0$ as $\lp(\frac{p}{n}\rp) \to \infty$ with probability at least $1-\frac{c_1}{n} - c_2ke^{-\frac{n}{c_3k^2}}$, provided that the following conditions hold:

\noindent (1). When $\tn{\boldmu}^2 > \frac{kp}{n}$,
\begin{align*}
     \frac{n}{C_1k}\tn{\boldmu}^2 > p > \max\{C_2k^3n\log(kn)+n-1, C_3k^{1.5}n^{1.5}\tn{\boldmu}\}.
\end{align*}


\noindent (2). When $\tn{\boldmu}^2 \le \frac{kp}{n}$,
\begin{align*}
    &p > \max\{C_2k^3n\log(kn)+n-1, C_3k^{1.5}n^{1.5}\tn{\boldmu}, \frac{n\tn{\boldmu}^2}{k}\},\\
    \text{and} \ &\tn{\boldmu}^4 \ge C_4\lp(\frac{p}{n}\rp)^{\alpha}, \ \ \text{for} \ \alpha >1.
\end{align*}
When $n$ is fixed, the conditions for benign overfitting for $\Wsvm$ become $\tn{\boldmu} = \Theta(p^\beta) \text{ for } \beta\in(1/4,1).$
\end{corollary}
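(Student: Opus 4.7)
The plan is to combine Theorem~\ref{thm:classerrorgmm} (bounding the MNI error) with Theorem~\ref{thm:svmgmm} and Corollary~\ref{cor:SVM=LS} (which transfer the bound from $\Wls$ to $\Wsvm$ and establish simplex interpolation via~\eqref{eq:symmetry-constraint}); both apply on the same event of probability at least $1 - c_1/n - c_2 k e^{-n/(c_3 k^2)}$, so the same bound carries over to the SVM. The remaining task is purely analytical: find regimes of $(n,p,\tn{\boldmu})$ in which the exponent
\[
E \;=\; -\,\tn{\boldmu}^2\,\frac{\bigl((1 - C_1/\sqrt{n} - C_2 n/p)\tn{\boldmu} - C_3 \minklogn\bigr)^2}{C_4\bigl(1 + kp/(n\tn{\boldmu}^2)\bigr)}
\]
diverges to $-\infty$ as $p/n\to\infty$ with $k$ fixed, so that the prefactor $(k-1)e^{E}$ in Theorem~\ref{thm:classerrorgmm} vanishes.

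Under the overparameterization hypothesis~\eqref{eq:thmsvmgmm} and $\tn{\boldmu}\to\infty$, the factor $1 - C_1/\sqrt{n} - C_2 n/p$ is bounded below by a positive constant, and with $k$ finite the subtractive term $C_3 \minklogn$ is dominated by $\tn{\boldmu}$. Hence the squared numerator is $\Theta(\tn{\boldmu}^2)$, and up to constants $E \asymp -\,\tn{\boldmu}^4/(1 + kp/(n\tn{\boldmu}^2))$; the denominator then motivates the two-way case split of the corollary. In Case~(1), the upper bound $p < n\tn{\boldmu}^2/(C_1 k)$ places us in the regime $kp/(n\tn{\boldmu}^2) < 1$, so the denominator is $O(1)$ and $E \asymp -\tn{\boldmu}^4\to -\infty$ because $\tn{\boldmu}^2 > C_1 kp/n$ itself diverges; compatibility of $p < n\tn{\boldmu}^2/(C_1 k)$ with $p > C_3 k^{1.5} n^{1.5}\tn{\boldmu}$ is precisely what forces $\tn{\boldmu}$ to be large enough for the lower bound on $\tn{\boldmu}$ in Theorem~\ref{thm:classerrorgmm}. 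In Case~(2), $\tn{\boldmu}^2 \le kp/n$ yields $1 + kp/(n\tn{\boldmu}^2) \asymp kp/(n\tn{\boldmu}^2)$ and hence $E \asymp -\,n\tn{\boldmu}^6/(kp)$; substituting $\tn{\boldmu}^4 \ge C_4(p/n)^{\alpha}$ gives $|E| \gtrsim k^{-1}(p/n)^{3\alpha/2 - 1}$, which tends to $+\infty$ whenever $\alpha > 2/3$, and in particular for any $\alpha > 1$. The three lower bounds on $p$ in Case~(2) are exactly those needed to enforce~\eqref{eq:thmsvmgmm} while keeping $\tn{\boldmu}^2 \le kp/n$ consistent with $\tn{\boldmu}^4 \ge C_4(p/n)^{\alpha}$.

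For the fixed-$n$ reduction, substituting $\tn{\boldmu} = \Theta(p^{\beta})$ collapses the conditions to a single window on $\beta$. The SVM=MNI constraint $p > C_3 k^{1.5} n^{1.5}\tn{\boldmu}$ forces $\beta < 1$. On the other side, Case~(1) activates when $\tn{\boldmu}^2 \sim p^{2\beta} \gg p$, i.e., $\beta > 1/2$, while Case~(2) covers $\beta \in (1/4, 1/2]$ since the requirement $\tn{\boldmu}^4 \gtrsim p^{\alpha}$ for some $\alpha > 1$ translates to $4\beta > 1$. Their union gives $\beta\in(1/4,1)$, matching the conclusion. The only conceptual subtlety is verifying that the split on the sign of $\tn{\boldmu}^2 - kp/n$ exactly captures the qualitative change in the exponent's denominator and that, in each regime, the corollary's lower bound on $\tn{\boldmu}$ is tailored precisely so as to drive $E\to -\infty$; all remaining work is routine bookkeeping of constants already present in the hypotheses of Theorems~\ref{thm:svmgmm} and~\ref{thm:classerrorgmm}.
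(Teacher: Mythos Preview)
Your proposal is correct and follows essentially the same approach as the paper: combine Theorem~\ref{thm:svmgmm}/Corollary~\ref{cor:SVM=LS} with Theorem~\ref{thm:classerrorgmm} on the common high-probability event, split on whether $\tn{\boldmu}^2$ exceeds $kp/n$, and in each case show the exponent diverges, then read off the window $\beta\in(1/4,1)$ for fixed $n$. The only cosmetic difference is that in Case~(2) you work from the main-text form of Theorem~\ref{thm:classerrorgmm} (denominator $1+kp/(n\tn{\boldmu}^2)$) and obtain $|E|\gtrsim (p/n)^{3\alpha/2-1}$, whereas the paper's appendix argues from the equivalent intermediate bound with denominator $\tn{\boldmu}^2+kp/n$ and obtains $|E|\gtrsim (p/n)^{\alpha-1}$; both diverge for $\alpha>1$, so the conclusion is unaffected.
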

Note that the upper bound on $\tn{\boldmu}$ comes from the conditions that make SVM=MNI in Theorem \ref{thm:svmgmm}; indeed, a distinct corollary of Theorem~\ref{thm:classerrorgmm} is that $\Wls$ overfits benignly with sufficient signal strength $\tn{\boldmu} = \Omega(p^{1/4})$.
We can compare our result with the binary case~\cite{wang2020benign}.
When $k$ and $n$ are both finite, the condition $\tn{\boldmu} = \Theta(p^\beta) \text{ for } \beta\in(1/4,1)$ is the same as the binary result. 

Next, we state our sufficient and necessary conditions for harmless interpolation under the MLM model.
\begin{corollary}\label{cor:benignmlm}
Let the same assumptions as in Theorem~\ref{thm:classerrormlm} hold.
Then, for finite number of classes $k$, the following parameters of the bilevel ensemble (Assumption~\ref{as:bilevel}) ensure that the total classification error of $\Wsvm$ approaches $0$ as $n \to \infty$:
\begin{align}\label{eq:benignmlm}
p > 1 \text{ and } q < (1 - r) + \frac{(m-1)}{2}.
\end{align}
Further, when $q > (1 - r)$, the same conclusion holds for $\Wls$.
\end{corollary}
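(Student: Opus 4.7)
\textbf{Proof plan for Corollary~\ref{cor:benignmlm}.} My plan is to deduce the corollary directly from Theorem~\ref{thm:classerrormlm} (the MNI excess-risk bound), together with Theorem~\ref{thm:svmmlmaniso} and Corollary~\ref{cor:SVM=LS} for the statement about the multiclass SVM. The crucial observation is that since $\arctan(x) \nearrow \pi/2$ as $x \to \infty$, the upper bound
\[
    k^{2}\Bigl(\tfrac{1}{2} - \tfrac{1}{\pi}\arctan\!\bigl(\mathsf{SNR}(n)\bigr)\Bigr)
\]
in~\eqref{eq:mlmerror} tends to zero whenever $\mathsf{SNR}(n) \to \infty$. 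With $k$ held constant in $n$, the proof therefore reduces to identifying the range of bilevel parameters under which the lower bounds on $\mathsf{SNR}(n)$ in~\eqref{eq:mlmerror} diverge.

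I would handle the two regimes of $q$ separately. When $0 < q < (1-r)$, the bound $\mathsf{SNR}(n) \gtrsim \sqrt{\log n}\cdot n^{\min\{m-1,\,1-r\}/2}$ diverges unconditionally, since $m > 1$ and $r < 1$ by Assumption~\ref{as:bilevel}. When $q > (1-r)$, the SNR exponent is $\tfrac{1}{2}\min\{m-1,\,2q+r-1\} + (1-r) - q$, so I would split on which argument attains the minimum. If $2q+r-1 \le m-1$ (equivalently $q \le (m-r)/2$), the exponent collapses to $(1-r)/2 > 0$ and SNR diverges with no further restriction. If instead $2q+r-1 > m-1$, the exponent simplifies to $(m-1)/2 + (1-r) - q$, which is positive precisely when $q < (1-r) + (m-1)/2$. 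Since $(m-r)/2 \le (1-r) + (m-1)/2$ whenever $r \le 1$, these two sub-cases pool to give $q < (1-r) + (m-1)/2$ as the combined sufficient condition, and this condition also subsumes the $q < (1-r)$ regime, yielding the first assertion of the corollary.

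For the ``further'' statement in the regime $q > (1-r)$, my plan is to verify that the bilevel ensemble satisfies the effective-overparameterization conditions~\eqref{eq:thmanisolink} of Theorem~\ref{thm:svmmlmaniso}, so that Corollary~\ref{cor:SVM=LS} transfers the classification-error conclusion between $\Wls$ and $\Wsvm$. A short computation from Assumption~\ref{as:bilevel} gives $\|\lbdb\|_1 = p$ and, for $q < m-r$ (which is part of Assumption~\ref{as:bilevel}), $\|\lbdb\|_\infty = \lambda_H = n^{\,m-q-r}$, so that $d_\infty = p/\lambda_H = n^{\,q+r}$. Requiring $d_\infty \gtrsim k^{2}n\log(kn)$ with $k$ constant then translates exactly to $q > 1-r$. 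An analogous direct estimate of $\|\lbdb\|_2^{2}$ shows that $d_2 \gg n$ in the same regime, so both inequalities in~\eqref{eq:thmanisolink} hold, and the equivalence $\Wsvm = \Wls$ kicks in.

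The only mild obstacle is the case analysis in the second SNR regime of~\eqref{eq:mlmerror}: one has to verify that the two sub-cases of the $\min$ cover $q > (1-r)$ contiguously, and that the binding sub-case produces exactly the threshold $q < (1-r) + (m-1)/2$ rather than a spurious stricter condition. Beyond that, the argument is essentially bookkeeping on the bilevel parameters combined with the elementary asymptotics $\arctan(x) \to \pi/2$.
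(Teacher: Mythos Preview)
Your proposal is correct and follows essentially the same route as the paper, which simply states that the corollary ``arise[s] directly from Equation~\eqref{eq:mlmerror} of Theorem~\ref{thm:classerrormlm}.'' Your explicit case analysis on the $\mathsf{SNR}(n)$ exponent and your verification of the conditions in~\eqref{eq:thmanisolink} for the bilevel ensemble (yielding $d_\infty = n^{q+r}$, hence $q>1-r$) are exactly the bookkeeping the paper leaves to the reader.
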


\begin{proof}
   We work from Equation~\eqref{eq:mlmerror} of Theorem~\ref{thm:classerrormlm}.
   For $\Pro_e - \Pro_{e,\mathsf{Bayes}} \to 0$ as $n \to \infty$, we require the exponent \\ $\frac{\min\{(m-1),(2q+r-1),(2q + 2r - 3/2)\}}{2} + (1 - r) - q \geq 0$. If $2q + 2r - 3/2$ is the minimizer, we would have $q + r - 3/4 + 1 - r - q = 1/4$, in which case the inequality is satisfied. 
   If $2q+r-1$ is the minimizer, we would have $q+r/2 - 1/2 + 1 - r - q = \frac{1-r}{2} > 0$, in which case the inequality is again satisfied. 
   Otherwise, we have $\frac{m-1}{2} + 1 - r - q > 0$, which implies $q < (1 -r) + \frac{(m-1)}{2}$.
\end{proof}
We can again compare our result with the binary case~\cite{muthukumar2021classification}: when $k$ is finite, the conditions in Equation~\eqref{eq:benignmlm} are identical to those for the binary case.
We also note that while Theorem~\ref{thm:classerrormlm} only provides an upper bound on MLM classification error,~\cite{muthukumar2021classification} provides lower bounds for the binary case that automatically apply to the MLM for the special case $k = 2$.
While there is a gap between the non-asymptotic rates, the necessary conditions for consistency coincide with Equation~\eqref{eq:benignmlm}.
Therefore, Equation~\eqref{eq:benignmlm} encapsulates sufficient and necessary conditions for consistency when $k$ is kept constant with respect to $n$.
Moreover, as~\cite{muthukumar2021classification} show, the condition $q \leq (1 - r)$ would be requirement for a corresponding \emph{regression} task to generalize; consequently, Corollary~\ref{cor:benignmlm} shows that \emph{multiclass classification can generalize even when regression does not.}

We particularly note that, Corollaries~\ref{cor:benigngmm} and~\ref{cor:benignmlm} imply benign overfitting in regimes that cannot be explained by classical \textit{training-data-dependent} bounds based on the margin~\cite{schapire1998boosting}.
While the shortcomings of such margin-based bounds in the highly overparameterized regime are well-documented, e.g. \cite{dziugaite2017computing}, we provide a brief description here for completeness.
For the MLM,~\cite[Section 6]{muthukumar2021classification} shows (for the binary case) that margin-based bounds could only predict harmless interpolation if we had the significantly stronger condition $q \leq (1 - r)$ (also required for consistency of the corresponding regression task).
For the GMM, we verify here that the margin-based bounds could only predict benign overfitting if we had the significantly stronger condition $\beta \in (1/2,1)$ (see also \cite[Section 9.1]{wang2020benign}):
in the regime where SVM = MNI, the margin is exactly equal to $1$.
The margin-based bounds (as given in, e.g.~\cite{bartlett2002rademacher}), can be verified to scale as $\mathcal{O}\left(\sqrt{\frac{\text{trace}(\Sigmab_{\text{un}})}{n || \Sigmab_{\text{un}} ||_2}}\right)$ with high probability, where $\Sigmab_{\text{un}} := \E\left[\x\x^\top\right]$ denotes the \textit{unconditional} covariance matrix under the GMM.
In the case of the binary GMM and isotropic noise covariance, an elementary calculation shows that the spectrum of $\Sigmab_{\text{un}}$ is given by $\begin{bmatrix} \tn{\boldmu}^2 + 1 & 1 & \ldots & 1 \end{bmatrix}$; plugging this into the above bound requires $\tn{\boldmu}^2 \gg \frac{p}{n}$ for the margin-based upper bound to scale as $o(1)$.
This clearly does not explain benign overfitting when SVM = MNI, which we showed requires $\tn{\boldmu}^2 \leq \frac{p}{n}$.




\begin{figure}[t]
\centering
\begin{minipage}[b]{0.49\linewidth}
  \centering
  \centerline{\includegraphics[width=8cm]{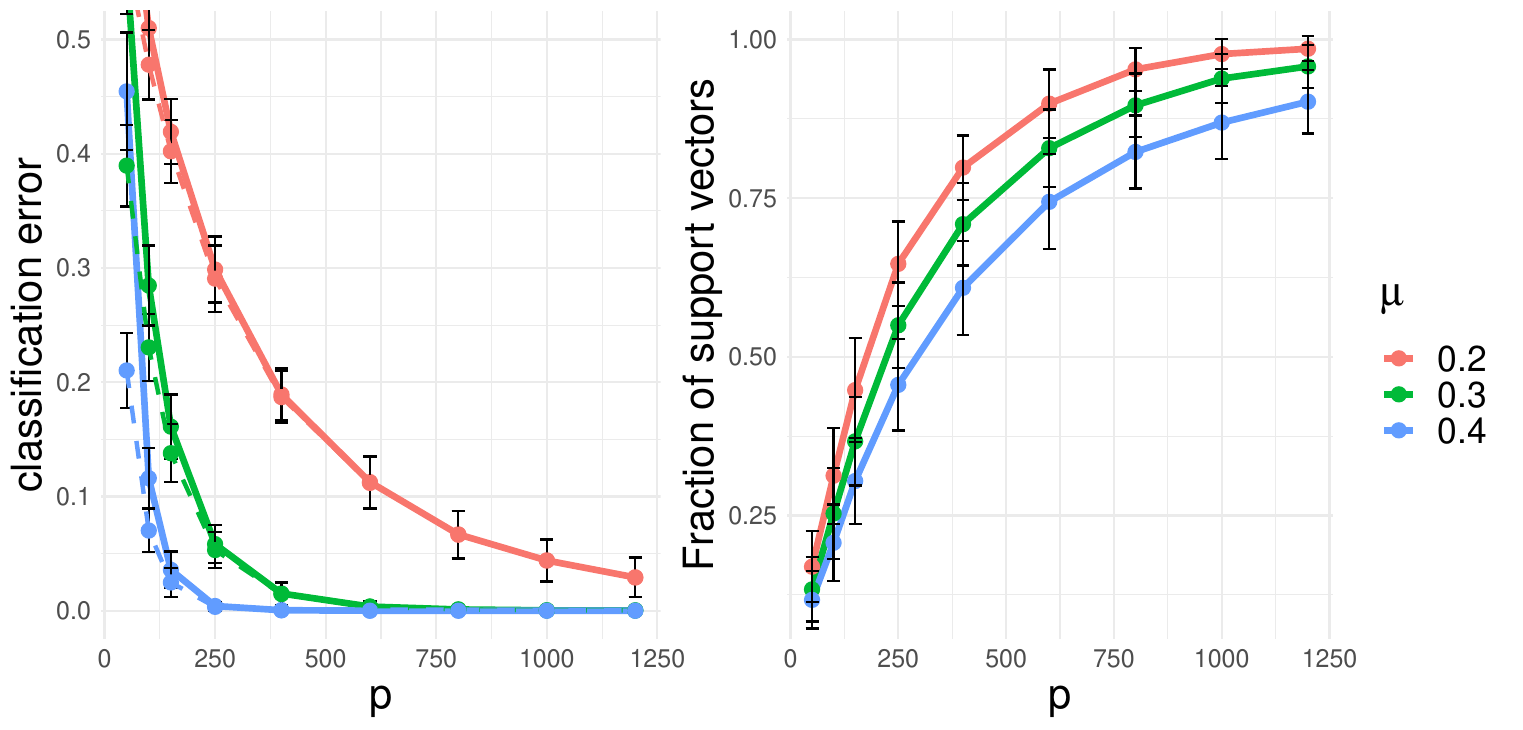}}
  \centerline{(a) $k = 4$}\medskip
\end{minipage}
\begin{minipage}[b]{0.49\linewidth}
  \centering
  \centerline{\includegraphics[width=8cm]{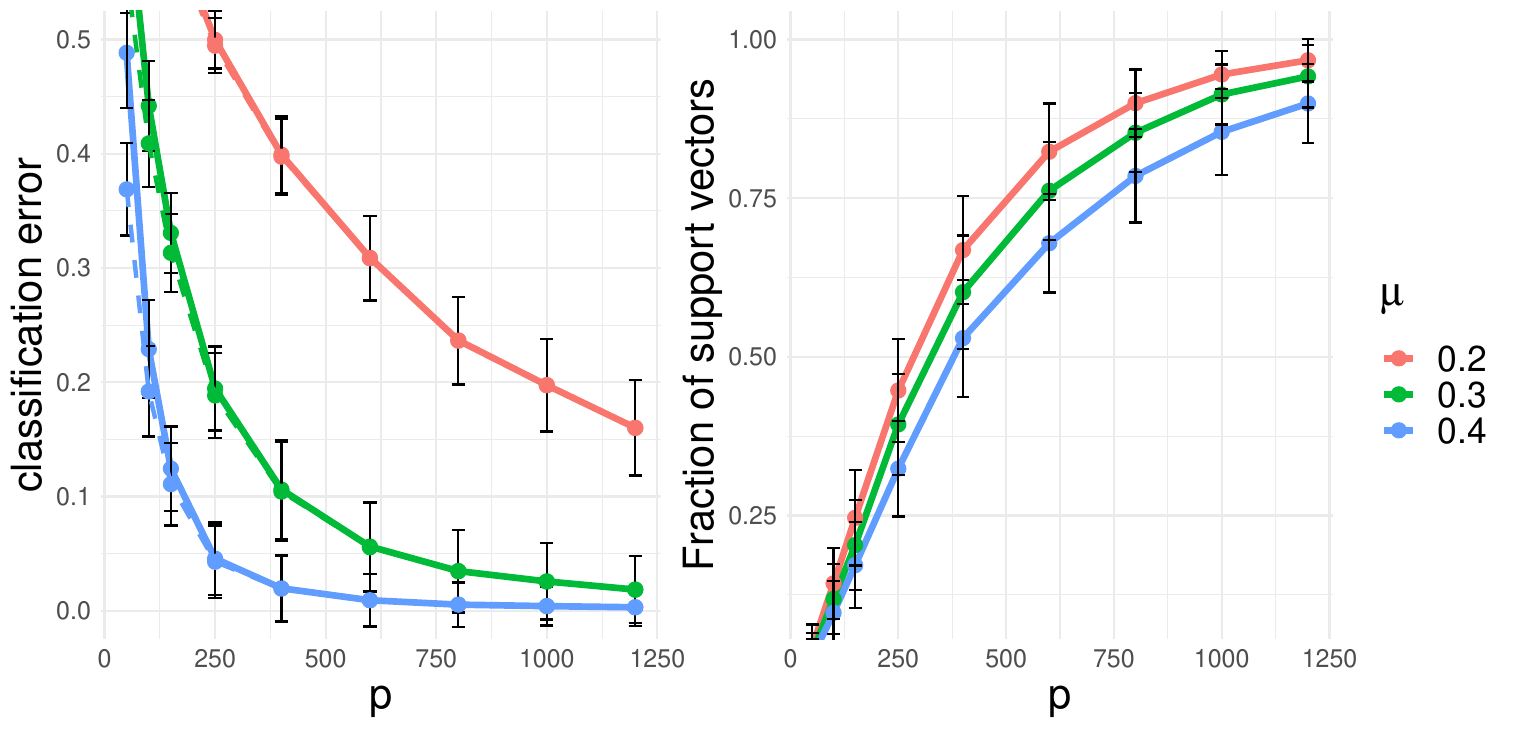}}
  \centerline{(b) $k = 6$}\medskip
\end{minipage}
    \caption{Evolution of total classification error and fraction of support vectors as a function of $p$ in the GMM case. Figure (a) considers $k = 4$ and Figure (b) considers $k = 6$. We consider the energy of all class means to be $\tn{\boldmu} = \mu\sqrt{p}$, where $\mu = 0.2, 0.3$ and $0.4$. Observe that the total classification error approaches $0$ and the fraction of support vectors approaches $1$ as $p$ gets larger.}
    \label{fig-gmmbenign01sm}
\end{figure}
Finally, we present numerical illustrations validating our benign overfitting results in Corollary~\ref{cor:benigngmm}. In \Figure~\ref{fig-gmmbenign01sm}(a), we set the number of classes $k=4$. To guarantee sufficient overparameterization, we fix $n=40$ and vary $p$ from $50$ to $1200$. 
We simulate $3$ different settings for the mean matrices: each has orthogonal and equal-norm mean vectors $\tn{\boldmu} = \mu\sqrt{p}$, with $\mu = 0.2, 0.3$ and $0.4$. 
\Figure~\ref{fig-gmmbenign01sm} plots the classification error as a function of $p$ for both MNI estimates (solid lines) and multiclass SVM solutions (dashed lines). Different colors correspond to different mean norms. The solid and dashed curves almost overlap as predicted from our results in Section~\ref{sec:support-vectors}. We verify that as $p$ increases, the  classification error decreases towards zero. Observe that the fraction of support vectors approaches $1$ as $p$ gets larger. Further, the classification error goes to zero very fast when $\mu$ is large, but then the proportion of support vectors increases at a slow rate. In contrast, when $\mu$ is small, the proportion of support vectors increases fast, but the classification error decreases slowly. \Figure~\ref{fig-gmmbenign01sm}(b) uses the same setting as in \Figure~\ref{fig-gmmbenign01sm}(a) except for setting $k=6$ and $n=30$. 
Observe that the classification error continues to go to zero and the proportion of support vectors continues to increase, but both become slower as the number of classes is now greater.


\section{Proofs of main results}\label{sec:proofsmaintext}

In this section, we provide the proofs of Theorems~\ref{lem:key},~\ref{thm:svmgmm} and~\ref{thm:classerrormlm}.
The proof techniques we developed for these results convey novel technical ideas that also form the core of the rest of the proofs, which we defer to the Appendix.

\subsection{Proof of \Theorem~\ref{lem:key}}\label{sec:key_thm_proof_sketch}

\noindent\textbf{Argument sketch.}~We split the proof of the theorem in three steps. To better convey the main ideas, we first outline the three steps in this paragraph before discussing their details in the remaining of this section.

\underline{Step 1:} The first key step to  prove \Theorem~\ref{lem:key} is constructing a new parameterization of the dual of the multiclass SVM, which we show takes the following form:
\begin{align}\label{eq:k-svm-dual-beta-sketch}
&\max_{\betab_c\in\R^n, c\in[k]} ~~~~\sum_{c\in[k]} {\betab_c^T\z_c -\frac{1}{2}\|\X\betab_c\|_2^2}\\
&~~~~~\text{sub. to}~~~~~~~~\beta_{y_i,i} = -\sum_{c\neq y_i}\beta_{c,i},~\forall i\in[n] \nn
\quad\text{and}\quad \betab_c\odot\z_c\geq \mathbf{0}, \forall c\in[k].
\end{align}
Here, for each $c\in[k]$ we let $\betab_c=[\beta_{c,1},\beta_{c,2},\ldots,\beta_{c,n}]\in\R^n.$ We also show by complementary slackness the following implication for any \emph{optimal} $\beta^*_{c,i}$ in \eqref{eq:k-svm-dual-beta-sketch}:
\begin{align}
    \label{eq:complementary_slack-sketch}
    z_{c,i}\beta_{c,i}^*>0 ~\implies~ (\wh_{y_i}-\wh_c)^T\x_i = 1.
\end{align}
Thus, to prove \Equation~\eqref{eq:equality_ksvm}, it will suffice showing that $z_{c,i}\beta_{c,i}^*>0, \forall i\in[n], c\in[k]$ provided that \Equation~\eqref{eq:det-con} holds. 
%

\underline{Step 2:}~To do this, we prove that the \emph{unconstrained} maximizer in~\eqref{eq:k-svm-dual-beta-sketch}, that is
$
\hat\betab_c = (\X^T\X)^{+}\z_c,~\forall c\in[k]
$ is feasible, and therefore optimal, in~\eqref{eq:k-svm-dual-beta-sketch}. 
Now, note that \Equation~\eqref{eq:det-con} is equivalent to $\z_c\odot\hat\betab_c>0$; thus, we have found that $\hat\betab_c, c\in[k]$ further satisfies the $n$ \textit{strict} inequality constraints in~\eqref{eq:complementary_slack-sketch}
which completes the proof of the first part of the theorem (\Equation~\eqref{eq:equality_ksvm}).

\underline{Step 3:} Next, we outline the proof of \Equation~\eqref{eq:symmetry-constraint}. 
We consider the simplex-type OvA-classifier in~\eqref{eq:sym-cs-svm-sketch}. 
The proof has two steps. First, using similar arguments to what was done above, we show that when \Equation~\eqref{eq:det-con} holds, then all the inequality constraints in~\eqref{eq:sym-cs-svm-sketch} are active at the optimal. 
That is, the minimizers $\w_{\text{OvA},c}$ of~\eqref{eq:sym-cs-svm-sketch} satisfy \Equation~\eqref{eq:symmetry-constraint}. 
Second, to prove that \Equation~\eqref{eq:symmetry-constraint} is satisfied by the minimizers $\hat\w_c$ of the multiclass SVM in~\eqref{eq:k-svm}, we need to show that $\w_{\text{OvA},c}=\hat\w_c$ for all $c\in[k]$. We do this by showing that, under \Equation~\eqref{eq:det-con}, the duals of~\eqref{eq:k-svm} and~\eqref{eq:sym-cs-svm-sketch} are equivalent. 
By strong duality, the optimal costs of the primal problems are also the same. Then, because a) the objective is the same for the two primals, b)  $\w_{\text{OvA},c}$ is feasible in \eqref{eq:sym-cs-svm-sketch} and c)~\eqref{eq:k-svm} is strongly convex, we can conclude with the desired.

\vspace{5pt}
\noindent\textbf{Step 1: Key alternative parameterization of the dual.}~We start by writing the dual of the multiclass SVM, repeated here for convenience:
\begin{align}\label{eq:k-svm-square}
    \min_\W \frac{1}{2}\norm{\W}_F^2\quad\text{sub. to}~~ (\w_{y_i}-\w_c)^\top\x_i\geq1,~\forall i \in [n], c \in [k] : c \neq y_i .
\end{align}
We have dual variables $\{\la_{c,i}\}$ for every $i \in [n], c \in [k]: c \neq y_i$ corresponding to the constraints on the primal form above.
Then, the dual of the multiclass SVM takes the form
\begin{align}
    \label{eq:k-svm-dual}
    \max_{\la_{c,i}\geq 0}~\sum_{i\in[n]}\Big(\sum_{\substack{c\in[k]\\c\neq y_i}}\la_{c,i}\Big) - \frac{1}{2}\sum_{c\in[k]}\Big\|{
    \sum_{i\in[n]:y_i=c} \Big(\sum_{\substack{c'\in[k]\\c'\neq y_i}}\la_{c',i}\Big)\x_i - \sum_{i\in[n]:y_i\neq c}\la_{c,i}\x_i
    }\Big\|_2^2.
\end{align}
Let $\hat\la_{c,i}, i\in[n], c\in[k]:c \neq y_i$ be maximizers in Equation~\eqref{eq:k-svm-dual}. 
By complementary slackness, we have 
\begin{align}
    \label{eq:complementary_slack}
    \hat\la_{c,i}>0 ~\implies~ (\wh_{y_i}-\wh_c)^\top\x_i = 1.
\end{align}
Thus, it will suffice to prove that $\hat\la_{c,i}>0, \forall i\in[n], c\in[k]: c \neq y_i$ provided that \eqref{eq:det-con} holds. 

It is challenging to work directly with Equation~\eqref{eq:k-svm-dual} because the variables $\la_{c,i}$ are coupled in the objective function.
Our main idea is to re-parameterize the dual objective in terms of new variables $\{\beta_{c,i}\}$, which we define as follows for all $c\in[k]$ and $i\in[n]$:
\begin{align}\label{eq:betas}
    \beta_{c,i} = \begin{cases}
   \sum_{c'\neq y_i}\la_{c',i} & , y_i=c,\\
   -\la_{c,i} & , y_i\neq c.
    \end{cases}
\end{align}
For each $c\in[k]$, we denote $\betab_c=[\beta_{c,1},\beta_{c,2},\ldots,\beta_{c,n}]\in\R^n.$
With these, we show that the dual objective becomes
\begin{align}
    \label{eq:dual_obj_betas}
    \sum_{c\in[k]} \betab_c^\top\z_c - \frac{1}{2}\sum_{c\in[k]}\Big\|\sum_{i\in[n]}\beta_{c,i}\x_i\Big\|_2^2 = \sum_{c\in[k]} {\betab_c^\top\z_c -\frac{1}{2}\|\X\betab_c\|_2^2} .
\end{align}
The equivalence of the quadratic term in $\betab$ is straightforward.
To show the equivalence of the linear term in $\betab$, we denote $A:=\sum_{i\in[n]}\Big(\sum_{\substack{c\in[k],c\neq y_i}}\la_{c,i}\Big)$, and simultaneously get
\begin{align*}
    A = \sum_{i \in [n]} \beta_{y_i,i} \qquad\text{and}\qquad  A = \sum_{i \in [n]} \sum_{c \neq y_i} (-\beta_{c,i}),
\end{align*}
by the definition of variables $\{\beta_{c,i}\}$ in Equation~\eqref{eq:betas}.
Then, we have
\begin{align*}
A = \frac{k-1}{k} \cdot A + \frac{1}{k} \cdot A &= \frac{k-1}{k} \sum_{i\in[n]}\beta_{y_i,i} +\frac{1}{k}\sum_{i\in[n]}\sum_{c\neq y_i} (-\beta_{c,i})\\
&\stackrel{\mathsf{(i)}}{=} \sum_{i\in[n]}\z_{y_i,i}\beta_{y_i,i} +\sum_{i\in[n]}\sum_{c\neq y_i} \z_{c,i}\beta_{c,i}\\
&= 
\sum_{i\in[n]}\sum_{c\in[k]} \z_{c,i}\beta_{c,i} = \sum_{c\in[k]}\betab_c^\top\z_c.
\end{align*}
Above, inequality $(\mathsf{i})$ follows from the definition of $\z_c$ in Equation~\eqref{eq:zc}, rewritten coordinate-wise as:
\begin{align*}
z_{c,i} = \begin{cases}
\frac{k-1}{k} ,& y_i=c,\\
-\frac{1}{k} ,& y_i\neq c.
\end{cases}
\end{align*}
Thus, we have shown that the objective of the dual can be rewritten in terms of variables $\{\beta_{c,i}\}$.
After rewriting the constraints in terms of $\{\beta_{c,i}\}$, we have shown that the dual of the SVM (Equation~\eqref{eq:k-svm}) can be equivalently written as in Equation \eqref{eq:k-svm-dual-beta-sketch}.
%
Note that the first constraint in \eqref{eq:k-svm-dual-beta-sketch} ensures consistency with the definition of $\betab_c$ in \Equation~\eqref{eq:betas}. 
The second constraint guarantees the non-negativity constraint of the original dual variables in \eqref{eq:k-svm-dual}, because we have
\begin{align*}
    \beta_{c,i} z_{c,i} = \frac{\lambda_{c,i}}{k} \text{ for all } i \in [n], c \in [k]: c \neq y_i .
\end{align*}
Consequently, we have
\begin{align}
    \beta_{c,i}z_{c,i} \geq 0 ~~\Longleftrightarrow~~ \lambda_{c,i}\geq 0 \label{eq:beta-la}
\end{align}
for all $c\in[k]$ and $i\in[n]:y_i\neq c$.
In fact, the equivalence above also holds with the inequalities replaced by strict inequalities. 
Also note that the second constraint for $c = y_i$ yields $\frac{k-1}{k} \sum_{c' \neq y_i} \lambda_{c',i} \geq 0$, which is automatically satisfied when Equation~\eqref{eq:beta-la} is satisfied.
Thus, these constraints are redundant.

\vspace{5pt}
\noindent\textbf{Step 2: Proof of Equation \eqref{eq:equality_ksvm}.}~Define
$$
\hat\betab_c := (\X^\top\X)^{+}\z_c,~\forall c\in[k].
$$
This specifies an \emph{unconstrained} maximizer in \eqref{eq:k-svm-dual-beta-sketch}. 
We will show that this unconstrained maximizer $\hat\betab_c, c \in [k]$ is feasible in the constrained program in \eqref{eq:k-svm-dual-beta-sketch}. 
Thus, it is in fact an optimal solution in \eqref{eq:k-svm-dual-beta-sketch}.

To prove this, we will first prove that $\hat\betab_c, c\in[k]$ satisfies the $n$ equality constraints in \eqref{eq:k-svm-dual-beta-sketch}.
For convenience, let $\g_i\in\R^n, i\in[n]$ denote the $i$-th row of $(\X^\top\X)^{+}$. Then, for the $i$-th element $\hat\beta_{c,i}$ of  $\hat\betab_{c}$, it holds that $\hat\beta_{c,i}=\g_i^\top\z_c$. Thus, for all $i\in[n]$, we have
\begin{align*}
    \hat\beta_{y_i,i}+\sum_{c\neq y_i}\hat\beta_{c,i} = \g_i^\top\Big( \z_{y_i} + \sum_{c\neq y_i}\z_c \Big) = \g_i^\top\Big( \sum_{c\in[k]}\z_c \Big) = 0,
\end{align*}
where in the last equality we used the definition of $\z_c$ in \eqref{eq:zc} and the fact that $\sum_{c\in[k]}\vb_c = \ones_n$, since each column of the label matrix $\Y$ has exactly one non-zero element equal to $1$.
Second, since Equation~\eqref{eq:det-con} holds, $\hat\betab_c, c\in[k]$ further satisfies the $n$ \textit{strict} inequality constraints in \eqref{eq:k-svm-dual-beta-sketch}.

We have shown that the unconstrained maximizer is feasible in the constrained program \eqref{eq:k-svm-dual-beta-sketch}. Thus, we can conclude that it is also a global solution to the latter. 
By Equation~\eqref{eq:beta-la}, we note that the \textcolor{black}{corresponding} original dual variables $\{\hat\lambda_{c,i}\}=\{k\,\hat\beta_{c,i}z_{c,i}\}$ are all strictly positive. \textcolor{black}{Now recall that under strong duality, any pair of primal-dual optimal solutions satisfies the KKT conditions. Hence the primal-dual pair $\big(\{\hat\w_c\},\{\hat\lambda_{c,i}\}\big)$ satisfies the complementary slackness condition of  Equation \eqref{eq:complementary_slack}. 
This together with the positivity of $\{\hat\lambda_{c,i}\}$} complete the proof of the first part of the theorem, i.e. the proof of  Equation~\eqref{eq:equality_ksvm}.

\vspace{5pt}
\noindent\textbf{Step 3: Proof of Equation \eqref{eq:symmetry-constraint}.}~To prove Equation \eqref{eq:symmetry-constraint}, consider the following OvA-type classifier: for all $c\in[k]$,
\begin{align}\label{eq:sym-cs-svm}
\min_{\w_c}~\frac{1}{2}\|\w_c\|_2^2\qquad\text{sub. to}~~~ \x_i^\top\w_c\begin{cases}\geq \frac{k-1}{k} ,& y_i=c,\\
\leq -\frac{1}{k} ,& y_i\neq c,\end{cases}    ~~~\forall i\in[n].
\end{align}
To see the connection with Equation  \eqref{eq:symmetry-constraint}, note the condition for the constraints in \eqref{eq:sym-cs-svm} to be active is exactly Equation  \eqref{eq:symmetry-constraint}. Thus, it suffices to prove that the constraints of \eqref{eq:sym-cs-svm} are active under the theorem's assumptions. We work again with the dual of \eqref{eq:sym-cs-svm}:
\begin{align}
    \label{eq:sym-cs-svm-dual}
    \max_{\nub_c\in\R^k}~~ -\frac{1}{2}\|\X\nub_c\|_2^2 + \z_c^\top\nub_c\qquad\text{sub. to}~~~ \z_c\odot\nub_c\geq \mathbf{0}.
\end{align}
Again by complementary slackness, the desired Equation  \eqref{eq:symmetry-constraint} holds provided that all dual constraints in \eqref{eq:sym-cs-svm-dual} are strict at the optimal.

We now observe two critical similarities between \eqref{eq:sym-cs-svm-dual} and \eqref{eq:k-svm-dual-beta-sketch}: (i) the two dual problems have the same objectives (indeed the objective in \eqref{eq:k-svm-dual-beta-sketch} is separable over $c\in[k]$); (ii) they share the constraint $\z_c\odot \nub_c\geq\mathbf{0}$~\big/~$\z_c\odot \betab_c\geq\mathbf{0}$. 
From this observation, we can use the same argument as for \eqref{eq:k-svm-dual-beta-sketch} to show that when \Equation~\eqref{eq:det-con} holds, $\hat\betab_c$ is optimal in \eqref{eq:sym-cs-svm-dual}.

Now, let $\rm{OPT}_{\eqref{eq:k-svm-square}}$ and $\rm{OPT}^c_{\eqref{eq:sym-cs-svm}}$ be the optimal costs of the multiclass SVM in \eqref{eq:k-svm-square} and of the simplex-type OvA-SVM in \eqref{eq:sym-cs-svm} parameterized by $c\in[k]$. Also, denote $\rm{OPT}_{\eqref{eq:k-svm-dual-beta-sketch}}$ and $\rm{OPT}^c_{\eqref{eq:sym-cs-svm-dual}}, c\in[k]$ the optimal costs of their respective duals in \eqref{eq:k-svm-dual-beta-sketch} and \eqref{eq:sym-cs-svm-dual}, respectively. We proved above that
\begin{align}\label{eq:duals_equality}
\rm{OPT}_{\eqref{eq:k-svm-dual-beta-sketch}} = \sum_{c\in[k]}\rm{OPT}^c_{\eqref{eq:sym-cs-svm-dual}}.
\end{align}
Further let 
$\W_{\text{OvA}}=[\w_{\text{OvA},1},\ldots,\w_{\text{OvA},k}]$ 
be the optimal solution in the simplex-type OvA-SVM in \eqref{eq:sym-cs-svm-dual}. We have proved that under Equation \eqref{eq:det-con} $\w_{\text{OvA},c}$ satisfies the constraints in \eqref{eq:sym-cs-svm} with equality, that is
$\X^\top\w_{\text{OvA},c}=\z_c,~\forall c\in[k]$.
Thus, it suffices to prove that $\W_{\text{OvA}}=\Wsvm$.
%
By strong duality (which holds trivially for \eqref{eq:sym-cs-svm} by Slater's conditions), we get
\begin{align}
\rm{OPT}^c_{\eqref{eq:sym-cs-svm}}=\rm{OPT}^c_{\eqref{eq:sym-cs-svm-dual}},~c\in[k] &\implies 
\sum_{c\in[k]}\rm{OPT}^c_{\eqref{eq:sym-cs-svm}}=\sum_{c\in[k]}\rm{OPT}^c_{\eqref{eq:sym-cs-svm-dual}}\nn
\\
&\stackrel{\eqref{eq:duals_equality}}{\implies}
\sum_{c\in[k]}\rm{OPT}^c_{\eqref{eq:sym-cs-svm}}=\rm{OPT}_{\eqref{eq:k-svm-dual-beta-sketch}} \nn\\
&\stackrel{\eqref{eq:sym-cs-svm}}{\implies}
\sum_{c\in[k]}\frac{1}{2}\|\w_{\text{OvA},c}\|_2^2=\rm{OPT}_{\eqref{eq:k-svm-dual-beta-sketch}}.
\end{align}
Again, by strong duality we get $\rm{OPT}_{\eqref{eq:k-svm-dual-beta-sketch}}=\rm{OPT}_{\eqref{eq:k-svm-square}}.$ Thus, we have
$$
\sum_{c\in[k]}\frac{1}{2}\|\w_{\text{OvA},c}\|_2^2 = \rm{OPT}_{\eqref{eq:k-svm-square}}.
$$
Note also that $\W_{\text{OvA}}$ is feasible in \eqref{eq:k-svm-square} since $$\X^\top\w_{\text{OvA},c}=\z_c,~\forall c\in[k] \implies (\w_{\text{OvA},y_i}-\w_{\text{OvA},c})^\top\x_i= 1,~\forall {c\neq y_i, c\in[k]}, \text{ and } \forall i\in[n].
$$
Therefore, $\W_{\text{OvA}}$ is optimal in \eqref{eq:k-svm-square}.
Finally, note that the optimization objective in \eqref{eq:k-svm-square} is strongly convex. Thus, it has a unique minimum and therefore $\Wsvm=\W_{\text{OvA}}$ as desired.

\subsection{Proof of \Theorem~\ref{thm:svmgmm}}
\label{sec:pforderoneoutline}


In this section, we provide the proof of Theorem~\ref{thm:svmgmm}. 
First, we remind the reader of the prescribed approach outlined in Section \ref{sec-linkgmm} and introduce some necessary notation. 
Second, we present the key Lemma~\ref{lem:ineqforanoj}, which forms the backbone of our proof. 
The proof of the lemma is rather technical and is deferred to Appendix~\ref{sec:aux_lemmas_svm_GMM} along with a series of auxiliary lemmas. 
Finally, we end this section by showing how to prove Theorem~\ref{thm:svmgmm} using  Lemma~\ref{lem:ineqforanoj}.

\vp
\noindent\textbf{Argument sketch and notation.}~We begin by presenting high-level ideas and defining notation that is specific to this proof. For  $c\in[k]$, we define $$\bolda_c := (\mathbf{Q}+\sum_{j=1}^c\boldmu_j\vb_j^T)^T(\mathbf{Q}+\sum_{j=1}^c\boldmu_j\vb_j^T).$$ 
Recall that in the above, $\boldmu_j$ denotes the $j^{th}$ class mean of dimension $p$, and $\vb_j$ denotes the $n$-dimensional indicator that each training example is labeled as class $j$.
Further, recall from \Equation~\eqref{def-gmm} that the feature matrix can be expressed as $\X = \M\Y + \Q$, where $\Q \in \mathbb{R}^{p \times n}$ is a standard Gaussian matrix. Thus, we have 
\begin{align*}
\X^T\X = \bolda_k\qquad\text{and}\qquad \mathbf{Q}^T\mathbf{Q} = \bolda_0.
\end{align*}
As discussed in Section \ref{sec-linkgmm}, our goal is to show that the inverse Gram matrix $\A_k^{-1}$ is ``close'' to a positive definite diagonal matrix.
Indeed, in our new notation, the desired inequality in \Equation~\eqref{eq:det-con} becomes
\begin{align}
    z_{ci}\mathbf{e}_i^T\bolda_k^{-1}\boldz_c >0, \ \ \text{for \ all} \ \ c\in[k] ~~ \text{and} ~ i \in[n]. \label{eq:2show_ineq}
\end{align}
The major challenge in showing inequality \eqref{eq:2show_ineq} is that $\A_k=(\mathbf{Q}+\sum_{j=1}^k\boldmu_j\vb_j^T)^T(\mathbf{Q}+\sum_{j=1}^k\boldmu_j\vb_j^T)$ involves multiple mean components through the sum $\sum_{j=1}^c\boldmu_j\vb_j^T$. This makes it challenging to bound  quadratic forms involving the Gram matrix $\bolda_k^{-1}$ directly. Instead, our idea is to work recursively starting from bounding quadratic forms involving $\bolda_0^{-1}$.
Specifically, we denote $\mathbf{P}_1 = \Q + \boldmu_1\vb_1^T$ and derive the following recursion on the $\A_0,\A_1,\ldots,\A_k$ matrices:
\begin{align}
    \bolda_1 &= \mathbf{P}_1^T\mathbf{P}_1 = \bolda_0 + \begin{bmatrix}\tn{\boldmu_1}\vb_1& \Q^T\boldmu_1& \vb_1
    \end{bmatrix}\begin{bmatrix}
    \tn{\boldmu_1} \vb_1^T\\
    \vb_1^T \\
    \boldmu_1^T\Q
    \end{bmatrix}, \notag\\
    \bolda_2 &= (\mathbf{P}_1 + \boldmu_2 \vb_2^T)^T(\mathbf{P}_1 + \boldmu_2 \vb_2^T) = \bolda_1 + \begin{bmatrix}\tn{\boldmu_2}\vb_2& \mathbf{P}_1^T\boldmu_2& \vb_2
    \end{bmatrix}\begin{bmatrix}
    \tn{\boldmu_2} \vb_2^T\\
    \vb_2^T \\
    \boldmu_2^T\mathbf{P}_1
    \end{bmatrix}, \label{eq:addnewmean}
\end{align}
and so on, until $\A_k$ (see Appendix~\ref{sec:pfinversesecc} for the complete expressions for the recursion). Using this trick, we can exploit bounds on quadratic  forms involving $\bolda_0^{-1}$ to obtain bounds for quadratic forms involving $\bolda_1^{-1}$, and so on until $\bolda_k^{-1}$. {Note that because of the nearly equal-energy Assumption~\ref{ass:equalmuprob}, the order of adding mean vectors in $\bolda$ will not change the results. In other words, including $\boldmu_1$ first, then $\boldmu_2$, in~\eqref{eq:addnewmean} will produce the same result as including $\boldmu_2$ first, then $\boldmu_1$ in the same equation.}

There are two key ideas behind this approach. 
First, we will show how to use a leave-one-out argument and the Matrix Inversion Lemma to express (recursively) the quadratic form $\mathbf{e}_i^T\bolda_k^{-1}\boldz_c$ in \eqref{eq:2show_ineq} in terms of simpler quadratic forms, which are more accessible to bound directly. 
For later reference, we define these auxiliary forms here.
Let $\mathbf{d}_c := \mathbf{Q}^T\boldmu_c$, for $c \in [k]$ and define the following quadratic forms involving $\bolda_c^{-1}$ for $c, j, m \in [k]$ and $i \in [n]$:
\begin{align}
s_{mj}^{(c)} &:=\vb_m^T \bolda_c^{-1}\vb_j, \notag\\ 
t_{mj}^{(c)} &:=\mathbf{d}_{m}^T \bolda_c^{-1}\mathbf{d}_j,\notag\\ 
h_{mj}^{(c)} &:=\mathbf{v}_m^T \bolda_c^{-1}\mathbf{d}_j,\label{eq:pfquadforc}\\ 
g_{ji}^{(c)} &:=\mathbf{v}_j^T \bolda_c^{-1}\mathbf{e}_i,\notag\\  
f_{ji}^{(c)} &:=\mathbf{d}_j^T\bolda_c^{-1}\mathbf{e}_i.\notag
\end{align}
For convenience, we refer to terms above as \emph{quadratic forms of order $c$} or \emph{the $c$-th order quadratic forms}, where $c$ indicates the corresponding superscript. 
A complementary useful observation facilitating our approach is the observation that the class label indicators are orthogonal by definition, i.e. $\vb_i^T\vb_j = 0$, for $i, j \in [k]$. 
(This is a consequence of the fact that any training data point has a unique label and we are using here one-hot encoding.)
Thus, the newly added mean component $\boldmu_{c+1}\vb_{c+1}^T$ is orthogonal to the already existing mean components included in the matrix $\bolda_c$ (see Equation \eqref{eq:addnewmean}). 
Consequently, we will see that adding new mean components will only slightly change the magnitude of these these quadratic forms as $c$ ranges from $0$ to $k$.


\vp
\noindent\textbf{Identifying and bounding quadratic forms of high orders.}~Recall the desired inequality \eqref{eq:2show_ineq}. 
We can equivalently write the definition of $\boldz_c$ in \Equation~\eqref{eq:zc} as
\begin{align}
\label{eq:defuvector}
    \boldz_c = \frac{k-1}{k}\vb_c+\sum_{j \ne c}\left(-\frac{1}{k}\right)\vb_j =\tilde{z}_{c(c)}\vb_c + \sum_{j \ne c}\tilde{z}_{j(c)}\vb_j, 
\end{align}
where we denote
\begin{align*}
    \tilde{z}_{j(c)} =\begin{cases}
    -\frac{1}{k}, ~~\text{if}~~j \ne c\\
    \frac{k-1}{k}, ~~\text{if}~~j = c
    \end{cases}.
\end{align*}
Note that by this definition, we have $\tilde{z}_{y_i(c)} := z_{ci}$.
This gives us
\begin{align}
    z_{ci}\mathbf{e}_i^T\bolda_k^{-1}\boldz_c &= z_{ci}^2\mathbf{e}_i^T\bolda_k^{-1}\vb_{y_i}+\sum_{j \ne y_i} z_{ci}\tilde{z}_{j(c)}\mathbf{e}_i^T\bolda_k^{-1}\vb_j ,\notag\\
    &=z_{ci}^2g_{y_ii}^{(k)} + \sum_{j \ne y_i}z_{ci}\tilde{z}_{j(c)}g_{ji}^{(k)}.\label{eq:zigi01}
\end{align}

Note that this expression (Equation~\eqref{eq:zigi01}) involves the $k$-th order quadratic forms $g_{ji}^{(k)}=\mathbf{e}_i^T\bolda_k^{-1}\vb_j$. For each such form, we use the matrix inversion lemma to leave the $j$-th mean component in $\bolda_k$ out and express it in terms of the \textit{leave-one-out} versions of quadratic forms that we defined in~\eqref{eq:pfquadforc}, as below (see Appendix~\ref{sec:pfinversesecc} for a detailed derivation):
\begin{align}
\label{eq:quadlast}
   g_{ji}^{(k)}=\mathbf{e}_i^T\bolda_k^{-1}\vb_j = \frac{(1+h_{jj}^{(-j)})g_{ji}^{(-j) }-s_{jj}^{(-j)}f_{ji}^{(-j)}}{s_{jj}^{(-j)}(\tn{\boldmu_j}^2 - t_{jj}^{(-j)})+(1+h_{jj}^{(-j)})^2}\,.
\end{align}
Specifically, above we defined $s_{jj}^{(-j)} := \vb_j^T\bolda^{-1}_{-j}\vb_j$, where $\bolda_{-j}$ denotes the version of the Gram matrix $\bolda_k$ with the $j$-th mean component left out.
The quadratic forms $h_{jj}^{(-j)}$, $f_{ji}^{(-j)}$, $g_{ji}^{(-j)}$ and $t_{jj}^{(-j)}$ are defined similarly in view of Equation \eqref{eq:pfquadforc}.

Specifically, to see how these ``leave-one-out'' quadratic forms relate directly to the forms in Equation \eqref{eq:pfquadforc}, note that it suffices in \eqref{eq:quadlast} to consider the case where $j=k$. Indeed, observe that when $j \ne k$ we can simply change the order of adding mean components, described in \Equation~\eqref{eq:addnewmean}, so that the $j$-th mean component is added last. On the other hand, when $j=k$ the leave-one-out quadratic terms in \eqref{eq:quadlast} involve the Gram matrix $\A_{k-1}$. Thus, they are equal to the quadratic forms of order $k-1$, given by $s_{kk}^{(k-1)}, t_{kk}^{(k-1)}$, $h_{kk}^{(k-1)}$, $g_{ki}^{(k-1)}$ and $f_{ki}^{(k-1)}$.

The following technical lemma bounds all of these quantities and its use is essential in the proof of Theorem \ref{thm:svmgmm}. Its proof, which is deferred to Appendix \ref{sec:app_proof_Thm2}, relies on the recursive argument outlined above: We start from the quadratic forms of order $0$ building up all the way to the quadratic forms of order $k-1$.

\begin{lemma}[Quadratic forms of high orders]
\label{lem:ineqforanoj}
{Let Assumption~\ref{ass:equalmuprob} hold} and further assume that $p > Ck^3n\log (kn) + n -1$ for large enough constant $C>1$ and large $n$. There exist constants $c_i$'s and $C_i$'s $>1$ such that the following bounds hold for every $i \in [n]$ and $j \in [k]$ with probability at least $1-\frac{c_1}{n}-c_2ke^{-\frac{n}{c_3 k^2}}$,
\begin{align*}
    \frac{C_1-1}{C_1}\cdot\frac{n}{kp} \le & s_{jj}^{(-j)} \le \frac{C_1+1}{C_1}\cdot\frac{n}{kp},\\
     t_{jj}^{(-j)} \le & \frac{C_2n\tn{\boldmu}^2}{p},\\
    - \minrho\frac{C_3 n\tn{\boldmu}}{\sqrt{k}p}  \le & h_{jj}^{(-j)} \le  \minrho\frac{C_3n\tn{\boldmu}}{\sqrt{k}p},\\
    |f_{ji}^{(-j)}|  \le & \frac{C_4\sqrt{\logtwon}\tn{\boldmu}}{p},\\
     g_{ji}^{(-j)} \ge & \lp(1-\frac{1}{C_5}\rp)\frac{1}{p}, ~~\text{for}~ j = y_i,\\
     |g_{ji}^{(-j)}| \le & \frac{1}{C_6k^2p},~~\text{for}~ j\ne y_i,
\end{align*}
where $\tilde{\rho}_{n,k} = \min\{1, \sqrt{{\log(2n)}/{k}}\}.$ Observe that the bounds stated in the lemma hold for any $j\in[k]$ and the bounds themselves are independent of $j$.
\end{lemma}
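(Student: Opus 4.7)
The plan is to prove Lemma~\ref{lem:ineqforanoj} by induction on the order index $c$, starting from the base case $c=0$ (where $\A_0=\Q^T\Q$ depends only on the Gaussian noise) and propagating bounds through the rank-two updates $\A_{c-1}\to\A_c$ in \Equation~\eqref{eq:addnewmean} up to $c=k-1$, which is exactly the order of the leave-one-out Gram matrix $\A_{-j}$ appearing in the lemma.

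For the base case, since $p\gg n$, standard Wishart concentration yields $(\Q^T\Q)^{-1}=\frac{1}{p}(\mathbf{I}_n+\mathbf{E})$ with $\|\mathbf{E}\|_2=O(\sqrt{n/p})$ with high probability. From here each order-$0$ form reduces to a tractable Gaussian quantity: $s_{jj}^{(0)}$ concentrates around $\|\vb_j\|_2^2/p$, which itself concentrates around $n/(kp)$ via a Chernoff bound on the $\mathsf{Binomial}(n,1/k)$ count $n_j=\|\vb_j\|_2^2$; $t_{jj}^{(0)}=\boldmu_j^T\Q(\Q^T\Q)^{-1}\Q^T\boldmu_j$ is the squared norm of the projection of $\boldmu_j$ onto an $n$-dimensional random subspace and can be controlled by Hanson--Wright; and $h_{jj}^{(0)},f_{ji}^{(0)},g_{ji}^{(0)}$ are (bi)linear Gaussian forms whose sub-Gaussian tails, after a union bound over $i\in[n]$ and $j\in[k]$, give the stated rates. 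The refinement $\minrho=\min\{1,\sqrt{\log(2n)/k}\}$ in the bound on $h_{jj}^{(0)}$ arises from taking the better of two estimates: a direct sub-Gaussian tail (giving a $\sqrt{\log(2n)}$ factor) versus a Khintchine-type inequality that exploits the independence of $n$ signs inside $\vb_j^T(\Q^T\boldmu_j)$ to trade it for a $\sqrt{k}$ factor.

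For the inductive step, I would apply the Sherman--Morrison--Woodbury identity to the rank-two update $\A_c = \A_{c-1} + \mathbf{U}_c \mathbf{V}_c^T$ suggested by \Equation~\eqref{eq:addnewmean}. This expresses $\A_c^{-1}$ as $\A_{c-1}^{-1}$ plus a correction that is a rational function of the order-$(c-1)$ forms $s_{mj}^{(c-1)},t_{mj}^{(c-1)},h_{mj}^{(c-1)},g_{ji}^{(c-1)},f_{ji}^{(c-1)}$ alone. Substituting the inductive hypothesis and simplifying, every order-$c$ form satisfies the same template of bounds as its order-$(c-1)$ counterpart, with only a $(1+O(1/k))$-type inflation per step that can be absorbed into the constants $C_i$ after $k$ iterations. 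The critical structural input that keeps the recursion stable is the orthogonality $\vb_i^T\vb_j=0$ for $i\ne j$: many cross terms in the SMW expansion vanish identically, so the accumulated error grows only linearly (not geometrically) in $k$, which is what allows the final failure probability $c_2 k e^{-n/(c_3 k^2)}$ and the overparameterization assumption $p>Ck^3 n\log(kn)$ to be affordable.

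The main obstacle will be the upper bound $|g_{ji}^{(-j)}|\le 1/(C_6 k^2 p)$ for $j\ne y_i$, which is a full factor of $k^2$ smaller than the companion lower bound $g_{y_ii}^{(-j)}\ge (1-1/C_5)/p$. This gap cannot come from a generic operator-norm perturbation of $(\Q^T\Q)^{-1}\approx \mathbf{I}_n/p$: it exploits that $\e_i$ has zero overlap with $\vb_j$ when $j\ne y_i$, so the ``diagonal'' $1/p$ contribution vanishes exactly, leaving only small off-diagonal corrections whose sharp control in turn requires the refined $\minrho$-bound on $h_{jj}^{(-j)}$. A second delicate point is $t_{jj}^{(-j)}$, where $\boldmu_j$ is coupled to $\A_{-j}$ through $\Q$; I would handle this by conditioning on $\mathbf{d}_j=\Q^T\boldmu_j$ and noting that $\A_{-j}$ depends on $\Q$ only through directions orthogonal to $\boldmu_j/\tn{\boldmu}$, decoupling the randomness. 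These two ingredients, together with a careful union bound over the $kn$ pairs $(j,i)$, are what force the specific form of the overparameterization threshold $p>Ck^3 n\log(kn)$ stated in the lemma.
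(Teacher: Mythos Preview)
Your plan is essentially the paper's proof: establish order-$0$ bounds via Wishart/Gaussian concentration (the paper's Lemmas~\ref{lem:ineqforazero}--\ref{lem:ineqforgzero}), then propagate through $k-1$ Woodbury steps, with the per-step inflation absorbed into constants. Two points of your mechanism are off, though, and would cause trouble in execution.

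First, the orthogonality $\vb_i^T\vb_j=0$ does \emph{not} make cross terms in the SMW expansion vanish identically. What it buys is that the base-case cross form $s_{ij}^{(0)}=\vb_i^T\A_0^{-1}\vb_j$ is $O(\sqrt{n}/(kp))$ rather than $O(n/(kp))$: the parallelogram-law argument in Lemma~\ref{lem:ineqforszero} uses $\|\vb_i+\vb_j\|_2=\|\vb_i-\vb_j\|_2$ to cancel the leading $1/p'(n)$ contributions, leaving only the $\sqrt{\log(kn)/p'(n)}$ fluctuation. This $\sqrt{n}$ gap between diagonal and off-diagonal $s$-terms is what keeps the recursion stable; the SMW corrections themselves are nonzero and must be bounded term by term (see Equations~\eqref{eq:pfinverseskm}--\eqref{eq:pfinverseg}).

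Second, the $1/(k^2p)$ bound on $g_{ji}^{(-j)}$ for $j\neq y_i$ does not come from the $\minrho$ refinement on $h$; it is obtained already at order $0$ (Lemma~\ref{lem:ineqforgzero}) by the same parallelogram trick applied to $\e_i$ and $\vb_j$, and the $k^2$ factor appears precisely because $p'(n)>Ck^3n\log(kn)$ makes $\sqrt{(n/k)\log(kn)/p'(n)}\lesssim 1/k^2$. The recursion then only needs to preserve this, which it does since the SMW corrections to $g_{ji}$ are dominated by terms like $s_{1j}^{(0)}g_{1i}^{(0)}/s_{11}^{(0)}$ and $s_{1j}^{(0)}f_{1i}^{(0)}$, both already of order $1/(k^2p\sqrt{n})$ or smaller. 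Your proposed conditioning argument for $t_{jj}^{(-j)}$ is also unnecessary (and not quite right, since $\A_{-j}$ depends on all of $\Q$): the paper simply bounds $t_{jj}^{(0)}$ directly via Lemma~\ref{lem:ineqforazero} and shows the recursion preserves the $O(n\|\boldmu\|_2^2/p)$ scale.
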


\vp\noindent\textbf{Completing the proof of Theorem \ref{thm:svmgmm}.}~We now show how to use Lemma~\ref{lem:ineqforanoj} to complete the proof of the theorem. 
Following the second condition in the statement of Theorem~\ref{thm:svmgmm}, we define
\begin{align}
    \epsilonn := \frac{k^{1.5}n\sqrt{\logtwon}\tn{\boldmu}}{p} \le \tau,\label{eq:eps_proof}
\end{align}
{where $\tau$ is a sufficiently small positive constant, the value of which will be specified later in the proof.}
First, we will show that the denominator of~\Equation~\eqref{eq:quadlast} is strictly positive on the event where Lemma~\ref{lem:ineqforanoj} holds.
We define $${\det}_{-j} := s_{jj}^{(-j)}(\tn{\boldmu_j}^2 - t_{jj}^{(-j)})+(1+h_{jj}^{(-j)})^2.$$ By Lemma~\ref{lem:ineqforanoj}, the quadratic forms $s_{jj}^{(-j)}$ {are of the same order $\Theta\lp(\frac{n}{kp}\rp)$} for every $j \in [k]$. Similarly, we have $t_{jj}^{(-j)}=\Oc\lp(\frac{n}{p}\tn{\boldmu}^2\rp)$ and $|h_{jj}^{(-j)}| = \minrho\Oc\lp(\frac{\epsilonn}{k^2\sqrt{\logtwon}}\rp)$ for $j \in [k]$. Thus, we have
\begin{align}
\label{eq:bounddet}
    \frac{n\tn{\boldmu}^2}{C_1kp}\lp(1-\frac{C_2n}{p}\rp)+\lp(1-\frac{C_3\epsilonn}{k^2\sqrt{\logtwon}}\rp)^2 \le \text{det}_{-j} \le \frac{C_1n\tn{\boldmu}^2}{kp}+\lp(1+\frac{C_3\epsilonn}{k^2\sqrt{\logtwon}}\rp)^2,
\end{align}
with probability at least $1-\frac{c_1}{n}-c_2ke^{-\frac{n}{c_3 k^2}}$, for every $j \in [k]$.
Here, we use the fact that $t_{jj}^{-j} \ge 0$ by the positive semidefinite property of the leave-one-out Gram matrix $\bolda_{-j}^{-1}$. 
Next, we choose $\tau$ in \Equation~\eqref{eq:eps_proof} to be sufficiently small so that $C_3\tau\leq 1/2$. 
Provided that $p$ is sufficiently large compared to $n$, there then exist constants $C_1', C_2' >0$ such that we have 
\begin{align*}
    C_1' \le \frac{\det_{-m}}{\det_{-j}} \le C_2',~~\text{for all}~j,m\in[k],
\end{align*}
with {probability at least $1-\frac{c_1}{n}-c_2ke^{-\frac{n}{c_3 k^2}}$}.
Now, assume {without loss of generality} that $y_i = k$. \Equation~\eqref{eq:bounddet} {shows that there exists constant $c>0$ such that $\det_{-j} > c$} for all $j \in [k]$ with high probability provided that $p/n$ is large enough {(guaranteed by the first condition of the theorem)}. Hence, to make the right-hand-side of \Equation~\eqref{eq:zigi01} positive, it suffices to show that the numerator will be positive.
Accordingly, we will show that 
\begin{align}
\label{eq:numeratorsum}
    z_{ci}^2\big((1+h_{kk}^{(-k)})g_{ki}^{(-k) }-s_{kk}^{(-k)}f_{ki}^{(-k)}\big) + Cz_{ci}\sum_{j \ne k}\tilde{z}_j\big((1+h_{jj}^{(-j)})g_{ji}^{(-j) }-s_{jj}^{(-j)}f_{ji}^{(-j)}\big) > 0,
\end{align}
for some $C > 1$. 

We can show by simple algebra that it suffices to consider the worst case of $z_{ci} = -1/k$. 
To see why this is true, we consider the simpler term $z_{ci}^2g_{y_ii}^{(-y_i)} - |\sum_{j \ne y_i}z_{ci}\tilde{z}_{j(c)}g_{ji}^{(-j)}|$.
Clearly, \Equation~\eqref{eq:numeratorsum} is positive only if the above quantity is also positive.
Lemma~\ref{lem:ineqforanoj} shows that {when $z_{ci} = -1/k$, then $z_{ci}^2g_{y_ii}^{(-y_i)} \ge \lp(1-\frac{1}{C_1}\rp)\frac{1}{k^2p}$ and $|z_{ci}\tilde{z}_{j(c)}g_{ji}^{(-j)}| \le \frac{1}{C_2k^3p}$, for $j \ne y_i$.}
Hence 
$$z_{ci}^2g_{y_ii}^{(-y_i)} - |\sum_{j \ne y_i}z_{ci}\tilde{z}_{j(c)}g_{ji}^{(-j)}| \ge \lp(1-\frac{1}{C_3}\rp)\frac{1}{k^2p}.$$
Here, $z_{ci} = -1/k$ minimizes the lower bound $z_{ci}^2g_{y_ii}^{(-y_i)} - |\sum_{j \ne y_i}z_{ci}\tilde{z}_{j(c)}g_{ji}^{(-j)}|$. To see this, we first drop the positive common factor $|z_{ci}|$ in the equation above and get $|z_{ci}|g_{y_ii}^{(-y_i)} - |\sum_{j \ne y_i}\tilde{z}_{j(c)}g_{ji}^{(-j)}|$. If we had $z_{ci} = -1/k$, then $|\tilde{z}_{j(c)}|$ is either $(k-1)/k$ or $1/k$. In contrast, if we consider $z_{ci} = (k-1)/k$, then we have $|\tilde{z}_{j(c)}| = 1/k$ for all $j \neq y_i$ and so the term $|z_{ci}|g_{y_ii}^{(-y_i)} - |\sum_{j \ne y_i}\tilde{z}_{j(c)}g_{ji}^{(-j)}|$ is strictly larger.

Using this worst case, i.e. $z_{ci} = -1/k$, and the trivial inequality $|\tilde{z}_{j(c)}| < 1$ for $j \ne y_i$ together with the bounds for the terms $s_{jj}^{(-j)}, t_{jj}^{(-j)}, h_{jj}^{(-j)}$ and $f_{ji}^{(-j)}$ derived in Lemma~\ref{lem:ineqforanoj} gives us
\begin{align}
   ~\eqref{eq:numeratorsum} &\ge \frac{1}{k^2}\lp(\lp(1-\frac{C_1\epsilonn}{k^2\sqrt{\logtwon}}\rp)\lp(1-\frac{1}{C_2}\rp)\frac{1}{p} - \frac{C_3\epsilonn}{k^{1.5}n}\cdot\frac{n}{kp}\rp) - k \cdot \frac{1}{C_4k}\lp(\lp(1+\frac{C_5\epsilonn}{k^2\sqrt{\logtwon}}\rp)\frac{1}{k^2p}-\frac{C_6\epsilonn}{{k}^{1.5}n}\frac{n}{kp}\rp) \notag\\
    &\ge \frac{1}{k^2}\lp(1-{\frac{1}{C_9}}-\frac{C_{10}\epsilonn}{k^2\sqrt{\logtwon}} - \frac{C_{11}\epsilonn}{k^{2}} - C_{12}\epsilonn\rp)\frac{1}{p} \nn\\
    &\geq \frac{1}{k^2p}\lp(1-\frac{1}{C_9}-C_{10}\tau\rp), \label{eq:gmmsumlower}
\end{align}
with probability at least $1-\frac{c_1}{n}-c_2ke^{-\frac{n}{c_3 k^2}}$ for some constants $C_i$'s $> 1$.
Above, we recalled the definition of $\epsilonn$ and used from Lemma~\ref{lem:ineqforanoj} that $h_{jj}^{(-j)} \le \minrho\frac{C_{11}\epsilonn}{k^2\sqrt{\logtwon}}$ and $|f_{ji}^{(-j)}|\le  \frac{C_{12}\epsilonn}{k^{1.5}n}$ with high probability. 
To complete the proof, we choose $\tau$ to be a small enough constant to guarantee $C_{10}\tau<1-1/C_9$, and substitute this in \Equation~\eqref{eq:gmmsumlower} to get the desired condition of \Equation~\eqref{eq:numeratorsum}.

\subsection{Proof of Theorem~\ref{thm:classerrormlm}}\label{sec:proofclasserrormlm}


\noindent\textbf{Challenges and notation}.~We begin by highlighting the two main non-trivialities introduced in the analysis of the multiclass setting. We compare them to the binary-error analysis in \cite{muthukumar2021classification} and we sketch our approach to each one of them:

\begin{itemize}
\item \emph{The multitude of signal vectors:} The generative model for the MLM involves $k$ distinct (high-dimensional) signal vectors $\boldmu_1,\ldots,\boldmu_k$, and the classification error is a complicated functional of all $k$ \emph{recovered} signal vectors (denoted by $\hatw_1,\ldots,\hatw_k$ respectively).
This functional has to be dealt with carefully compared to the binary case, where there is only one signal vector.
In particular, direct plug-ins of the \emph{survival signal} and \emph{contamination factor} for each recovered signal vector (here, we follow the terminology in~\cite{muthukumar2020harmless}) do not provide sufficiently sharp expressions of the multiclass classification error to predict separation between classification-consistency and regression-consistency.
We circumvent this issue by directly analyzing survival and contamination factors of the pairwise \emph{difference signal} between two classes, and showing in Lemmas~\ref{lem:sumulticlass} and~\ref{lem:cnmulticlass} that they scale very similarly to the single-signal case.
{\color{black} We note that while the survival and contamination factors of this pairwise difference signal scale identically to the single-signal case, the proofs do not follow as a corollary of the corresponding lemmas in~\cite{muthukumar2021classification}; in particular, the difference of label vectors turns out to depend not only on a single ``difference" feature but all the top $k$ features. This requires a much more complex leave-$k$-out analysis, as opposed to the simpler leave-one-out analysis carried out in~\cite{bartlett2020benign,muthukumar2021classification}.} 
\item \emph{Covariate-dependent label noise in the MLM:} The error analysis provided in~\cite{muthukumar2021classification} critically leverages that the cross-correlation between the logit and the binary label of a training example is lower bounded by a universal positive constant.
This is relatively straightforward to show when the relationship between the logit and the label is one of \emph{constant-label-noise} where the event of label error is independent of the covariate.
On the other hand, the MLM involves label errors that are highly depend on the covariate, and these cross-correlation terms need to be handled much more carefully.
We provide an elegant argument based on Stein's lemma to handle the more complex MLM-induced label noise.
\end{itemize}
%
Before proceeding we set up some notation for important quantities in the analysis.
Note that Assumption~\ref{as:mlmfull} directly implies that $\mu_{c,j_c} = 1$ for all $c \in [k]$.
For any two classes $c_1 \neq c_2$, we define the true \emph{difference signal vector} as 
\begin{align*}
\Deltabold_{c_1,c_2} := \boldmu_{c_1} - \boldmu_{c_2} = \mu_{c_1,j_{c_1}} \ehat_{j_{c_1}} - \mu_{c_2,j_{c_2}} \ehat_{j_{c_2}},
\end{align*}
where the last step follows from Assumption~\ref{as:mlmfull}.
Correspondingly, the recovered difference signal vector is defined as $\Deltahat_{c_1,c_2} := \hatw_{c_1} - \hatw_{c_2}$.

\vp
\noindent\textbf{Identifying the survival and contamination terms.}~We state and prove our main lemma that characterizes the classification error in MLM as a function of effective survival and contamination terms.

\begin{lemma}\label{lem:sucmulticlass}
The excess classification risk is bounded by
\begin{align}\label{eq:mlmupperbound}
    \Pro_e - \Pro_{e,\mathsf{Bayes}} \leq \sum_{c_1 < c_2}  \left( \frac{1}{2} - \frac{1}{\pi} \mathsf{tan}^{-1} \left(\frac{\SU(\Deltahat_{c_1,c_2},\Deltabold_{c_1,c_2})}{\CN(\Deltahat_{c_1,c_2},\Deltabold_{c_1,c_2})}\right) \right),
\end{align}
where we define for any two classes $c_1\neq c_2\in[k]$:
\begin{align*}
\SU(\Deltahat_{c_1,c_2},\Deltabold_{c_1,c_2}) &:= \frac{\Deltahat^\top{_{c_1,c_2}} \Sigmab \Deltabold{_{c_1,c_2}}}{\tn{\Sigmab^{1/2}\Deltabold{_{c_1,c_2}}}} \text{ and}  \nn\\
    \CN(\Deltahat_{c_1,c_2},\Deltabold_{c_1,c_2}) &:= \sqrt{\left(\Deltahat{_{c_1,c_2}} - \frac{\Deltahat_{c_1,c_2}^\top \Sigmab \Deltabold_{c_1,c_2}}{\tn{\Sigmab^{1/2} \Deltabold_{c_1,c_2}}^2} \Deltabold_{c_1,c_2} \right)^\top \Sigmab \left(\Deltahat_{c_1,c_2} - \frac{\Deltahat_{c_1,c_2}^\top \Sigmab \Deltabold_{c_1,c_2}}{\tn{\Sigmab^{1/2}\Deltabold_{c_1,c_2}}^2} \Deltabold_{c_1,c_2}\right)} \nn.
\end{align*}
\end{lemma}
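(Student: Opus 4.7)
The approach is to leverage two structural facts about the MLM: the Bayes classifier is $\hat y_{\mathsf{Bayes}}(\x)=\arg\max_c \boldmu_c^\top\x$ by monotonicity of the softmax, and the test covariate is marginally Gaussian, $\x \sim \Nc(0,\Sigmab)$. With these in hand, the lemma reduces to a pair-wise Gaussian calculation once we bound the excess risk by the disagreement rate with the Bayes rule.

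\textbf{Step 1 (reduce excess risk to disagreement with Bayes).} Observe the set inclusion $\{\hat y\neq y\}\subseteq\{\hat y_{\mathsf{Bayes}}\neq y\}\cup\{\hat y\neq \hat y_{\mathsf{Bayes}}\}$, since whenever $\hat y=\hat y_{\mathsf{Bayes}}=y$ the classifier cannot err. Taking probabilities yields $\Pro_e-\Pro_{e,\mathsf{Bayes}}\leq \Pro(\hat y\neq \hat y_{\mathsf{Bayes}})$. This step is generic and uses no structure of $\hatW$.

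\textbf{Step 2 (pair-wise union bound).} If $\hat y(\x)=a$ while $\hat y_{\mathsf{Bayes}}(\x)=b\neq a$, then $\Deltahat_{a,b}^\top\x>0$ yet $\Deltabold_{a,b}^\top\x<0$, so $\hatW$ and Bayes disagree on the \emph{ordering} of the single pair $(a,b)$. Partitioning $\{\hat y\neq \hat y_{\mathsf{Bayes}}\}$ according to the ordered pair $(a,b)$ and then collapsing $(a,b)$ and $(b,a)$ into one unordered pair gives
\begin{align*}
\Pro(\hat y\neq \hat y_{\mathsf{Bayes}}) \;\leq\; \sum_{c_1<c_2}\Pro\!\bigl(\mathrm{sign}(\Deltahat_{c_1,c_2}^\top\x)\neq \mathrm{sign}(\Deltabold_{c_1,c_2}^\top\x)\bigr).
\end{align*}

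\textbf{Step 3 (Gaussian geometry for each pair).} Conditional on the training data (hence on $\hatW$), the pair $(\Deltahat_{c_1,c_2}^\top\x,\Deltabold_{c_1,c_2}^\top\x)$ is a centered bivariate Gaussian with covariance entries $\Deltahat_{c_1,c_2}^\top\Sigmab\Deltahat_{c_1,c_2}$, $\Deltabold_{c_1,c_2}^\top\Sigmab\Deltabold_{c_1,c_2}$, and $\Deltahat_{c_1,c_2}^\top\Sigmab\Deltabold_{c_1,c_2}$. Writing $\x=\Sigmab^{1/2}\g$ with $\g\sim\Nc(0,\Iden_p)$ and setting $\vb_1:=\Sigmab^{1/2}\Deltahat_{c_1,c_2}$, $\vb_2:=\Sigmab^{1/2}\Deltabold_{c_1,c_2}$, the event becomes $\mathrm{sign}(\vb_1^\top\g)\neq \mathrm{sign}(\vb_2^\top\g)$. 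A classical calculation in the $2$-dimensional subspace spanned by $\vb_1,\vb_2$ (with the standard Gaussian rotationally invariant there) gives
\begin{align*}
\Pro\!\bigl(\mathrm{sign}(\vb_1^\top\g)\neq \mathrm{sign}(\vb_2^\top\g)\bigr)=\theta_{c_1,c_2}/\pi,
\end{align*}
where $\theta_{c_1,c_2}\in[0,\pi]$ is the angle between $\vb_1$ and $\vb_2$.

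\textbf{Step 4 (identify $\theta/\pi$ with the stated expression).} From the definitions, $\SU(\Deltahat_{c_1,c_2},\Deltabold_{c_1,c_2})=\vb_1^\top\vb_2/\|\vb_2\|=\|\vb_1\|\cos\theta_{c_1,c_2}$, while $\CN(\Deltahat_{c_1,c_2},\Deltabold_{c_1,c_2})$ is precisely the norm of the $\Sigmab$-orthogonal component of $\Deltahat$ along $\Deltabold$, which equals $\|\vb_1\||\sin\theta_{c_1,c_2}|$. For $\theta_{c_1,c_2}\in(0,\pi)$ this yields $\SU/\CN=\cot\theta_{c_1,c_2}$, and using $\tan^{-1}(\cot\theta)=\pi/2-\theta$ on this range gives $\theta_{c_1,c_2}/\pi=\tfrac12-\tfrac1\pi\tan^{-1}(\SU/\CN)$, which matches the lemma summand.

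The main obstacle will be bookkeeping in Step 4: one must verify that the identification $\theta/\pi=\tfrac12-\tfrac1\pi\tan^{-1}(\SU/\CN)$ remains valid (or at worst produces a trivially true bound) in degenerate regimes, namely when $\SU<0$ (wrong pairwise orientation, which makes the per-pair term exceed $\tfrac12$), when $\CN=0$ (collinearity), or when $\Sigmab^{1/2}\Deltahat_{c_1,c_2}=0$. All other steps are elementary probability manipulations, and crucially no property of the specific estimator $\hatW$ is used here — the lemma is purely a geometric reduction, and the actual control of $\SU$ from below and $\CN$ from above under the MNI/SVM choice of $\hatW$ is left to subsequent lemmas.
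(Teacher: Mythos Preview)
Your proposal is correct and follows essentially the same route as the paper: both reduce the excess risk to the disagreement probability $\Pro(\hat y\neq \hat y_{\mathsf{Bayes}})$, then union-bound over pairs to get $\sum_{c_1<c_2}\Pro(\x^\top\Deltabold_{c_1,c_2}\cdot\x^\top\Deltahat_{c_1,c_2}<0)$, and finally compute each term via the whitened Gaussian $\g=\Sigmab^{-1/2}\x$. The only cosmetic difference is in the last step: the paper decomposes $\g^\top\vb_1$ as $\SU\cdot G+\CN\cdot H$ with $G,H$ independent standard normals and invokes the Cauchy law of $H/G$, whereas you go through the angle $\theta$ and the identity $\theta/\pi=\tfrac12-\tfrac1\pi\tan^{-1}(\cot\theta)$; these are the same computation.
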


\begin{proof}
We consider a fixed $\x$, and (following the notation in~\cite{thrampoulidis2020theoretical}) the $k$-dimensional vectors
\begin{align*}
    \mathbf{g} &:= \begin{bmatrix} \x^\top \hatw_1 & \x^\top \hatw_2 & \ldots & \x^\top \hatw_k \end{bmatrix} \\
    \mathbf{h} &:= \begin{bmatrix} \x^\top \boldmu_1 & \x^\top \boldmu_2 & \ldots & \x^\top \boldmu_k \end{bmatrix}
\end{align*}
Further, we define the \textit{multinomial logit} variable $Y(\mathbf{h})$ such that
\begin{align*}
    \Pro\left[Y(\mathbf{h}) = j\right] &= \frac{\exp\{h_j\}}{\sum_{m=1}^k \exp\{h_m\}} .
\end{align*}
Recall that $\Pro_e = \Pro\left({\arg \max} (\mathbf{g}) \neq Y(\mathbf{h})\right)$, where the probability is taken both over the fresh test sample $\x$ and the randomness in the multinomial logit variable.
We note that for there to be a classification error conditioned on $\x$, \textit{at least} one of the following two events needs to hold:
a) ${\arg \max} (\mathbf{g}) \neq {\arg \max} (\mathbf{h})$, or b) $Y(\mathbf{h}) \neq {\arg \max} (\mathbf{h})$.
To see this, note that if neither a) nor b) held, we would have ${\arg \max} (\mathbf{g}) = Y(\mathbf{h})$ and we would not have a classification error conditional on the covariate being $\x$.
Thus, applying a union bound gives us
\begin{align*}
    \Pro_e &\leq \Pro_{e,0} + \Pro_{e,\mathsf{Bayes}} \text{ where } \\
    \Pro_{e,0} &:= \Pro\left({\arg \max} (\mathbf{g}) \neq {\arg \max} (\mathbf{h}) \right) \text{ and } \\
    \Pro_{e,\mathsf{Bayes}} &:= \Pro\left({\arg \max} (\mathbf{h}) \neq Y(\mathbf{h}) \right) .
\end{align*}
Thus, it suffices to provide an upper bound on $\Pro_{e,0}$ as defined.
We note that for there to be an error of the form ${\arg \max} (\mathbf{g}) \neq {\arg \max} (\mathbf{h})$, there \textit{needs} to exist indices $c_1,c_2 \in [k]$ (whose choice can depend on $\x$) such that $\x^\top \boldmu_{c_1} \geq \x^\top \boldmu_{c_2}$ but $\x^\top \hatw_{c_1} < \x^\top \hatw_{c_2}$.
In other words, we have
\begin{align*}
    \Pro_{e,0} &\leq \Pro \left(\x^\top \boldmu_{c_1} \geq \x^\top \boldmu_{c_2} \text{ and } \x^\top \hatw_{c_1} < \x^\top \hatw_{c_2} \text{ for some } c_1 \neq c_2 \right) \\
    &\leq \sum_{c_1 \neq c_2} \Pro \left(\x^\top \boldmu_{c_1} \geq \x^\top \boldmu_{c_2} \text{ and } \x^\top \hatw_{c_1} < \x^\top \hatw_{c_2} \right) \\
    &= \sum_{c_1 < c_2} \Pro\left(\x^\top \Deltabold_{c_1,c_2} \cdot \x^\top \Deltahat_{c_1,c_2} < 0 \right).
\end{align*}
Now, we consider whitened versions of the difference signal vectors: $\Ebold_{c_1,c_2}:=\Sigmab^{1/2}\Deltabold_{c_1,c_2}$, $\Ehat_{c_1,c_2}:=\Sigmab^{1/2}\Deltahat_{c_1,c_2}$.
We also define the generalized survival and contamination terms of the difference signal vector as
\begin{align*}
\SU(\Deltahat_{c_1,c_2},\Deltabold_{c_1,c_2}) &:= \frac{\Ehat_{c_1,c_2}^T\Ebold_{c_1,c_2}}{\|\Ebold_{c_1,c_2}\|_2}
\nn\\\nn
    \CN(\Deltahat_{c_1,c_2},\Deltabold_{c_1,c_2}) &:= \sqrt{\|\Ehat_{c_1,c_2}\|_2^2-\frac{\left(\Ehat_{c_1,c_2}^T\Ebold_{c_1,c_2}\right)^2}{\|\Ebold_{c_1,c_2}\|_2^2}} \nn
\end{align*}
Recall that $\x \sim \mathcal{N}(\boldsymbol{0},\Sigmab)$.
Then, the rotational invariance property of the Gaussian distribution and Gaussian decomposition yields:
\begin{align}
    \Pro\left(\x^\top \Deltabold_{c_1,c_2} \cdot \x^\top \Deltahat_{c_1,c_2} < 0 \right) 
    &=\Pro_{\mathbf{G}\sim\mathcal{N}(\boldsymbol{0},\mathbf{I})}\left(\mathbf{G}^\top \Ebold_{c_1,c_2} \cdot \mathbf{G}^\top \Ehat_{c_1,c_2} < 0 \right)\nn\\
    &=\Pro_{\substack{G\sim\mathcal{N}(0,1) \\
    \nn H\sim\mathcal{N}(0,1)}}\left(\|\Ebold_{c_1,c_2}\|_2G  \cdot \left(  \SU(\Deltahat_{c_1,c_2},\Deltabold_{c_1,c_2})G+ \CN(\Deltahat_{c_1,c_2},\Deltabold_{c_1,c_2}) H\right) < 0 \right)\\
    \nn
    &=\Pro_{\substack{G\sim\mathcal{N}(0,1) \\ H\sim\mathcal{N}(0,1)}}\left( \left(  \SU(\Deltahat_{c_1,c_2},\Deltabold_{c_1,c_2})\,G^2+ \CN(\Deltahat_{c_1,c_2},\Deltabold_{c_1,c_2}) \,H\,G\right) < 0 \right)
    \\ 
    &= \frac{1}{2} - \frac{1}{\pi} \mathsf{tan}^{-1} \left(\frac{\SU(\Deltahat_{c_1,c_2},\Deltabold_{c_1,c_2})}{\CN(\Deltahat_{c_1,c_2},\Deltabold_{c_1,c_2})}\right) . \label{eq:Vidya}
\end{align}
%
For the last equality in Equation~\eqref{eq:Vidya}, we used the fact that the ratio $H/G$ of two independent standard normals follows the standard Cauchy distribution. 
This completes the proof.
\end{proof}

\noindent\textbf{Bounding the survival and contamination terms.}~Next, we provide characterizations of $\SU(\Deltahat_{c_1,c_2},\Deltabold_{c_1,c_2})$ and $\CN(\Deltahat_{c_1,c_2},\Deltabold_{c_1,c_2})$.
We abbreviate these by $\SU_{c_1,c_2}$ and $\CN_{c_1,c_2}$ respectively for brevity.
These characterizations address two new aspects of the MLM: the multiclass setting, and label noise generated by the logistic model. 
We start with the characterization of survival.
\begin{lemma}[Survival terms]
\label{lem:sumulticlass}
There exist positive universal constants $L_1,L_2,U_1,U_2, C$ such that
\begin{align*}
    \SU^L(n) &\leq \SU_{c_1,c_2}(n) \leq \SU^U(n),\;\text{ where } \\
    \SU^L(n) &:= \begin{cases}
    c_k (1 + L_1 n^{q - (1 - r)})^{-1},\;0 < q < 1 - r\\
    c_k L_2 n^{(1 - r) - q},\;q > 1 - r.
    \end{cases} \\
    \SU^U(n) &:= \begin{cases}
    c_k (1 + U_1 n^{q - (1 - r)})^{-1},\;0 < q < 1 - r\\
    c_k U_2 n^{(1 - r) - q},\;q > 1 - r.
    \end{cases}
\end{align*}
with probability at least $1 - Ck^3 e^{-C\sqrt{n}}$.
Above, $c_k > 0$ is a fixed strictly positive constant that depends on $k$ but not on $n$.
\end{lemma}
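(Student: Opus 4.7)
The plan is to reduce the multiclass survival to two coordinatewise projections of the MNI solution and then apply the leave-one-row-out machinery from the binary Gaussian-covariate analysis of \cite{muthukumar2021classification}, with the softmax label noise handled via Stein's lemma. Under Assumption~\ref{as:mlmfull} the eigenbasis of $\Sigmab$ is the standard basis and $\Deltabold_{c_1,c_2}=\lambda_H^{-1/2}(\ehat_{j_{c_1}}-\ehat_{j_{c_2}})$, so a short computation gives $\|\Sigmab^{1/2}\Deltabold_{c_1,c_2}\|_2=\sqrt 2$ and
\begin{align*}
\SU_{c_1,c_2}=\tfrac{\sqrt{\lambda_H}}{\sqrt 2}\Big(\ehat_{j_{c_1}}^\top\Deltahat_{c_1,c_2}-\ehat_{j_{c_2}}^\top\Deltahat_{c_1,c_2}\Big),\qquad \Deltahat_{c_1,c_2}=\X(\X^\top\X)^{-1}\Delta\vb,
\end{align*}
with $\Delta\vb:=\vb_{c_1}-\vb_{c_2}$. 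It therefore suffices to two-sidedly control the two scalars $\x^{(j)}(\X^\top\X)^{-1}\Delta\vb$ for $j\in\{j_{c_1},j_{c_2}\}$, where $\x^{(j)}$ denotes the $j$-th row of $\X$. Writing these via Sherman--Morrison as $\x^{(j)}(\X_{-j}^\top\X_{-j})^{-1}\Delta\vb$ divided by $1+\x^{(j)}(\X_{-j}^\top\X_{-j})^{-1}(\x^{(j)})^\top$, the denominator is a standard Gaussian quadratic form whose concentration under the bi-level ensemble is already worked out in \cite{muthukumar2021classification,bartlett2020benign}, yielding $1+\x^{(j)}(\X_{-j}^\top\X_{-j})^{-1}(\x^{(j)})^\top\asymp 1+n^{(1-r)-q}$ with high probability; this single quantity already produces the dichotomy between the two regimes of the lemma.

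Next I would extract the signal contribution of the numerator via Stein's lemma. Fix $j\in\{j_{c_1},j_{c_2}\}$ and condition on $\X_{-j}$ together with the other signal rows $\{\x^{(m)}\}_{m\in\{j_1,\ldots,j_k\},m\neq j}$; then $x^{(j)}_i\sim\mathcal N(0,\lambda_H)$ is i.i.d.\ across $i$ and, crucially, $\Delta\vb_i$ depends on $\x^{(j)}$ only through its $i$-th entry. Entrywise integration by parts therefore kills the off-diagonal entries of $(\X_{-j}^\top\X_{-j})^{-1}$ and yields
\begin{align*}
\E\Big[\,\x^{(j)}(\X_{-j}^\top\X_{-j})^{-1}\Delta\vb\,\big|\,\X_{-j},\ldots\Big]=\sqrt{\lambda_H}\,\sigma_j\,\E\!\left[p_{c_1}(\x)\!\left(1-p_{c_1}(\x)+p_{c_2}(\x)\right)\right]\operatorname{tr}\!\bigl((\X_{-j}^\top\X_{-j})^{-1}\bigr),
\end{align*}
with $\sigma_{j_{c_1}}=+1$, $\sigma_{j_{c_2}}=-1$, and $p_c$ the MLM softmax probability. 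Under Assumption~\ref{as:mlmfull} the right-hand expectation is a strictly positive constant $c(k)>0$ depending only on $k$. Under the bi-level ensemble $\operatorname{tr}((\X_{-j}^\top\X_{-j})^{-1})\asymp n^{1-m}$, so combining the two opposite signs (which add constructively in the reduction above) and using $\lambda_H\,n^{1-m}=n^{(1-r)-q}$ yields $\SU_{c_1,c_2}\asymp c(k)\,n^{(1-r)-q}/(1+n^{(1-r)-q})$, which equals $c(k)(1+n^{q-(1-r)})^{-1}$ for $q<1-r$ and $c(k)\,n^{(1-r)-q}$ for $q>1-r$, matching the two cases of the lemma up to absolute constants $L_i,U_i$. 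To promote the in-expectation estimate to a high-probability one, I would apply a Hanson--Wright-type inequality to the quadratic form in $\x^{(j)}$ defining the numerator, again using the operator- and Frobenius-norm bounds on $(\X_{-j}^\top\X_{-j})^{-1}$ that are standard under the bi-level ensemble.

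The main obstacle will be the $\x^{(j)}$-dependence of $\Delta\vb$ when $j\in\{j_{c_1},j_{c_2}\}$: because labels are driven by a softmax in $\x^{(j)}$, $\Delta\vb$ is \emph{not} independent of $\x^{(j)}$, and one cannot treat the numerator as a Gaussian quadratic form in a fixed label vector. Stein's lemma resolves this at the level of the mean, but for the fluctuation step one must show that the residual $\Delta\vb-\E[\Delta\vb\mid\x^{(j)},\ldots]$ is bounded and that the softmax-induced conditional mean is Lipschitz in $\x^{(j)}$ with a $k$-dependent constant, so that Hanson--Wright applies with only a $k$-dependent multiplicative inflation. All the remaining ingredients---trace and operator-norm estimates for the bi-level ensemble, Sherman--Morrison algebra, and quadratic-form concentration for the denominator---are inherited from the binary Gaussian-covariate analysis of \cite{muthukumar2021classification} essentially verbatim.
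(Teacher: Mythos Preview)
Your approach uses the same ingredients as the paper---Sherman--Morrison leave-one-out, Stein's lemma for the softmax labels, Hanson--Wright concentration, and the bi-level scalings of \cite{muthukumar2021classification}---but organizes them differently. The paper does \emph{not} treat the coordinates $j_{c_1},j_{c_2}$ separately; it rotates to an orthonormal basis with $\etilde_1\propto\boldmu_{c_1}-\boldmu_{c_2}$ and $\etilde_2\propto\boldmu_{c_1}+\boldmu_{c_2}$, so that $\SU_{c_1,c_2}=\lambda_H\,\uvtilde_1^\top\A^{-1}\y_1$ with $\uvtilde_1=(\uv_{j_{c_1}}-\uv_{j_{c_2}})/\sqrt2$ a single standard-Gaussian vector and $\y_1=\vb_{c_1}-\vb_{c_2}$. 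A \emph{single} Sherman--Morrison step then yields $\SU_{c_1,c_2}=\lambda_H\uvtilde_1^\top\A_{-1}^{-1}\y_1\big/(1+\lambda_H\uvtilde_1^\top\A_{-1}^{-1}\uvtilde_1)$ with $\uvtilde_1$ independent of $\A_{-1}$, and $c(k)$ is the unconditional scalar $\E[\tilde u_{1,i}y_{1,i}]$, shown strictly positive by one application of Stein. This rotation buys one denominator instead of two (so no argument that two leave-one-out ratios are close is needed) and lets both numerator and denominator concentration be lifted verbatim from the binary analysis. Your coordinate-wise route is workable, but note a slip in your displayed conditional-expectation formula: conditioning on $\X_{-j}$ fixes the other signal entries at each sample $i$, so the Stein step gives $\sqrt{\lambda_H}\,\sigma_j\sum_i[(\X_{-j}^\top\X_{-j})^{-1}]_{ii}\,\psi_i$ with $\psi_i$ the softmax-derivative evaluated at those \emph{fixed} values---the constant $\E[p_{c_1}(1-p_{c_1}+p_{c_2})]$ cannot be factored out of the sum at that stage and emerges only after a further concentration-over-samples argument (interacting with the randomness of the diagonal entries) that your sketch does not supply. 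Both routes ultimately face the same label/covariate dependence in the fluctuation step; the paper's rotation simply keeps the bookkeeping to a minimum.
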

Lemma~\ref{lem:sumulticlass} constitutes a nontrivial extension of Lemma 11 of~\cite{muthukumar2021classification} to deal with intricacies in the new pairwise-difference signal vector and the covariate-dependent label noise induced by the MLM.
Its proof is provided in Appendix~\ref{sec:lemmasumulticlassproof}.

Next, we provide an upper-bound characterization of contamination.
\begin{lemma}[Contamination terms]
\label{lem:cnmulticlass}
There exists a universal constant $C_k$ that depends only on $k$ such that
\begin{align*}
    \CN_{c_1,c_2}(n) \leq C_k \sqrt{\log n} \cdot n^{-\frac{\min\{m-1,2q+r-1,2q + 2r - 3/2\}}{2}}, q > 1-r
\end{align*}
with probability at least $1-\frac{C_k}{n^c}$ for some constant $0 <c\leq1$.
\end{lemma}
Lemma~\ref{lem:cnmulticlass} extends Lemma 13 of~\cite{muthukumar2021classification} for binary classification, and its proof is provided in Appendix~\ref{sec:lemmacnmulticlassproof}.
As with the analysis of survival, the dependency of the label difference vector on the top $k$ features requires an intricate leave-$k$-out analysis
Accordingly, several technical lemmas established in the proof of Lemma~\ref{lem:sumulticlass} are also used in this proof.

Plugging Lemmas~\ref{lem:sumulticlass} and~\ref{lem:cnmulticlass} into Lemma~\ref{lem:sucmulticlass} directly gives us the desired statement of Theorem~\ref{thm:classerrormlm}.
\qed

\section{Conclusion and future work}
Our work provides, to the best of our knowledge, the first results characterizing a) equivalence of loss functions, and b) generalization of interpolating solutions in multiclass settings.
We outline here some immediate as well as longer-term future directions.
\textcolor{black}{First, in Section \ref{sec:non-isotropic}, we discussed in detail the potential for extending our techniques to anisotropic scenarios for GMM data. However, the formal details of such extensions require further work that is beyond the scope of this paper. Another important area for future research is the extension of our results to situations where the number of classes ($k$) scales with the problem dimensions $(n,p)$. This is particularly intriguing as past research (e.g. \cite{abramovich2021multiclass}) has shown, albeit under differing assumptions and with distinct training algorithms, that there is a different generalization error behavior between small and large numbers of classes. Despite our research's focus on the condition where $k$ is constant, our results provide a mathematical basis for such extensions. A key contribution of our work is the establishment of deterministic equivalence conditions between multiclass SVM and MNI, which not only remain valid but also serve as a basis for analyzing any probabilistic data model and any scaling regime of $k$. In fact, after the initial release of this paper, the authors of \cite{subramaniangeneralization,wu2023precise} leveraged our equivalence result and expanded our generalization bounds for the case of MLM data to the case where $k$ can grow with $n$ and $p$, which requires new technical insights.}

More generally, our fine-grained techniques are tailored to high-dimensional linear models with Gaussian features. 
Furthermore, we believe the results derived here can extend to kernel machines and other nonlinear settings; formally showing these extensions is of substantial interest. 
It is also interesting to investigate corresponding lower bounds for our results --- for example, studying the sharpness of our conditions for equivalence of SVM to MNI in Section \ref{sec:linksvm}, analogous to \cite{ardeshir2021support} for the binary case. 
Also, we have limited attention to balanced datasets throughout, i.e. we assumed that each class contains equal number of training samples.
We would like to investigate the effect of data imbalances on our results extending our analysis to CE modifications tailored to imbalanced data recently proposed in \cite{TengyuMa,Menon,VSloss}. Finally, we have established a tight connection of our findings regarding the geometry of support vectors under overparameterization with the neural collapse phenomenon. Nevertheless, many questions remain open towards better explaining what leads the learnt feature representations of overparameterized to have the observed ETF structure. It is a fascinating research direction further exploring the geometry of learnt features and of support vectors in nonlinear settings.


\section*{Acknowledgments}
{\color{black} We are grateful to Vignesh Subramanian, Rahul Arya and Anant Sahai for pointing out a subtle issue in the proofs of Lemmas~\ref{lem:sumulticlass} and~\ref{lem:cnmulticlass}, which has since been fixed.} We are also grateful to the anonymous reviewers for their valuable feedback, which has contributed to enhancing the presentation of the initial submission.
This work is partially supported by the NSF under Grant Number CCF-2009030, by an NSERC Discovery Grant and by a grant from KAUST. Part of this work was done when VM was visiting the Simons Institute for the Theory of Computing.

\bibliographystyle{alpha}
\bibliography{compbib}

\clearpage

\appendix

\addtocontents{toc}{\protect\setcounter{tocdepth}{3}}
\tableofcontents


\section{Lemmas used in the proof of Theorem \ref{thm:svmgmm}}\label{sec:app_proof_Thm2}

\subsection{Auxiliary  Lemmas}\label{sec:aux_lemmas_svm_GMM}
In this section, we state a series of auxiliary lemmas that we use to prove Lemma~\ref{lem:ineqforanoj}.
The following result shows concentration of the norms of the label indicators $\vb_c, c \in [k]$ under the {nearly equal-priors assumption} (Assumption~\ref{ass:equalmuprob}). 
Intuitively, in this {nearly balanced} setting there are $\Theta(n/k)$ samples for each class; hence,  $\Theta(n/k)$ non-zeros (in fact, 1's) in each label indicator vector $\vb_c$.
\begin{lemma}
\label{lem:vnorm}
Under the setting of Assumption~\ref{ass:equalmuprob}, there exist large constants $C_1, C_2 > 0$ such that the event
\begin{align}
\label{eq:eventvnorm}
    \Ec_v:=\Big\{ \lp(1-\frac{1}{C_1}\rp)\frac{n}{k}\le \tn{\vb_c}^2 \le \lp(1+\frac{1}{C_1}\rp)\frac{n}{k}~,~\forall c\in[k] \Big\},
\end{align}
holds  with probability at least $1-2ke^{-\frac{n}{C_2k^2}}$.
\end{lemma}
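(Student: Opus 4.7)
\textbf{Proof proposal for Lemma \ref{lem:vnorm}.}

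The plan is to recognize $\|\vb_c\|_2^2$ as a sum of independent Bernoulli random variables and then apply a standard concentration inequality followed by a union bound. By construction, each entry of $\vb_c$ is the indicator $\mathbb{1}[y_i=c]$, and under Assumption \ref{ass:equalmuprob} these indicators are i.i.d.\ Bernoulli with parameter $\pi_c = 1/k$. Hence
\[
\|\vb_c\|_2^2 \;=\; \sum_{i=1}^n \mathbb{1}[y_i=c] \;\sim\; \mathrm{Binomial}(n,1/k),
\]
so $\mathbb{E}\|\vb_c\|_2^2 = n/k$ and the quantity of interest is the deviation of a bounded-sum around its mean.

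For the concentration step, I would apply Hoeffding's inequality to the i.i.d.\ $[0,1]$-valued summands $\mathbb{1}[y_i=c]$, yielding
\[
\Pr\!\left[\,\big|\|\vb_c\|_2^2 - n/k\big| \geq t\,\right] \;\leq\; 2\exp(-2t^2/n).
\]
Setting $t = n/(C_1 k)$ produces an exponent of order $n/(C_1^2 k^2)$, which matches the $k^2$ factor appearing in the lemma; picking $C_2$ proportional to $C_1^2$ absorbs the constant. (Using the multiplicative Chernoff bound would give the stronger scaling $n/(C_1^2 k)$, but Hoeffding suffices and yields exactly the stated rate.)

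Finally, I would union-bound over the $k$ classes to obtain
\[
\Pr[\mathcal{E}_v^c] \;\leq\; \sum_{c=1}^k \Pr\!\left[\,\big|\|\vb_c\|_2^2 - n/k\big| \geq n/(C_1 k)\,\right] \;\leq\; 2k \exp\!\left(-\frac{n}{C_2 k^2}\right),
\]
which is exactly the desired bound. There is no real obstacle here: the only subtlety is choosing the right concentration inequality so that the exponent matches the claimed $k^2$ scaling, and Hoeffding's inequality achieves this directly. The result is essentially immediate once the binomial representation is identified.
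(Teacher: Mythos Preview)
Your proposal is correct and essentially identical to the paper's proof: both recognize $\|\vb_c\|_2^2$ as a sum of $n$ i.i.d.\ $\{0,1\}$-valued indicators with mean $1/k$, apply Hoeffding's inequality (the paper writes the bound as $2\exp(-t^2/(2n))$, you write $2\exp(-2t^2/n)$, which are just different constant conventions), set $t=n/(C_1 k)$, and union-bound over $c\in[k]$. Your remark that Chernoff would sharpen the $k$-dependence is correct and is additional insight not in the paper.
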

Next, we provide bounds on the ``base case'' $0$-th order quadratic forms that involve the Gram matrix $\bolda_0^{-1}$. We do this in three lemmas presented below. {The first Lemma~\ref{lem:ineqforazero} follows by a  direct application of~\cite[Lemma 4 and 5]{wang2020benign}}. The only difference is that we keep track of throughout the proof is the scaling of $\mathcal{O}(1/k)$ arising from the multiclass case in the $\vb_j$'s. 
For instance, the bound of the term $h_{mj}^{(0)} :=\mathbf{v}_m^T \bolda_0^{-1}\mathbf{d}_j$ involves a term $\tilde{\rho}_{n,k}=\min\{1,\sqrt{\log(2n)/k}\}$ compared to the binary case.
The other two Lemmas~\ref{lem:ineqforszero} and~\ref{lem:ineqforgzero} are proved in Section~\ref{sec:pfauxlemma}.
\begin{lemma}[$0$-th order Quadratic forms, Part I]
\label{lem:ineqforazero}
Under the event $\mathcal{E}_v $, there exist constants $c_i$'s and $C_i$'s $>1$ such that the following bounds hold with probability at least $1-c_1ke^{-\frac{n}{c_2}}$.
\begin{align*}
    t_{jj}^{(0)} &\le  \frac{C_1n\tn{\boldmu}^2}{p}~~\text{for all}~ j \in [k],\\
    |h_{mj}^{(0)}| &\le  \minrho\frac{C_2n\tn{\boldmu}}{\sqrt{k}p}~~\text{for all}~ m, j \in [k],\\
    |t_{mj}^{(0)}| &\le  \frac{C_3n\tn{\boldmu}^2}{p}~~\text{for all}~ m \ne j \in [k],\\
    \tn{\mathbf{d}_j}^2 &\le C_4n\tn{\boldmu}^2~~\text{for all}~ j \in [k],\\
    \max_{i \in [n]}|f_{ji}^{(0)}|  &\le \frac{C_5\sqrt{\log(2n)}\tn{ \boldmu}}{p}~~\text{for all}~j \in [k].
\end{align*}
\end{lemma}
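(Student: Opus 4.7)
The plan is to bound each of the five quantities by Gaussian/Wishart concentration, using the thin SVD $\Q=\Ub\Sb\Vb^T$ with $\Ub\in\R^{p\times n}$ having orthonormal columns, $\Sb\in\R^{n\times n}$ diagonal, and $\Vb\in\R^{n\times n}$ orthogonal. This gives $\bolda_0^{-1}=\Vb\Sb^{-2}\Vb^T$ and identifies $\Q\bolda_0^{-1}\Q^T=\Ub\Ub^T$ as the orthogonal projector onto a uniformly random $n$-dimensional subspace of $\R^p$. Standard non-asymptotic Gaussian matrix bounds (Davidson--Szarek) imply that, on an event of probability at least $1-e^{-cn}$, all singular values of $\Sb$ lie in $[\sqrt{p}-C\sqrt{n},\sqrt{p}+C\sqrt{n}]$; I would assume throughout the proof that we work on this event intersected with $\Ec_v$.

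First I would handle $\tn{\mathbf{d}_j}^2=\boldmu_j^T\Q\Q^T\boldmu_j$, which by rotational invariance is distributed as $\tn{\boldmu}^2\cdot\chi^2_n$, so a one-sided $\chi^2$ tail bound plus a union bound over $j\in[k]$ yields $\tn{\mathbf{d}_j}^2\le C_4 n\tn{\boldmu}^2$. Then $t_{jj}^{(0)}=\tn{\Ub^T\boldmu_j}^2$ is the squared length of the projection of the deterministic vector $\boldmu_j$ onto a uniform random $n$-dimensional subspace; a standard spherical-projection concentration bound gives $t_{jj}^{(0)}\le C_1 n\tn{\boldmu}^2/p$ with high probability. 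The off-diagonal $|t_{mj}^{(0)}|=|(\Ub^T\boldmu_m)^T(\Ub^T\boldmu_j)|$ follows from Cauchy--Schwarz applied to the diagonal bound (losing only universal constants), or from a direct Hanson--Wright computation.

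For the mixed form $h_{mj}^{(0)}=\vb_m^T\Vb\Sb^{-1}\Ub^T\boldmu_j$, I would condition on $(\Ub,\Sb)$; then $\Vb$ is Haar-distributed on $O(n)$ and independent of $\Ub^T\boldmu_j$, so $\Vb\Sb^{-1}\Ub^T\boldmu_j$ is uniform on a sphere of radius $\tn{\Sb^{-1}\Ub^T\boldmu_j}=O(\sqrt{n}\tn{\boldmu}/p)$ on the high-probability event above. Its inner product with the fixed vector $\vb_m$, whose norm is $\Theta(\sqrt{n/k})$ by $\Ec_v$, is sub-Gaussian with parameter of order $\sqrt{n/k}\cdot\sqrt{n}\tn{\boldmu}/(p\sqrt{n})=\tn{\boldmu}/\sqrt{kp}$. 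Multiplying by $\tn{\vb_m}$ gives a typical scale of $n\tn{\boldmu}/(\sqrt{k}\,p)$, and after a union bound over $(m,j)\in[k]^2$ the deviation factor is controlled either by the deterministic bound (giving a factor of $1$) or by the sub-Gaussian tail (giving $\sqrt{\log(2n)/k}$), whichever is smaller --- this is precisely $\minrho$. The bound on $f_{ji}^{(0)}=\boldmu_j^T\Ub\Sb^{-1}\Vb^T\mathbf{e}_i$ is identical in structure with $\mathbf{e}_i$ in place of $\vb_m$ (hence $\tn{\mathbf{e}_i}=1$, so no $1/\sqrt{k}$ appears), and a union bound over $i\in[n]$ and $j\in[k]$ produces the $\sqrt{\log(2n)}$ factor.

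The main obstacle is the careful conditioning behind $h_{mj}^{(0)}$: one must freeze the column space and spectrum of $\Q$ (which together determine $\Ub^T\boldmu_j$) while keeping the right singular vectors $\Vb$ Haar-independent, and then correctly identify the regime in which the sharper sub-Gaussian tail beats the trivial Cauchy--Schwarz deterministic bound --- this is exactly what produces the $\minrho=\min\{1,\sqrt{\log(2n)/k}\}$ prefactor instead of a blanket $\sqrt{\log(2n)/k}$. Apart from this, the argument is parallel to the proofs of Lemmas~4 and~5 of \cite{wang2020benign} for the binary case; the only substantive bookkeeping change is that $\tn{\vb_m}^2=\Theta(n/k)$ rather than $\Theta(n)$, and this extra factor of $1/\sqrt{k}$ propagates into the $h_{mj}^{(0)}$ bound (and only there), accounting for the difference between the binary and multiclass statements.
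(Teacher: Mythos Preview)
Your proposal is correct and matches the paper's approach, which simply invokes \cite[Lemmas~4 and~5]{wang2020benign} and notes that the only change from the binary case is the $\Theta(1/k)$ scaling of $\tn{\vb_m}^2$ on $\Ec_v$, which produces the extra $1/\sqrt{k}$ in the $h_{mj}^{(0)}$ bound and hence the $\minrho$ prefactor. The SVD/Haar conditioning machinery you describe is precisely what underlies those cited lemmas.
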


To sharply characterize the forms $s_{ij}^{(0)}$ we need additional work, particularly for the cross-terms where $i \neq j$. We will make use of fundamental concentration inequalities on quadratic forms of inverse Wishart matrices. {Note that the term $t_{jj}^{(0)}$ originally depends on the norm $\tn{\boldmu_j}^2$. Due to the nearly equal energy Assumption~\ref{ass:equalmuprob}, we can write $\tn{\boldmu_j}^2$ in term of the ``reference vector" norm $\tn{\boldmu}^2$ (which is defined in Assumption~\ref{ass:equalmuprob}). Consequently, we will see this ``reference norm" $\tn{\boldmu}^2$ in all our higher order terms.}
The following lemma controls these quadratic forms, and shows in particular that the $s_{ij}^{(0)}$ terms for $i \neq j$ are much smaller than the corresponding terms $s_{jj}^{(0)}$.
This sharp control of the cross-terms is essential for several subsequent proof steps.

\begin{lemma}[$0$-th order Quadratic forms, Part II]
\label{lem:ineqforszero}
Working on the event $\mathcal{E}_v$ defined in \Equation~\eqref{eq:eventvnorm}, assume that $p > Cn\log (kn) + n -1$ for large enough constant $C >1$ and large $n$. There exist constants $C_i$'s $> 1$ such that with probability at least $1-\frac{C_0}{n}$, the following bound holds:
\begin{align*}
     \frac{C_1-1}{C_1}\cdot\frac{n}{kp} \le  &s_{jj}^{(0)} \le \frac{C_1+1}{C_1}\cdot\frac{n}{kp}, ~~\text{for}~ j \in [k],\\
    -\frac{C_2+1}{C_2}\cdot\frac{\sqrt{n}}{kp}  \le  &s_{ij}^{(0)} \le \frac{C_2+1}{C_2}\cdot\frac{\sqrt{n}}{kp},~~\text{for}~ i\ne j \in [k].
\end{align*}
\end{lemma}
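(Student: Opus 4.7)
Since $\Q$ has i.i.d.\ standard Gaussian entries, the law of $\bolda_0 = \Q^T\Q$ is invariant under orthogonal conjugation in $\R^{n \times n}$: $\mathbf{O}^T\bolda_0\mathbf{O} \stackrel{d}{=}\bolda_0$ for every orthogonal $\mathbf{O}$. Conditional on the (one-hot, hence orthogonal) indicator vectors $\{\vb_c\}$, pick $\mathbf{O}$ so that $\mathbf{O}\vb_j = \|\vb_j\|\mathbf{e}_1$, and additionally $\mathbf{O}\vb_i = \|\vb_i\|\mathbf{e}_2$ in the off-diagonal case---such an $\mathbf{O}$ exists precisely because $\vb_i^T\vb_j=0$. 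This yields the distributional identities
\begin{equation*}
    s_{jj}^{(0)} \;\stackrel{d}{=}\; \|\vb_j\|^2\,(\bolda_0^{-1})_{11},\qquad s_{ij}^{(0)} \;\stackrel{d}{=}\; \|\vb_i\|\,\|\vb_j\|\,(\bolda_0^{-1})_{12},
\end{equation*}
reducing the problem to controlling the $(1,1)$ and $(1,2)$ entries of the inverse Wishart $\bolda_0^{-1}$. On the event $\Ec_v$ of Lemma~\ref{lem:vnorm}, $\|\vb_j\|^2=(1\pm 1/C_1)\,n/k$, so it suffices to show $(\bolda_0^{-1})_{11}=(1+o(1))/p$ and $|(\bolda_0^{-1})_{12}| = O(\sqrt{\log n}/p^{3/2})$ with failure probability summable over the $O(k^2)$ pairs.

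\textbf{Diagonal via Schur complement.} Write $\Q=[\q_1\mid\Q_{-1}]$ and let $\mathbf{M}_1 := \mathbf{I}_p - \Q_{-1}(\Q_{-1}^T\Q_{-1})^{-1}\Q_{-1}^T$ be the orthogonal projection onto $\mathrm{col}(\Q_{-1})^\perp$, which is a.s.\ $(p-n+1)$-dimensional and independent of $\q_1$. Block inversion gives $(\bolda_0^{-1})_{11} = 1/(\q_1^T\mathbf{M}_1\q_1)$, and $\q_1^T\mathbf{M}_1\q_1 \sim \chi^2_{p-n+1}$. Laurent--Massart concentration then yields $\q_1^T\mathbf{M}_1\q_1 = (1\pm o(1))\,p$ with failure probability at most $2n^{-C}$ for any fixed $C$, as long as $p\gtrsim \log n$. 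Combined with the norm bound on $\vb_j$, the two-sided bound on $s_{jj}^{(0)}$ follows immediately.

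\textbf{Off-diagonal via $2\times 2$ Schur complement.} For $i\neq j$, partition $\Q=[\Q_{12}\mid\Q_{-12}]$ with $\Q_{12}=[\q_1,\q_2]$ and let $\mathbf{M}_{12}$ be the projection onto $\mathrm{col}(\Q_{-12})^\perp$, of rank $p-n+2$ and independent of $(\q_1,\q_2)$. The top-left $2\times 2$ block of $\bolda_0^{-1}$ equals $(\Q_{12}^T\mathbf{M}_{12}\Q_{12})^{-1}$, giving
\begin{equation*}
    (\bolda_0^{-1})_{12} \;=\; \frac{-\q_1^T\mathbf{M}_{12}\q_2}{(\q_1^T\mathbf{M}_{12}\q_1)(\q_2^T\mathbf{M}_{12}\q_2) - (\q_1^T\mathbf{M}_{12}\q_2)^2}.
\end{equation*}
Conditional on $\mathbf{M}_{12}$, rotating into an orthonormal basis of $\mathrm{range}(\mathbf{M}_{12})$ reveals $\q_1^T\mathbf{M}_{12}\q_2$ as a sum of $p-n+2$ independent products of i.i.d.\ $\Nc(0,1)$ variables, i.e.\ a centered sub-exponential sum; Bernstein's inequality then yields $|\q_1^T\mathbf{M}_{12}\q_2|\leq C'\sqrt{p\log n}$ with failure probability $\leq 2n^{-C}$, while the diagonal entries concentrate near $p-n+2\asymp p$ as in the diagonal case. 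Inverting the $2\times 2$ block gives $|(\bolda_0^{-1})_{12}| \lesssim \sqrt{\log n}/p^{3/2}$, whence
\begin{equation*}
    |s_{ij}^{(0)}| \;\lesssim\; \frac{n}{k}\cdot\frac{\sqrt{\log n}}{p^{3/2}} \;=\; \frac{\sqrt{n}}{kp}\cdot\sqrt{\frac{n\log n}{p}},
\end{equation*}
and the hypothesis $p > Cn\log(kn)+n-1$ forces the last factor to be smaller than any prescribed $1/C_2$. A final union bound over $k + \binom{k}{2}=O(k^2)$ indices gives total failure probability $O(k^2 n^{-C})\leq C_0/n$ by taking $C$ large (again absorbed by the same hypothesis on $p$).

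\textbf{Main obstacle.} The delicate point is the cross-term bound: the naive Cauchy--Schwarz estimate $|s_{ij}^{(0)}|\leq \sqrt{s_{ii}^{(0)}s_{jj}^{(0)}}$ only delivers $O(n/(kp))$, which falls short of the required $\sqrt{n}/(kp)$ by a factor of $\sqrt{n}$. Achieving the correct rate depends on extracting genuine cancellation inside $\q_1^T\mathbf{M}_{12}\q_2$ via a Bernstein-type bound (not an operator-norm bound on $\mathbf{M}_{12}$); this in turn is what forces the logarithmic overparameterization threshold $p > Cn\log(kn)$, since the $\sqrt{\log n}$ factor from Bernstein must be absorbed into the multiplicative constant $(C_2+1)/C_2$ rather than into the scaling $\sqrt{n}/(kp)$ itself.
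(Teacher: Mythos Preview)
Your argument is correct and complete, but it differs from the paper's. Both routes exploit the orthogonality $\vb_i^T\vb_j=0$ for $i\neq j$ as the key ingredient in the cross-term bound; they differ in how that orthogonality is cashed out.

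The paper does not use rotational invariance or a $2\times 2$ Schur block. For the diagonal terms it applies its Lemma~\ref{lem:wishartineq} (chi-squared tails for $1/(\vb^T\bolda_0^{-1}\vb)$) directly, which is equivalent to your Laurent--Massart step. For the off-diagonal terms it uses the \emph{parallelogram identity}
\[
\vb_c^T\bolda_0^{-1}\vb_j=\tfrac14\big[(\vb_c+\vb_j)^T\bolda_0^{-1}(\vb_c+\vb_j)-(\vb_c-\vb_j)^T\bolda_0^{-1}(\vb_c-\vb_j)\big],
\]
and observes that orthogonality forces $\|\vb_c+\vb_j\|_2=\|\vb_c-\vb_j\|_2$, so both quadratic forms concentrate around the \emph{same} value $\tilde n/\dprimen$ via Lemma~\ref{lem:wishartineq}; their difference therefore lives at the fluctuation scale $\tilde n\sqrt{\log(kn)}/\dprimen^{3/2}$, which under $p>Cn\log(kn)+n-1$ is at most $(C_2+1)/(C_2)\cdot\sqrt n/(kp)$.

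What each approach buys: the paper's parallelogram trick is more economical, needing only a single concentration tool (chi-squared tails) applied twice, and it makes the role of $\|\vb_c\pm\vb_j\|_2$ matching explicit. Your approach is structurally cleaner---it isolates exactly which entry of the inverse Wishart matters and shows transparently that the $\sqrt n$ saving comes from $\q_1^T\mathbf{M}_{12}\q_2$ being mean-zero---at the price of invoking a second concentration tool (Bernstein for sub-exponential sums). Both deliver the same rate and the same overparameterization threshold.
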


The proof of Lemma~\ref{lem:ineqforszero} for the cross terms with $i \neq j$ critically uses the in-built orthogonality of the label indicator vectors $\{\vb_c\}_{c \in [k]}$.
Finally, the following lemma controls the quadratic forms $g_{ji}^{(0)}$.
\begin{lemma}[$0$-th order Quadratic forms, Part III]
\label{lem:ineqforgzero}
Working on the event $\mathcal{E}_v$ defined in \Equation~\eqref{eq:eventvnorm}, given $p > Ck^3n\log(kn)+n-1$ for a large constant $C$, there exist large enough constants $C_1 , C_2$, such that with probability at least $1 - \frac{2}{kn}$, we have for every $i \in [n]$:
\begin{align}
    \lp(1-\frac{1}{C_1}\rp)\frac{1}{p} \le & g_{(y_i)i}^{(0)} \le \lp(1+\frac{1}{C_1}\rp)\frac{1}{p}, \notag\\
    -\frac{1}{C_2}\cdot\frac{1}{k^2p} \le & g_{ji}^{(0)} \le \frac{1}{C_2}\cdot\frac{1}{k^2p},~~\text{for}~ j\ne y_i. \notag
\end{align}
\end{lemma}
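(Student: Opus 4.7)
Since $\A_0 = \Q^T\Q$ with $\Q$ a $p \times n$ standard Gaussian matrix, $g_{ji}^{(0)} = \vb_j^T \A_0^{-1} \e_i$ is, up to the $\vb_j$-weighting, a column of an inverse-Wishart matrix. The plan is to use a Schur-complement ``leave-one-column-out'' decomposition that isolates the $i$-th column $\q_i$ of $\Q$ and exploits its independence from the remaining columns. Let $\Q_{-i}$ denote $\Q$ with its $i$-th column removed, $\A_{0,-i} := \Q_{-i}^T\Q_{-i}$, and $\mathbf{P}_{-i} := I_p - \Q_{-i}\A_{0,-i}^{-1}\Q_{-i}^T$ the orthogonal projector onto $\mathrm{col}(\Q_{-i})^\perp$. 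Block matrix inversion then yields
\[
g_{ji}^{(0)} \;=\; \frac{(\vb_j)_i - (\vb_j^{\setminus i})^T \A_{0,-i}^{-1} \Q_{-i}^T \q_i}{\q_i^T \mathbf{P}_{-i}\q_i},
\]
where $\vb_j^{\setminus i} \in \R^{n-1}$ is $\vb_j$ with the $i$-th coordinate deleted and $(\vb_j)_i$ equals $1$ when $j=y_i$ and $0$ otherwise, cleanly separating the two cases of the lemma.

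\textbf{Concentration of the two ingredients.} For the denominator, independence of $\q_i$ from $\Q_{-i}$ and rotational invariance give $\q_i^T\mathbf{P}_{-i}\q_i \sim \chi^2_{p-n+1}$ conditional on $\Q_{-i}$, and standard $\chi^2$ concentration delivers $\q_i^T\mathbf{P}_{-i}\q_i = p(1+o(1))$ with probability at least $1 - e^{-cn}$ (using $p \gg n$ from the hypothesis). For the Gaussian term in the numerator, conditional on $\Q_{-i}$ the quantity $(\vb_j^{\setminus i})^T \A_{0,-i}^{-1}\Q_{-i}^T\q_i$ is mean-zero Gaussian with variance $\sigma_j^2 := (\vb_j^{\setminus i})^T \A_{0,-i}^{-1}\vb_j^{\setminus i}$. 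Since $\A_{0,-i}$ is itself a standard Wishart matrix of parameters $(p,n-1)$ and $\|\vb_j^{\setminus i}\|_2^2 = \Theta(n/k)$ on $\Ec_v$, I would apply Lemma~\ref{lem:ineqforszero} to the leave-one-out matrix to obtain $\sigma_j^2 = \Theta(n/(kp))$ uniformly in $j$, and then a Gaussian tail bound to conclude
\[
|(\vb_j^{\setminus i})^T \A_{0,-i}^{-1}\Q_{-i}^T\q_i| \;\lesssim\; \sqrt{n\log(kn)/(kp)}
\]
with probability at least $1 - 1/(k^2 n^2)$.

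\textbf{Combining the two cases.} When $j \neq y_i$ the indicator $(\vb_j)_i$ vanishes and $|g_{ji}^{(0)}| \lesssim (1/p)\sqrt{n\log(kn)/(kp)}$; matching this against the target $1/(C_2 k^2 p)$ reproduces exactly the hypothesis $p \gtrsim k^3 n\log(kn)$. When $j = y_i$, the indicator contributes $1/(\q_i^T\mathbf{P}_{-i}\q_i) = (1/p)(1+o(1))$ which dominates, while the Gaussian correction is only $O(1/(k^2 p)) = o(1/p)$ under the same hypothesis, yielding the desired interval $[(1-1/C_1)/p,\, (1+1/C_1)/p]$. Union-bounding over the $nk$ pairs $(i,j)$ --- each failing with probability at most $O(1/(k^2 n^2))$ --- then produces the claimed overall probability at least $1 - 2/(kn)$.

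\textbf{Main obstacle.} The key tightness is pushing the off-diagonal bound all the way down to $1/(k^2 p)$ rather than merely $1/p$: this is precisely what drives the $k^3$ factor in the overparameterization hypothesis $p > Ck^3 n\log(kn) + n - 1$. The $1/k^2$ improvement traces back to the $\Theta(1/k)$ scaling of $\sigma_j^2$, itself a consequence of $\|\vb_j^{\setminus i}\|_2^2 = \Theta(n/k)$ together with the diagonal $\Theta(1/p)$ scale of $\A_{0,-i}^{-1}$ supplied by Lemma~\ref{lem:ineqforszero}; squaring through the Gaussian deviation then produces the $k^3$. A small bookkeeping subtlety is applying Lemma~\ref{lem:ineqforszero} to the leave-one-out matrix of dimension $n-1$, but this is immediate since the hypothesis with $n$ trivially implies the corresponding statement with $n-1$ and unchanged constants.
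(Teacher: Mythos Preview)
Your proposal is correct and takes a genuinely different route from the paper.

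The paper proves the lemma via the \emph{parallelogram identity}: it writes
\[
\e_i^\top \A_0^{-1}\vb_j \;=\; \frac{1}{4\sqrt{\tilde n}}\Big( (\ub+\vb_j)^\top \A_0^{-1}(\ub+\vb_j) - (\ub-\vb_j)^\top \A_0^{-1}(\ub-\vb_j)\Big),
\qquad \ub=\sqrt{\tilde n}\,\e_i,
\]
and then applies the inverse-Wishart concentration Lemma~\ref{lem:wishartineq} to each of the two quadratic forms, exploiting that $\|\ub\pm\vb_j\|_2^2$ differ by $2\sqrt{\tilde n}$ when $j=y_i$ and are equal when $j\ne y_i$. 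The $k^3$ factor emerges from the algebra of subtracting two nearly equal inverse-Wishart quantiles at level $t=2\log(kn)$.

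Your Schur-complement route instead separates the contribution into a $\chi^2_{p-n+1}$ denominator and a conditionally Gaussian numerator with variance $(\vb_j^{\setminus i})^\top\A_{0,-i}^{-1}\vb_j^{\setminus i}=\Theta(n/(kp))$. This is exactly the leave-one-column-out machinery the paper itself uses later in Appendix~\ref{sec:svm_mlm_sm} for the MLM model (following \cite{hsu2020proliferation}), so it is consistent with the paper's toolkit. What your approach buys is a cleaner audit trail for the $k^3$ requirement: it comes transparently from squaring the Gaussian deviation $\sigma_j\sqrt{\log(kn)}\asymp\sqrt{n\log(kn)/(kp)}$ against the $1/k^2$ target, whereas in the paper's proof the same factor appears only after a delicate cancellation between the two inverse-Wishart quantiles. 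The paper's approach, on the other hand, has the virtue of reusing Lemma~\ref{lem:wishartineq} throughout (the same tool that proves Lemma~\ref{lem:ineqforszero}), so no conditional-Gaussian step is needed. Your union-bound bookkeeping and the remark about applying Lemma~\ref{lem:ineqforszero} to the $(n{-}1)$-sized leave-one-out matrix are both fine.
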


\subsection{Proof of Lemma~\ref{lem:ineqforanoj}} \label{sec:pfineqforanoj}

In this section, we provide the full proof of Lemma~\ref{lem:ineqforanoj}.
We begin with a proof outline.

\subsubsection{Proof outline}

As explained in Section \ref{sec:pforderoneoutline}, it suffices to consider the case where $j=k$, since
 when $j \ne k$ we can simply change the order of adding mean components, described in \Equation~\eqref{eq:addnewmean}, so that the $j$-th mean component is added last.
For concreteness, we will also fix $i \in [n]$, $y_i = k$ and define as shorthand $m := k-1$. 
These fixes are without loss of generality. {The reason why we fix $j = k$ and $m = k-1$ is that when we do the proof, we want to add the $k-1$-th and $k$-th components last. This is for ease of reading and understanding. } 

For the case $j = k$, the leave-one-out quadratic forms in Lemma~\ref{lem:ineqforanoj} are equal to the quadratic forms of order $k-1$, given by $s_{kk}^{(k-1)}, t_{kk}^{(k-1)}$, $h_{kk}^{(k-1)}$, $g_{ki}^{(k-1)}$ and $f_{ki}^{(k-1)}$.
We will proceed recursively starting from the quadratic forms of order $1$ building up all the way to the quadratic forms of order $k-1$. 
Specifically, starting from order $1$, we will work on the event
\begin{align}\label{eq:event_Eq}
    \mathcal{E}_q := \{\text{all the inequalities in Lemmas~\ref{lem:ineqforazero},~\ref{lem:ineqforszero}~and~\ref{lem:ineqforgzero} hold}\},
\end{align}
Further, we note that Lemma~\ref{lem:ineqforgzero} shows that the bound for $g_{y_ii}^{(0)}$ is different from the bound for $g_{ji}^{(0)}$ when $j \ne y_i$. 
We will show the following set of upper and lower bounds:
\begin{align}
\begin{split}
\label{eq:pfquadforone}
    \lp(\frac{C_{11}-1}{C_{11}}\rp)\frac{n}{kp} &\le  s_{kk}^{(1)} \le \lp(\frac{C_{11}+1}{C_{11}}\rp)\frac{n}{kp}, \\
    -\lp(\frac{C_{12}+1}{C_{12}}\rp)\frac{\sqrt{n}}{kp}  &\le  s_{mk}^{(1)} \le \lp(\frac{C_{12}+1}{C_{12}}\rp)\frac{\sqrt{n}}{kp},\\
     t_{kk}^{(1)} &\le  \frac{C_{13}n\tn{\boldmu}^2}{p},\\
    |h_{mk}^{(1)}| &\le  \minrho\frac{C_{14}n\tn{\boldmu}}{\sqrt{k}p},\\
    |t_{mk}^{(1)}| &\le  \frac{C_{15}n\tn{\boldmu}^2}{p},\\
    \tn{\mathbf{d}_k}^2 &\le C_{16}n\tn{\boldmu}^2,\\
    |f_{ki}^{(1)}|  &\le \frac{C_{17}\sqrt{\logtwon}\tn{ \boldmu}}{p},\\
    \lp(1-\frac{1}{C_{18}}\rp)\frac{1}{p} &\le  g_{(y_i)i}^{(1)} \le \lp(1+\frac{1}{C_{18}}\rp)\frac{1}{p},\text{ and }  \\
    -\frac{1}{C_{19}k^2p} &\le  g_{mi}^{(1)} \le \frac{1}{C_{19}k^2p}
    \end{split}
\end{align}
with probability at least $1-\frac{c}{kn^2}$.
Comparing the bounds on the terms of order $1$ in \Equation~\eqref{eq:pfquadforone} with the terms in Lemmas~\ref{lem:ineqforazero},~\ref{lem:ineqforszero} and~\ref{lem:ineqforgzero} of order $0$, the key observation is that they are all at the same order.
This allows us to repeat the same argument to now bound corresponding terms of order $2$, and so on until order $k-1$.
Note that for each $j \in [k]$, we have $n$ terms of the form $g_{ji}^{(1)}$, corresponding to each value of $i \in [n]$.
Thus, we will adjust the final probabilities by applying a union bound over the $n$ training examples.

\subsubsection{Proofs for 1-st order quadratic forms in \Equation~(\ref{eq:pfquadforone})}
%
The proof makes repeated use of Lemmas~\ref{lem:ineqforazero}, \ref{lem:ineqforszero} and \ref{lem:ineqforgzero}. In fact, we will throughout condition on the event $\mathcal{E}_q$, defined in Equation \eqref{eq:event_Eq}, {which holds with probability at least $1-\frac{c_1}{n} - c_2e^{-\frac{n}{c_0k^2}}$.} Specifically, by Lemma~\ref{lem:ineqforazero} we have
\begin{align}
\label{eq:pfboundeta01}
h_{mj}^{(0)} \le \minrho\frac{C_1\epsilonn}{k^2\sqrt{\logtwon}},\qquad \max_{i \in [n]}|f_{mi}^{(0)}|\le \frac{C_2\epsilonn}{k^{1.5}n},\quad \text{and}\quad \frac{s_{mj}^{(0)}}{s_{kk}^{(0)}} \le \frac{C}{\sqrt{n}} \text{ for } m,j\ne k,
\end{align}
where we recall from \Equation~\eqref{eq:eps_proof} the notation $\epsilonn := \frac{k^{1.5}n\sqrt{\logtwon}\tn{\boldmu}}{p}.$ {Also, recall that we choose $\epsilonn\leq\tau$ for a sufficiently small constant $\tau$.} 

In order to make use of Lemmas~\ref{lem:ineqforazero},~\ref{lem:ineqforszero} and~\ref{lem:ineqforgzero}, we need to relate the quantities of interest to corresponding quadratic forms involving $\bolda_0$.
We do this recursively and make repeated use of the Woodbury identity.
The recursions are proved in Appendix~\ref{sec:pfinversesecc}.
We now provide the proofs for the bounds on the terms in \Equation~\eqref{eq:pfquadforone} one-by-one.

\noindent\textbf{Bounds on $s_{mk}^{(1)}$.}~By \Equation~\eqref{eq:pfinverseskm} in Appendix~\ref{sec:pfinversesecc}, we have
\begin{align}
\label{eq:pfsijzero01}
s_{mk}^{(1)} 
&= s_{mk}^{(0)} - \frac{1}{\det_0}(\star)_{s}^{(0)}, 
\end{align}
where we define
\begin{align}
\begin{split}
\label{eq:pfsijzero02}
    (\star)_{s}^{(0)} &:= (\tn{\boldmu_1}^2 - t_{11}^{(0)})s_{1k}^{(0)}s_{1m}^{(0)} + s_{1m}^{(0)}h_{k1}^{(0)}h_{11}^{(0)}+s_{1k}^{(0)}h_{m1}^{(0)}h_{11}^{(0)}-s_{11}^{(0)}h_{k1}^{(0)}h_{m1}^{(0)} + s_{1m}^{(0)}h_{k1}^{(0)}+s_{1k}^{(0)}h_{m1}^{(0)} \text{ and } \\
    {\det}_0 &:= s_{11}^{(0)}(\tn{ \boldmu_1}^2 - t_{11}^{(0)})+(1+h_{11}^{(0)})^2.
\end{split}
\end{align}
The essential idea is to show that $|\frac{(\star)_{s}^{(0)}}{\det_0}|$ is sufficiently small compared to $|s_{mk}^{(0)}|$. We first look at the first term given by $\Big((\tn{\boldmu_1}^2 - t_{11}^{(0)})s_{1k}^{(0)}s_{1m}^{(0)}\Big)/\det_0$. By Lemmas~\ref{lem:ineqforazero},~\ref{lem:ineqforszero} and the definition of $\det_0$, we have
\begin{align*}
    \Big|\frac{1}{\det_0}\Big((\tn{\boldmu_1}^2 - t_{11}^{(0)})s_{1k}^{(0)}s_{1m}^{(0)}\Big)\Big| \le \frac{(\tn{\boldmu_1}^2 - t_{11}^{(0)})|s_{1k}^{(0)}s_{1m}^{(0)}|}{s_{11}^{(0)}(\tn{\boldmu_1}^2 - t_{11}^{(0)})} = \Big|\frac{s_{1k}^{(0)}s_{1m}^{(0)}}{s_{11}^{(0)}}\Big| \le \frac{C_1}{\sqrt{n}}\cdot\frac{C_2+1}{C_2}\cdot\frac{\sqrt{n}}{kp},
\end{align*}
where we use $\det_0 \ge s_{11}^{(0)}(\tn{ \boldmu_1}^2 - t_{11}^{(0)})$ and $s_{mj}^{(0)}/s_{kk}^{(0)} \le C/\sqrt{n} \text{ for all } m,j\ne k$. 
Now, we upper bound the other two dominant terms $|s_{1m}^{(0)}h_{k1}^{(0)}/\det_0|$ and $|{s_{1k}^{(0)}h_{m1}^{(0)}}/\det_0|$. 
Note that the same bound will apply to the remaining terms in \Equation~\eqref{eq:pfsijzero02} because we trivially have $|h_{ij}^{(0)}| = \Oc(1)$ for all $(i, j) \in [k]$. 
Again, Lemmas~\ref{lem:ineqforazero} and~\ref{lem:ineqforszero} give us
\begin{align*}
    \Big|\frac{s_{1m}^{(0)}h_{k1}^{(0)}}{\det_{0}}\Big| \le \frac{|s_{1m}^{(0)}h_{k1}^{(0)}|}{(1+h_{11}^{(0)})^2} \le \frac{\minrho C_3\epsilonn}{\lp(1-\frac{C_5\minrho\epsilonn}{k^2\sqrt{\logtwon}}\rp)^2 k^2\sqrt{\logtwon}}\cdot\frac{C_2+1}{C_2}\cdot\frac{\sqrt{n}}{kp}.
\end{align*}
The identical bound holds for $|{s_{1k}^{(0)}h_{m1}^{(0)}}|$. Noting that $|s_{mk}^{(0)}| \le \frac{C_2+1}{C_2} \cdot \frac{\sqrt{n}}{kp}$, we then have
\begin{align}
|s_{mk}^{(1)}| &\le |s_{mk}^{(0)}| + \Big|\frac{(\star)_s^{(0)}}{\det_0}\Big| \notag\\ 
&\le \lp(1+ \frac{C_6}{\sqrt{n}}+ \frac{C_7\minrho\epsilonn}{\lp(1-\frac{C_5\minrho\epsilonn}{k^2\sqrt{\logtwon}}\rp)^2 k^2\sqrt{\logtwon}}\rp)\frac{C_2+1}{C_2}\cdot\frac{\sqrt{n}}{kp} \notag\\ 
&\le (1+\alpha) \cdot \frac{C_2+1}{C_2}\cdot\frac{\sqrt{n}}{kp}, \label{eq:lemma7alpha}
\end{align}
where in the last inequality, we use that $\eps\leq \tau$ for sufficiently small constant $\tau>0$, and defined 
$$\alpha:=\frac{C_6}{\sqrt{n}}+\frac{C_7\tau}{\lp(1-\frac{C_5\tau}{k^2\sqrt{\logtwon}}\rp)^2k^2\sqrt{\logtwon}}.$$
Now, we pick $\tau$ to be sufficiently small and $n$ to be sufficiently large such that $(1+\alpha)\frac{C_2+1}{C_2} \le \frac{C_8+1}{C_8}$ for some constant $C_8 > 0$. Then, we conclude with the following upper bound:
\begin{align*}
    |s_{mk}^{(1)}| \le \frac{C_8+1}{C_8}\cdot\frac{\sqrt{n}}{kp}.
\end{align*}
 
\noindent\textbf{Bounds on $s_{kk}^{(1)}$.}~\Equation~\eqref{eq:pfinverseskk} in Appendix~\ref{sec:pfinversesecc} gives us
\begin{align}
s_{kk}^{(1)} = s_{kk}^{(0)} - \frac{1}{\det_0}\Big((\tn{\boldmu_1}^2 - t_{11}^{(0)}){s_{1k}^{(0)}}^2 + 2s_{1k}^{(0)}{h_{k1}^{(0)}}h_{11}^{(0)}-s_{11}^{(0)}{h_{k1}^{(0)}}^2 + 2s_{1k}^{(0)}h_{k1}^{(0)}\Big). \notag
\end{align}
First, we lower bound $s_{kk}^{(1)}$ by upper bounding $\frac{1}{\det_0}\Big((\tn{\boldmu_1}^2 - t_{11}^{(0)}){s_{1k}^{(0)}}^2\Big)$. Lemmas~\ref{lem:ineqforazero} and~\ref{lem:ineqforszero} yield 
\begin{align*}
    \frac{1}{\det_0}\Big((\tn{\boldmu_1}^2 - t_{11}^{(0)}){s_{1k}^{(0)}}^2\Big) \le \frac{(\tn{\boldmu_1}^2 - t_{11}^{(0)}){s_{1k}^{(0)}}^2}{s_{11}^{(0)}(\tn{ \boldmu_1}^2 - t_{11}^{(0)})+(1+h_{11}^{(0)})^2} \le \frac{(\tn{\boldmu_1}^2 - t_{11}^{(0)}){s_{1k}^{(0)}}^2}{s_{11}^{(0)}(\tn{ \boldmu_1}^2 - t_{11}^{(0)})} \le \frac{C_1}{n}\cdot\frac{n}{kp}.
\end{align*}
It suffices to upper bound the other dominant term $|s_{1k}^{(0)}h_{k1}^{(0)}|/\det_0$.
For this term, we have
\begin{align*}
    \Big|\frac{s_{1k}^{(0)}h_{k1}^{(0)}}{\det_{0}}\Big| \le \frac{|s_{1k}^{(0)}h_{k1}^{(0)}|}{(1+h_{11}^{(0)})^2} \le \frac{C_3\minrho\epsilonn}{\lp(1-\frac{C_4\minrho\epsilonn}{k^2\sqrt{\logtwon}}\rp)^2k^2\sqrt{\logtwon}}\cdot\frac{C_2+1}{C_2}\cdot\frac{\sqrt{n}}{kp}.
\end{align*}
Thus, we get
\begin{align*}
    s_{kk}^{(1)} \ge \lp(1- \frac{C_1}{{n}}- \frac{C_5\minrho\logtwon\epsilonn}{\lp(1-\frac{C_4\minrho\epsilonn}{k^2\sqrt{\logtwon}}\rp)^2k^2\sqrt{\logtwon}}\rp)\frac{C_6-1}{C_6}\cdot\frac{{n}}{kp} \ge (1-\alpha) \cdot \frac{C_6-1}{C_6}\cdot\frac{{n}}{kp}.
\end{align*}
Next, we upper bound $s_{kk}^{(1)}$ by a similar argument, and get
\begin{align*}
    s_{kk}^{(1)} &\le |s_{kk}^{(0)}|+ \frac{1}{\det_0}\Big|2s_{1k}^{(0)}{h_{k1}^{(0)}}h_{11}^{(0)}+s_{11}^{(0)}{h_{k1}^{(0)}}^2 + 2s_{1k}^{(0)}h_{k1}^{(0)}\Big|\\
    &\le \lp(1+ \frac{C_7\minrho\epsilonn}{\lp(1-\frac{C_4\minrho\epsilonn}{k^2\sqrt{\logtwon}}\rp)^2k^2\sqrt{\logtwon}}\rp)\frac{C_8+1}{C_8}\cdot\frac{{n}}{kp} \le (1+\alpha')\frac{C_8+1}{C_8}\cdot\frac{{n}}{kp},
\end{align*}
where we used $\frac{1}{\det_0}\Big((\tn{\boldmu_1}^2 - t_{11}^{(0)}){s_{1k}^{(0)}}^2\Big) > 0$ in the first step.
As above, we can tune $\epsilon$ and $n$ such that $(1+\alpha')\frac{C_8+1}{C_8} \le \frac{C_9+1}{C_9}$ and $(1-\alpha)\frac{C_6-1}{C_6} \ge \frac{C_9-1}{C_9}$ for sufficiently large constant $C_9 >0$.

\noindent\textbf{Bounds on $h_{mk}^{(1)}$.}~\Equation~\eqref{eq:pfinverseh} in Appendix~\ref{sec:pfinversesecc} gives us
\begin{align*}
    h_{mk}^{(1)} &= h_{mk}^{(0)} - \frac{1}{\det_0}(\star)_h^{(0)},
\end{align*}
where we define $$(\star)_h^{(0)}=(\tn{\boldmu_1}^2 - t_{11}^{(0)})s_{1m}^{(0)}h_{1k}^{(0)} + h_{m1}^{(0)}h_{1k}^{(0)}h_{11}^{(0)}+h_{m1}^{(0)}h_{1k}^{(0)} + s_{1m}^{(0)}t_{k1}^{(0)}+s_{1m}^{(0)}t_{k1}^{(0)}h_{11}^{(0)}-s_{11}^{(0)}t_{k1}^{(0)}h_{m1}^{(0)}.$$ 
We focus on the two dominant terms $((\tn{\boldmu_1}^2 - t_{11}^{(0)})s_{1m}^{(0)}h_{1k}^{(0)})/\det_0$ and $s_{1m}^{(0)}t_{k1}^{(0)}/\det_0$.
For the first dominant term $((\tn{\boldmu_1}^2 - t_{11}^{(0)})s_{1m}^{(0)}h_{1k}^{(0)})/\det_0$, Lemmas~\ref{lem:ineqforazero} and~\ref{lem:ineqforszero} yield
\begin{align*}
    \Big|\frac{1}{\det_0}\Big((\tn{\boldmu_1}^2 - t_{11}^{(0)})s_{1m}^{(0)}h_{1k}^{(0)}\Big)\Big| \le \frac{(\tn{\boldmu_1}^2 - t_{11}^{(0)})|s_{1m}^{(0)}h_{1k}^{(0)}|}{s_{11}^{(0)}(\tn{\boldmu_1}^2 - t_{11}^{(0)})} \le \Big|\frac{s_{1m}^{(0)}h_{1k}^{(0)}}{s_{11}^{(0)}}\Big| \le \frac{C_1}{\sqrt{n}}|h_{1k}^{(0)}| \le\frac{C_2\minrho\epsilonn}{k^2\sqrt{\logtwon}}.
\end{align*}
For the second dominant term $s_{1m}^{(0)}t_{k1}^{(0)}/\det_0$, we have
\begin{align*}
    \frac{1}{\det_0}s_{1m}^{(0)}t_{k1}^{(0)} \le \frac{|s_{1m}^{(0)}t_{k1}^{(0)}|}{(1+h_{11}^{(0)})^2} \le \frac{C_3n\sqrt{n}\tn{\boldmu}^2}{\lp(1-\frac{C_4\minrho\epsilonn}{k^2\sqrt{\logtwon}}\rp)^2kp^2} \le \frac{C_5\epsilonn}{\lp(1-\frac{C_4\minrho\epsilonn}{k^2\sqrt{n}}\rp)^2k^{1.5}\sqrt{n}}\cdot\frac{\minrho\epsilonn}{k^2\sqrt{\logtwon}},
\end{align*}
where we use the fact $1/\sqrt{k} < \minrho$ for $k > 1$.
Thus, we get
\begin{align*}
    |h_{mk}^{(1)}| &\le |h_{mk}^{(0)}| +\Big| \frac{1}{\det_0}(\star)_h^{(0)}\Big| \le \lp(1+ \frac{C_1}{\sqrt{n}}+ \frac{C_5\minrho\epsilonn}{\lp(1-\frac{C_4\minrho\epsilonn}{k^2\sqrt{\logtwon}}\rp)^2k^{1.5}\sqrt{n}}\rp)\frac{C_6\minrho\epsilonn}{k^2\sqrt{\logtwon}} \\
    &\le (1+\alpha)\frac{C_7\minrho\epsilonn}{k^2\sqrt{\logtwon}},
\end{align*}
and there exists constant $C_8$ such that $(1+\alpha)C_7 \le C_8$, which shows the desired upper bound.

\noindent\textbf{Bounds on $t_{kk}^{(1)}$.}~\Equation~\eqref{eq:pfinversetkk} in Appendix~\ref{sec:pfinversesecc} gives us
\begin{align}
t_{kk}^{(1)} = t_{kk}^{(0)} - \frac{1}{\det_0}\lp(\left(\tn{\boldmu_1}^2 - t_{11}^{(0)}\right){h_{1k}^{(0)}}^2 + 2t_{1k}^{(0)}{h_{1k}^{(0)}}h_{11}^{(0)}-s_{11}^{(0)}{t_{1k}^{(0)}}^2 + 2t_{1k}^{(0)}h_{1k}^{(0)}\rp). \notag
\end{align}
We only need an upper bound on $t_{kk}^{(1)}$.
The first dominant term $s_{11}^{(0)}{t_{1k}^{(0)}}^2/\det_0$ is upper bounded as follows:
\begin{align*}
  \frac{s_{11}^{(0)}{t_{1k}^{(0)}}^2}{\det_0} \le \frac{s_{11}^{(0)}{t_{1k}^{(0)}}^2}{(1+h_{11}^{(0)})^2} \le \frac{C_6n^3\tn{\boldmu}^4}{
\lp(1-\frac{C_3\minrho\epsilonn}{k^2\sqrt{\logtwon}}\rp)^2kp^3} \le \frac{C_7\epsilonn^2}{\lp(1-\frac{C_3\minrho\epsilonn}{k^2\sqrt{\logtwon}}\rp)^2pk^4{n}}\cdot\frac{n\tn{\boldmu}^2}{p}.
\end{align*}
Next, the second dominant term, $t_{1k}^{(0)}h_{1k}^{(0)}/\det_0$, is upper bounded as
\begin{align*}
    \frac{t_{1k}^{(0)}h_{1k}^{(0)}}{\det_0} \le \frac{|t_{1k}^{(0)}h_{1k}^{(0)}|}{(1+h_{11}^{(0)})^2} \le \frac{C_8\minrho\epsilonn}{k^2\sqrt{\logtwon}\lp(1-\frac{C_3\minrho\epsilonn}{k^2\sqrt{\logtwon}}\rp)^2}\cdot\frac{n\tn{\boldmu}^2}{p}.
\end{align*}
Combining the results above gives us
\begin{align*}
    t_{kk}^{(1)} &\le t_{kk}^{(0)} + \frac{1}{\det_0}\Big|  2t_{1k}^{(0)}{h_{1k}^{(0)}}h_{11}^{(0)} + s_{11}^{(0)}{t_{1k}^{(0)}}^2 +  2t_{1k}^{(0)}h_{1k}^{(0)}\Big|\\ 
    &\le \lp(1+ \frac{C_9\minrho\epsilonn}{\lp(1-\frac{C_3\minrho\epsilonn}{k^2\sqrt{\logtwon}}\rp)^2k^2\sqrt{\logtwon}}\rp)\frac{{n\tn{\boldmu}^2}}{p} \le \frac{{C_{5}n\tn{\boldmu}^2}}{p}.
\end{align*}
This shows the desired upper bound.

\noindent\textbf{Bounds on $t_{mk}^{(1)}$.}~\Equation~\eqref{eq:pfinversetkm} in Appendix~\ref{sec:pfinversesecc} gives us
\begin{align*}
   t_{mk}^{(1)} &= t_{mk}^{(0)} - \frac{1}{\det_0}(\star)_t^{(0)},
\end{align*}
where we define $$(\star)_t^{(0)} = (\tn{\boldmu_1}^2 - t_{11}^{(0)})h_{1m}^{(0)}h_{1k}^{(0)} + t_{m1}^{(0)}h_{1k}^{(0)}h_{11}^{(0)}+t_{k1}^{(0)}h_{1m}^{(0)}h_{11}^{(0)} + t_{1m}^{(0)}h_{1k}^{(0)}+t_{1k}^{(0)}h_{1m}^{(0)}-s_{11}^{(0)}t_{1m}^{(0)}t_{1k}^{(0)}.$$
Again, we only need an upper bound on $t_{mk}^{(1)}$.
As in the previously derived bounds, we have
\begin{align*}
    \frac{1}{\det_0}(\tn{\boldmu_1}^2 - t_{11}^{(0)}){h_{1m}^{(0)}}{h_{1k}^{(0)}} \le \frac{(\tn{\boldmu}^2 - t_{11}^{(0)})|{h_{1m}^{(0)}}{h_{1k}^{(0)}}|}{s_{11}^{(0)}(\tn{ \boldmu}^2 - t_{11}^{(0)})} \le \frac{C_1\minrho^2n^2\tn{\boldmu}^2}{kp^2}\cdot\frac{kp}{n} \le \frac{C_1n\tn{\boldmu}^2}{p}.
\end{align*}
The other dominant term $t_{1m}^{(0)}h_{1m}^{(0)}/\det_0$ is upper bounded as:
\begin{align*}
    \frac{t_{1m}^{(0)}h_{1m}^{(0)}}{\det_0} \le \frac{|t_{1m}^{(0)}h_{1m}^{(0)}|}{(1+h_{11}^{(0)})^2} \le \frac{C_2\minrho\epsilonn}{k^2\sqrt{\logtwon}(\lp(1-\frac{C_3\minrho\epsilonn}{k^2\sqrt{\logtwon}}\rp)^2}\cdot\frac{n\tn{\boldmu}^2}{p}.
\end{align*}
Combining the results above yields
\begin{align*}
    |t_{mk}^{(1)}| &\le |t_{mk}^{(0)}| + \frac{1}{\det_0}\Big|(\star)_t^{(0)}\Big|\\
    &\le \lp(C_1+ \frac{C_2\minrho\epsilonn}{\lp(1-\frac{C_3\minrho\epsilonn}{k^2\sqrt{\logtwon}}\rp)^2k^2\sqrt{\logtwon}}\rp)\frac{{n\tn{\boldmu}^2}}{p} \le \frac{{C_{4}n\tn{\boldmu}^2}}{p}.
\end{align*}
Note that both $t_{kk}^{(0)}$ and $t_{mk}^{(0)}$ are much smaller than $\tn{\boldmu}^2$.
The above upper bound shows that this continues to hold for $t_{kk}^{(1)}$ and $t_{mk}^{(1)}$ since $p \gg n$.

\noindent\textbf{Bounds on $f_{ki}^{(1)}$.}~
Consider $i \in [n]$ and fix $y_i = k$ without loss of generality.
\Equation~\eqref{eq:pfinversef} in Appendix~\ref{sec:pfinversesecc} gives us
\begin{align}
\label{eq:pfineqfone01}
    f_{ki}^{(1)} &= f_{ki}^{(0)} - \frac{1}{\det_0}(\star)_f^{(0)},
\end{align}
where we define
\begin{align}
\label{eq:pfineqfone02}
    (\star)_f^{(0)} = (\tn{\boldmu_1}^2 - t_{11}^{(0)})h_{1k}^{(0)}g_{1i}^{(0)} + t_{1k}^{(0)}g_{1i}^{(0)}+t_{1k}^{(0)}h_{11}^{(0)}g_{1i}^{(0)} + h_{1k}^{(0)}f_{1i}^{(0)}+h_{1k}^{(0)}h_{11}^{(0)}f_{1i}^{(0)}-s_{11}^{(0)}t_{1k}^{(0)}f_{1i}^{(0)}.
\end{align}
We only need an upper bound on $f_{ki}^{(1)}$.
We consider the dominant terms $(\tn{\boldmu_1}^2 - t_{11}^{(0)})h_{1k}^{(0)}g_{1i}^{(0)}/\det_0$, $t_{1k}^{(0)}g_{1i}^{(0)}/\det_0$, $h_{1k}^{(0)}f_{1i}^{(0)}/\det_0$ and $s_{11}^{(0)}t_{1k}^{(0)}f_{1i}^{(0)}/\det_0$. Lemmas~\ref{lem:ineqforazero},~\ref{lem:ineqforszero} and~\ref{lem:ineqforgzero} give us
\begin{align*}
    &\frac{(\tn{\boldmu_1}^2 - t_{11}^{(0)})h_{1k}^{(0)}g_{1i}^{(0)}}{\det_0} \le \frac{(\tn{\boldmu_1}^2 - t_{11}^{(0)})|h_{1k}^{(0)}g_{1i}^{(0)}|}{(\tn{\boldmu_1}^2 - t_{11}^{(0)})s_{11}^{(0)}}\\ 
    &\le \frac{C_1\minrho n\tn{\boldmu}}{\sqrt{k}p}\cdot\frac{1}{C_2k^2p}\cdot\frac{kp}{n}\le \frac{C_3}{k^{1.5}\sqrt{\logtwon}}\cdot\frac{\sqrt{\logtwon}\tn{\boldmu}}{p}, \\
    &\frac{t_{1k}^{(0)}g_{1i}^{(0)}}{\det_0} \le \frac{|t_{1k}^{(0)}g_{1i}^{(0)}|}{(1+h_{11}^{(0)})^2} \le \frac{C_4n\tn{\boldmu}^2}{
\lp(1-\frac{C_5\minrho \epsilon}{k^2\sqrt{\logtwon}}\rp)^2k^2p^2} \le \frac{C_7\epsilonn}{\lp(1-\frac{C_5\minrho\epsilonn}{k^2\sqrt{\logtwon}}\rp)^2k^{3.5}{{\logtwon}}}\cdot\frac{\sqrt{\logtwon}\tn{\boldmu}}{p},\\
   & \frac{h_{1k}^{(0)}f_{1i}^{(0)}}{\det_0} \le \frac{|h_{1k}^{(0)}f_{1i}^{(0)}|}{(1+h_{11}^{(0)})^2}  \le \frac{C_6\minrho\epsilonn}{\lp(1-\frac{C_5\minrho\epsilonn}{k^2\sqrt{\logtwon}}\rp)^2k^2\sqrt{\logtwon}}\cdot\frac{\sqrt{\logtwon}\tn{\boldmu}}{p}, \text{ and } \\
   & \frac{s_{11}^{(0)}t_{1k}^{(0)}f_{1i}^{(0)}}{\det_0} \le \frac{|s_{11}^{(0)}t_{1k}^{(0)}f_{1i}^{(0)}|}{(1+h_{11}^{(0)})^2}  \le \frac{C_7\epsilonn^2}{k^4{\logtwon}\lp(1-\frac{C_5\minrho\epsilonn}{k^2\sqrt{\logtwon}}\rp)^2}\cdot\frac{\sqrt{\logtwon}\tn{\boldmu}}{p},
\end{align*}
where, in the last two steps, {we used the  upper bound ${C\sqrt{\logtwon}\tn{ \boldmu}}/{p}$ for $|f_{ji}^{(0)}|$ and previously derived bounds on $|h_{1k}^{(0)}|$ and $|s_{11}^{(0)}t_{1k}^{(0)}|$}. Thus, we have
\begin{align*}
    |f_{ki}^{(1)}| &\le |f_{ki}^{(0)}| + \Big| \frac{1}{\det_0}(\star)_f^{(0)}\Big| \\
    &\le \lp(1+ \frac{C_3}{k^{1.5}\sqrt{\logtwon}}+ \frac{C_8\epsilonn}{\lp(1-\frac{C_5\minrho\epsilonn}{k^2\sqrt{\logtwon}}\rp)^2k^2\sqrt{\logtwon}}\rp)\frac{C_9\sqrt{\logtwon}\tn{\boldmu}}{p} \\
    &\le (1+\alpha)\frac{C_{10}\epsilonn}{k^{1.5}{n}},
\end{align*}
and we have $(1+\alpha)C_{10} \le C_{11}$ for a large enough positive constant $C_{11}$.
This shows the desired upper bound.

\noindent\textbf{Bounds on $g_{ki}^{(1)}$ and $g_{mi}^{(1)}$.}~ 
\Equation~\eqref{eq:pfinverseg} in Appendix~\ref{sec:pfinversesecc} gives
\begin{align}
\label{eq-gisteptwo}
 z_{ci}\mathbf{e}_i^T\bolda_1^{-1}\boldul_k = |z_{ci}|^2\Big(\mathbf{e}_i^T\bolda_0^{-1}\vb_k - \frac{1}{\det_0}(\star)_{gk}^{(0)}\Big) = |z_{ci}|^2\Big(g_{ki}^{(0)} - \frac{1}{\det_0}(\star)_{gk}^{(0)}\Big) ,
\end{align}
where we define
$$(\star)_{gk}^{(0)} = (\Vert \boldmu_1 \Vert_2^2 - t_{11}^{(0)})s_{1k}^{(0)}g_{1i}^{(0)} + g_{1i}^{(0)}h_{11}^{(0)}h_{k1}^{(0)}+g_{1i}^{(0)}h_{k1}^{(0)}+s_{1k}^{(0)}f_{1i}^{(0)}+s_{1k}^{(0)}h_{11}^{(0)}f_{1i}^{(0)} - s_{11}^{(0)}h_{k1}^{(0)}f_{1i}^{(0)}.$$
Lemmas~\ref{lem:ineqforazero},~\ref{lem:ineqforszero} and~\ref{lem:ineqforgzero} give us
\begin{align*}
   &\frac{(\tn{\boldmu_1}^2 - t_{11}^{(0)})|s_{1k}^{(0)}g_{1i}^{(0)}|}{\det_0} \le \frac{(\tn{\boldmu_1}^2 - t_{11}^{(0)})|s_{1k}^{(0)}g_{1i}^{(0)}|}{(\tn{\boldmu_1}^2 - t_{11}^{(0)})s_{11}^{(0)}} \le \frac{C_1}{\sqrt{n}}\cdot\frac{1}{C_2k^2p}, \\
   & \frac{|h_{k1}^{(0)}g_{1i}^{(0)}|}{\det_0} \le \frac{|h_{k1}^{(0)}g_{1i}^{(0)}|}{(1+h_{11}^{(0)})^2}  \le \frac{C_3\minrho\epsilonn}{\lp(1-\frac{C_4\minrho\epsilonn}{k^2\sqrt{\logtwon}}\rp)^2k^2\sqrt{\logtwon}}\cdot\frac{1}{C_2k^2p},\text{ and } \\
   & \frac{|s_{1k}^{(0)}f_{1i}^{(0)}|}{\det_0} \le \frac{|s_{1k}^{(0)}f_{1i}^{(0)}|}{(1+h_{11}^{(0)})^2}  \le \frac{C_5\epsilonn}{\lp(1-\frac{C_4\minrho\epsilonn}{k^2\sqrt{\logtwon}}\rp)^2\sqrt{k}\sqrt{n}}\cdot\frac{1}{C_2k^2p}.
\end{align*}
We then have
\begin{align*}
    g_{ki}^{(1)} &\ge g_{ki}^{(0)} - \frac{1}{\det_0}|(\star)_{gk}^{(0)}|
    \ge \lp(1-\frac{1}{C}\rp)\lp(1-\frac{C_1}{k^2\sqrt{n}}- \frac{C_6\epsilonn}{\lp(1-\frac{C_4\minrho\epsilonn}{k^2\sqrt{\logtwon}}\rp)^2k^{2.5}\sqrt{n}}\rp)\frac{1}{p}\ge \lp(1-\frac{1}{C}\rp)\frac{1-\alpha}{p}\\
    g_{ki}^{(1)} &\le  g_{ki}^{(0)} + \frac{1}{\det_0}|(\star)_{gk}^{(0)}| \le \lp(1+\frac{1}{C}\rp)\lp(1+\frac{C_1}{k^2\sqrt{n}}+ \frac{C_7\epsilonn}{\lp(1-\frac{C_4\minrho\epsilonn}{k^2\sqrt{\logtwon}}\rp)^2k^{2.5}\sqrt{n}}\rp)\frac{1}{p} \le \lp(1+\frac{1}{C}\rp)\frac{1+\alpha}{p},
\end{align*}
where for large enough $n$ and positive constant $C_9$, we have $(1+\alpha)\frac{C+1}{C} \le \frac{C_9+1}{C_9}$ and $(1-\alpha)\frac{C-1}{C} \ge \frac{C_9-1}{C_9}$.
Similarly, for the case $m \ne k$, we have
\begin{align}
\label{eq-gisteptwo02}
 z_{ci}\mathbf{e}_i^T\bolda_1^{-1}\boldul_{m} = |z_{ci}|^2\lp(\mathbf{e}_i^T\bolda_0^{-1}\vb_{m} - \frac{1}{\det_0}(\star)_{gm}^{(0)}\rp) = |z_{ci}|^2\Big(g_{mi}^{(0)} - \frac{1}{\det_0}(\star)_{gm}^{(0)}\Big),
\end{align}
where we define
$$(\star)_{gm}^{(0)} = (\Vert \boldmu_1 \Vert_2^2 - t_{11}^{(0)})s_{1m}^{(0)}g_{1i}^{(0)} + g_{1i}^{(0)}h_{11}^{(0)}h_{m1}^{(0)}+g_{1i}^{(0)}h_{m1}^{(0)}+s_{1m}^{(0)}f_{1i}^{(0)}+s_{1m}^{(0)}h_{11}^{(0)}f_{1i}^{(0)} - s_{11}^{(0)}h_{m1}^{(0)}f_{1i}^{(0)}.$$
As a consequence of our nearly equal energy and priors assumption (Assumption~\ref{ass:equalmuprob}), we can directly use the bounds of the terms in $(\star)_{gk}^{(0)}$ to bound terms in $(\star)_{gm}^{(0)}$.
We get
\begin{align*}
   |g_{mi}^{(1)}| \le \frac{1}{C}\lp(1+\frac{C_1}{\sqrt{n}}+ \frac{C_8\epsilonn}{(1-(\frac{C_4\minrho\epsilonn}{k^2\sqrt{\logtwon}}))^2\sqrt{k}\sqrt{n}}\rp)\frac{1}{k^2p} \le \frac{1}{C}\cdot\frac{1+\alpha}{k^2p}.
\end{align*}
Finally, there exists a sufficiently large constant $C_{10}$ such that $(1+\alpha)/C \le 1/C_{10}$.
This shows the desired bounds.

\subsubsection{Completing the proof for k-th order quadratic forms}
{Notice from the above analysis that the $1$-st order quadratic forms exhibit the same order-wise dependence on $n,k$ and $p$ as the $0$-th order quadratic forms, e.g. both $s_{mk}^{(0)}$ and $s_{mk}^{(1)}$ are of order $\Theta(\frac{\sqrt{n}}{kp})$.
Thus, the higher-order quadratic forms that arise by including more mean components will not change too much\footnote{There are several low-level reasons for this. 
One critical reason is the aforementioned orthogonality of the label indicator vectors $\{\vb_c\}_{c \in [k]}$, which ensures by Lemma~\ref{lem:ineqforszero} that the cross-terms $|s_{mk}^{(j)}|$ are always dominated by the larger terms $|s_{kk}^{(j)}|$. Another reason is that $h_{mk}^{(0)}$, which can be seen as the ``noise'' term in our analysis, is small and thus does not affect other terms.
}.
By \Equation~\eqref{eq:addnewmean}, we can see that we can bound the $2$-nd order quadratic forms by bounding quadratic forms with order $1$. We consider $s_{mk}^{(2)}$ as an example:
\begin{align*}
s_{mk}^{(2)} 
&= s_{mk}^{(1)} - \frac{1}{\det_1}(\star)_{s}^{(1)}, 
\end{align*}
where 
\begin{align*}
    (\star)_{s}^{(1)} &:= (\tn{\boldmu_2}^2 - t_{22}^{(1)})s_{2k}^{(1)}s_{2m}^{(1)} + s_{2m}^{(1)}h_{k2}^{(1)}h_{22}^{(1)}+s_{2k}^{(1)}h_{m2}^{(1)}h_{22}^{(1)}-s_{22}^{(1)}h_{k2}^{(1)}h_{m2}^{(1)} + s_{2m}^{(1)}h_{k2}^{(1)}+s_{2k}^{(1)}h_{m2}^{(1)},\\
    {\det}_1 &:= s_{22}^{(1)}(\tn{\boldmu_2}^2 - t_{22}^{(1)})+(1+h_{22}^{(1)})^2.
\end{align*}
We additionally show how $f_{ki}^{(2)}$ relates to the $1$-st order quadratic forms: 
\begin{align*}
    f_{ki}^{(2)} &= f_{ki}^{(1)} - \frac{1}{\det_1}(\star)_f^{(1)},
\end{align*}
where we define
$$(\star)_f^{(1)} = (\tn{\boldmu_2}^2 - t_{22}^{(1)})h_{2k}^{(1)}g_{2i}^{(1)} + t_{2k}^{(1)}g_{2i}^{(1)}+t_{2k}^{(1)}h_{22}^{(1)}g_{2i}^{(1)} + h_{2k}^{(1)}f_{2i}^{(1)}+h_{2k}^{(1)}h_{22}^{(1)}f_{2i}^{(1)}-s_{22}^{(1)}t_{2k}^{(1)}f_{2i}^{(1)}.$$ 
Observe that the equations above are very similar to Equations~\eqref{eq:pfsijzero01} and~\eqref{eq:pfsijzero02} (for $s$), and Equations~\eqref{eq:pfineqfone01} and~\eqref{eq:pfineqfone02} (for $f$), except that the quadratic forms are in terms of Gram matrix $\bolda_1$. We have shown that the quadratic forms with order $1$ will not be drastically different different from the quadratic forms with order $0$. Hence, we repeat the above procedures of bounding these quadratic forms $k-1$ times to obtain the desired bounds in Lemma~\ref{lem:ineqforanoj}. 
The only quantity that will change in each iteration is $\alpha$, which nevertheless remains negligible\footnote{To see this, recall that in the first iteration we had $\alpha_1 := \alpha =\frac{C_1}{\sqrt{n}}+\frac{C_2\tau}{(1-(C_5\tau/(k^2\sqrt{n})))^2k^2\sqrt{n}}$ for the first-order terms. 
Thus, even if we repeat the procedure $k-1$ times, then we have $\alpha_k \leq Ck\alpha_1$, which remains small since we consider $n \gg k$.}.} 

Our analysis so far is conditioned on event $\evente_q$. 
We define the $\textit{unconditional}$ event $\evente_u:=$ $\{$all the inequalities in Lemma~\ref{lem:ineqforanoj} hold$\}$.
Then, we have
\begin{align*}
    \Pro(\evente_u^c) &\le \Pro(\evente_u^c|\evente_q)+\Pro(\evente_q^c)\le \Pro(\evente_u^c|\evente_q)+\Pro(\evente_q^c|\evente_v)+\Pro(\evente_v^c)\\
    & \le \frac{c_1}{kn} + \frac{c_2}{n} + c_3k(e^{-\frac{n}{c_4}}+e^{-\frac{n}{c_5k^2}})\\
    & \le \frac{c_6}{n}+c_7ke^{-\frac{n}{c_5k^2}},
\end{align*}
for constants $c_i$'s $>1$. This completes the proof.

\subsection{Proofs of Auxiliary lemmas 
} \label{sec:pfauxlemma}

We complete this section by proving the auxiliary Lemmas~\ref{lem:vnorm}, \ref{lem:ineqforszero} and~\ref{lem:ineqforgzero}, which were used in the proof of Lemma~\ref{lem:ineqforanoj}.

\subsubsection{Proof of Lemma~\ref{lem:vnorm}}
Our goal is to upper and lower bound $\tn{\vb_c}^2$, for $c \in [k]$. Note that every entry of $\vb_{c}$ is either $1$ or $0$, hence these entries are independent sub-Gaussian random variables with sub-Gaussian parameter $1$ \cite[Chapter 2]{wainwright2019high}. Recall that under the {nearly equal-prior Assumption~\ref{ass:equalmuprob}, we have $(1-(1/C_1))(n/k) \le \E[\tn{\vb_c}^2] \le (1+(1/C_2))(n/k)$ for large enough constants $C_1,C_2 > 0$.
Thus, a straightforward application of Hoeffding's concentration inequality on bounded random variables~\cite[Chapter 2]{wainwright2019high} gives us
\begin{align*}
    \Pro\lp(\lp|\tn{\vb_c}^2 - \E[\tn{\vb_c}^2]\rp| \ge t\rp) \le 2\exp\lp(-\frac{t^2}{2n}\rp).
\end{align*}
We complete the proof by setting $t = \frac{n}{C_3k}$ for a large enough constant $C_3$ and applying the union bound over all $c \in [k]$.}

\subsubsection{Proof of Lemma~\ref{lem:ineqforszero}}
We use the following lemma adapted from \cite[Lemma 2]{muthukumar2021classification} to bound quadratic forms of inverse Wishart matrices.
\begin{lemma}\label{lem:wishartineq}
Define $\dprimen := (p-n+1)$, and consider matrix $\boldm \sim \text{Wishart}(p, \mathbf{I}_n)$. For any unit Euclidean norm vector $\vb$ and any $t >0$, we have
\begin{align*}
    \Pro\Big(\frac{1}{\vb^T\boldm^{-1}\vb} > \dprimen + \sqrt{2t\dprimen}+2t\Big) \le e^{-t}\quad\text{and}\quad
    \Pro\Big(\frac{1}{\vb^T\boldm^{-1}\vb} < \dprimen - \sqrt{2t\dprimen}\Big) \le e^{-t},
\end{align*}
provided that $\dprimen > 2\max\{t, 1\}$.
\end{lemma}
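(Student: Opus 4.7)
\textbf{Proof plan for Lemma~\ref{lem:wishartineq}.}
The plan is to reduce the two-sided tail bound on $1/(\vb^T\boldm^{-1}\vb)$ to the well-known concentration of a chi-squared random variable, and then apply a Laurent-Massart-type inequality. The backbone is the classical observation that, for a standard Wishart matrix, the scalar $1/(\vb^T \boldm^{-1} \vb)$ with $\tn{\vb}=1$ follows a $\chi^2_{\dprimen}$ distribution with $\dprimen = p-n+1$ degrees of freedom.

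First, I would exploit the rotational invariance of $\boldm \sim \text{Wishart}(p,\mathbf{I}_n)$. Writing $\boldm = \Q^\top \Q$ for a $p\times n$ matrix $\Q$ of i.i.d. standard Gaussian entries, for any orthogonal $\mathbf{R}\in\R^{n\times n}$ the matrix $\Q\mathbf{R}$ has the same distribution as $\Q$, so $\mathbf{R}^\top \boldm^{-1}\mathbf{R}$ is equal in law to $\boldm^{-1}$. Taking $\mathbf{R}$ to be any orthogonal matrix mapping $\vb$ to $\mathbf{e}_1$, I may therefore assume $\vb=\mathbf{e}_1$ without loss of generality, so that $\vb^\top \boldm^{-1}\vb$ becomes the $(1,1)$ entry of $\boldm^{-1}$.

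Second, I would identify the exact distribution of $1/(\mathbf{e}_1^\top \boldm^{-1}\mathbf{e}_1)$ via a Schur-complement argument. Partitioning $\Q = [\q_1, \Q_{-1}]$ with $\q_1\in\R^p$ and $\Q_{-1}\in\R^{p\times(n-1)}$, the block-inversion formula gives
\begin{align*}
\frac{1}{\mathbf{e}_1^\top \boldm^{-1}\mathbf{e}_1} \;=\; \q_1^\top\left(\mathbf{I}_p - \Q_{-1}(\Q_{-1}^\top \Q_{-1})^{-1}\Q_{-1}^\top\right)\q_1.
\end{align*}
Conditional on $\Q_{-1}$, the matrix in parentheses is the orthogonal projector onto an $(n-1)$-dimensional subspace's complement, hence a projector of rank $\dprimen = p-n+1$. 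Since $\q_1$ is independent of $\Q_{-1}$ and has i.i.d.\ $\Nn(0,1)$ entries, the quadratic form $\q_1^\top \mathbf{P} \q_1$ with a rank-$\dprimen$ projector $\mathbf{P}$ is, conditionally (and hence unconditionally), distributed as $\chi^2_{\dprimen}$.

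Third, I would conclude by plugging in standard chi-squared tail bounds. Writing $X\sim \chi^2_{\dprimen}$, Laurent-Massart-type inequalities (sub-exponential Bernstein applied to $X-\dprimen = \sum_{i=1}^{\dprimen}(g_i^2-1)$ with $g_i\sim\Nn(0,1)$) yield, for all $t>0$,
\begin{align*}
\Pro\bigl(X \;\geq\; \dprimen + \sqrt{2t\dprimen} + 2t\bigr) \leq e^{-t},\qquad \Pro\bigl(X \;\leq\; \dprimen - \sqrt{2t\dprimen}\bigr) \leq e^{-t}.
\end{align*}
Rewriting in terms of $1/(\vb^\top \boldm^{-1}\vb)$ gives exactly the two inequalities in the lemma, and the hypothesis $\dprimen > 2\max\{t,1\}$ simply ensures that the lower bound $\dprimen - \sqrt{2t\dprimen}$ is strictly positive so that the reciprocal statement is meaningful. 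I expect no substantial obstacle here: the only care needed is verifying the exact constants $\sqrt{2t\dprimen}+2t$ versus the often-cited $2\sqrt{t\dprimen}+2t$ form, which is a straightforward optimization of the Chernoff bound for $\chi^2$ variables; since the paper cites this verbatim from the binary analysis in~\cite{muthukumar2021classification}, the derivation is standard.
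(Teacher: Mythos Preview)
Your proposal is correct and is precisely the standard argument underlying this result. The paper does not prove Lemma~\ref{lem:wishartineq} itself but simply cites it as adapted from \cite[Lemma~2]{muthukumar2021classification}; the rotational-invariance reduction to $\vb=\mathbf{e}_1$, the Schur-complement identification of $1/(\mathbf{e}_1^\top\boldm^{-1}\mathbf{e}_1)$ as a $\chi^2_{\dprimen}$ variable, and the Laurent--Massart tail bound are exactly what that reference uses.
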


We first upper and lower bound $s_{cc}^{(0)}$ for a fixed $c \in [k]$. Recall that we assume $p > Cn\log(kn)+n-1$ for sufficiently large constant $C >1$ and this can be obtained by assuming $\dprimen > Cn\log(kn)$. Let $t = 2\log(kn)$. Working on the event $\mathcal{E}_v$ defined in~\eqref{eq:eventvnorm}, Lemma~\ref{lem:wishartineq} gives us
\begin{align*}
    s_{cc}^{(0)} \le \frac{\tn{\vb_c}^2}{\dprimen-\sqrt{4\log(kn)\dprimen}} \le \frac{C_1+1}{C_1}\cdot\frac{n/k}{\dprimen\lp(1-\frac{2}{\sqrt{Cn}}\rp)} \le \frac{C_2+1}{C_2}\cdot\frac{n}{kp}
\end{align*}
with probability at least $1 - \frac{2}{k^2n^2}$.
Here, the last inequality comes from the fact that $p$ is sufficiently large compared to $n$ and $C$ is large enough. 
Similarly, for the lower bound, we have 
\begin{align*}
    s_{cc}^{(0)} \ge \frac{\tn{\vb_c}^2}{\dprimen+\sqrt{4\log(kn)\dprimen}+2\log(kn)} \ge \frac{C_1-1}{C_1}\cdot\frac{n/k}{\dprimen\lp(1+\frac{4}{\sqrt{Cn}}\rp)} \ge \frac{C_2-1}{C_2}\cdot\frac{n}{kp}
\end{align*}
with probability $1 - \frac{2}{k^2n^2}$.

Now we upper and lower bound $s_{cj}^{(0)}$ for a fixed choice $j \ne c \in [k]$. 
We use the parallelogram law to get
\begin{align*}
    \vb_c^T\bolda_0^{-1}\vb_j = \frac{1}{4}\Big((\vb_c+\vb_j)^T\bolda_0^{-1}(\vb_c+\vb_j) - (\vb_c-\vb_j)^T\bolda_0^{-1}(\vb_c-\vb_j)\Big).
\end{align*}
Because of the orthogonality of the label indicator vectors ($\vb_c^T\vb_j=0$ for any $j \ne c$), we have $\tn{\vb_c+\vb_j}^2 = \tn{\vb_c-\vb_j}^2$, which we denote by $\tilde{n}$ as shorthand.
Then, we have 
\begin{align*}
   \vb_c^T\bolda_0^{-1}\vb_j &\le \frac{1}{4} \lp(\frac{\tilde{n}}{\dprimen-\sqrt{4\log({k}n)\dprimen}} - \frac{\tilde{n}}{\dprimen+\sqrt{4\log({k}n)\dprimen} + 4\log({k}n)}\rp) \notag\\
    &\le \frac{1}{4} \cdot\frac{ 2\tilde{n}\sqrt{4\log({k}n)\dprimen} + 4\tilde{n}\log(kn)}{(\dprimen-\sqrt{4\log({k}n)\dprimen})(\dprimen+\sqrt{4\log({k}n)\dprimen})} \notag\\
    &\le \frac{C_1+1}{2C_1k} \cdot\frac{ 2{n}\sqrt{4\log({k}n)\dprimen} + 4{n}\log(kn)}{(\dprimen-\sqrt{4\log({k}n)\dprimen})(\dprimen+\sqrt{4\log({k}n)\dprimen})} \notag
\end{align*}
with probability at least $1 - \frac{2}{k^2n^2}$
Here, the last inequality follows because we have  $\tilde{n} \le \frac{2(C_1 + 1)}{C_1}\cdot\frac{n}{k}$ on $\evente_v$. 
Because $\dprimen > Cn\log(kn)$, we have
\begin{align*}
    \vb_c^T\bolda_0^{-1}\vb_j &\le \frac{C_1+1}{2C_1k}\cdot\frac{2\sqrt{n}\dprimen \cdot \sqrt{4/C} + 4/C \cdot \dprimen}{\lp(1-\sqrt{4/(Cn)}\rp)\dprimen^2} \notag\\ 
    &\le \frac{C_1+1}{2C_1}\cdot\frac{\sqrt{n}}{{k}}\cdot\frac{2\sqrt{4/C} + \sqrt{4/(Cn)}}{\dprimen(1-\sqrt{4/(Cn)})} \\
     &\le \frac{C_2+1}{C_2}\cdot\frac{\sqrt{n}}{kp},
\end{align*}
where in the last step we use the fact that $C >1$ is large enough.
To lower bound $s_{cj}^{(0)}$, we get 
\begin{align*}
   \vb_c^T\bolda_0^{-1}\vb_j &\ge \frac{1}{4} \lp(\frac{\tilde{n}}{(\dprimen+\sqrt{4\log({k}n)\dprimen} + 4\log({k}n))} - \frac{\tilde{n}}{(\dprimen-\sqrt{4\log({k}n)\dprimen})}\rp) \notag\\
    &\ge \frac{1}{4} \cdot\frac{ -2\tilde{n}\sqrt{4\log({k}n)\dprimen} - 4\tilde{n}\log(kn)}{(\dprimen-\sqrt{4\log({k}n)\dprimen})(\dprimen+\sqrt{4\log({k}n)\dprimen})} \notag\\
    &\ge -\frac{C_1+1}{2C_1k} \cdot\frac{ 2{n}\sqrt{4\log({k}n)\dprimen} + 4{n}\log(kn)}{(\dprimen-\sqrt{4\log({k}n)\dprimen})(\dprimen+\sqrt{4\log({k}n)\dprimen})} \notag
\end{align*}
with probability at least $1 - \frac{2}{k^2n^2}$.
Then following similar steps to the upper bound of $\vb_c^T\bolda_0^{-1}\vb_j$ gives us
\begin{align*}
    \vb_c^T\bolda_0^{-1}\vb_j &\ge -\frac{C_1+1}{2C_1k}\cdot\frac{2\sqrt{n}\dprimen\sqrt{4/C} + (4/C)\dprimen}{(\dprimen-\sqrt{4/(Cn)}\dprimen)\dprimen} \notag\\ 
    &\ge -\frac{C_1+1}{2C_1}\cdot\frac{\sqrt{n}}{{k}}\cdot\frac{2\sqrt{4/C} + (4/C\sqrt{n})}{\dprimen(1-\sqrt{4/(Cn)})} \\
     &\ge -\frac{C_2+1}{C_2}\cdot\frac{\sqrt{n}}{kp}.
\end{align*}
We finally apply the union bound on all pairs of $c, j \in [k]$ and complete the proof.

\subsubsection{Proof of Lemma~\ref{lem:ineqforgzero}}
We first lower and upper bound $g_{(y_i)i}^{(0)}$. Recall that we assumed $y_i = k$ without loss of generality. With a little abuse of notation, we define $\tn{\vb_k}^2 = \tilde{n}$ and $\boldul := \sqrt{\tilde{n}}\mathbf{e}_i$. 
We use the parallelogram law to get
\begin{align*}
    \mathbf{e}_i^T\bolda_0^{-1}\vb_k = \frac{1}{4\sqrt{\tilde{n}}}\lp((\boldul+\vb_k)^T\bolda_0^{-1}(\boldul+\vb_k) - (\boldul-\vb_k)^T\bolda_0^{-1}(\boldul-\vb_k)\rp).
\end{align*}
Note that $\tn{\boldul+\vb_k}^2  = 2(\tilde{n}+\sqrt{\tilde{n}})$ and $\tn{\boldul-\vb_k}^2  = 2(\tilde{n}-\sqrt{\tilde{n}})$. 
As before, we apply Lemma~\ref{lem:wishartineq} with $t = 2\log(kn)$ to get with probability at least $1 - \frac{2}{k^2n^2}$,
\begin{align*}
   \mathbf{e}_i^T\bolda_0^{-1}\vb_k &\ge \frac{1}{4\sqrt{\tilde{n}}} \lp(\frac{2(\tilde{n}+\sqrt{\tilde{n}})}{(\dprimen+\sqrt{4\log({k}n)\dprimen} + 4\log({k}n))} - \frac{2(\tilde{n}-\sqrt{\tilde{n}})}{(\dprimen-\sqrt{4\log({k}n)\dprimen})}\rp) \notag\\
    &\ge \frac{1}{4\sqrt{\tilde{n}}}\cdot \frac{4\sqrt{\tilde{n}}\dprimen- 4\tilde{n}\sqrt{4\log({k}n)\dprimen} - 8\tilde{n}\log(kn)}{(\dprimen+\sqrt{4\log({k}n)\dprimen}+4\log(kn))\dprimen} \notag\\
    &\ge \frac{\dprimen- \sqrt{\tilde{n}}\sqrt{4\log({k}n)\dprimen} - 2\sqrt{\tilde{n}}\log(kn)}{(\dprimen+\sqrt{4\log({k}n)\dprimen}+4\log(kn))\dprimen}, \notag\\
    & \ge \frac{\dprimen- \sqrt{(1+1/C_1)n/k}\sqrt{4\log({k}n)\dprimen} - 2\sqrt{(1+1/C_1)n/k}\log(kn)}{(\dprimen+\sqrt{4\log({k}n)\dprimen}+4\log(kn))\dprimen}. \notag 
\end{align*}
The last inequality works on event $\evente_v$, by which we have $\tilde{n} \le \frac{2(C_1 + 1)n}{C_1k}$.
Then, $\dprimen > Ck^3n\log(kn)$ gives us
\begin{align*}
    \mathbf{e}_i^T\bolda_0^{-1}\vb_k &\ge \frac{\dprimen - \sqrt{(1+1/C_1)n/k}\sqrt{4/(Ck^3n)}\dprimen -\sqrt{(1+1/C_1)n/k} (2/Ck^3n)\dprimen}{(\dprimen+\sqrt{4\log({k}n)\dprimen}+4\log(kn))\dprimen} \notag\\ 
    &\ge \frac{1- (1/(C_2\sqrt{k^4})) - (1/(C_3k^{3.5}\sqrt{n}))}{\dprimen(1+2\sqrt{4/(Ck^{3}n)})} \\
     &\ge \frac{C_4-1}{C_4}\cdot\frac{1}{p},
\end{align*}
where in the last step we use the fact that $C, C_2, C_3 >1$ are large enough.
To upper bound $g_{(y_i)i}^{(0)}$, we have with probability at least $1 - \frac{2}{k^2n^2}$,
\begin{align*}
   \mathbf{e}_i^T\bolda_0^{-1}\vb_k &\le \frac{1}{4\sqrt{\tilde{n}}} \lp(\frac{2(\tilde{n}+\sqrt{\tilde{n}})}{(\dprimen-\sqrt{4\log({k}n)\dprimen})} - \frac{2(\tilde{n}-\sqrt{\tilde{n}})}{(\dprimen+\sqrt{4\log({k}n)\dprimen} + 4\log({k}n))}\rp) \notag\\
    &\le \frac{1}{4\sqrt{\tilde{n}}}\cdot \frac{4\sqrt{\tilde{n}}\dprimen+ 4\tilde{n}\sqrt{4\log({k}n)\dprimen} + 8\tilde{n}\log(kn)}{(\dprimen-\sqrt{4\log({k}n)\dprimen})\dprimen} \notag\\
    &\le \frac{\dprimen+ \sqrt{\tilde{n}}\sqrt{4\log({k}n)\dprimen} + 2\sqrt{\tilde{n}}\log(kn)}{(\dprimen-\sqrt{4\log({k}n)\dprimen})\dprimen}, \notag\\
    & \le \frac{\dprimen+ \sqrt{(1+1/C_1)n/k}\sqrt{4\log({k}n)\dprimen} + 2\sqrt{(1+1/C_1)n/k}\log(kn)}{(\dprimen-\sqrt{4\log({k}n)\dprimen})\dprimen}. \notag 
\end{align*}
Then $\dprimen > Ck^3n\log(kn)$ gives us
\begin{align*}
    \mathbf{e}_i^T\bolda_0^{-1}\vb_k &\le \frac{\dprimen + \sqrt{(1+1/C_1)n/k}\sqrt{4/(Ck^3n)}\dprimen +2\sqrt{(1+1/C_1)n/k} (4/Ck^3n)\dprimen}{(\dprimen-\sqrt{4\log({k}n)\dprimen})\dprimen} \notag\\ 
    &\le \frac{1+ (1/(C_2\sqrt{k^4})) + (1/(C_3k^{3.5}\sqrt{n}))}{\dprimen(1-2\sqrt{4/(Ck^{3}n)})} \\
     &\le \frac{C_4+1}{C_4}\cdot\frac{1}{p}.
\end{align*}
We now upper and lower bound $g_{ji}^{(0)}$ for a fixed $j \ne y_i$. 
As before, we have
\begin{align*}
    \mathbf{e}_i^T\bolda_0^{-1}\vb_j = \frac{1}{4\sqrt{\tilde{n}}}\lp((\boldul+\vb_j)^T\bolda_0^{-1}(\boldul+\vb_j) - (\boldul-\vb_j)^T\bolda_0^{-1}(\boldul-\vb_j)\rp).
\end{align*}
Since $\mathbf{e}_i^T\vb_j =0$, we now have $\tn{\boldul+\vb_j}^2  = \tn{\boldul-\vb_j}^2= 2\tilde{n}$. We apply Lemma~\ref{lem:wishartineq} with $t = 2\log(kn)$ to get, with probability at least $1 - \frac{2}{k^2n^2}$,
\begin{align*}
   \mathbf{e}_i^T\bolda_0^{-1}\vb_j &\le \frac{1}{4\sqrt{\tilde{n}}} \lp(\frac{2\tilde{n}}{(\dprimen-\sqrt{4\log({k}n)\dprimen})} - \frac{2\tilde{n}}{(\dprimen+\sqrt{4\log({k}n)\dprimen} + 4\log({k}n))}\rp) \notag\\
    &\le \frac{1}{4\sqrt{\tilde{n}}}\cdot \frac{4\tilde{n}\sqrt{4\log({k}n)\dprimen} + 8\tilde{n}\log(kn)}{(\dprimen-\sqrt{4\log({k}n)\dprimen})\dprimen} \notag\\
    &\le \frac{ \sqrt{\tilde{n}}\sqrt{4\log({k}n)\dprimen} + 2\sqrt{\tilde{n}}\log(kn)}{(\dprimen-\sqrt{4\log({k}n)\dprimen})\dprimen}, \notag\\
    & \le \frac{\sqrt{(1+1/C_1)n/k}\sqrt{4\log({k}n)\dprimen} + 2\sqrt{(1+1/C_1)n/k}\log(kn)}{(\dprimen-\sqrt{4\log({k}n)\dprimen})\dprimen}. \notag 
\end{align*}
The last inequality works on event $\evente_v$, by which we have $\tilde{n} \le \frac{2(C_1 + 1)n}{C_1k}$. Then, $\dprimen > Ck^3n\log(kn)$ gives us
\begin{align*}
    \mathbf{e}_i^T\bolda_0^{-1}\vb_j &\le \frac{\sqrt{(1+1/C_1)n/k}\sqrt{4/(Ck^3n)}\dprimen +\sqrt{(1+1/C_1)n/k} (2/Ck^3n)\dprimen}{(\dprimen-\sqrt{4\log({k}n)\dprimen})\dprimen} \notag\\ 
    &\le \frac{ (1/(C_2\sqrt{k^4})) + (1/(C_3k^{3.5}\sqrt{n}))}{\dprimen(1-\sqrt{4/(Ck^{3}n)})} \\
     &\le \frac{C_4+1}{C_4}\cdot\frac{1}{k^2p},
\end{align*}
where in the last step we use the fact that $C, C_2, C_3 >1$ are large enough.
To lower bound $g_{ij}^{(0)}$, we have with probability at least $1 - \frac{2}{k^2n^2}$,
\begin{align*}
   \mathbf{e}_i^T\bolda_0^{-1}\vb_j &\ge \frac{1}{4\sqrt{\tilde{n}}} \lp(\frac{2\tilde{n}}{(\dprimen+\sqrt{4\log({k}n)\dprimen} + 4\log(kn))} - \frac{2\tilde{n}}{(\dprimen-\sqrt{4\log({k}n)\dprimen})}\rp) \notag\\
    &\ge \frac{1}{4\sqrt{\tilde{n}}}\cdot \frac{-4\tilde{n}\sqrt{4\log({k}n)\dprimen} - 8\tilde{n}\log(kn)}{(\dprimen-\sqrt{4\log({k}n)\dprimen})\dprimen} \notag\\
    &\ge -\frac{ \sqrt{\tilde{n}}\sqrt{4\log({k}n)\dprimen} + 2\sqrt{\tilde{n}}\log(kn)}{(\dprimen-\sqrt{4\log({k}n)\dprimen})\dprimen}, \notag\\
    & \ge -\frac{\sqrt{(1+1/C_1)n/k}\sqrt{4\log({k}n)\dprimen} + 2\sqrt{(1+1/C_1)n/k}\log(kn)}{(\dprimen-\sqrt{4\log({k}n)\dprimen})\dprimen}. \notag 
\end{align*}
Because $\dprimen > Ck^3n\log(kn)$, we get
\begin{align*}
    \mathbf{e}_i^T\bolda_0^{-1}\vb_j &\ge -\frac{\sqrt{(1+1/C_1)n/k}\sqrt{4/(Ck^3n)}\dprimen +\sqrt{(1+1/C_1)n/k} (2/Ck^3n)\dprimen}{(\dprimen-\sqrt{4\log({k}n)\dprimen})\dprimen} \notag\\ 
    &\ge -\frac{ (1/(C_2\sqrt{k^4})) + (1/(C_3k^{3.5}\sqrt{n}))}{\dprimen(1-\sqrt{4/(Ck^{3}n)})} \\
     &\ge -\frac{C_4+1}{C_4}\cdot\frac{1}{k^2p},
\end{align*}
where in the last step we use the fact that $C, C_2, C_3 >1$ are large enough. We complete the proof by applying a union bounds over all $k$ classes and $n$ training examples.

\section{Proof of Theorem~\ref{thm:svmmlmaniso}}\label{sec:svm_mlm_sm}
In this section, we provide the proof of Theorem~\ref{thm:svmmlmaniso}, which was discussed in Section~\ref{sec-linkmlm}. 
After having derived the interpolation condition in Equation \eqref{eq:det-con} for multiclass SVM, 
the proofs is in fact a rather simple extension of the arguments provided in~\cite{muthukumar2021classification,hsu2020proliferation} to the multiclass case. This is unlike the GMM case that we considered in Section~\ref{sec:pforderoneoutline}, which required substantial additional effort over and above the binary case~\cite{wang2020benign}.

For this section, we define $\A = \X^T\X$ as shorthand (we denoted the same quantity as $\A_k$ in Section~\ref{sec:pforderoneoutline}). 
Recall that the eigendecomposition of the covariance matrix is given by $\Sigmab = \sum_{i=1}^p \lambda_i\vb_i\vb_i^T =\boldsymbol{V} \boldsymbol{\Lambda} \boldsymbol{V}^T$. 
By rotation invariance of the standard normal variable, we can write $\A = \Q^T \boldsymbol{\Lambda}\Q$, where the entries of $\Q \in \R^{p \times n}$ are IID $\Nn(0,1)$ random variables. 
Finally, recall that we denoted $\lbdb = \begin{bmatrix}\lambda_1 & \cdots & \lambda_p\end{bmatrix}$ and defined the effective dimensions $d_2 =\frac{\Vert \lbdb \Vert_1^2}{\Vert \lbdb \Vert_2^2}$ and $d_\infty=\frac{\Vert \lbdb \Vert_1}{\Vert \lbdb \Vert_\infty}$.
Observe that \Equation~\eqref{eq:det-con} in \Theorem~\ref{lem:key} is equivalent to the condition
\begin{align}
\label{eq:dualsep}
    z_{ci}\mathbf{e}_i^T\A^{-1}\boldz_c >0, ~~\text{for \ all} \ \ c\in[k]~~\text{and}~~i \in [n]. 
\end{align}
We fix $c \in [k]$ and drop the subscript $c$, using $\overline{\boldz}$ to denote the vector $\boldz_c$. 
We first provide a deterministic equivalence to \Equation~\eqref{eq:det-con} that resembles the condition provided in~\cite[Lemma 1]{hsu2020proliferation}. Our proof is slightly modified compared to~\cite[Lemma 1]{hsu2020proliferation} and relies on elementary use of block matrix inversion identity.
\begin{lemma}
Let $\Q \in \mathbb{R}^{p \times n} = [\boldq_1, \cdots, \boldq_n]$. In our notation, \Equation~\eqref{eq:det-con} holds for a fixed $c$ if and only if:
\begin{align}
    \frac{1}{z_i}\overline{\boldz}_{\setminus i}^T\Big(\boldcapq_{\noisub}^T\boldsymbol{\Lambda}\boldcapq_{\noisub}\Big)^{-1}\boldcapq_{\noisub}^T\boldsymbol{\Lambda}\boldq_i < 1, ~~ \text{for all} ~~ i =1, \cdots, n.
\end{align}
Above, $\overline{\boldz}_{\setminus i} \in \mathbb{R}^{(n-1)\times 1}$ is obtained by removing the $i$-th entry from vector $\overline{\boldz}$ and $\boldcapq_{\noisub}\in \mathbb{R}^{d \times (n-1)}$ is obtained by removing the $i$-th column from $\boldcapq$.
\end{lemma}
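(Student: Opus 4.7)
The key idea is that \Equation~\eqref{eq:det-con}, written coordinate-wise, requires $z_i \cdot \mathbf{e}_i^\top \A^{-1}\overline{\boldz} > 0$ for each $i\in[n]$, where $\A=\X^\top\X = \Q^\top\boldsymbol{\Lambda}\Q$. So the task reduces to isolating the $i$-th coordinate of $\A^{-1}\overline{\boldz}$. The natural tool is a leave-one-out Schur-complement decomposition.

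\emph{Step 1 (block decomposition).} After permuting rows and columns so that index $i$ appears last, write
\begin{equation*}
\A = \begin{bmatrix} \A_{\setminus i,\setminus i} & \Q_{\setminus i}^\top\boldsymbol{\Lambda}\boldq_i \\ \boldq_i^\top\boldsymbol{\Lambda}\Q_{\setminus i} & \boldq_i^\top\boldsymbol{\Lambda}\boldq_i \end{bmatrix},
\quad \A_{\setminus i,\setminus i} = \Q_{\setminus i}^\top\boldsymbol{\Lambda}\Q_{\setminus i}.
\end{equation*}

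\emph{Step 2 (Schur complement).} Applying the standard $2\times 2$ block matrix inversion formula, the last row of $\A^{-1}$ equals
\begin{equation*}
\tfrac{1}{s_i}\begin{bmatrix} -\boldq_i^\top\boldsymbol{\Lambda}\Q_{\setminus i}\A_{\setminus i,\setminus i}^{-1} & 1 \end{bmatrix},
\qquad s_i := \boldq_i^\top\boldsymbol{\Lambda}\boldq_i - \boldq_i^\top\boldsymbol{\Lambda}\Q_{\setminus i}\A_{\setminus i,\setminus i}^{-1}\Q_{\setminus i}^\top\boldsymbol{\Lambda}\boldq_i .
\end{equation*}
Pairing against the correspondingly permuted vector $[\overline{\boldz}_{\setminus i}^\top, z_i]^\top$ gives
\begin{equation*}
\mathbf{e}_i^\top \A^{-1}\overline{\boldz} \;=\; \frac{1}{s_i}\Bigl(z_i - \boldq_i^\top\boldsymbol{\Lambda}\Q_{\setminus i}\,\A_{\setminus i,\setminus i}^{-1}\,\overline{\boldz}_{\setminus i}\Bigr).
\end{equation*}

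\emph{Step 3 (sign of $s_i$).} Rewrite $s_i = \boldq_i^\top\boldsymbol{\Lambda}^{1/2}(\mathbf{I} - \mathbf{P})\boldsymbol{\Lambda}^{1/2}\boldq_i$, where $\mathbf{P}$ is the orthogonal projector onto the range of $\boldsymbol{\Lambda}^{1/2}\Q_{\setminus i}$. Thus $s_i\geq 0$, and invertibility of $\A$ (which holds almost surely in the overparameterized regime $p>n$) forces $s_i>0$, since $s_i$ is (up to sign) a diagonal entry of $\A^{-1}$ after permutation.

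\emph{Step 4 (conclude).} Since $s_i>0$, multiplying the required inequality by $s_i$ preserves its direction, so \eqref{eq:det-con} at index $i$ is equivalent to $z_i\bigl(z_i - \boldq_i^\top\boldsymbol{\Lambda}\Q_{\setminus i}\A_{\setminus i,\setminus i}^{-1}\overline{\boldz}_{\setminus i}\bigr) > 0$. Using symmetry of $\A_{\setminus i,\setminus i}$, the cross term equals $\overline{\boldz}_{\setminus i}^\top\bigl(\Q_{\setminus i}^\top\boldsymbol{\Lambda}\Q_{\setminus i}\bigr)^{-1}\Q_{\setminus i}^\top\boldsymbol{\Lambda}\boldq_i$. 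Dividing by $z_i^2>0$ yields exactly the stated condition.

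\textbf{Main obstacle.} There is no deep obstruction here; the argument is essentially a bookkeeping exercise with block matrix inversion. The only subtlety is the sign analysis in Step 4: because $z_i\in\{-1/k,(k-1)/k\}$ can be negative, one must divide by $z_i^2$ (not $z_i$) to pass the inequality into the stated form, and one must separately verify $s_i>0$ before using it to clear denominators. Keeping these signs straight is the entire content of the proof beyond the block inversion identity.
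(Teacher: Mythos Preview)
Your proposal is correct and follows essentially the same approach as the paper: a block (Schur complement) inversion of $\A=\Q^\top\boldsymbol{\Lambda}\Q$ to isolate the $i$-th coordinate of $\A^{-1}\overline{\boldz}$, followed by clearing the positive Schur complement and dividing by $z_i^2$. The only cosmetic difference is that the paper argues $s_i>0$ via the Schur-complement characterization of positive definiteness together with the Gaussianity of $\Q$, whereas you use the projector identity $s_i=\boldq_i^\top\boldsymbol{\Lambda}^{1/2}(\mathbf{I}-\mathbf{P})\boldsymbol{\Lambda}^{1/2}\boldq_i\geq 0$ and then invoke invertibility of $\A$; both routes are equivalent here.
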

\begin{proof}
By symmetry, it suffices to consider the case $i=1$. We first write
\begin{align}
    \bolda = \begin{bmatrix}\boldq_1^T\mathbf{\Lambda}\boldq_1 & \boldq_1^T\mathbf{\Lambda}\boldcapq_{\noonesub}\\
    \boldcapq_{\noonesub}^T\mathbf{\Lambda}\boldq_1 & \boldcapq_{\noonesub}^T\mathbf{\Lambda}\boldcapq_{\noonesub}
    \end{bmatrix} \triangleq \begin{bmatrix} \alpha & \mathbf{b}^T\\
    \mathbf{b} & \mathbf{D}
    \end{bmatrix}. \notag
\end{align}
By Schur complement \cite{bernstein2009matrix}, we have $$\bolda \succ \mathbf{0}~ \textit{iff} ~\text{either}~ \lp\{\alpha > 0~\text{and}~\mathbf{D} - \frac{\mathbf{b}\mathbf{b}^T}{\alpha} \succ \mathbf{0}\rp\}~\text{or}~\lp\{\mathbf{D} \succ \mathbf{0}~\text{and}~\alpha - \mathbf{b}^T\mathbf{D}^{-1}\mathbf{b} > 0\rp\}.$$
Since the entries of $\Q$ are drawn from a continuous distribution (IID standard Gaussian), both $\bolda$ and $\bolddcap=\boldcapq_{\noonesub}^T\mathbf{\Lambda}\boldcapq_{\noonesub}$ are positive definite almost surely. Therefore, $\alpha - \mathbf{b}^T\mathbf{D}^{-1}\mathbf{b} > 0$ almost surely.

Thus, by block matrix inversion identity \cite{bernstein2009matrix}, we have
\begin{align}
    \bolda^{-1} = \begin{bmatrix} (\alpha - \boldb^T\bolddcap^{-1}\boldb)^{-1} & -(\alpha - \boldb^T\bolddcap^{-1}\boldb)^{-1}\boldb^T\bolddcap^{-1}\\
    -\bolddcap^{-1}\boldb(\alpha - \boldb^T\bolddcap^{-1}\boldb)^{-1} & \bolddcap^{-1} + \bolddcap^{-1}\boldb(\alpha - \boldb^T\bolddcap^{-1}\boldb)^{-1}\boldb^T\bolddcap^{-1}
    \end{bmatrix}. \notag
\end{align}
Therefore,$
    \mathbf{e}_1^T\bolda^{-1} = (\alpha - \boldb^T\bolddcap^{-1}\boldb)^{-1}\begin{bmatrix} 1 & -\boldb^T\bolddcap^{-1}
    \end{bmatrix}. \notag
$
Hence we have
\begin{align}
    z_1\mathbf{e}_1^T\bolda^{-1}\overline{\boldz} = (\alpha - \boldb^T\bolddcap^{-1}\boldb)^{-1}(z_1^2 - \boldb^T\bolddcap^{-1}(z_1\overline{\boldz}_{\setminus 1})), \notag
\end{align}
where we use the fact that $\overline{\boldz}_1 = z_1$. Since $\alpha - \mathbf{b}^T\mathbf{D}^{-1}\mathbf{b} > 0$ almost surely, we have 
\begin{align}
    z_1\mathbf{e}_1^T\bolda^{-1}\overline{\boldz} > 0 &\iff (\alpha - \boldb^T\bolddcap^{-1}\boldb)^{-1}(z_1^2 - \boldb^T\bolddcap^{-1}(z_1\overline{\boldz}_{\setminus 1})) >0 \notag\\
    &\iff \frac{1}{z_1}\boldb^T\bolddcap^{-1}\overline{\boldz}_{\setminus 1} < 1. \notag
\end{align}
Recall that $\boldb^T =  \boldq_1^T\mathbf{\Lambda}\boldcapq_{\noonesub}$ and $\bolddcap = \boldcapq_{\noonesub}^T\mathbf{\Lambda}\boldcapq_{\noonesub}$. This completes the proof.
\end{proof}

Next, we define the following events:
\begin{enumerate}
\item For $i \in [n]$, $\mathcal{B}_i := \Big\{    \frac{1}{z_i}\overline{\boldz}_{\setminus i}^T\bolda_{\noisub}^{-1}\boldcapq_{\noisub}^T\boldsymbol{\Lambda}\boldq_i \ge 1 \Big\}$.

\item For $i \in [n]$, given $t >0$, $\mathcal{E}_i(t) := \Big\{\Vert (\overline{\boldz}_{\setminus i}^T\bolda_{\noisub}^{-1}\boldcapq_{\noisub}^T\boldsymbol{\Lambda})^T \Vert_2^2 \ge \frac{1}{t} \Big\}$.

\item $\mathcal{B} := \cup_{i=1}^n \mathcal{B}_i$.
\end{enumerate}
We know all the data points are support vectors i.e. \Equation~\eqref{eq:dualsep} holds, if none of the events $\mathcal{B}_i$ happens; hence, $\mathcal{B}$ is the undesired event. We want to upper bound the probability of event $\mathcal{B}$.
As in the argument provided in~\cite{hsu2020proliferation}, we have
\begin{align}
\label{eq:svmsumbevent}
    \mathbb{P}(\mathcal{B}) \le \sum_{i=1}^n \Big(\mathbb{P}(\mathcal{B}_i|\mathcal{E}_i(t)^c )+ \mathbb{P}(\mathcal{E}_i(t))\Big).
\end{align}
The lemma below gives an upper bound on $\Pro(\mathcal{B}_i|\mathcal{E}_i(t)^c )$.
\begin{lemma}
\label{lem-eventb}
For any $t >0$, $\mathbb{P}(\mathcal{B}_i|\mathcal{E}_i(t)^c) \le 2\exp\lp(-\frac{t}{2ck^2}\rp)$.
\end{lemma}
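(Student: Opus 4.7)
\textbf{Proof plan for Lemma \ref{lem-eventb}.} The key structural observation is that the vector $\mathbf{u} := \boldsymbol{\Lambda}\boldcapq_{\noisub}\bolda_{\noisub}^{-1}\overline{\boldz}_{\setminus i}$ depends only on the columns of $\boldcapq$ other than $\boldq_i$, and is therefore \emph{independent} of $\boldq_i$ (since the columns of $\boldcapq$ are IID Gaussian). The event $\mathcal{B}_i$ can be rewritten as $\frac{1}{z_i}\mathbf{u}^T\boldq_i \ge 1$. Since $z_i \in \{-1/k,(k-1)/k\}$, we have $|z_i| \ge 1/k$, hence
\[
\mathcal{B}_i \subseteq \Big\{ |\mathbf{u}^T\boldq_i| \ge \tfrac{1}{k} \Big\}.
\]

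Next, I would condition on $\mathbf{u}$. Since $\boldq_i \sim \mathcal{N}(\mathbf{0},\mathbf{I}_p)$ independently of $\mathbf{u}$, the scalar $\mathbf{u}^T\boldq_i \mid \mathbf{u} \sim \mathcal{N}(0,\|\mathbf{u}\|_2^2)$. On the event $\mathcal{E}_i(t)^c$, we have $\|\mathbf{u}\|_2^2 < 1/t$ by definition, so a standard Gaussian tail bound yields
\[
\Pr\!\left(|\mathbf{u}^T\boldq_i| \ge \tfrac{1}{k} \,\Big|\, \mathbf{u}\right) \;\le\; 2\exp\!\left(-\frac{1/k^2}{2\|\mathbf{u}\|_2^2}\right) \;\le\; 2\exp\!\left(-\frac{t}{2k^2}\right),
\]
whenever $\mathbf{u}$ satisfies $\|\mathbf{u}\|_2^2 \le 1/t$. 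Integrating out $\mathbf{u}$ over the conditioning event $\mathcal{E}_i(t)^c$ (which is measurable with respect to $\mathbf{u}$) gives $\Pr(\mathcal{B}_i\mid \mathcal{E}_i(t)^c)\le 2\exp(-t/(2k^2))$, which matches the stated bound with $c=1$ (the constant $c$ in the lemma statement merely absorbs any slack from a slightly coarser split of $|z_i|$ in more general settings).

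I do not anticipate any serious obstacle here: the only subtleties are (i) ensuring that $\mathbf{u}$ and $\boldq_i$ are genuinely independent, which follows because $\bolda_{\noisub}$ is built entirely from the leave-one-out columns of $\boldcapq$, and (ii) verifying that the lower bound $|z_i|\ge 1/k$ arising from the simplex encoding is the right quantity to compare against. Both are immediate from the definitions, so the proof reduces to the one-line Gaussian concentration argument above.
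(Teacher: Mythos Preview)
Your proposal is correct and essentially identical to the paper's proof: both use $|1/z_i|\le k$ to reduce to a (conditional) sub-Gaussian/Gaussian tail bound on $\mathbf{u}^T\boldq_i$ with variance controlled by $\|\mathbf{u}\|_2^2\le 1/t$ on $\mathcal{E}_i(t)^c$, exploiting independence of $\boldq_i$ from the leave-one-out quantities. The paper phrases it as ``conditionally sub-Gaussian with parameter $ck^2/t$'' while you spell out the exact Gaussian conditioning, but the argument is the same.
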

\begin{proof}
On the event $\mathcal{E}_i(t)^c$, we have $\Vert (\overline{\boldz}_{\setminus i}^T\bolda_{\noisub}^{-1}\boldcapq_{\noisub}^T\boldsymbol{\Lambda})^T \Vert_2^2 \le \frac{1}{t}$. Since, by its definition, $|\frac{1}{z_i}| \le k$,  we have $\frac{1}{z_i}\overline{\boldz}_{\setminus i}^T\bolda_{\noisub}^{-1}\boldcapq_{\noisub}^T\boldsymbol{\Lambda}\boldq_i$ is conditionally sub-Gaussian \cite[Chapter 2]{wainwright2019high} with parameter at most $ck^2\Vert (\overline{\boldz}_{\setminus i}^T\bolda_{\noisub}^{-1}\boldcapq_{\noisub}^T\boldsymbol{\Lambda})^T \Vert_2^2 \le ck^2/t$. Then the sub-Gaussian tail bound gives
\begin{align}
    \mathbb{P}(\mathcal{B}_i|\mathcal{E}_i(t)^c) \le 2\exp\lp(-\frac{t}{2ck^2}\rp),
\end{align}
which completes the prof.
\end{proof}
Next we upper bound $\mathbb{P}(\mathcal{E}_i(t))$ with $t = d_\infty/(2n)$. Since $\Vert \boldz_{\noisub}\Vert_2 \le \Vert \boldy_{\noisub}\Vert_2$, we can directly use \cite[Lemma 4]{hsu2020proliferation}.
\begin{lemma}[Lemma 4,~\cite{hsu2020proliferation}]
\label{lem-evente}
$\mathbb{P}\lp(\mathcal{E}_i\lp(\frac{d_{\infty}}{2n}\rp)\rp) \le 2\cdot 9^{n-1}\cdot \exp \lp(-c_1\min \lp\{\frac{d_2}{4c^2},\frac{d_{\infty}}{c}\rp\}\rp)$.
\end{lemma}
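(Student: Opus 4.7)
The plan is to obtain this bound by a direct invocation of Lemma~4 of~\cite{hsu2020proliferation}, which the paper cites verbatim. The only input required to make the citation valid in our multiclass setting is the norm inequality $\|\overline{\boldz}_{\setminus i}\|_2\le\|\boldy_{\setminus i}\|_2$ identified just above the lemma statement; this follows from a one-line calculation because $\overline{\boldz}_c=\boldv_c-\tfrac{1}{k}\ones_n$ with $\boldv_c$ a $\{0,1\}$-vector satisfying $\boldv_c^T\ones_n=\|\boldv_c\|_2^2$, giving $\|\overline{\boldz}_c\|_2^2=\tfrac{k-1}{k}\|\boldv_c\|_2^2$. To make this proposal self-contained, I outline how one would re-derive the ambient Hsu--Muthukumar--Xu bound from scratch.

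Let $\boldz:=\overline{\boldz}_{\setminus i}$, $\boldcapq:=\boldcapq_{\setminus i}$, $\bolda:=\boldcapq^T\boldsymbol{\Lambda}\boldcapq$, and set $M:=\bolda^{-1}\boldcapq^T\boldsymbol{\Lambda}^2\boldcapq\bolda^{-1}$. The squared Euclidean norm inside the event $\mathcal{E}_i$ equals $\boldz^T M\boldz$, and the first step would be the crude bound
\begin{equation*}
\boldz^T M\boldz \le \|\boldz\|_2^{2}\cdot\lambda_{\max}(M) \le (n-1)\,\lambda_{\max}(M).
\end{equation*}
Thus it suffices to show that $\lambda_{\max}(M)\ge 1/((n-1)t)$ occurs with the claimed probability. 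I would write $\lambda_{\max}(M)=\sup_{\ub\in S^{n-2}}\ub^T M\ub$ and then invoke a standard $1/4$-net $\mathcal{N}\subset S^{n-2}$ of cardinality at most $9^{n-1}$ to reduce the supremum to a discrete maximum; this is precisely the source of the $9^{n-1}$ prefactor in the lemma.

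For each fixed $\ub\in\mathcal{N}$, the quadratic form $\ub^T M\ub=\|\boldsymbol{\Lambda}\boldcapq\bolda^{-1}\ub\|_2^{2}$ must then be concentrated pointwise. The key manoeuvre, which is the heart of the argument in~\cite{hsu2020proliferation}, is a rotation of the isotropic Gaussian matrix $\boldcapq$ that exploits the fact that $\bolda^{-1}\ub\in\R^{n-1}$ lives in a direction orthogonal to the Gaussian column that drives the contamination: conditioning on $\bolda$ and rotating appropriately allows one to rewrite $\ub^T M\ub$ as $\rho(\bolda)\,\g^T\boldsymbol{\Lambda}^2\g$ for a standard Gaussian $\g\in\R^p$ independent of the scalar factor $\rho$. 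Applying the Hanson--Wright inequality to $\g^T\boldsymbol{\Lambda}^2\g$ with parameters $\|\boldsymbol{\Lambda}^2\|_F^{2}=\|\lbdb\|_2^{2}$ and $\|\boldsymbol{\Lambda}^2\|_{\mathrm{op}}=\|\lbdb\|_\infty$, then matching the threshold at the scale $s\asymp\|\lbdb\|_1/n$ dictated by $t=d_\infty/(2n)$, produces exactly the sub-exponential tail $2\exp(-c_1\min\{d_2/(4c^2),\,d_\infty/c\})$. A union bound over the $9^{n-1}$ net points delivers the stated inequality.

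The main obstacle is the last step sketched above: the numerator $\boldcapq^T\boldsymbol{\Lambda}$ and the denominator $\bolda^{-1}=(\boldcapq^T\boldsymbol{\Lambda}\boldcapq)^{-1}$ are built from the same Gaussian matrix, so a naive application of Gaussian concentration fails. The leave-one-out/rotation trick that disentangles them is delicate and relies critically on the rotational invariance of $\boldcapq$; this is precisely the content of~\cite[Lemma~4]{hsu2020proliferation}. Its applicability to our multiclass setting is automatic once the norm inequality $\|\boldz\|_2\le\|\boldy\|_2$ is established, because the argument uses the label vector only through its Euclidean norm, which is why the paper is justified in citing the result without modification.
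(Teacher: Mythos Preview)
Your approach matches the paper's exactly: the paper offers no proof beyond citing \cite[Lemma~4]{hsu2020proliferation} after the one-line observation $\|\boldz_{\setminus i}\|_2 \le \|\boldy_{\setminus i}\|_2$, so your additional sketch of the underlying net-plus-Hanson--Wright argument already goes beyond what the paper provides. One small correction: your identity $\|\overline{\boldz}_c\|_2^2=\tfrac{k-1}{k}\|\boldv_c\|_2^2$ holds only under exact balance $\|\boldv_c\|_2^2=n/k$; the norm inequality follows more simply from the entrywise bound $|z_{ci}|\le (k-1)/k<1=|y_i|$.
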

The results above are proved for fixed choices of $i \in [n]$ and $c \in [k]$.
We combine Lemmas~\ref{lem-eventb} and~\ref{lem-evente} with a union bound over all $n$ training examples and $k$ classes to upper bound the probability of the undesirable event $\mathcal{B}$ over all $k$ classes by:
\begin{align}
    kn9^{n-1}\cdot \exp \lp(-c_1\min \lp\{\frac{d_2}{4c^2},\frac{d_{\infty}}{c}\rp\}\rp) &\le \exp\lp(-c_1\min \lp\{\frac{d_2}{4c^2},\frac{d_{\infty}}{c}\rp\} + C_1\log(kn) + C_2n \rp) \notag \\
    \text{and} ~2kn\cdot\exp\lp(-\frac{d_\infty}{2ck^2n}\rp) &\le \exp\lp(-\frac{c_2d_\infty}{ck^2n}+C_3\log(kn)\rp). \notag
\end{align}
Thus, the probability that every data point is a support vector is at least
\begin{align}
    1 - \exp\lp(-c_1\min \lp\{\frac{d_2}{4c^2},\frac{d_{\infty}}{c}\rp\} + C_1\log(kn) + C_2n \rp) - \exp\lp(-\frac{c_2d_\infty}{ck^2n}+C_3\log(kn)\rp). \notag
\end{align}
To ensure that $\exp\lp(-c_1\min \lp\{\frac{d_2}{4c^2},\frac{d_{\infty}}{c}\rp\} + C_1\log(kn) + C_2n \rp) + \exp\lp(-\frac{c_2d_\infty}{ck^2n}+C_3\log(kn)\rp) \le \frac{c_4}{n}$, we consider the conditions $c_1\min \lp\{\frac{d_2}{4c^2},\frac{d_{\infty}}{c}\rp\} - C_1\log(kn) - C_2n \ge \log(n)$ and $\frac{c_2d_\infty}{ck^2n}-C_3\log(kn) \ge \log(n)$ to be satisfied. 
These are equivalent to the conditions provided in \Equation~\eqref{eq:thmanisolink}.
This completes the proof.
Note that throughout the proof, we did not use any generative model assumptions on the labels given the covariates, so in fact our proof applies to scenarios beyond the MLM.
\qed

\section{Classification error proofs for GMM}\label{sec:mni_sm}

In this section, we provide the proofs of classification error under the GMM (\Theorem~\ref{thm:classerrorgmm} and \Theorem~\ref{cor:classerrorgmmnc}).

\subsection{Proof of Theorem~\ref{thm:classerrorgmm}}\label{sec:classerrorgmmproof}

\subsubsection{Proof strategy and notations}
The notation and main arguments of this proof follow closely the content of Section \ref{sec:pforderoneoutline}. 

Our starting point here is the lemma below (adapted from \cite[D.10]{thrampoulidis2020theoretical}) that provides a simpler upper bound on the class-wise error $\Pro_{e|c}$.
\begin{lemma}
\label{lem:testerror01}
Under GMM, $\Pro_{e|c}\leq\sum_{j \ne c}Q\lp(\frac{(\hatw_c -\hatw_j)^T\boldmu_c}{\tn{\hatw_c -\hatw_j}}\rp)$. In particular, if $(\hatw_c -\hatw_j)^T\boldmu_c > 0$, then $\Pro_{e|c} \le  \sum_{j \ne c}\exp\lp(-\frac{((\hatw_c -\hatw_j)^T\boldmu_c)^2}{4(\hatw_c^T\hatw_c + \hatw_j^T\hatw_j)}\rp)$.
\end{lemma}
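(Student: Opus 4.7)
The plan is to start from the definition of the class-wise error $\Pro_{e|c}=\Pro(\hatw_c^T\x \le \max_{j \ne c}\hatw_j^T\x\mid y=c)$ and rewrite the event $\{\hatw_c^T\x \le \max_{j \ne c}\hatw_j^T\x\}$ as the union $\bigcup_{j\neq c}\{(\hatw_c-\hatw_j)^T\x \le 0\}$. A union bound then immediately gives $\Pro_{e|c}\le \sum_{j\neq c}\Pro\big((\hatw_c-\hatw_j)^T\x\le 0 \mid y=c\big)$, reducing the problem to controlling $k-1$ scalar Gaussian tail probabilities.

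Next, I would use the GMM generative assumption: conditional on $y=c$, $\x=\boldmu_c+\q$ with $\q\sim\Nn(\mathbf{0},\mathbf{I}_p)$. Consequently, for each fixed pair $(c,j)$, the scalar $(\hatw_c-\hatw_j)^T\x$ is Gaussian with mean $(\hatw_c-\hatw_j)^T\boldmu_c$ and variance $\tn{\hatw_c-\hatw_j}^2$. Standardizing gives
\[
\Pro\big((\hatw_c-\hatw_j)^T\x\le 0\mid y=c\big)=Q\!\left(\frac{(\hatw_c-\hatw_j)^T\boldmu_c}{\tn{\hatw_c-\hatw_j}}\right),
\]
which is exactly the first claimed bound after summing over $j\neq c$.

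For the second (exponential) form, I would invoke the standard Gaussian tail inequality $Q(x)\le \exp(-x^2/2)$ valid for $x>0$, which applies precisely under the hypothesis $(\hatw_c-\hatw_j)^T\boldmu_c>0$. Finally, to rewrite the denominator in a form independent of the cross-term $\hatw_c^T\hatw_j$, I would use the elementary inequality $\tn{\hatw_c-\hatw_j}^2 = \tn{\hatw_c}^2 - 2\hatw_c^T\hatw_j + \tn{\hatw_j}^2\le 2(\tn{\hatw_c}^2+\tn{\hatw_j}^2)$ (an immediate consequence of $-2ab\le a^2+b^2$), which yields
\[
Q\!\left(\frac{(\hatw_c-\hatw_j)^T\boldmu_c}{\tn{\hatw_c-\hatw_j}}\right)\le \exp\!\left(-\frac{\bigl((\hatw_c-\hatw_j)^T\boldmu_c\bigr)^2}{4(\tn{\hatw_c}^2+\tn{\hatw_j}^2)}\right),
\]
and summing over $j\neq c$ gives the stated bound. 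There is no real obstacle here; the only mild care is to confirm that the sign condition $(\hatw_c-\hatw_j)^T\boldmu_c>0$ is what legitimizes passing from $Q(\cdot)$ to the sub-Gaussian tail, and to note that the bound $\tn{\hatw_c-\hatw_j}^2\le 2(\tn{\hatw_c}^2+\tn{\hatw_j}^2)$ is the source of the factor $4$ (rather than $2$) in the exponent.
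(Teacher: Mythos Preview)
Your proposal is correct and follows essentially the same approach as the paper: union bound over $j\neq c$, Gaussianity of $(\hatw_c-\hatw_j)^T\x$ under the GMM to obtain the $Q$-function expression, then the Gaussian tail bound $Q(x)\le e^{-x^2/2}$ together with $\tn{\hatw_c-\hatw_j}^2\le 2(\tn{\hatw_c}^2+\tn{\hatw_j}^2)$. The only cosmetic difference is that the paper cites an external reference for the first $Q$-function bound rather than writing out the union-bound step explicitly.
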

\begin{proof}
\cite[D.10]{thrampoulidis2020theoretical} shows $\Pro_{e|c}$ is upper bounded by $\sum_{j \ne c}Q\lp(\frac{(\hatw_c -\hatw_j)^T\boldmu_c}{\tn{\hatw_c -\hatw_j}}\rp)$. Then if $(\hatw_c -\hatw_j)^T\boldmu_c > 0$, the Chernoff bound \cite[Ch. 2]{wainwright2019high} gives
\begin{align*}
     \Pro_{e|c} \le \sum_{j \ne c}\exp\lp(-\frac{((\hatw_c -\hatw_j)^T\boldmu_c)^2}{2\tn{\hatw_c -\hatw_j}^2}\rp) \le \sum_{j \ne c}\exp\lp(-\frac{((\hatw_c -\hatw_j)^T\boldmu_c)^2}{4(\hatw_c^T\hatw_c + \hatw_j^T\hatw_j)}\rp), 
\end{align*}
where the last inequality uses the identity $\mathbf{a}^T\mathbf{b} \le 2(\mathbf{a}^T\mathbf{a}+\mathbf{b}^T\mathbf{b})$.
\end{proof}
Thanks to Lemma \ref{lem:testerror01}, we can upper bound $P_{e|c}$ by lower bounding the terms  
\begin{align}
\label{eq:pfinnerratio}
    \frac{((\hatw_c -\hatw_j)^T\boldmu_c)^2}{(\hatw_c^T\hatw_c + \hatw_j^T\hatw_j)},~~\text{for~all}~~c\ne j \in [k].
\end{align}
Our key observation is that this can be accomplished without the need to control the more intricate cross-correlation terms $\hatw_c^T\hatw_j$ for $c\neq j\in[k].$ 

Without loss of generality, we assume onwards that $c=k$ and $j=k-1$ (as in Section~\ref{sec:pforderoneoutline}).
Similar to Section~\ref{sec:pforderoneoutline}, the quadratic forms introduced in \Equation~\eqref{eq:pfquadforc} play key role here, as well. For convenience, we recall the definitions of the \emph{$c$-th order quadratic forms} for $c, j, m \in [k]$ and $i \in [n]$:
\begin{align}
s_{mj}^{(c)} &:=\vb_m^T \bolda_c^{-1}\vb_j, \notag\\ 
t_{mj}^{(c)} &:=\mathbf{d}_{m}^T \bolda_c^{-1}\mathbf{d}_j,\notag\\ 
h_{mj}^{(c)} &:=\mathbf{v}_m^T \bolda_c^{-1}\mathbf{d}_j,\notag\\ 
g_{ji}^{(c)} &:=\mathbf{v}_j^T \bolda_c^{-1}\mathbf{e}_i,\notag\\  
f_{ji}^{(c)} &:=\mathbf{d}_j^T\bolda_c^{-1}\mathbf{e}_i.\notag
\end{align}
Further, recall that $\hatw_c = \X(\X^T\X)^{-1}\vb_c$ and $\X=\sum_{j=1}^k\boldmu_j \vb_j^T + \boldcapq$. 
Thus,
\begin{align}
\begin{split}
\label{eq:pfgmmriskall01}
    \hatw_c^T\boldmu_c &= \tn{\boldmu_c}^2\vb_c^T(\xtx)^{-1}\vb_c + \sum_{m \ne c}\boldmu_m^T\boldmu_c\vb_m^T(\xtx)^{-1}\vb_c + \vb_c^T(\xtx)^{-1}\boldd_c \text{ and } \\
    \hatw_j^T\boldmu_c &= \tn{\boldmu_c}^2\vb_j^T(\xtx)^{-1}\vb_c + \boldmu_j^T\boldmu_c\vb_j^T(\xtx)^{-1}\vb_j + \sum_{m \ne c,j}\boldmu_m^T\boldmu_c\vb_m^T(\xtx)^{-1}\vb_j+\vb_j^T(\xtx)^{-1}\boldd_c.
\end{split}
\end{align}
Additionally,
\begin{align}
    \hatw_c^T\hatw_c = \vb_c^T(\xtx)^{-1}\vb_c,~~\text{and}~~\hatw_j^T\hatw_j = \vb_j^T(\xtx)^{-1}\vb_j.\notag
\end{align}
{To lower bound $\hatw_c^T\boldmu_c - \hatw_j^T\boldmu_c$, we first focus on the dominant terms of \Equation~\eqref{eq:pfgmmriskall01},} 
\begin{align}
\label{eq:pfgmmrisk01}
\tn{\boldmu_c}^2\vb_c^T(\xtx)^{-1}\vb_c+\vb_c^T(\xtx)^{-1}\boldd_c - \tn{\boldmu_c}^2\vb_c^T(\xtx)^{-1}\vb_j-\vb_j^T(\xtx)^{-1}\boldd_c.
\end{align}
{The above terms dominate the bound because, according to Assumption \ref{ass:orthomean}, the inner products between different mean vectors are small compared to the norms of mean vectors.}

{We now lower bound Equation~\eqref{eq:pfgmmrisk01} divided by $(\hatw_c^T\hatw_c + \hatw_j^T\hatw_j)$.}
Using the leave-one-out trick in Section \ref{sec:pforderoneoutline} and the matrix-inversion lemma, {we show in Appendix \ref{sec:pfinverssece} that}
\begin{align}
\label{eq:lowerquadratio}
     &\frac{\eqref{eq:pfgmmrisk01}}{(\hatw_c^T\hatw_c + \hatw_j^T\hatw_j)} = \frac{D_1}{D_2},\\
     &D_1 = \lp(\frac{\tn{\boldmu_c}^2s_{cc}^{(j)}-s_{cc}^{(j)}t_{cc}^{(j)}+{h_{cc}^{(j)}}^2 + h_{cc}^{(j)}-\tn{\boldmu_c}^2s_{jc}^{(j)}-h_{jc}^{(j)}-h_{jc}^{(j)}h_{cc}^{(j)}+s_{jc}^{(j)}t_{cc}^{(j)}}{\det_{j}}\rp)^2,\notag\\
     &D_2 = \lp(\frac{s_{cc}^{(j)}}{\det_j}+\frac{s_{jj}^{(-j)}}{\det_{-j}}\rp),\notag
\end{align}
where $\det_{j} = (\tn{\boldmu_c}^2 - t_{cc}^{(j)})s_{cc}^{(j)}+({h_{cc}^{(j)}}+1)^2$. Note that $\det_j = \det_{-c}$ when $c=k$ and $j = k-1$.

Next, we will prove that
\begin{align}\label{eq:class_error_gmm_app_main_eq}
    \eqref{eq:lowerquadratio} \ge \tn{\boldmu}^2\frac{\lp(\lp(1-\frac{C_1}{\sqrt{n}}-\frac{C_2n}{p}\rp)\tn{\boldmu}-C_3\minklogn\rp)^2}{C_6\lp(\tn{\boldmu}^2+\frac{kp}{n}\rp)}\,.
\end{align}
\subsubsection{Proof of Equation (\ref{eq:class_error_gmm_app_main_eq})}
We will lower bound the numerator and upper bound the denominator of \Equation~\eqref{eq:lowerquadratio}.
We will work on the high-probability event $\mathcal{E}_v$ defined in \Equation~\eqref{eq:eventvnorm} in Appendix~\ref{sec:aux_lemmas_svm_GMM}.
For quadratic forms such as $s_{cc}^{(j)}, t_{cc}^{(j)}$ and $h_{cc}^{(j)}$, the Gram matrix $\bolda_{j}^{-1}$ does not ``include'' the $c$-th mean component because we have fixed $c = k,j = k-1$.
Thus, we can directly apply Lemma~\ref{lem:ineqforanoj} to get 
\begin{align*}
    \frac{C_1-1}{C_1}\cdot\frac{n}{kp} \le & s_{cc}^{(j)} \le \frac{C_1+1}{C_1}\cdot\frac{n}{kp},\\
    t_{cc}^{(j)} \le & \frac{C_2n\tn{\boldmu}^2}{p},\\
    -\minrho\frac{C_3n\tn{\boldmu}}{\sqrt{k}p}  \le & h_{cc}^{(j)} \le  \minrho\frac{C_3n\tn{\boldmu}}{\sqrt{k}p},
\end{align*}
on the event $\mathcal{E}_v$.
We need some additional work to bound $s_{jc}^{(j)} = \vb_j\bolda_j^{-1}\vb_c$ and $h_{jc}^{(j)} = \vb_j\bolda_j^{-1}\boldd_c$, since the Gram matrix $\bolda_j^{-1}$ ``includes'' $\vb_j$. 
The proof here follows the machinery introduced in Appendix \ref{sec:pfineqforanoj} for proving Lemma~\ref{lem:ineqforanoj}. We provide the core argument and refer the reader therein for additional justifications. By \Equation~\eqref{eq:pfinverseskm} in Appendix~\ref{sec:pfinversesecc} (with the index $j-1$ replacing the index $0$), we first have 
\begin{align*}
s_{jc}^{(j)} 
&= s_{jc}^{(j-1)} - \frac{1}{\det_{j-1}}(\star)_{s}^{(j-1)}, 
\end{align*}
where we define $$(\star)_{s}^{(j-1)} = (\tn{\boldmu_j}^2 - t_{jj}^{(j-1)})s_{jj}^{(j-1)}s_{jc}^{(j-1)} + s_{jc}^{(j-1)}{h_{jj}^{(j-1)}}^2 + s_{jc}^{(j-1)}h_{jj}^{(j-1)}+s_{jj}^{(j-1)}h_{jc}^{(j-1)},$$ 
and $\det_{j-1} = (\tn{\boldmu_j}^2 - t_{jj}^{(j-1)})s_{jj}^{(j-1)}+({h_{jj}^{(j-1)}}+1)^2$. Further, we have
\begin{align*}
    |s_{jc}^{(j)}| &= \lp|\lp(1-\frac{(\tn{\boldmu_j}^2 - t_{jj}^{(j-1)})s_{jj}^{(j-1)}+ {h_{jj}^{(j-1)}}^2}{\det_{j-1}}  \rp)s_{jc}^{(j-1)} -\frac{1}{\det_{j-1}}( s_{jc}^{(j-1)}h_{jj}^{(j-1)}+s_{jj}^{(j-1)}h_{jc}^{(j-1)})\rp|\\
    &\le \frac{1}{C}|s_{jc}^{(j-1)}| + \frac{1}{\det_{j-1}}|(s_{jc}^{(j-1)}h_{jj}^{(j-1)}+s_{jj}^{(j-1)}h_{jc}^{(j-1)})|.
\end{align*}
We focus on the dominant term $|s_{jj}^{(j-1)}h_{jc}^{(j-1)}|$.
Using a similar argument to that provided in Appendix~\ref{sec:pfineqforanoj}, we get 
\begin{align*}
    \frac{|s_{jj}^{(j-1)}h_{jc}^{(j-1)}|}{\det_{j-1}} &\le \frac{|s_{jj}^{(j-1)}h_{jc}^{(j-1)}|}{(1+{h_{jj}^{(j-1)}})^2} \le \frac{C_1}{\lp(1-\frac{C_2\minrho\epsilonn}{k^2\sqrt{\logtwon}}\rp)^2}\cdot\frac{n}{kp}\cdot\frac{\minrho\epsilonn}{k^{2}\sqrt{\logtwon}} \\
    &\le \frac{C_3\minrho\epsilonn}{\lp(1-\frac{C_2\minrho\epsilonn}{k^2\sqrt{n}}\rp)^2k^2}\cdot\frac{\sqrt{n}}{kp}.
\end{align*}
Thus, we have
\begin{align}
&|s_{jc}^{(j-1)}| \le  \frac{C_4+1}{C_4}\cdot\frac{\sqrt{n}}{kp}. \notag
\end{align}
Similarly, we bound the remaining term $h_{jc}^{(j)}$.
Specifically, by \Equation~\eqref{eq:pfinverseh} in Section~\ref{sec:pfinversesecc}, we have
\begin{align*}
    h_{jc}^{(j)} &= h_{jc}^{(j-1)} - \frac{1}{\det_{j-1}}(\star)_h^{(j-1)},
\end{align*}
where we define $$(\star)_h^{(j-1)}=(\tn{\boldmu_j}^2 - t_{jj}^{(j-1)})s_{jj}^{(j-1)}h_{jc}^{(j-1)} + h_{jc}^{(j-1)}{h_{jj}^{(j-1)}}^2 + h_{jc}^{(j-1)}h_{jj}^{(j-1)}+s_{jj}^{(j-1)}t_{jc}^{(j-1)}.$$
Furthermore,
\begin{align*}
    |h_{jc}^{(j)}| &= \lp|\lp(1-\frac{(\tn{\boldmu_j}^2 - t_{jj}^{(j-1)})s_{jj}^{(j-1)}+ {h_{jj}^{(j-1)}}^2}{\det_{j-1}}  \rp)h_{jc}^{(j-1)} -\frac{1}{\det_{j-1}}( h_{jc}^{(j-1)}h_{jj}^{(j-1)}+s_{jj}^{(j-1)}t_{jc}^{(j-1)})\rp|\\
    &\le \frac{1}{C}|h_{jc}^{(j-1)}| + \frac{1}{\det_{j-1}}|(h_{jc}^{(j-1)}h_{jj}^{(j-1)}+s_{jj}^{(j-1)}t_{jc}^{(j-1)})|.
\end{align*}
We again consider the dominant term $|s_{jj}^{(j-1)}t_{jc}^{(j-1)}|/\det_{j-1}$ and get
\begin{align*}
    \frac{|s_{jj}^{(j-1)}t_{jc}^{(j-1)}|}{\det_{j-1}} &\le \frac{|s_{jj}^{(j-1)}t_{jc}^{(j-1)}|}{(1+{h_{jj}^{(j-1)}})^2} \le \frac{C_1}{\lp(1-\frac{C_2\minrho\epsilonn}{k^2\sqrt{n}}\rp)^2}\cdot\frac{n}{kp}\cdot\frac{n\tn{\boldmu}^2}{p} \\
    &\le \frac{C_3\epsilonn}{\lp(1-\frac{C_2\minrho\epsilonn}{k^{1.5}\sqrt{n}}\rp)^2k^2\sqrt{n}}\cdot\frac{\minrho n\tn{\boldmu}}{\sqrt{k}p}.
\end{align*}
Thus, we find that
\begin{align*}
    |h_{jc}^{(j-1)}| \le  \minrho\frac{C_4 n\tn{\boldmu}}{\sqrt{k}p}.
\end{align*}
We are now ready to lower bound the RHS in \Equation~\eqref{eq:lowerquadratio} by lower bounding its numerator and upper bounding its denominator. 

First, for the numerator we have the following sequence of inequalities:
\begin{align}
    &\tn{\boldmu_c}^2s_{cc}^{(j)}-s_{cc}^{(j)}t_{cc}^{(j)}+{h_{cc}^{(j)}}^2 + h_{cc}^{(j)}-\tn{\boldmu_c}^2s_{jc}^{(j)}-h_{jc}^{(j)}-h_{jc}^{(j)}h_{cc}^{(j)}+s_{jc}^{(j)}t_{cc}^{(j)} \notag \\
    \ge & \tn{\boldmu_c}^2s_{cc}^{(j)}-\tn{\boldmu_c}^2s_{jc}^{(j)}-s_{cc}^{(j)}t_{cc}^{(j)} + s_{jc}^{(j)}t_{cc}^{(j)} + h_{cc}^{(j)}-h_{jc}^{(j)}-h_{jc}^{(j)}h_{cc}^{(j)}\notag \\
    \ge &\frac{C_1-1}{C_1}\cdot\frac{\tn{\boldmu}^2n}{kp}-\frac{C_2 +1}{C_2}\cdot\frac{\tn{\boldmu}^2\sqrt{n}}{kp}-\frac{C_3n}{p}\cdot\frac{\tn{\boldmu}^2n}{kp} - \frac{C_4n}{p}\cdot\frac{\tn{\boldmu}^2\sqrt{n}}{kp} - \frac{C_5\minrho n\tn{\boldmu}}{\sqrt{k}p}. \notag
\end{align}
Above, we use the fact that the terms $|h_{cc}^{(j)}|, |h_{jc}^{(j)}| \le C\epsilon/(k^2\sqrt{n})$ are sufficiently small compared to $1$. 
Consequently, the numerator is lower bounded by
\begin{align}
\label{eq:pfgmmerrornum01}
    \lp(\frac{C_1-1}{C_1}\cdot\frac{\tn{\boldmu}^2n}{kp}-\frac{C_2 +1}{C_2}\cdot\frac{\tn{\boldmu}^2\sqrt{n}}{kp}-\frac{C_3n}{p}\cdot\frac{\tn{\boldmu}^2n}{kp} - \frac{C_4n}{p}\cdot\frac{\tn{\boldmu}^2\sqrt{n}}{kp} - \frac{C_5\minrho n\tn{\boldmu}}{\sqrt{k}p}\rp)^2\Big/\text{det}_{j}^2.
\end{align}

Second, we  upper bound the denominator. For this, note that under the assumption of nearly equal energy and equal priors on class means (Assumption~\ref{ass:equalmuprob}), there exist constants $C_1, C_2 >0$ such that $C_1 \le \det_{j}/\det_{-j} \le C_2$.
(In fact, a very similar statement was proved in \Equation~\eqref{eq:bounddet} and used in the proof of Theorem~\ref{thm:svmgmm}). 
Moreover, Lemma~\ref{lem:ineqforanoj} shows that the terms $s_{cc}^{(j)}$ and $s_{jj}^{(-j)}$ are of the same order, so it suffices to upper bound $\frac{s_{cc}^{(j)}}{\det_j}$.
Again applying Lemma~\ref{lem:ineqforanoj}, we have
\begin{align}
\label{eq:pfgmmerrordeno01}
    \frac{s_{cc}^{(j)}}{\det_j} \le \frac{C_6}{\det_j}\cdot\frac{n}{kp}\,
\end{align}
on the event $\mathcal{E}_v$.
Then, combining Equations~\eqref{eq:pfgmmerrornum01} and~\eqref{eq:pfgmmerrordeno01} gives us
\begin{align}
   ~\eqref{eq:lowerquadratio} &\ge
    \frac{n}{C_0kp}\cdot\frac{1}{\det_{j}}\lp((1-\frac{C_1}{\sqrt{n}}-\frac{C_2n}{p})\tn{\boldmu}^2-C_3\minklogn\tn{\boldmu}\rp)^2 \notag\\
    & \ge \frac{n}{C_0kp}\cdot\frac{1}{\frac{C_4\tn{\boldmu}^2n}{kp} + 2 + \frac{C_5n^2\tn{\boldmu}^2}{kp^2}}\lp(\lp(1-\frac{C_1}{\sqrt{n}}-\frac{C_2n}{p}\rp)\tn{\boldmu}^2-C_3\minklogn\tn{\boldmu}\rp)^2 \notag\\
    &\ge \tn{\boldmu}^2\frac{\lp(\lp(1-\frac{C_1}{\sqrt{n}}-\frac{C_2n}{p}\rp)\tn{\boldmu}-C_3\minklogn\rp)^2}{C_6\lp(\tn{\boldmu}^2+\frac{kp}{n}\rp)}, \label{eq:lowerquadratio02}
\end{align}
where the second inequality follows from the following upper bound on $\det_j$ on the event $\mathcal{E}_v$:
\begin{align*}
    \text{det}_{j} = (\tn{\boldmu_c}^2 - t_{cc}^{(j)})s_{cc}^{(j)}+({h_{cc}^{(j)}}+1)^2 \le \tn{\boldmu_c}^2 s_{cc}^{(j)} + 2({h_{cc}^{(j)}}^2 + 1) \le \frac{C_4\tn{\boldmu}^2n}{kp} + 2 + \frac{C_5n^2\tn{\boldmu}^2}{kp^2}.
\end{align*}
\subsubsection{Bounding the remaining terms in (\ref{eq:pfgmmriskall01})}
{The previous sections of the proof bounded the dominant terms in \Equation~\eqref{eq:pfgmmriskall01}. Now, we turn to bounding the remaining terms  $\boldmu_m^T\boldmu_c\vb_m^T(\xtx)^{-1}\vb_c$ and $\boldmu_j^T\boldmu_c\vb_j^T(\xtx)^{-1}\vb_j$. Under the nearly equal energy and priors assumption, the $\vb_j^T(\xtx)^{-1}\vb_j$ terms have the same bound for every $j \in [k]$ except for some constants. Similarly, the $\vb_j^T(\xtx)^{-1}\vb_m$ terms also have the same bound for all $j \ne m \in [k]$ except for some constants. 
An upper bound on classification error can then be derived in terms of the inner products between the mean vectors. Specifically, we need to include the bounds of  $\sum_{m \ne c}\boldmu_m^T\boldmu_c\vb_m^T(\xtx)^{-1}\vb_c$, $\boldmu_j^T\boldmu_c\vb_j^T(\xtx)^{-1}\vb_j$ and $\sum_{m \ne c,j}\boldmu_m^T\boldmu_c\vb_m^T(\xtx)^{-1}\vb_j$. Recall that in Appendix \ref{sec:appcomproof} we show that $\vb_c(\xtx)^{-1}\vb_j = {(s_{cj}^{(j)}+s_{cj}^{(j)}h_{cc}^{(j)}-s_{cc}^{(j)}h_{jc}^{(j)})}/{\det_j}$ and $\vb_j(\xtx)^{-1}\vb_j= s_{jj}^{(-j)}/{\det_{-j}}$. We also show that the bound for ${s_{cc}^{(j)}}$ (also ${s_{jj}^{(-j)}}$) is at the order of $\Oc(n/(kp))$ and the bound for $s_{cj}^{(j)}$ is at the order of $\Oc(\sqrt{n}/(kp))$ when $c \ne j$, which is significantly smaller than $\Oc(n/(kp))$ when $n$ is large. Additionally, the bound for $|h_{jc}^{(j)}|$ is sufficiently small. Combining these results and the assumption of mutually incoherent means, we can see that the bounds for these additional terms included are still much smaller than the bound of $\tn{\boldmu}^2\vb_c^T(\xtx)^{-1}\vb_c$, which is the dominant term. Therefore, they will not change the generalization bound of \eqref{eq:lowerquadratio} except up to constant factors. }
\subsubsection{Completing the proof}
\label{sec:appcomproof}
Because of our assumption of nearly equal energy on class means and equal priors, the analysis above can be applied to bound $\frac{((\hatw_c -\hatw_j)^T\boldmu_c)^2}{(\hatw_c^T\hatw_c + \hatw_j^T\hatw_j)}$, for every $j \ne c$ and $c \in [k]$. 
We define the $\textit{unconditional}$ event $$\evente_{u2}:=\lp\{ \frac{((\hatw_c -\hatw_j)^T\boldmu_c)^2}{(\hatw_c^T\hatw_c + \hatw_j^T\hatw_j)} \text{ is lower bounded by} ~\eqref{eq:lowerquadratio02} ~\text{for every}~ j \ne c\rp\}.$$
We have
\begin{align*}
    \Pro(\evente_{u2}^c) &\le \Pro(\evente_{u2}^c|\evente_v)+\Pro(\evente_v^c) \\
    &\le \frac{c_4}{n}+c_5k(e^{-\frac{n}{c_6}}+e^{-\frac{n}{c_7k^2}}) \le \frac{c_4}{n}+c_8ke^{-\frac{n}{c_7k^2}} 
\end{align*}
for constants $c_i$'s $>1$. Thus, the class-wise error $\Pro_{e|c}$ is upper bounded by
\begin{align*}
    (k-1)\exp{\lp(-\tn{\boldmu}^2\frac{\lp(\lp(1-\frac{C_1}{\sqrt{n}}-\frac{C_2n}{p}\rp)\tn{\boldmu}-C_3\minklogn\rp)^2}{C_4\lp(\tn{\boldmu}^2+\frac{kp}{n}\rp)}\rp)}
\end{align*}
with probability at least $1-\frac{c_4}{n}-c_8ke^{-\frac{n}{c_7k^2}}$.
This completes the proof.
\qed

\subsubsection{Proof of Equation (\ref{eq:lowerquadratio})}
\label{sec:pfinverssece}
Here, using the results of Section \ref{sec:pfinversesecc}, we show how to obtain \Equation~\eqref{eq:lowerquadratio} from \Equation~\eqref{eq:pfinnerratio}. First, by \cite[Appendix C.2]{wang2020benign} (with $\boldy$ replaced by $\vb_m$), we have
\begin{align*}
    \vb_m(\xtx)^{-1}\vb_m= \frac{s_{mm}^{(-m)}}{\det_{-m}},~~\text{for~all}~~m \in [k],
\end{align*}
where $\det_{-m} = (\tn{\boldmu_m}^2 - t_{mm}^{(-m)})s_{mm}^{(-m)}+({h_{mm}^{(-m)}}+1)^2$. Then \cite[Equation (44)]{wang2020benign} gives
\begin{align*}
    \tn{\boldmu_c}^2 \cdot \vb_c(\xtx)^{-1}\vb_c+\vb_c(\xtx)^{-1}\boldd_c = \frac{\tn{\boldmu_c}^2s_{cc}^{(j)}-s_{cc}^{(j)}t_{cc}^{(j)}+{h_{cc}^{(j)}}^2 + h_{cc}^{(j)}}{\det_{j}},
\end{align*}
where $\det_{j} = (\tn{\boldmu}^2 - t_{cc}^{(j)})s_{cc}^{(j)}+({h_{cc}^{(j)}}+1)^2$. Note that $\det_j = \det_{-c}$ when $c=k$ and $j = k-1$. 

For $\vb_c(\xtx)^{-1}\vb_j$ and $\vb_j(\xtx)^{-1}\boldd_c$, we can again express the $k$-th order quadratic forms in terms of $j$-th order quadratic forms as follows:
\begin{align*}
    \vb_c(\xtx)^{-1}\vb_j &= \frac{s_{cj}^{(j)}+s_{cj}^{(j)}h_{cc}^{(j)}-s_{cc}^{(j)}h_{jc}^{(j)}}{\det_j}, \\
    \vb_j(\xtx)^{-1}\boldd_c &= \frac{\tn{\boldmu}^2s_{cc}^{(j)}h_{jc}^{(j)} - \tn{\boldmu}^2s_{cj}^{(j)}h_{cc}^{(j)} + h_{cc}^{(j)}h_{jc}^{(j)}+h_{jc}^{(j)}-s_{cj}^{(j)}t_{cc}^{(j)}}{\det_j}.
\end{align*}
Thus, we have
\begin{align*}
\tn{\boldmu_c}^2\vb_c(\xtx)^{-1}\vb_j+\vb_j(\xtx)^{-1}\boldd_c = \frac{\tn{\boldmu_c}^2s_{jc}^{(j)}+h_{jc}^{(j)}+h_{jc}^{(j)}h_{cc}^{(j)}-s_{jc}^{(j)}t_{cc}^{(j)}}{\det_{j}}.
\end{align*}
This completes the proof.
\qed
{\subsection{Proof of \Theorem~\ref{cor:classerrorgmmnc}}
\label{sec:etfgenproof}
In this section we prove Theorem \ref{cor:classerrorgmmnc}.
The simplex ETF setting for the class means gives us $\tn{\mub}^2 = -(k-1)\mub_m^T\mub_c$ for $m \ne c$.
Therefore, following the analysis above, the additional term $\sum_{m \ne c}\boldmu_m^T\boldmu_c\vb_m^T(\xtx)^{-1}\vb_c$ is upper bounded by $\tn{\mub}^2\max_{m,c}|\vb_m^T(\xtx)^{-1}\vb_c|$. Since the dominating term in $\vb_m^T(\xtx)^{-1}\vb_c$ is ${s_{cj}^{(j)}}$, which has a much smaller upper bound than the dominating term in $\vb_c^T(\xtx)^{-1}\vb_c$, and the term $\boldmu_j^T\boldmu_c\vb_j^T(\xtx)^{-1}\vb_j$ in $\hatw_j^T\boldmu_c$ has a positive contribution to $\hatw_c^T\boldmu_c - \hatw_j^T\boldmu_c$ under the simplex ETF setting, the final generalization bound does not change except up to constant factors.}

\subsection{Proof of Corollary~\ref{cor:benigngmm}}\label{sec:benignproof}
We now prove the condition for benign overfitting provided in Corollary~\ref{cor:benigngmm}. Note that following Theorem \ref{thm:svmgmm}, we assume that 
\begin{align}
\label{eq:pfsvmlink}
    p > C_1k^3n\log(kn)+n-1\quad\text{ and }\quad p>C_2k^{1.5}n^{1.5}\tn{\boldmu}.
\end{align}

We begin with the setting where $\tn{\boldmu}^2 > C\frac{kp}{n}, \ \ \text{for some} \ \ C > 1.$
In this case, we get that \Equation~\eqref{eq:lowerquadratio02} is lower bounded by
$
    \frac{1}{c}{\lp(\lp(1-\frac{C_3}{\sqrt{n}}-\frac{C_4n}{p}\rp)\tn{\boldmu}-C_5\sqrt{k}\rp)^2},
$
and we have
\begin{align}
    {\lp(\lp(1-\frac{C_3}{\sqrt{n}}-\frac{C_4n}{p}\rp)\tn{\boldmu}-C_5\sqrt{k}\rp)^2} &> \tn{\boldmu}^2 - 2\tn{\boldmu}^2\frac{C_3}{\sqrt{n}} -2\tn{\boldmu}^2\frac{C_4n}{p}- 2C_5\sqrt{k}\tn{\boldmu} \notag\\
    \label{eq:pfhighsnr02}
    &> \lp(1-\frac{2C_3}{\sqrt{n}}\rp)\frac{kp}{n} -2\tn{\boldmu}^2\frac{C_4n}{p}- 2C_5\sqrt{k}\tn{\boldmu}.
\end{align}
Then \Equation~\eqref{eq:pfsvmlink} gives
\begin{align}
    \eqref{eq:pfhighsnr02} & > \lp(1-\frac{2C_3}{\sqrt{n}}\rp)\frac{kp}{n}- \lp(\frac{p}{k^{1.5}n^{1.5}}\rp)^2\frac{C_6n}{p}- \frac{C_7\sqrt{k}p}{k^{1.5}n^{1.5}} \notag\\
    \label{eq:pfhighsnr03}
    & = \frac{kp}{n}\lp(1 - \frac{2C_3}{\sqrt{n}}- \frac{C_6}{k^4n} - \frac{C_7}{k^2\sqrt{n}}\rp),
\end{align}
which goes to $+\infty$ as $\lp(\frac{p}{n}\rp) \to \infty$.

Next, we consider the case $\tn{\boldmu}^2 \le \frac{kp}{n}$. 
Moreover, we assume that 
$\tn{\boldmu}^4 = C_2\lp(\frac{p}{n}\rp)^{\alpha},$ for $\alpha > 1$.
Then, \Equation~\eqref{eq:lowerquadratio02} is lower bounded by $\frac{n}{ckp}\tn{\boldmu}^4\lp(\lp(1-\frac{C_3}{\sqrt{n}}-\frac{C_4n}{p}\rp)-\frac{C_5\sqrt{k}}{\tn{\boldmu}}\rp)^2$, and we get
\begin{align}
    \frac{n}{kp}\tn{\boldmu}^4\lp(\lp(1-\frac{C_3}{\sqrt{n}}-\frac{C_4n}{p}\rp)-\frac{C_5\sqrt{k}}{\tn{\boldmu}}\rp)^2 \notag 
    &> \lp(1-\frac{2C_3}{\sqrt{n}}\rp)\frac{n}{kp}\tn{\boldmu}^4 - \frac{C_6n^2}{kp^2}\tn{\boldmu}^4 - \frac{C_7n}{\sqrt{k}p}\tn{\boldmu}^3 \notag\\
    \label{pf-lowsnr03}
    &\ge \lp(1-\frac{2C_3}{\sqrt{n}}\rp)\frac{1}{k}\lp(\frac{p}{n}\rp)^{\alpha -1}- \frac{C_6}{k}\lp(\frac{p}{n}\rp)^{\alpha -2} -\frac{C_7}{\sqrt{k}}\lp(\frac{p}{n}\rp)^{0.75\alpha -1},
\end{align}
where the last inequality uses Equations~\eqref{eq:pfsvmlink} and condition $\tn{\boldmu}^2 \le \frac{kp}{n}$.
Consequently, the RHS of \Equation~\eqref{pf-lowsnr03} will go to $+\infty$ as $\lp(\frac{p}{n}\rp) \to \infty$, provided that $\alpha  > 1$. Overall, it suffices to have
\begin{align*}
    &p > \max\lp\{C_1k^3n\log(kn)+n-1, C_2k^{1.5}n^{1.5}\tn{\boldmu}, \frac{n\tn{\boldmu}^2}{k}\rp\},\\
    \text{and} \ &\tn{\boldmu}^4 \ge C_8\left(\frac{p}{n}\right)^{\alpha}, \ \ \text{for} \ \alpha \in (1, 2].
\end{align*}
All of these inequalities hold provided that $\tn{\boldmu}= \Theta(p^\beta)$ for $\beta\in(1/4,1/2]$ for finite $k$ and $n$.
This completes the proof.
\qed


\section{Main lemmas used in error analysis of MLM}\label{sec:classerrormlmprooflemmas}

{In this section, we collect the proofs of the main lemmas that are used in the error analysis of MLM (proof of Theorem~\ref{thm:classerrormlm}, provided in Section~\ref{sec:proofclasserrormlm}).
We first introduce notation that is specific to these proofs.}

{For two indices $\ell,j \in [k]$, we use the Kronecker delta notation $\delta_{\ell,j} = \mathbb{I}[\ell \neq j]$.
For a diagonal covariance matrix $\Sigmab$ and $\ell \geq 1$, we define the leave-$\ell$-out covariance matrix $\Sigmab_{-1:\ell}$ as $\Sigmab$ with the first $\ell$ rows and columns removed.
For an arbitrary PSD matrix $\boldsymbol{M} \in \R^{d \times d}$ with eigenvalues $\la_1,\ldots,\la_d$
and any index $k \in \{0,\ldots, d-1\}$, the effective rank of the first kind (in the sense of~\cite{bartlett2020benign}) is defined as 
\begin{align}\label{eq:effrank} 
r_k(\boldsymbol{M}) := \frac{1}{\lambda_{k+1}} \cdot \sum_{\ell = k+1}^{d} \lambda_{\ell}.
\end{align} }

{Additionally, we state our convention for constants for this proof.
Hereafter, we let $c, C \ldots>0$ denote positive absolute constants in lower and upper bounds respectively. We also use $c_k,C_k>0$ in a similar manner to denote constants that may only depend on the number of classes $k$. To simplify exposition in the proof, the values of these constants may be changing from line to line without explicit reference.
Finally, by ``large enough" $n$ we mean that $n \geq C_k$ for some universal constant $C_k$ that depends only on $k$.}

\subsection{Proof of Lemma~\ref{lem:sumulticlass}}\label{sec:lemmasumulticlassproof}

{The proof of Lemma~\ref{lem:sumulticlass} follows similarly to the proof of Theorem 4 in Appendix D.3 and Lemma 11 in Appendix E of~\cite{muthukumar2021classification}, with the two important and nontrivial extensions mentioned above: one, to the multiclass case involving $k$ signal vectors, and two, considering the logistic model for label noise.
Without loss of generality\footnote{The reason this is without loss of generality is because we can carry out the same analysis otherwise with the appropriate permutation of the index labels.}, we assume for simplicity that $j_c = c$ for all $c \in [k]$, and consider classes $c_1 = 1,c_2 = 2$ for the argument.
First, we consider the following adjusted orthonormal basis
\begin{align*}
\etilde_1 &= \frac{(\boldmu_1 - \boldmu_2)}{\|\boldmu_1 - \boldmu_2 \|_2}, \qquad
\etilde_2 = \frac{(\boldmu_1 + \boldmu_2)}{\|\boldmu_1 - \boldmu_2 \|_2}, \qquad
\etilde_j = \ehat_j \text{ for all } j \geq 3.
\end{align*}
This orthonormal basis together with the bilevel ensemble structure in Definition~\ref{as:bilevel} then gives us
\begin{align*}
\SU_{1,2} &= \frac{\sqrt{\lambda_H}}{\|\boldmu_1 - \boldmu_2\|_2} \cdot (\boldmu_1 - \boldmu_2)^\top (\hatw_1 - \hatw_2) = \sqrt{\lambda_H} \cdot \etilde_1^\top \X (\X^\top \X)^{-1} (\boldv_1 - \boldv_2)
\\
&= \sqrt{\lambda_H} \cdot \etilde_1^\top \X (\X^\top \X)^{-1} \y_1.
\end{align*}
where in the second line we introduce the shorthand $\y_1 := \boldv_1 - \boldv_2\,$.
Next, we define 
\[\A := \X^\top \X= \sum_{j=1}^p \lambda_j \z_j \z_j^\top\,,
\]
with  $\z_j := \frac{1}{\sqrt{\lambda_j}} \X^\top \etilde_j, j = 1,\ldots,p$ and note that $\z_j  \stackrel
{\text{iid}}{\sim} \mathcal{N}(0,I_n)$.
(This uses again the rotational invariance of Gaussianity 
and the bilevel ensemble structure.) Finally, for $\ell=1,\ldots,p-1$, we denote the ``leave-$\ell$-out'' matrices \emph{corresponding to the changed basis} by $\A_{-1:\ell} := \sum_{j = \ell+1}^p \lambda_j \z_j \z_j^\top$.
Note that, by definition, $\A_{-1:0}
:=\A.$}


{Using the above notation, we can then write the survival terms as follows \[\SU_{1,2} = \lambda_H \cdot \z_1^\top \A^{-1} \y_1.\]}

{The main challenge in characterizing the term above is that $\A^{-1}$ is dependent on both $\z_1$ and $\y_1$. In particular, $\y_1$ depends on $\z_1$ itself but also it depends on $\z_2,\ldots,\z_k$. 
In the binary case, a simple \emph{leave-one-out} analysis suffices to circumvent this difficulty as shown in~\cite{muthukumar2021classification}.
In the multiclass setting, we need to do a much more challenging \emph{leave-$k$-out} analysis which we outline below.
In particular, we outline a recursive argument over $k$ steps that iteratively removes the dependencies on $\z_1,\ldots,\z_k$ from $\A^{-1}$. This process is described in the following subsections.}

\subsubsection{Key recursion: Removing dependencies}

{Start by defining the following ``quadratic-like" terms: 
\begin{subequations}\label{eq:survival quadratics defns}
\begin{align}
    Q_\ell&:=\y_1^T\Aneg{\ell}\z_1,\qquad \ell=0,\ldots, k\,,
    \\
    \wt Q_\ell&:=\z_1^T\Aneg{\ell}\z_1,\qquad \ell=1,\ldots, k\,,
    \\
    R_{\ell,j}&:=\y_1^T\Aneg{\ell}\z_j,\qquad \ell\geq j, j=1,\ldots, k-1\,.
\end{align}
\end{subequations}}

{Recall the term we wish to control is  $Q_0=\z_1^T\Aneg{0}\y_1=\z_1^T\A^{-1}\y_1$.
A single application of the matrix inversion lemma (which was also done in~\cite{muthukumar2021classification} for the binary case and is described in a self-contained manner in Appendix~\ref{sec:proof of Lemma 2}) yields that $\SU_{1,2} = \frac{\lambda_H Q_1 }{1 + \lambda_H \wt Q_1}$.
However, unlike in the binary case $Q_1$ can no longer be easily controlled, as $\y_1$ still depends on $\Aneg{1}$ as it is a functional of not only $\z_1$, but also $\{\z_2,\ldots,\z_k\}$.}

{On the other hand, the term involving the leave-$k$-out Gram matrix, i.e.~$Q_k=\z_1^T\Aneg{k}\y_1$ avoids this issue. 
This is because $\y_1$ is only a functional of $\{\z_1,\ldots,\z_k\}$, which ensures that $\y_1$ is independent of $\Aneg{k}$.
This allows us to sharply characterize $Q_k$ via the Hanson-Wright inequality, as shown in the lemma below.}

{\begin{lemma}\label{lem:Qk} For large enough $n$, we have 
\begin{align}\label{eq:Qkexpression}
    c_k \left(\frac{cn}{\lambda_L r_s(\Sigmab)} + \frac{c' n^{3/4}}{\lambda_L r_s(\Sigmab)} \right) \geq Q_k &\geq c_k \left( \frac{(n-s)}{c\lambda_L r_s(\Sigmab)} - \frac{c' n^{3/4}}{\lambda_L r_s(\Sigmab)} \right) .
\end{align}
with probability at least $1-2e^{-\sqrt{n}}$.
\end{lemma}
See Appendix~\ref{sec:proof of Lemma 4} for the proof of this lemma.}

{Thus, it suffices to characterize $Q_1$ in terms of $Q_k$, so that we can translate upper/lower bounds on $Q_k$ to upper/lower bounds on $Q_1$ and thereby characterize the survival $\SU_{1,2}$.
The main result of this section, shown below, does precisely this, guaranteeing that $|Q_1-Q_k|=o(Q_k)$ with high probability.}

{\begin{lemma}\label{lem:Q0 close to Qk}
We have  
\[
\left(1 - \frac{C_k}{n^{1/4}}\right) Q_k \leq Q_1 \leq \left(1 + \frac{C_k}{ n^{1/4}}\right) Q_k.
\]
with probability at least $1-C'k^3e^{-C\sqrt{n}}$.
\end{lemma}}

{In the remainder of this section we prove Lemma \ref{lem:Q0 close to Qk}. 
We introduce the following recursion for any $\ell=k,\ldots,1$ by directly applying the matrix inversion lemma:
\begin{align}\label{eq:Q recursion eq}
    \nn Q_{\ell-1} &= \z_1^T\Aneg{\ell-1}\y_1 = \z_1^T(\A_{-1:\ell}+\z_\ell\z_\ell^T)^{-1}\y_1 = Q_{\ell} - \frac{\la_H(\z_1^T\Aneg{\ell}\z_\ell)(\y_1^T\Aneg{\ell}\z_\ell)}{1+\la_H \z_\ell^T\Aneg{\ell}\z_\ell} \nonumber \\
    &= Q_\ell - \wt Q_\ell\, \left(\frac{\z_1^T\Aneg{\ell}\z_\ell}{\wt Q_\ell}\, \right) \left(\frac{\la_H R_{\ell,\ell}}{1+\la_H \z_\ell^T\Aneg{\ell}\z_\ell}\right) \,,
\end{align}
where in the last line we recalled the definitions of $\wt Q_\ell$ and of $R_{\ell,\ell}$ in  Eqs. \eqref{eq:survival quadratics defns}.}

{In order to prove Lemma \ref{lem:Q0 close to Qk} using the above recursion, we establish the following bounds on each of the three terms $\frac{\z_1^T\Aneg{\ell}\z_\ell}{\wt Q_\ell}$,  $R_{\ell,\ell}$ and $\wt Q_\ell$ that appear in Eq. \eqref{eq:Q recursion eq}. We provide the proofs of each of these technical lemmas in Appendix~\ref{sec:proofs of technical survival}.}

{\begin{lemma}\label{lem:ratio bound}
For large enough $n$ and for all $\ell \in [k]$, we have
\begin{align}
    \abs{\z_1^T\Aneg{\ell}\z_\ell} &\leq  \frac{C}{n^{1/4}} \z_1^T\Aneg{\ell}\z_1 =\frac{C}{n^{1/4}} \wt Q_\ell \,,\label{eq:delta_smallterm}
\end{align}
with probability at least $1-C k^3e^{-\sqrt{n}}$.
\end{lemma}}

{\begin{lemma}\label{lem:Rellell bound}
For all $\ell \in [k]$, we have
\begin{align}
    \abs{R_{\ell,\ell}} &\leq C_k\cdot  \z_\ell^T\Aneg{\ell}\z_\ell\,,\quad \ell=k,k-1,\ldots,1\,.\label{eq:delta_Rterm}
\end{align}
with probability at least $1-Ck^3e^{-\sqrt{n}}$.
\end{lemma}}

{\begin{lemma}\label{lem:Qtilde main bound}
For large enough $n$ and for all $\ell \in [k]$, we have
\begin{align}
       0 \leq \wt Q_\ell &\leq \frac{2}{c_k} Q_{k}
    \label{eq:Qtilde_rec}\,
\end{align}
with probability at least $1 - Cke^{-\sqrt{n}}$.
\end{lemma}}

{\paragraph{Proof of Lemma \ref{lem:Q0 close to Qk}}Combining the bounds in Lemmas \ref{lem:ratio bound}, \ref{lem:Rellell bound},  \ref{lem:Qtilde main bound} and the fact that $\z_\ell^T\Aneg{\ell}\z_\ell\geq 0$ within  Equation \eqref{eq:Q recursion eq} immediately yields for all $\ell=1,\ldots,k$:
\begin{align*}
\abs{Q_{\ell} - Q_{\ell-1}} &\leq \frac{2}{c_k} Q_k \,\left(\frac{C}{n^{1/4}}\right)\, \left(\frac{C_k \cdot \la_H \z_\ell^T\Aneg{\ell}\z_\ell}{1+ \la_H\z_\ell^T\Aneg{\ell}\z_\ell} \right)
\\
&\leq Q_{k} \frac{C_k}{c_k n^{1/4}}.
\end{align*}
The desired then follows by the bound $|Q_k - Q_1| \leq \sum_{\ell=2}^k |Q_{\ell} - Q_{\ell-1}|$.}




{\subsubsection{Completing the proof of Lemma \ref{lem:sumulticlass}}\label{sec:proof of Lemma 2}
Armed with Lemma~\ref{lem:Q0 close to Qk}, we now complete the proof of Lemma~\ref{lem:sumulticlass}.
Recall that 
\[\SU_{1,2}= \lambda_H Q_0 = \lambda_H \cdot \z_1^\top \A^{-1} \y_1  \,.\]
Applying Equation~\eqref{eq:Q recursion eq} for $\ell=1$, we can write
\[
\SU_{1,2} = \la_H Q_0 = \la_H\left(Q_1-\frac{\la_H\wt Q_1\,Q_1}{1+\la_H \la_H\wt Q_1}\right) = \frac{\la_H Q_1}{1+\la_H\wt Q_1}\,.
\]
Thus, combining Lemmas \ref{lem:Qtilde main bound} and \ref{lem:Q0 close to Qk}, we can obtain the following lower/upper bounds on $\SU_{1,2}$:
\begin{align}\label{eq:suqk_intermediate}
\frac{\la_H \left(1 - \frac{C_k}{n^{1/4}}\right) Q_k}{1+\la_H \left(\frac{2}{c_k}\right) Q_k}   \leq  \SU_{1,2} \leq \la_H \left(1 + \frac{C_k}{n^{1/4}}\right) Q_k.  
\end{align}}

{It remains to substitute the upper/lower bounds on $Q_k$ we obtained in Lemma \ref{lem:Qk}.
Plugging in the definition of the bilevel ensemble gives $\lambda_L r_s(\Sigmab) = n^m - n^r$.
Noting that $m > 1$ and $r < 1$ gives
\begin{align*}
    \frac{cn}{\lambda_L r_s(\Sigmab)} + \frac{c' n^{3/4}}{\lambda_L r_s(\Sigmab)} &\leq C n^{1 - m} \text{ and } \\
    \frac{(n-s)}{c \lambda_L r_s(\Sigmab)} - \frac{c' n^{3/4}}{\lambda_L r_s(\Sigmab)} &\geq c n^{1 - m}.
\end{align*}
Therefore, we have
\begin{align*}
    c n^{1-m} \leq Q_k \leq C n^{1-m}.
\end{align*}
Noting that $\lambda_H = n^{m - q - r}$, we then have $ c n^{1-q - r} \leq \lambda_H Q_k \leq C n^{1-q - r}$.
Plugging this back into Equation~\eqref{eq:suqk_intermediate} gives
\begin{align*}
    c_k n^{1 - q - r} \leq \SU_{1,2} \leq C_k n^{1- q - r}
\end{align*}
for large enough $n$, which is the desired statement.
A union bound over Lemmas~\ref{lem:Qk} and~\ref{lem:Q0 close to Qk} implies that this statement holds with probability at least $1 - Ck^3 e^{-C\sqrt{n}}$.
This completes the proof of Lemma~\ref{lem:sumulticlass}.
\qed}

\subsection{Proof of Lemma~\ref{lem:cnmulticlass}}\label{sec:lemmacnmulticlassproof}

{This proof extends the argument in~\cite[Proof of Theorem 24]{muthukumar2021classification} using the same change-of-basis argument that we used to characterize the survival.
As with the proof of Lemma~\ref{lem:sumulticlass}, we assume without loss of generality that $c_1 = 1,c_2 = 2$.
First, we recall that
\begin{align}\label{eq:deltahat_def}
   \Deltahat_{1,2} :=\X (\xtx)^{-1} (\boldv_1 - \boldv_2) = \X \A^{-1} \y_1,
\end{align}
and that we defined $\A := \xtx$ and $\y_1 := \boldv_1 - \boldv_2$ as shorthand.
We first state and prove the following lemma, which is analogous to~\cite[Lemma 28, Eq. (53a)]{muthukumar2021classification}.
\begin{lemma}\label{lem:contaminationformula}
The contamination term $\CN_{1,2}$ can be expressed as,
\begin{align}\label{eq:contaminationformula}
    \CN_{1,2} &= \sqrt{\y_1^\top \C \y_1}, \text{ where } \\
    \C &:= \A^{-1} \left(\sum_{j=1,j \neq 1}^d \lambda_j^2 \z_j \z_j^\top \right) \A^{-1} \nonumber.
\end{align}
This is a consequence of the relation $\CN^2_{1,2} := \sum_{j =1, j \neq 1}^d \lambda_j \hat{\alpha}_j^2$, where we define $\hat{\alpha}_j := \sqrt{\lambda_j} \cdot \z_j^\top \A^{-1} \y_j$.
\end{lemma}
See Appendix~\ref{sec:technicallemmas_cn} for the proof of this lemma.}

{Note that the expression in Lemma~\ref{lem:contaminationformula} is still challenging to characterize, as the difference of label vectors $\y_1$ is dependent on the matrix $\C$.
To make progress, we will write a $k$-step recursive equation to express $\hat{\alpha}_j$ (and, thereby, $\CN_{1,2}$) in terms of $\Aneg{k}$ instead of $\A^{-1}$, leading to a possible characterization in terms of quadratic forms for which we can apply the Hanson-Wright inequality.
We begin by reproducing the first recursion from the proof of~\cite[Lemma 28]{muthukumar2021classification}, which directly yields
\begin{align*}
\hat{\alpha}_j = \sqrt{\lambda_j} \cdot \z_j^\top \Aneg{1} (\y_1 - \SU_{1,2} \z_1).
\end{align*}
We now recurse this argument to get an expression in terms of $\Aneg{2}$.
Applying the Sherman-Morrison formula yields
\begin{align*}
\Aneg{1} = \Aneg{2} - \frac{\lambda_H \cdot \Aneg{2} \z_2 \z_2 \Aneg{2}}{1 + \lambda_H \cdot \z_2^\top \Aneg{2} \z_2}
\end{align*}
and, consequently,
\begin{align}\label{eq:recursion2stepfull}
\hat{\alpha}_j = \sqrt{\lambda_j} \cdot \z_j^\top \Aneg{2} \left(\tilde{\y}_1 - \z_2 \cdot \frac{\lambda_H \cdot \z_2^\top \Aneg{2} \tilde{\y}_1}{1 + \lambda_H \cdot \z_2^\top \Aneg{2} \z_2}\right).
\end{align}
To write the entire $k$-step recursion, we define some shorthand notation.
For $\ell = 2,\ldots,k$ we define
\begin{align*}
\SU^{(\ell)}_{1,2} &:= \frac{\lambda_H \cdot \z_\ell^\top \Aneg{\ell} \tilde{\y}_{\ell-1}}{1 + \lambda_H \cdot \z_\ell^\top \Aneg{\ell} \z_\ell} \text{ and } \\
\tilde{\y}_{\ell} &:= \tilde{\y}_{\ell - 1} - \SU^{(\ell)}_{1,2} \z_\ell \\
\implies \tilde{\y}_k &= \y_1 - \sum_{\ell = 1}^k \SU^{(\ell)}_{1,2} \z_\ell.
\end{align*}
Consequently, rewriting Equation~\eqref{eq:recursion2stepfull} in terms of this shorthand notation gives
\begin{align*}
\hat{\alpha}_j = \sqrt{\lambda_j} \cdot \z_j^\top \Aneg{2} \tilde{\y}_2,
\end{align*}
and repeating this argument for $\ell = 3,\ldots,k$ ultimately yields
\begin{align*}
\hat{\alpha}_j &= \sqrt{\lambda_j} \cdot \z_j^\top \Aneg{k} \tilde{\y}_k.
\end{align*}
Then, we use an identical set of manipulations to the proof of~\cite[Lemma 28]{muthukumar2021classification} (reproduced for completeness) to get
\begin{align*}
\CN^2_{1,2} = \sum_{j = 1,j \neq 1}^d \lambda_j \hat{\alpha}_j^2 &= \sum_{j=1,j \neq 1}^d \lambda_j^2 \tilde{\y}_k^\top \Aneg{k} \z_j \z_j^\top \Aneg{k} \tilde{\y}_k \\
&= \tilde{\y}_k^\top \Aneg{k} \left(\sum_{j=1,j\neq 1}^d \lambda_j^2 \z_j \z_j^\top \right) \Aneg{k} \tilde{\y}_k \\
&= \tilde{\y}_k^\top \widetilde{\C}_k \tilde{\y}_k, \text{ where } \\
\widetilde{\C}_k &:= \Aneg{k} \left(\sum_{j=1,j\neq 1}^d \lambda_j^2 \z_j \z_j^\top \right) \Aneg{k}.
\end{align*}
We now complete the proof of Lemma~\ref{lem:cnmulticlass} by working with the expression $\CN^2_{1,2} = \tilde{\y}_k^\top \widetilde{\C}_k \tilde{\y}_k$.
First, we note that we can write
\begin{align*}
\CN^2_{1,2} &= \tilde{\y}_k^\top \widetilde{\C}_{k,1} \tilde{\y}_k + \tilde{\y}_k^\top \widetilde{\C}_{k,2} \tilde{\y}_k \text{ where } \\
\widetilde{\C}_{k,1} &:= \Aneg{k} \left(\sum_{j=1,j\neq 1}^k \lambda_j^2 \z_j \z_j^\top \right) \Aneg{k} \text{ and } \\
\widetilde{\C}_{k,2} &:= \Aneg{k} \left(\sum_{j=k+1}^d \lambda_j^2 \z_j \z_j^\top \right) \Aneg{k}.
\end{align*}
Then, we can sharply upper-bound the terms $T_1 := \tilde{\y}_k^\top \widetilde{\C}_{k,1} \tilde{\y}_k$ and $T_2 := \tilde{\y}_k^\top \widetilde{\C}_{k,2} \tilde{\y}_k$, which we do below beginning with the second term $T_2$.}

{\paragraph{Controlling the term $T_2 := \tilde{\y}_k^\top \widetilde{\C}_{k,2} \tilde{\y}_k$}}

{We apply the algebraic identity $(\x - \y)^\top \M (\x - \y) \leq 2 (\x^\top \M \x + \y^\top \M \y)$ $k-1$ times to get
\begin{align*}
\tilde{\y}_k^\top \widetilde{\C}_{k,2} \tilde{\y}_k \leq 2^{k-1} \left(\y_1^\top \widetilde{\C}_{k,2} \y_1 + \sum_{\ell = 1}^k (\SU^{(\ell)})_{1,2}^2 \cdot \z_\ell^\top \widetilde{\C}_{k,2} \z_\ell \right).
\end{align*}
We use the following technical lemma, which is proved in Appendix~\ref{sec:technicallemmas_cn}.
\begin{lemma}\label{lem:su_ck}
For $\ell = 2,\ldots,k$ we define
\begin{align*}
\SU^{(\ell)}_{1,2} := \frac{\lambda_H \cdot \z_\ell^\top \Aneg{\ell} \tilde{\y}_{\ell-1}}{1 + \lambda_H \cdot \z_\ell^\top \Aneg{\ell} \z_\ell}\,, 
\end{align*}
where $\tilde{\y}_{\ell} := \tilde{\y}_{\ell - 1} - \SU^{(\ell)}_{1,2} \z_\ell$ and 
$\SU^{(1)}_{1,2}:=\SU_{1,2}=\frac{\lambda_H \cdot \z_1^\top \A_{-1}^{-1} \y_1 }{1 + \lambda_H \cdot \z_1^\top \A_{-1}^{-1} \z_1}.$ Then, for all $\ell =2,\ldots,k$ we have 
\begin{align}
\abs{\SU^{(\ell)}_{1,2}} \leq \frac{C_k}{n^{1/4}} < C_k \,.
\end{align}
with probability at least $1-Ck^2e^{-\sqrt{n}}$.
\end{lemma}
Applying Lemma~\ref{lem:su_ck} thus gives
\begin{align*}
\tilde{\y}_k^\top \widetilde{\C}_{k,2} \tilde{\y}_k \leq C_k \left(\y_1^\top \widetilde{\C}_{k,2} \y_1 + \sum_{\ell = 1}^k \z_\ell^\top \widetilde{\C}_{k,2} \z_\ell \right).
\end{align*}
Now, we note that the matrix $\widetilde{\C}_{k,2}$ only depends on $\{\z_j\}_{j=k+1}^p$ and is therefore independent of $\y_1$ as well as $\{\z_\ell\}_{\ell=1}^k$.
Recall that each of $\{\z_\ell\}_{\ell = 1}^k$ is isotropic Gaussian and that $\y_1$ is sub-Gaussian with uncorrelated components, i.e. $y_{1,i}^2 \leq 1$ and $\E[y_{1,i} y_{c,i'}] = 0$ for $i \neq i' \in [n]$.
Therefore, we can apply the Hanson-Wright inequality~\cite{rudelson2013hanson} with the parameters stated in~\cite[Eq (44)]{muthukumar2021classification} to get
\begin{align*}
\tilde{\y}_k^\top \widetilde{\C}_{k,2} \tilde{\y}_k \leq C_k \cdot \text{Tr}(\widetilde{\C}_{k,2}) \cdot \log n 
\end{align*}
with probability at least $\left(1 - \frac{1}{n}\right)$.
We denote by $\{\tilde{\lambda}_j\}_{j=1}^{p-k}$ the diagonal entries of the leave-$k$-out covariance matrix $\Sigmab_{-1:k}$.
A direct application of~\cite[Lemma 30]{muthukumar2021classification} (which, in turn, is taken from~\cite[Lemma 11]{bartlett2020benign}) gives
\begin{align*}
\text{Tr}(\widetilde{\C}_{k,2}) &\leq C \left(\frac{s - k}{n} + n \cdot \frac{\sum_{j = s - k + 1}^{p - k} \tilde{\lambda}_j^2 }{(\sum_{j > s - k + 1}^{p-k} \tilde{\lambda_j})^2}\right).
\end{align*}
Then, substituting the bilevel ensemble parameterization in a manner identical to the proof of~\cite[Lemma 35]{muthukumar2021classification} gives
\begin{align}\label{eq:T2upperbound}
T_2 \leq C_k \cdot n^{-\min(m-1,2q+r-1)} \cdot \log n.
\end{align}
for $q > 1 -r$.}

{\paragraph{Controlling the term $T_1 := \tilde{\y}_k^\top \widetilde{\C}_{k,1} \tilde{\y}_k$}
Unfortunately, this term is more delicate than $T_2$, because the matrix $\widetilde{\C}_{k,1}$ intricately depends on $\z_2,\ldots,\z_k$.
However, we can unravel the expression back to get
\begin{align*}
\tilde{\y}_k^\top \widetilde{\C}_{k,1} \tilde{\y}_k &= \sum_{j=2}^k \lambda^2_j (\z_j^\top \Aneg{k} \tilde{\y}_k)^2 \\
&\leq \sum_{j=2}^k \lambda^2_j \left(|\z_j^\top \Aneg{k} \y_1| + \sum_{\ell = 1}^k |\SU^{(\ell)}_{1,2}| |\z_j^\top \Aneg{k} \z_\ell| \right)^2 \\
&\leq C_k \sum_{j=2}^k \lambda^2_j \left(|\z_j^\top \Aneg{k} \y_1| + |\z_j^\top \Aneg{k} \z_1| + \frac{1}{n^{1/4}} \sum_{\ell = 2}^k |\z_j^\top \Aneg{k} \z_\ell| \right)^2
\end{align*}
where the last inequality uses Lemma~\ref{lem:su_ck} and Lemma~\ref{lem:sumulticlass}.}

{The key observation is that there are only $O(k^2)$ such terms that we need to control.
Noting that $\Aneg{k}$ is independent of each of $\y_1$ and $\{\z_j\}_{j = 1}^k$, we now use the Hanson-Wright inequality to control each of the terms $\{\z_j^\top \Aneg{k} \y_1\}_{j=2}^k$ and $\{\z_j^\top \Aneg{k} \z_\ell\}_{j \neq \ell}$.
Note that for $j = 2,\ldots,k$, we have $\E[\y_1 \z_j^\top] = \mathbf{0}$ from Lemma~\ref{lem:expectationmlm} (a base technical lemma, proved in Appendix~\ref{sec:mlmlemmasbasic}) and $\E[\z_\ell \z_j^\top] = \delta_{\ell,j} \mathbf{I}_p$. 
We apply this inequality (as stated in~\cite[Lemma 26]{muthukumar2021classification}) for the choice $t = \|\Aneg{k}\|_2 \cdot \sqrt{n \log n}$ to get
\begin{align*}
\z_j^\top \Aneg{k} \y_1 &\leq \|\Aneg{k}\|_2 \cdot \sqrt{n \log n} \text{ and } \\
\z_j^\top \Aneg{k} \z_\ell &\leq \delta_{\ell,j} \cdot \text{tr}(\Aneg{k}) + \|\Aneg{k}\|_2 \cdot \sqrt{n \log n},
\end{align*}
each with probability at least $1 - \frac{1}{n^c}$ for some $c > 0$.
Next, applying Lemma~\ref{lem:trace and operator norm} (a base technical lemma, proved in Appendix~\ref{sec:mlmlemmasbasic}) gives us $\|\Aneg{k}\|_2 \leq \frac{C}{\lambda_L r_s(\Sigmab)}$ with probability at least $1 - 2e^{-\frac{n}{c}}$ over the random matrix $\Aneg{k}$.
We further recall that $\lambda_L r_s(\Sigmab) = n^m - n^r \geq c n^m$ for large enough $n$, and that $\lambda_j = \lambda_H=n^{m-q-r}$ for $j=2,\ldots,k$ (because under our assumptions $s>k$).
Excluding the terms $\{\z_j^\top \Aneg{k} \z_j\}_{j=2}^k$ for now, each of the above contributes the following to $T_1$:
\begin{align*}
\frac{C_k \cdot \lambda_H^2 \cdot n \log n}{\lambda_L^2 r_s^2(\Sigmab_{-1:k})} &\leq C_k \cdot n^{1 - 2q - 2r} \cdot \log n =: C_k \cdot n^{- (2q + 2r - 1)} \cdot \log n < n^{-(2q+r-1)},
\end{align*}
which is identical to the scaling for $T_2$.
We finally return to controlling the terms $\{\z_j^\top \Aneg{k} \z_j\}_{j=2}^k$.
Note that each of these terms is pre-multiplied by the factor $\frac{1}{n^{1/4}}$
Applying Lemma~\ref{lem:trace and operator norm} again gives $\text{tr}(\Aneg{k}) \leq \frac{Cn}{\lambda_L r_s(\Sigmab)}$ with probability at least $1 - 2e^{-\frac{n}{c}}$.
The contribution from each of these terms, thus, becomes
\begin{align*}
\frac{1}{n^{1/2}} \frac{C_k\lambda_H^2 n^2}{\lambda_L^2 r_s^2(\Sigmab_{-1:k})} + \frac{1}{n^{1/2}} \frac{C_k \cdot \lambda_H^2 \cdot n \log n}{\lambda_L^2 r_s^2(\Sigmab_{-1:k})} &\leq \frac{1}{n^{1/2}} \frac{C_k\lambda_H^2 n^2}{\lambda_L^2 r_s^2(\Sigmab_{-1:k})} \\
&\leq C_k \cdot n^{2 - 2q - 2r - 1/2} = C_k \cdot n^{-(2q + 2r - 3/2)}.
\end{align*}
Thus, we get
\begin{align}\label{eq:T1upperbound}
T_1 &\leq C_k \cdot n^{-\min(2q +r-1,2q+2r-3/2)} \cdot \log n.
\end{align}}
{\paragraph{Putting it all together}
Recall that $\CN_{1,2}^2 := T_1 + T_2$.
Therefore, putting together the upper bounds from Equations~\eqref{eq:T1upperbound} and~\eqref{eq:T2upperbound} gives us the following statement:
\begin{align*}
    \CN_{1,2}(n) \leq C_k \sqrt{\log n} \cdot n^{-\frac{\min\{m-1,2q+r-1,2q + 2r - 3/2\}}{2}}
\end{align*}
for $q > 1-r$ and a universal constant $C_k$ that depends only on $k$.
This is the desired statement.
Further, a union bound over each of the probabilistic inequalities implies that the statement holds with probability at least $1 - \frac{C_k}{n^c}$ for some $0 < c \leq 1$.
This completes the proof of Lemma~\ref{lem:cnmulticlass}.
\qed}

\section{Supporting technical lemmas for MLM error analysis}

{In this section, we prove the supporting technical lemmas for the MLM error analysis.}

\subsection{Basic lemmas about the MLM}\label{sec:mlmlemmasbasic}

{We begin by collecting basic lemmas about the MLM that form building blocks to prove the rest of the technical lemmas.
The first such basic lemma controls the expectation of certain product forms involving the difference label vector $\y_1$ and individual feature vectors $\{\z_\ell\}_{\ell = 1}^p$.}

{\begin{lemma}\label{lem:expectationmlm}
Let $\y_1 = \boldv_1 - \boldv_2$ be the difference label vector for $c_1 = 1,c_2 = 2$ and $\{\z_\ell\}_{\ell = 1}^p$ be defined as in the proof of Lemma~\ref{lem:sumulticlass}.
Then, we have for every $i \in [n]$,
\begin{align*}
    c_{k,\ell} := \E[y_{1,i} z_{\ell,i}] = c_k \delta_{1,\ell},
\end{align*}
where $c_k > 0$ is a universal positive constant that depends only on $k$.
\end{lemma}}

{\color{black}
\begin{proof}
To prove this lemma we  utilize the orthogonality and equal-weight Assumption~\ref{as:mlmfull} as well as the details of the MLM.
We denote $\uv_j := \X^\top \ehat_j$.
It is easy to see from the definition of the changed basis $\{\z_j\}_{j=1}^p$ that $\z_j = \uv_j$ for all $j \geq 3$, and $\z_1 = \frac{1}{\sqrt{2}}(\uv_1 - \uv_2)$ and $\z_2 = \frac{1}{\sqrt{2}}(\uv_1 + \uv_2)$.
We now use the simplex-ETF-type structure of $\boldv_1,\boldv_2$ together with the structure in the MLM model to get 
\begin{align*}
\Pro\left(y_{1,i} = 1 \Big{|} \{u_{1,i},u_{2,i},\ldots,u_{k,i}\}\right) = \frac{\exp(u_{1,i})}{\sum_{c' \in [k]} \exp(u_{c',i})} \text{ and } \\
\Pro\left(y_{1,i} = -1 \Big{|} \{u_{1,i},u_{2,i},\ldots,u_{k,i}\right) = \frac{\exp(u_{2,i})}{\sum_{c' \in [k]} \exp(u_{j_{c',i}})},
\end{align*}
and $y_{1,i} = 0$ otherwise.
Note here that $\{u_{c,i}\}_{c \in [k]}$ are i.i.d. standard Gaussian.
We start with the case $\ell = 1$.
Here, we get
\begin{align*}
    \E[z_{1,i} y_{1,i}] &= \frac{1}{\sqrt{2}} \cdot \E\left[(u_{1,i} - u_{2,i}) \cdot \frac{\exp(u_{1,i})}{\sum_{c' \in [k]} \exp(u_{c',i})} - (u_{1,i} - u_{2,i}) \cdot  \frac{\exp(u_{2,i})}{\sum_{c' \in [k]} \exp(u_{j_{c',i}})} \right] \\
    &= \frac{1}{\sqrt{2}} \cdot \E\left[(U_1 - U_2) \cdot \frac{(e^{U_1} - e^{U_2})}{\sum_{c = 1}^k e^{U_c}}\right] \\
    &= \sqrt{2} \cdot \E\left[U_1 \cdot \frac{(e^{U_1} - e^{U_2})}{\sum_{c = 1}^k e^{U_c}}\right],
\end{align*}
where the last step follows by symmetry.
Note that we have overloaded notation and written $U_c := u_{c,i}$ for each  $c \in [k]$.
We also write $\mathbf{U} := \begin{bmatrix} U_1 & \ldots & U_k \end{bmatrix}$ as shorthand.
Because $U_c \text{ i.i.d. } \sim \mathcal{N}(0,1)$, we have
\begin{align*}
    c_k &= \E\left[U_1 \cdot g(\mathbf{U})\right]
\end{align*}
where $g(\mathbf{U}) := \frac{e^{U_1} - e^{U_2}}{\sum_{c=1}^k e^{U_c}}$.
Then, applying Stein's lemma, we get
\begin{align*}
    \E\left[U_1 \cdot g(\mathbf{U})\right] &= \sum_{i=1}^n \E[U_1 U_i] \cdot \E\left[\frac{\partial g}{\partial U_i} \right] \\
    &= \E\left[\frac{\partial g}{\partial U_1} \right] \\
    &= \E\left[\frac{\sum_{i \geq 3} e^{U_1 + U_i} + 2e^{U_1 + U_2}}{(\sum_{i=1}^k e^{U_i})^2}\right] =: c_k > 0 .
    \end{align*}
The last step follows because the argument inside the expectation can never take value $0$ and is always non-negative. Thus, we have proved that $\E[y_{1,i} z_{1,i}] = c_k > 0$.

We now prove that $\E[y_{1,i} z_{\ell,i}] = 0$ for $\ell \neq 1$.
First, for $\ell \geq 3$, we have
\begin{align*}
\E[y_{1,i} z_{\ell,i}] = \frac{1}{\sqrt{2}} \cdot \E\left[u_{\ell,i} \cdot \frac{\exp(u_{1,i})}{\sum_{c' \in [k]} \exp(u_{c',i})} - u_{\ell,i} \cdot  \frac{\exp(u_{2,i})}{\sum_{c' \in [k]} \exp(u_{{c',i}})} \right] = 0
\end{align*}
by symmetry.
Next, for $\ell = 2$, we have
\begin{align*}
 \E[z_{2,i} y_{1,i}] &= \frac{1}{\sqrt{2}} \cdot \E\left[(u_{1,i} + u_{2,i}) \cdot \frac{\exp(u_{1,i})}{\sum_{c' \in [k]} \exp(u_{c',i})} - (u_{1,i} + u_{2,i}) \cdot  \frac{\exp(u_{2,i})}{\sum_{c' \in [k]} \exp(u_{j_{c',i}})} \right] \\
    &= \frac{1}{\sqrt{2}} \cdot \E\left[(U_1 + U_2) \cdot \frac{(e^{U_1} - e^{U_2})}{\sum_{c = 1}^k e^{U_c}}\right] \\
    &= \E\left[U_1 \cdot \frac{(e^{U_1} - e^{U_2})}{\sum_{c = 1}^k e^{U_c}}\right] - \E\left[U_2 \cdot \frac{(e^{U_2} - e^{U_1})}{\sum_{c = 1}^k e^{U_c}}\right] = 0,
\end{align*}
where the last equality follows by symmetry.
This completes the proof.
\end{proof}
}

{The next basic lemma controls the trace and operator norm of leave-$\ell$-out Gram matrices and leverages ideas first appearing in~\cite{bartlett2020benign}.
\begin{lemma}\label{lem:trace and operator norm}
For all $\ell\in[k]$ and sufficiently large $n$, the following inequalities are true for universal constants $c,C>0$, each with probability at least $1 - 2e^{-\frac{n}{c}}$:
\[
\|\Aneg{\ell}\|_2  \leq \frac{c}{\lambda_L r_s(\Sigmab)}\,
\]
and
\[
\frac{cn}{{\lambda}_{L} r_s(\Sigmab)}\geq \mathsf{tr}(\A_{-1:\ell}^{-1}) \geq \frac{(n-s)}{c{\lambda}_{L} r_s(\Sigmab)}\,.
\]
In particular, these imply
\begin{align}\label{eq:operator over trace}
\frac{\|\A_{-1:\ell}^{-1} \|_2 \cdot n^{3/4}}{\mathsf{tr}(\A_{1:\ell}^{-1})} \leq \frac{C_2}{n^{1/4}}\,.
\end{align}
\end{lemma}}

{\color{black}
\begin{proof}
First, we upper bound the operator norm term. Observe that
\begin{align*}
\|\A_{-1:\ell}^{-1} \|_{2} = \mu_1(\A_{-1:\ell}^{-1}) = \frac{1}{\mu_n(\A_{-1:\ell})} \leq \frac{1}{\mu_n(\A_{-1:s})} \leq \frac{c}{\lambda_{s+1} r_s(\Sigmab)},
\end{align*}
where the last inequality uses \cite[Lemma 5]{bartlett2020benign}.
The second-to-last inequality holds for any choice of $s > k \geq \ell$.

Next, we prove the bounds for the trace term. We lower bound the trace term as
\begin{align*}
\mathsf{tr}(\A_{-1:\ell}^{-1}) &= \sum_{j=1}^n \frac{1}{\mu_j(\A_{-1:\ell})} \geq \sum_{j=s}^n \frac{1}{\mu_j(\A_{-1:\ell})} \geq \frac{(n-s)}{\mu_{s+1}(\A_{-1:\ell})}.
\end{align*}
Thus, it remains to upper bound $\mu_{s+1}(\A_{-1:\ell})$.
Let $\{\widetilde{\lambda}_{j}\}_{j=1}^{p - \ell}$ denote the re-indexed eigenvalues of $\Sigmab_{-1:\ell}$.
Then, Equation (38) from Lemma 25 in~\cite{muthukumar2021classification} directly yields
\begin{align*}
\mu_{s+1}(\A_{-1:\ell}) \leq C \widetilde{\lambda}_{s+1} r_s(\Sigmab_{-1:\ell}) 
\end{align*}
provided that $r_s(\Sigmab_{-1:\ell}) \geq bn$.
(Note that, under the bilevel ensemble, we have $r_s(\Sigmab_{-1:\ell}) = \frac{n^m - \ell - s}{\lambda_L} \geq c n^m > b n$ for large enough $n$.)
Similarly, we upper bound the trace term as
\begin{align*}
\mathsf{tr}(\A_{-1:\ell}^{-1}) \leq \frac{n}{\mu_n(\A_{-1:\ell})} \leq \frac{cn}{\widetilde{\lambda}_{s+1} r_s(\Sigmab_{-1:\ell})}
\end{align*}
where we now used Equation (37) from Lemma 25 in~\cite{muthukumar2021classification}.
To complete the proof for the trace term, we show that $\wt \lambda_{s+1} r_s(\Sigmab_{-1:\ell}) \asymp \lambda_L r_s(\Sigmab)$.
First, we note that $\widetilde{\lambda}_{s+1} = \lambda_{s+1} = \lambda_L$ under the bilevel ensemble. 
Also recall that $\ell\leq k < s$; hence we have $r_s(\Sigmab_{-1:\ell}) = p - s - \ell $ and $r_s(\Sigmab) = p - s$, which implies that $r_s(\Sigmab_{-1:\ell}) \asymp r_s(\Sigmab)$  for large enough $n$. Putting all of this together yields the desired inequalities about the trace.

Finally, we prove Equation~\eqref{eq:operator over trace}. 
This follows because, as already shown, we have 
\begin{align*}
\|\A_{-1:\ell}^{-1} \|_2 \cdot n^{3/4} \leq \frac{c\,n^{3/4}}{\lambda_L r_s(\Sigmab)} 
\mathsf{tr}(\A_{-1:\ell}^{-1}) &\geq \frac{n - s}{c\,\lambda_L r_s(\Sigmab)}.
\end{align*}
thereby giving us
\begin{align}\label{eq:ratioupperbound}
\frac{\|\A_{-1:\ell}^{-1} \|_2 \cdot n^{3/4}}{\mathsf{tr}(\A_{-1:\ell}^{-1})} \leq \frac{n^{3/4}}{n - s} \leq \frac{2}{n^{1/4}}\,,
\end{align}
where the last inequality follows for large enough $n$ because $s = n^r$ and we have assumed $r < 1$.
This completes the proof of the lemma.
\end{proof}
}

{The following basic lemma relates the ratios of quadratic forms that are ``similar" in their probability distribution.
\begin{lemma}\label{lem:easy}
We have
\[
 \frac{\z_\ell^T\Aneg{k}\z_\ell}{\z_{\ell'}^T\Aneg{k}\z_{\ell'}}\leq C \qquad \text{and} \qquad
 \frac{\z_\ell^T\Aneg{\ell}\z_\ell}{\z_{\ell'}^T\Aneg{\ell'}\z_{\ell'}}\leq C
\]
for all $\ell,\ell'\in[k]$ with probability at least $1 - c\,k\,e^{-\sqrt{n}}$.
\end{lemma}} 

{\color{black}
\begin{proof}
Recall that for any $\ell,\ell'$, we have that $\z_\ell,\z_{\ell'}$ are both independent of $\Aneg{k}$.
Therefore, we have 
\begin{align*}
\z_{\ell'}^T\Aneg{k}\z_{\ell'} &\geq \mathsf{tr}(\A_{-1:k}^{-1})-  c_1 \|\A_{-1:k}^{-1}\|_{2} \cdot n^{3/4}  \text{ and } \\
\z_\ell^T\Aneg{k}\z_\ell &\leq \mathsf{tr}(\A_{-1:k}^{-1})+  c_1 \|\A_{-1:k}^{-1}\|_{2} \cdot n^{3/4} \nonumber
\end{align*}
with probability at least $1 - ke^{-\sqrt{n}}$.
Putting these together gives
\begin{align*}
\frac{\z_\ell^T\Aneg{k}\z_\ell}{\z_{\ell'}^T\Aneg{k}\z_{\ell'}} &\leq \frac{\mathsf{tr}(\A_{-1:k}^{-1})+  c_1 \|\A_{-1:k}^{-1}\|_{2} \cdot n^{3/4}}{\mathsf{tr}(\A_{-1:k}^{-1})-  c_1 \|\A_{-1:k}^{-1}\|_{2} \cdot n^{3/4}} \leq \frac{1 + \frac{c_1}{n^{1/4}}}{1 - \frac{c_1}{n^{1/4}}} \leq 2\,,
\end{align*}
for large enough $n$, where in the above we used Eq. \eqref{eq:operator over trace}.

To prove the second inequality, recall that for any $\ell,\ell' \in [k]$, $\z_\ell$ is independent of $\Aneg{\ell}$ and $\z_{\ell'}$ is independent of $\Aneg{\ell'}$.
Consequently, we have 
\begin{align*}
\z_{\ell'}^T\Aneg{\ell'}\z_{\ell'} &\geq \mathsf{tr}(\Aneg{\ell'})-  c_1 \|\Aneg{\ell'}\|_{2} \cdot n^{3/4}  \text{ and }  \\
\z_\ell^T\Aneg{\ell}\z_\ell &\leq \mathsf{tr}(\Aneg{\ell})+  c_1 \|\Aneg{\ell}\|_{2} \cdot n^{3/4} 
\end{align*}
with probability at least $1 - ke^{-\sqrt{n}}$.
Putting these together gives
\begin{align*}
\frac{\z_\ell^T\Aneg{\ell}\z_\ell}{\z_{\ell'}^T\Aneg{\ell'}\z_{\ell'}} &\leq \frac{\mathsf{tr}(\Aneg{\ell})+  c_1 \|\Aneg{\ell}\|_{2} \cdot n^{3/4}}{\mathsf{tr}(\Aneg{\ell'})-  c_1 \|\Aneg{\ell'}\|_{2} \cdot n^{3/4}} \leq \frac{1 + \frac{c_1}{n^{1/4}}}{1 - \frac{c_1}{n^{1/4}}} \leq 2\,,
\end{align*}
for large enough $n$, where we again used Eq. \eqref{eq:operator over trace}. 
This completes the proof of the lemma.
\end{proof}}

{Finally, the following basic lemma controls the ratio of traces of the leave-$\ell$-out Gram matrix and the leave-$k$-out Gram matrix for any $\ell \in [k]$.
\begin{lemma}\label{lem:trace ratio}
For all $\ell\in[k]$ and sufficiently large $n$, it holds for universal constant $C$ that
\[
\frac{\mathsf{tr}(\A_{-1:\ell}^{-1})}{\mathsf{tr}(\A_{1:k}^{-1})} \geq \left(1-\frac{C}{n}\right)^{k-\ell} \geq \left(1-\frac{C}{n}\right)^k
\]
\end{lemma}
\begin{proof}
Fix any $\ell\in[k]$. 
We first lower-bound the ratio $\frac{\mathsf{tr}({\Aneg{\ell}})}
    {\mathsf{tr}({\Aneg{\ell+1}})}$, and then apply the argument recursively.
Since $\A_{-1:\ell)}=\A_{-1:\ell+1}+\la_H\z_{\ell+1}\z_{\ell+1}^T$, we can apply the matrix inversion lemma to get
\[
\Aneg{\ell} = \Aneg{\ell+1}-\frac{\la_H\Aneg{\ell+1}\z_{\ell+1}\z_{\ell+1}^T\Aneg{\ell+1}}{1+\la_H\z_{\ell+1}^T\Aneg{\ell+1}\z_{\ell+1}}\,.
\]
Hence, we have
\begin{align*}
\mathsf{tr}({\Aneg{\ell}}) &= \mathsf{tr}(\Aneg{\ell+1})-\frac{\la_H\,\mathsf{tr}(\Aneg{\ell+1}\z_{\ell+1}\z_{\ell+1}^T\Aneg{\ell+1})}{1+\la_H\z_{\ell+1}^T\Aneg{\ell+1}\z_{\ell+1}} = \mathsf{tr}(\Aneg{\ell+1})-\frac{\la_H\,\z_{\ell+1}^T\A_{-1:\ell+1}^{-2}\z_{\ell+1}}{1+\la_H\z_{\ell+1}^T\Aneg{\ell+1}\z_{\ell+1}}\,
\\
&\geq \mathsf{tr}(\Aneg{\ell+1})-\|\Aneg{\ell+1}\|_2\,\cdot\frac{\la_H\,\z_{\ell+1}^T\A_{-1:\ell+1}^{-1}\z_{\ell+1}}{1+\la_H\z_{\ell+1}^T\Aneg{\ell+1}\z_{\ell+1}}
\\
&\geq \mathsf{tr}(\Aneg{\ell+1})-\|\Aneg{\ell+1}\|_2
\end{align*}
(The second inequality follows because for any positive semidefinite matrix $\M$ with eigendecomposition $\M=\Ub\boldsymbol{\Lambda}\Ub^T=\sum_{i}\lambda_i\ub_i\ub_i^T$ we have
\[
\x^T\M^2\x = (\Ub\x)^T\boldsymbol{\Lambda}^2(\Ub\x)=\sum_{i}\lambda_i^2(\ub_i^T\x)^2 \leq \left(\max_{i}\lambda_i\right)\sum_{i}\lambda_i(\ub_i^T\x)^2 = \|\M\|_2\cdot\x^T\M\x\,
\]
for any vector $\x$.)
Continuing from the penultimate display, we obtain
\begin{align*}
    \frac{\mathsf{tr}({\Aneg{\ell}})}
    {\mathsf{tr}({\Aneg{\ell+1}})}\geq 1- \frac{\|\Aneg{\ell+1}\|_2}{{\mathsf{tr}({\Aneg{\ell+1}})}}
    \geq 1 - \frac{C}{n}
\end{align*}
where the last inequality applies Eq. \eqref{eq:operator over trace}. 
Recursively applying the above for $\ell+1,\ldots,k$ completes the proof of the lemma.
\end{proof}}

\subsection{Survival Term}\label{sec:proofs of technical survival}

{In this section we provide the proofs of Lemmas~\ref{lem:Qk},~\ref{lem:ratio bound},~\ref{lem:Rellell bound} and~\ref{lem:Qtilde main bound}.}

\subsubsection{Proof of Lemma \ref{lem:Qk}}\label{sec:proof of Lemma 4}

{First, we note that $\y_1$ remains independent of $\A_{-1:k}$ as $\y_1$ only depends on $\z_1,\ldots,\z_k$ (which are in turn mutually independent of $\z_{k+1},\ldots,\z_p$ which comprise of $\A_{-1:k}$).
Therefore, we can directly apply the Hanson-Wright inequality to get 
\begin{subequations}\label{eq:qkbounds}
\begin{align}
Q_k &\geq c_k \cdot \sqrt{\frac{2}{\pi}} \mathsf{tr}(\A_{-1:k}^{-1})- 2 c_1 \|\A_{-1:k}^{-1}\|_{2} \cdot n^{3/4}  \text{ and } \label{eq:qklowerbound} \\
Q_k &\leq c_k \cdot \sqrt{\frac{2}{\pi}} \mathsf{tr}(\A_{-1:k}^{-1})+ 2 c_1 \|\A_{-1:k}^{-1}\|_{2} \cdot n^{3/4}.\label{eq:qkupperbound}
\end{align}
\end{subequations}
with probability at least $1-2e^{-\sqrt{n}}$.
Combining the above with Lemma \ref{lem:trace and operator norm} applied for $\ell=k$ directly gives the desired statement of Equation~\eqref{eq:Qkexpression}, completing the proof of the lemma.
\qed}

\subsubsection{Proof of Lemma \ref{lem:ratio bound}}

{Note that the quadratic-like terms in both the LHS and RHS of \eqref{eq:delta_smallterm} are well-suited for an application of the Hanson-Wright inequality, since $\z_1,\z_\ell$ are independent of $\Aneg{\ell}$ for all $\ell=2,\ldots,k$. This is formalized in the lemma below. Specifically, the desired statement to prove Lemma~\ref{lem:ratio bound}, i.e.~Eq. \eqref{eq:delta_smallterm} for $\ell=1,\ldots,k$, follows directly by applying Lemma \ref{lem:zz terms} below for the special case $\ell'=j=1$. 
(The slightly more general statement of the lemma below will prove useful for proving subsequent lemmas.)}

{\begin{lemma}\label{lem:zz terms} 
For large enough $n$, for all $\ell \in [k]$ and $\ell' < \ell,j \leq \ell$ we have
\[
\abs{\z_\ell^T\Aneg{\ell}\z_{\ell'}} \leq \frac{C}{n^{1/4}} \,\z_j^T\Aneg{\ell}\z_j\,.
\]
with probability at least $1-Ck^3e^{-\sqrt{n}}$.
\end{lemma}
\begin{proof}
The key observation is that for all $\ell'<\ell,j\leq\ell$, we have that $\z_\ell,\z_{\ell'},$ and $\z_{j}$ are all mutually independent of $\Aneg{\ell}$.
Therefore, applying the Hanson-Wright inequality in the form stated by~\cite{muthukumar2021classification} gives us the following: for all $\ell\in[k],\ell'<\ell,j\leq\ell$, we have
\begin{align*}
  \abs{\z_{\ell}^T\Aneg{\ell}\z_{\ell'}} &\leq  2 c_1 \|\A_{-1:\ell}^{-1}\|_{2} \cdot n^{3/4} \qquad \text{ and } \label{eq:qklowerbound2} \\
 \z_{j}^T\Aneg{\ell}\z_j&\geq \mathsf{tr}(\A_{-1:\ell}^{-1})-  c_1 \|\A_{-1:\ell}^{-1}\|_{2} \cdot n^{3/4}\,,
\end{align*}
with probability at least $1 - Ck^3e^{-\sqrt{n}}$.
Above, we used the fact that $\z_\ell,\z_{\ell'}$ are independent. Therefore, to prove the desired it suffices to show that 
\begin{align}
{\mathsf{tr}(\A_{1:\ell}^{-1})} \geq \frac{n^{1/4}}{C_2}\|\A_{-1:\ell}^{-1} \|_2 \cdot n^{3/4}\,.
\end{align}
This follows immediately from Eq. \eqref{eq:operator over trace} in Lemma \ref{lem:trace and operator norm}.
This completes the proof.
\end{proof}}

\subsubsection{Proof of Lemma \ref{lem:Rellell bound}}


{Recall that $R_{\ell,\ell}:=\y_1^T\Aneg{\ell}\z_\ell.$ Bounding this term is  difficult because $\y_1$ depends on $\Aneg{\ell}$ for any $\ell<k$. The only ``easy'' case is for $\ell=k$ for which $\y_1$ is independent of $\Aneg{k}$. 
As a starting point, we exploit this independence to control the terms $R_{k,\ell}=\y_1^T\Aneg{k}\z_{\ell}$ for all $\ell\in [k]$, in the lemma below.}

{\begin{lemma}\label{lem:Rkl}
We have, for large enough $n$,
\[
 \abs{R_{k,\ell}}=\abs{\y_1^T\Aneg{k}\z_\ell} \leq \frac{C_k}{n^{1/4}}\, \z_k^T\Aneg{k}\z_k\ \text{ for any } \ell = 2,\ldots,k\,,
\]
and
\[
 \abs{R_{k,1}}=\abs{\y_1^T\Aneg{k}\z_1} \leq C_k \, \z_k^T\Aneg{k}\z_k\,
\]
with probability at least $1 - Cke^{-\sqrt{n}}$.
\end{lemma}}

{\begin{proof}
Recall that all of $\y_1,\z_\ell,\z_k$ are independent of $\Aneg{k}$.
Therefore, we can apply the Hanson-Wright inequality to the RHS of the above, as well as $R_{k,\ell}$ (using the parallelogram law in the latter case) to get
\begin{align*}
 \mathsf{tr}(\A_{-1:k}^{-1})- c_1 \|\A_{-1:k}^{-1}\|_{2} \cdot n^{3/4}\leq \z_k^\top \Aneg{k} \z_k &\leq \mathsf{tr}(\A_{-1:k}^{-1})+ c_1 \|\A_{-1:k}^{-1}\|_{2} \cdot n^{3/4} \\
c_{k,\ell} \cdot \mathsf{tr}(\A_{-1:k}^{-1}) - 2 c_1 \|\A_{-1:k}^{-1}\|_{2} \cdot n^{3/4}\leq R_{k,\ell} &\leq c_{k,\ell} \cdot \mathsf{tr}(\A_{-1:k}^{-1}) + 2 c_1 \|\A_{-1:k}^{-1}\|_{2} \cdot n^{3/4},
\end{align*}
with probability at least $1 - Cke^{-\sqrt{n}}$.
Above, we define $c_{k,\ell} := \E[y_{1,i} z_{\ell,i}]$ (identically for any $i \in [n]$).
There are then two cases:
\begin{enumerate}
\item $\ell = 1$: In this case we get $c_{k,\ell} =: c_k > 0$ from Lemma~\ref{lem:expectationmlm}.
Plugging this above gives
\begin{align*}
\frac{R_{k,1}}{\z_k^T\Aneg{k}\z_k} &\leq \frac{c_k \cdot \mathsf{tr}(\A_{-1:k}^{-1}) + 2 c_1 \|\A_{-1:k}^{-1}\|_{2} \cdot n^{3/4}}{\mathsf{tr}(\A_{-1:k}^{-1})- c_1 \|\A_{-1:k}^{-1}\|_{2} \cdot n^{3/4}} \\
&\leq c_k \frac{1 + \frac{c_1}{n^{1/4}}}{1 - \frac{c_1}{n^{1/4}}} \leq 2 c_k =: C_k,
\end{align*}
where {the second inequality follows from Eq. \eqref{eq:operator over trace} in Lemma \ref{lem:trace and operator norm}} and the last inequality follows for large enough $n$.
Similarly, we have
\begin{align*}
\frac{R_{k,1}}{\z_k^T\Aneg{k}\z_k} &\geq -\frac{  2 c_1 \|\A_{-1:k}^{-1}\|_{2} \cdot n^{3/4}}{\mathsf{tr}(\A_{-1:k}^{-1}) + c_1 \|\A_{-1:k}^{-1}\|_{2} \cdot n^{3/4}} \\
&= -\frac{  2 c_1 }{\frac{\mathsf{tr}(\A_{-1:k}^{-1})}{\|\A_{-1:k}^{-1}\|_{2} \cdot n^{3/4}} + c_1}
\\
&\geq -\frac{  2 c_1 }{\frac{n^{1/4}}{2} + c_1} \geq - \frac{C}{n^{1/4}},
\end{align*}
where the second-to-last inequality in the above again used Equation~\eqref{eq:operator over trace}.
\item $\ell \neq 1$: In this case we have $c_{k,\ell} = 0$, again from Lemma~\ref{lem:expectationmlm}.
Plugging this above gives
\begin{align*}
\frac{R_{k,\ell}}{\z_k^T\Aneg{k}\z_k} &\leq \frac{2 c_1 \|\A_{-1:k}^{-1}\|_{2} \cdot n^{3/4}}{\mathsf{tr}(\A_{-1:k}^{-1})- c_1 \|\A_{-1:k}^{-1}\|_{2} \cdot n^{3/4}} \\
&\leq \frac{2 c_1}{\frac{\mathsf{tr}(\A_{-1:k}^{-1})}{\|\A_{-1:k}^{-1}\|_{2} \cdot n^{3/4}} + c_1} \leq \frac{2c_1}{\frac{n^{1/4}}{2} + c_1} \leq \frac{C}{n^{1/4}},
\end{align*}
where the last inequality follows for large enough $n$.
Similarly, we have
\begin{align*}
\frac{R_{k,\ell}}{\z_k^T\Aneg{k}\z_k} &\geq -\frac{  2 c_1 \|\A_{-1:k}^{-1}\|_{2} \cdot n^{3/4}}{\mathsf{tr}(\A_{-1:k}^{-1}) + c_1 \|\A_{-1:k}^{-1}\|_{2} \cdot n^{3/4}} \\
&= -\frac{  2 c_1 }{\frac{\mathsf{tr}(\A_{-1:k}^{-1})}{\|\A_{-1:k}^{-1}\|_{2} \cdot n^{3/4}} + c_1}
\\
&\geq -\frac{  2 c_1 }{\frac{n^{1/4}}{2} + c_1} \geq - \frac{C}{n^{1/4}}
\end{align*}
where in the penultimate line we again used Eq. \eqref{eq:operator over trace}.
\end{enumerate}
\end{proof}}

{We now build on the ``base case" Lemma~\ref{lem:Rkl} to control the terms $R_{\ell,\ell}$ in a similar manner to $R_{k,\ell}$.
In particular, we note that the desired Eq.~\eqref{eq:delta_Rterm} to prove Lemma~\ref{lem:Rellell bound} follows by applying the slightly more general lemma below for the case $\ell'=\ell$.
\begin{lemma}\label{lem:Rell general}
For all $\ell \in [k]$ and all $\ell' \leq \ell$, we have
\begin{align}\label{eq:R_rec_2prove}
 \abs{R_{\ell,\ell'}}\leq \begin{cases}
     C_k \cdot\z_\ell^T\Aneg{\ell}\z_\ell\ \text{ if } \ell' = 1 \\
     \frac{C_k}{n^{1/4}}  \cdot\z_\ell^T\Aneg{\ell}\z_\ell\ \text{ if } \ell' \neq 1
 \end{cases}
\end{align}
with probability at least $1-ck^3e^{-\sqrt{n}}$.
\end{lemma}}

{We complete the proof of Lemma~\ref{lem:Rellell bound} by proving Lemma~\ref{lem:Rell general}, which we do in the next section.}
\subsubsection{Proof of Lemma \ref{lem:Rell general}}

{We will use recursion starting from $\ell=k$ to prove the desired statement for all $\ell=k-1,k-2,\ldots,1$. Throughout, we condition on the events of Lemmas \ref{lem:easy}, \ref{lem:zz terms}, and \ref{lem:Rkl}. The key to allow proving the statement recursively is the following relation that follows by the matrix-inversion-lemma and holds for all $\ell'\leq\ell$:
\begin{align}
    R_{\ell,\ell'} &=\y_1^T\A_{-1:\ell}^{-1}\z_{\ell'} =  \y_1^T\left(\A_{-1:\ell+1}+\z_{\ell+1}\z_{\ell+1}^T\right)^{-1}\z_{\ell'}
    \nn
    \\
    &= \y_1^T\Aneg{\ell+1}\z_{\ell'} - \frac{\la_H\left(\y_1^T\Aneg{\ell+1}\z_{\ell'}\right)\left(\z_{\ell+1}^T\Aneg{\ell+1}\z_{\ell'}\right)}{1+\la_H \z_{\ell+1}^T\Aneg{\ell+1}\z_{\ell+1}}
    \nn
    \\
    &= R_{\ell+1,\ell'} - R_{\ell+1,\ell+1}\, \frac{\la_H\left(\z_{\ell+1}^T\Aneg{\ell+1}\z_{\ell'}\right)}{1+\la_H \z_{\ell+1}^T\Aneg{\ell+1}\z_{\ell+1}}\,.\label{eq:R_rec_key}
\end{align}}

{First we prove the statement for the base case $\ell=k-1$. For any $\ell'\leq k-1$, Equation~\eqref{eq:R_rec_key} gives us
\begin{align*}
    R_{k-1,\ell'}&= R_{k,\ell'} - R_{k,k}\, \frac{\la_H\left(\z_{k}^T\Aneg{k}\z_{\ell'}\right)}{1+\la_H \z_{k}^T\Aneg{k}\z_{k}}\,.
\end{align*}
Note that because $\ell' \leq k-1$, we have $\ell' < k$.
Thus, we can apply Lemma~\ref{lem:zz terms} to get
\begin{align*}
    \abs{\eps_{k,\ell'}}:=\frac{\la_H\abs{\left(\z_{k}^T\Aneg{k}\z_{\ell'}\right)}}{1+\la_H \z_{k}^T\Aneg{k}\z_{k}} \leq \frac{C}{n^{1/4}}\, \frac{\la_H{\left(\z_{k}^T\Aneg{k}\z_{k}\right)}}{1+\la_H \z_{k}^T\Aneg{k}\z_{k}} \leq \frac{C}{n^{1/4}}\,.
\end{align*}
Also, by Lemma \ref{lem:Rkl}, we have
\begin{align*}
    \abs{R_{k,1}}&\leq C_k \z_k^T\Aneg{k}\z_k \text{ and } \\
    \abs{R_{k,j}}&\leq \frac{C_k}{n^{1/4}} \cdot \z_k^T\Aneg{k}\z_k \text{ for all } j = 2,\ldots,k.
\end{align*}
Combining the three displays above with the recursion in Equation~\eqref{eq:R_rec_key} yields the following for large enough $n$:
\begin{align*}
    \abs{R_{k-1,1}}&\leq  C_k\left(1+C n^{-1/4}\right)\cdot \z_k^T\Aneg{k}\z_k \leq C_k\cdot \z_k^T\Aneg{k}\z_k \text{ and } \\
    \abs{R_{k-1,\ell'}}&\leq  C_k n^{-1/4}\left(1+C\right)\cdot \z_k^T\Aneg{k}\z_k \leq \frac{C_k}{n^{1/4}}\cdot \z_k^T\Aneg{k}\z_k \text{ for all } \ell' \in \{2,\ldots,k-1\}.
\end{align*}
Lemma \ref{lem:easy} (applied for the pair ($k,k-1$)) then gives us the desired Equation~\eqref{eq:R_rec_2prove} for $\ell=k-1$, i.e. $\abs{R_{k-1,1}}\leq C_k\cdot \z_{k-1}^T\Aneg{k-1}\z_{k-1}$ and $\abs{R_{k-1,\ell'}}\leq C_k n^{-1/4} \cdot \z_{k-1}^T\Aneg{k-1}\z_{k-1}$ for $\ell' = 2,\ldots,k-1$.
The base case is therefore proved.}

{Next, we prove the inductive step. In particular, we assume that Equation~\eqref{eq:R_rec_2prove} is true for $\ell+1$ and use it to prove the claim for $\ell$. Our starting point is, again, the recursive relation in Equation~\eqref{eq:R_rec_key}.
Noting that $\ell' < \ell +1$, we can again apply Lemma \ref{lem:zz terms} to get
\begin{align*} \abs{\eps_{\ell+1,\ell'}}:=\frac{\la_H\abs{\left(\z_{\ell+1}^T\Aneg{\ell+1}\z_{\ell'}\right)}}{1+\la_H \z_{\ell+1}^T\Aneg{\ell+1}\z_{\ell+1}} \leq \frac{C}{n^{1/4}}\, \frac{\la_H{\left(\z_{\ell+1}^T\Aneg{\ell+1}\z_{\ell+1}\right)}}{1+\la_H \z_{\ell+1}^T\Aneg{\ell+1}\z_{\ell+1}} \leq \frac{C}{n^{1/4}}\,.
\end{align*}
Also, by the induction hypothesis, we have
\begin{align*}
    \abs{R_{\ell+1,1}}&\leq C_k\, \z_{\ell+1}^T\Aneg{\ell+1}\z_{\ell+1} \text{ and } \\
     \abs{R_{\ell+1,j}}&\leq \frac{C_k}{n^{1/4}}\, \z_{\ell+1}^T\Aneg{\ell+1}\z_{\ell+1} \text{ for all } j = 2,\ldots,k.
\end{align*}
Note that the sharper second inequality above applies to the term $R_{\ell+1,\ell+1}$ because we always have $\ell+1 \geq 2$.
Combining the two displays above with the recursion in Equation~\eqref{eq:R_rec_key} yields the following for large enough $n$:
\begin{align*}
    \abs{R_{\ell,1}}&\leq  C_k\left(1+C n^{-1/4}\right)\cdot \z_{\ell+1}^T\Aneg{\ell+1}\z_{\ell+1} \leq C_k\cdot \z_{\ell+1}^T\Aneg{\ell+1}\z_{\ell+1} \leq C_k\cdot \z_{\ell}^T\Aneg{\ell}\z_{\ell}
\end{align*}
Similarly, we have for all $\ell' = 2,\ldots,\ell$,
\begin{align*}
    \abs{R_{\ell,\ell'}}&\leq  C_k n^{-1/4}\left(1+C\right)\cdot \z_{\ell+1}^T\Aneg{\ell+1}\z_{\ell+1} \leq C_k n^{-1/4} \cdot \z_{\ell+1}^T\Aneg{\ell+1}\z_{\ell+1} \leq C_k n^{-1/4} \cdot \z_{\ell}^T\Aneg{\ell}\z_{\ell}
\end{align*}
In both cases above, the last inequality follows from Lemma~\ref{lem:easy}.
This completes the proof of Lemma~\ref{lem:Rell general}, and therefore the proof of Lemma~\ref{lem:Rellell bound}.
\qed}

\subsubsection{Proof of Lemma \ref{lem:Qtilde main bound}}

{Recall the definitions 
\[
Q_k :=\z_1^T\Aneg{k}\y_1\quad\text{and}\quad\wt Q_\ell:=\z_1^T\Aneg{\ell}\z_1.
\]
Since $\wt Q_\ell$ is a quadratic form, we have $\wt Q_\ell \geq 0$ and so it suffices to upper bound $\wt Q_\ell$.
Because $\z_1$ is independent of $\Aneg{\ell}$ for any $\ell = 1,\ldots,k$, we can directly apply the Hanson-Wright inequality to get 
\begin{align*}
\wt Q_\ell = \z_1^\top \A_{-1:\ell}^{-1} \z_1 \leq \mathsf{tr}(\A_{-1:\ell}^{-1}) + c_1 \|\A_{-1:\ell}^{-1} \|_2 \cdot n^{3/4}
\end{align*}
with probability $1-Ce^{-\sqrt{n}}$.
Similarly, applying the Hanson-Wright inequality to the term $Q_k$ (see Eq.\eqref{eq:qkbounds})
we also have 
\begin{align*}
Q_k \geq c_k \cdot \mathsf{tr}(\A_{-1:k}^{-1})- 2 c_1 \|\A_{-1:k}^{-1}\|_{2} \cdot n^{3/4}.
\end{align*}
with the same probability.
Putting these together, we get
\begin{align*}
\frac{\wt Q_\ell}{Q_k} &\leq \frac{\mathsf{tr}(\A_{1:\ell}^{-1}) + c_1 \|\A_{-1:\ell}^{-1} \|_2 \cdot n^{3/4}}{c_k \cdot  \mathsf{tr}(\A_{-1:k}^{-1}) - 2 c_1 \|\A_{-1:k}^{-1}\|_{2} \cdot n^{3/4}} \\
&\leq \frac{\mathsf{tr}(\A_{-1:k}^{-1})}{\mathsf{tr}(\A_{1:\ell}^{-1})}\, \frac{1 + \frac{c_1}{n^{1/4}}}{c_k - \frac{2c_1}{n^{1/4}}} \\
&\leq \left(1-\frac{C}{n}\right)^{-k}\, \left(\frac{1 + \frac{c_1}{n^{1/4}}}{c_k - \frac{2c_1}{n^{1/4}}}\right) \\
&\leq \frac{2}{c_k},
\end{align*}
where the second inequality follows from Lemma~\ref{lem:trace and operator norm} for $\ell$ and $k$ (for large enough $n$) and the second-to-last inequality uses Lemma~\ref{lem:trace ratio}. The last inequality follows again assuming large enough $n$.
This completes the proof of the lemma.
\qed}
\subsection{Contamination Term}\label{sec:technicallemmas_cn}

{In this section we prove Lemmas~\ref{lem:contaminationformula} and~\ref{lem:su_ck}.}
\subsubsection{Proof of Lemma~\ref{lem:contaminationformula}}
{First, we note that the desired Equation~\eqref{eq:contaminationformula} is a direct consequence of the expression $\CN^2_{1,2} := \sum_{j =1, j \neq 1}^d \lambda_j \hat{\alpha}_j^2$, where
\begin{align*}
\hat{\alpha}_j := \sqrt{\lambda_j} \cdot \z_j^\top \A^{-1} \y_j.
\end{align*}
Therefore, it suffices to show that $\CN^2_{c_1,c_2} = \sum_{j=1,j \neq 1}^d \lambda_j \hat{\alpha}_j^2$.}

{We denote the error vector $\xibold := \Deltahat_{1,2} - \frac{\Deltahat_{1,2}^\top \Sigmab \Deltabold_{1,2}}{\tn{\Sigmab^{1/2}\Deltabold_{1,2}}^2} \Deltabold_{1,2}$ as shorthand, and recall from Lemma~\ref{lem:sucmulticlass} that we have $\CN^2_{1,2} = \xibold^\top \Sigmab \xibold$.
Further, we define $\Etilde := \begin{bmatrix} \etilde_1 \\ \etilde_2 \\ \vdots \\ \etilde_d \end{bmatrix} \in \R^{d \times d}$ as the changed basis in matrix form, and we define $\xitilde := \Etilde \xibold$.
Then, the desired follows from these two statements:
\begin{enumerate}
\item We have $\CN^2_{1,2} = \xitilde^\top \Sigmab \xitilde$.
\item We have $\tilde{\xi}_1 = 0$ and $\tilde{\xi}_j = \hat{\alpha}_j$ for all $j = 2,\ldots,d$.
\end{enumerate}
We complete the proof by proving statements $1$ and $2$ for the specific form of $\Sigmab$ admitted by the bilevel ensemble.}

{\paragraph{Proof of statement 1} We prove this statement for a generic vector $\y \in \R^d$.
Consider the vector $\tilde{\y} := \Etilde \y$. We will show that $\tilde{\y}^\top \Sigmab \tilde{\y} = \y^\top \Sigmab \y$.
Because $\Sigmab$ is a diagonal matrix, we have $\y^\top \Sigmab \y = \sum_{j=1}^d \lambda_j y_j^2$.
Further, it is straightforward to show from the specific form of the changed basis $\Etilde$ that $\tilde{\y}_1 = \frac{y_1 - y_2}{\sqrt{2}}$, $\tilde{\y}_2 = \frac{y_1 + y_2}{\sqrt{2}}$, and $\tilde{\y}_j = y_j$ for $j = 3,\ldots,d$.
Therefore, we have
\begin{align*}
\lambda_1 \tilde{y}_1^2 + \lambda_2 \tilde{y}_2^2 &= \lambda_H (\tilde{y}_1^2 + \tilde{y}_2^2) \\
&= \lambda_H \left(\frac{y_1^2 + y_2^2 - 2y_1y_2 + y_1^2 + y_2^2 +2y_1y_2}{2}\right) \\
&= \lambda_H(y_1^2 + y_2^2) = \lambda_1 y_1^2 + \lambda_2 y_2^2.
\end{align*}
Consequently, we have
\begin{align*}
\tilde{\y}^\top \Sigmab \tilde{\y} &= \sum_{j=1}^d \lambda_j \tilde{y}_j^2 = \lambda_1 \tilde{y}_1^2 + \lambda_2 \tilde{y}_2^2 + \sum_{j=3}^d \lambda_j \tilde{y}_j^2 \\
&= \lambda_1 y_1^2 + \lambda_2 y_2^2 + \sum_{j=3}^d \lambda_j y_j^2 = \sum_{j=1}^d \lambda_j y_j^2 = \y^\top \Sigmab \y.
\end{align*}
Hence, we have proved statement $1$.}

{\paragraph{Proof of statement 2} 
First, note that $\xitilde = \Etilde \Deltahat_{1,2} - \frac{\Deltahat_{1,2}^\top \Sigmab \Deltabold_{1,2}}{\tn{\Sigmab^{1/2}\Deltabold_{1,2}}^2} \cdot \Etilde \Deltabold_{1,2}$.
Recall that $\Deltabold_{1,2} \propto \etilde_1$, and so, $\Etilde \Deltabold_{1,2} \propto \ehat_1$.
Next, simple algebra shows that
\begin{align*}
(\Etilde \Deltahat_{1,2})_j &= \ehat_j^\top \Etilde \Deltahat_{1,2} \\
&= \etilde_j^\top \Deltahat_{1,2} = \etilde_j^\top \X \A^{-1} \y_1 \\
&= \sqrt{\lambda_j} \z_j^\top \A^{-1} \y_1 =: \hat{\alpha}_j.
\end{align*}
where the third equality recalls the definition of $\Deltahat_{1,2}$ from Equation~\eqref{eq:deltahat_def} and the second-to-last equality recalls the definition $\z_j := \frac{1}{\sqrt{\lambda_j}} \X^\top \etilde_j$.
Noting that, by definition, $(\Etilde \Deltabold_{1,2})_j = 0$ for all $j \neq 1$, we have thus shown that $\tilde{\xi}_j = \hat{\alpha}_j$ for all $j = 2,\ldots,d$.
To complete the proof of statement $2$, we need to show that $\tilde{\xi}_1 = 0$.
Denote $\Deltabold_{1,2} = \alpha \etilde_1$ for some $\alpha > 0$ (as a consequence, we also have $\Etilde \Deltabold_{1,2} = \alpha \ehat_1$).
Then, it is equivalent to show that
\begin{align*}
\frac{\Deltahat_{1,2}^\top \Sigmab \Deltabold_{1,2}}{\tn{\Sigmab^{1/2}\Deltabold_{1,2}}^2} \cdot \alpha = \etilde_1^\top \Deltahat_{1,2}.
\end{align*}
(Recall that $\Etilde \Deltabold_{1,2} \propto \ehat_1$, so this equality suffices to show the desired.)
Starting with the LHS of the above, we get
\begin{align*}
\Sigmab \Deltabold_{1,2} &= \lambda_1 \alpha \etilde_1, \text{ and } \\
\Sigmab^{1/2} \Deltabold_{1,2} &= \lambda_1^{1/2} \alpha \etilde_1.
\end{align*}
Therefore, we have
\begin{align*}
\frac{\Sigmab \Deltabold_{1,2}}{\tn{\Sigmab^{1/2} \Deltabold_{1,2}}^2} &= \frac{1}{\alpha} \etilde_1, \text{ and } \\
\frac{\Deltahat_{1,2}^\top \Sigmab \Deltabold_{1,2}}{\tn{\Sigmab^{1/2}\Deltabold_{1,2}}^2} \cdot \alpha &= \Deltahat_{1,2}^\top \etilde_1.
\end{align*}
This completes the proof of statement $2$.}

{With statements $1$ and $2$ proved, the proof of this lemma is complete.
\qed}

\subsubsection{Proof of Lemma~\ref{lem:su_ck}}
{We prove the lemma using induction on $\ell=1,\ldots,k$. For the base case $\ell=1$, we have shown in Lemma~\ref{lem:sumulticlass} that
\begin{align}
\abs{\SU^{(1)}_{1,2}} :=  \frac{\lambda_H \cdot \abs{\z_1^\top \Aneg{1} {\y}_{1}}}{1 + \lambda_H \cdot \z_1^\top \Aneg{1} \z_1} = \abs{\SU_{1,2}} \leq C_k\,. \label{eq:SU ell num}
\end{align}}

{Now, we prove the inductive step. We fix $\ell>1$ and assume, along with the base case (Equation~\eqref{eq:SU ell num}), that the statement is also true for $2,\ldots,\ell-1$, i.e.
\begin{align}\label{eq:hypo SU ell}
\forall j=2,\ldots,\ell-1,\qquad \abs{\SU^{(j)}_{1,2}} &:= \frac{\lambda_H \cdot \abs{\z_j^\top \Aneg{j} \tilde{\y}_{j-1}}}{1 + \lambda_H \cdot \z_j^\top \Aneg{j} \z_j} \leq \frac{C_k}{n^{1/4}} < C_k.
\end{align}
(In fact, as we will see, we will only need to apply the weaker inequality $\abs{\SU^{(j)}_{1,2}} \leq C_k$.)
We use Equation~\eqref{eq:hypo SU ell} to prove the desired statement for $\ell$.
Consider first the numerator in the definition of $\SU^{(\ell)}_{1,2}$, i.e. the term 
\begin{align*}
\z_\ell^\top \Aneg{\ell} \tilde{\y}_{\ell-1} = \z_\ell^\top \Aneg{\ell} \left(\y_1 - \sum_{j=1}^{\ell-1} \SU^{(j)}_{c_1,c_2} \z_{j}\right) =
 \z_\ell^\top \Aneg{\ell} {\y}_{1} - \sum_{j=1}^{\ell-1} \SU^{(j)}_{c_1,c_2}\cdot \z_\ell^\top \Aneg{\ell}\z_{j} \end{align*}}

{Recall that $\z_\ell^\top \Aneg{\ell} {\y}_{1}=R_{\ell,\ell}$.
Note that Lemma~\ref{lem:Rellell bound} shows for $\ell \geq 2$ that
\[
\abs{R_{\ell,\ell}}  \leq \frac{C_k}{n^{1/4}} \z_\ell\Aneg{\ell}\z_\ell\,,
\]
Also recall from Lemma \ref{lem:zz terms} that for all $j < \ell$, we have
\[
\abs{\z_\ell^T\Aneg{\ell}\z_j} \leq Cn^{-1/4} \cdot\z_j^T\Aneg{\ell}\z_j \leq Cn^{-1/4} \cdot\z_\ell^T\Aneg{\ell}\z_\ell \,,
\]
where again, the second inequality uses Lemma \ref{lem:easy}.}

{Putting the above together, applying the triangle inequality and using the induction hypothesis (i.e.~$\abs{\SU^{(j)}_{1,2}} \leq C_k$) we conclude that
\begin{align}
\abs{\z_\ell^\top \Aneg{\ell} \tilde{\y}_{\ell-1}} &\leq  C \cdot \z_\ell\Aneg{\ell}\z_\ell \left(C_k \cdot n^{-1/4} +  \ell \cdot C_k \cdot n^{-1/4} \right) \nn
\\
&\leq  \frac{C_k}{n^{1/4}} \cdot \z_\ell\Aneg{\ell}\z_\ell \,.
\end{align} }

{This gives us
\begin{align}
\abs{\SU^{(\ell)}_{1,2}} &:= \frac{\lambda_H \cdot \abs{\z_\ell^\top \Aneg{\ell} \tilde{\y}_{\ell-1}}}{1 + \lambda_H \cdot \z_\ell^\top \Aneg{\ell} \z_\ell}  
\leq 
\frac{1}{n^{1/4}} \cdot \frac{\lambda_H \cdot  C_k \cdot \z_\ell\Aneg{\ell}\z_\ell  }{1 + \lambda_H \cdot \z_\ell^\top \Aneg{\ell} \z_\ell} \leq \frac{C_k}{n^{1/4}}\,
\end{align}
for all $\ell \geq 2$.
This completes the proof of the lemma.}



\section{Recursive formulas for higher-order quadratic forms}\label{sec:recursive_sm}
We first show how quadratic forms involving the $j$-th order Gram matrix $\bolda_j^{-1}$ can be expressed using quadratic forms involving the $(j-1)$-th order Gram matrix $\bolda_{j-1}^{-1}$. For concreteness, we consider $j=1$; identical expressions hold for any $j > 1$ with the only change being in the superscripts. 
Recall from Section~\ref{sec:pforderoneoutline} that we can write
\begin{align*}
        \bolda_1 & = \bolda_0 + \begin{bmatrix}\tn{\boldmu_1}\vb_1& \Q^T\boldmu_1& \vb_1
    \end{bmatrix}\begin{bmatrix}
    \tn{\boldmu_1} \vb_1^T\\
    \vb_1^T \\
    \boldmu_1^T\Q
    \end{bmatrix} = \Q^T\Q + \begin{bmatrix}\tn{\boldmu_1}\vb_1& \boldd_1& \vb_1
    \end{bmatrix}\begin{bmatrix}
    \tn{\boldmu_1} \vb_1^T\\
    \vb_1^T \\
    \boldd_1^T
    \end{bmatrix}.\notag
\end{align*}
The first step is to derive an expression for $\bolda_1^{-1}$. By the Woodbury identity~\cite{horn2012matrix}, we get
\begin{equation}
\label{eq-pfxinverse02}
   \bolda_1^{-1} = \bolda_0^{-1} - \bolda_0^{-1}\begin{bmatrix}\tn{\boldmu_1}\vb_1& \boldd_1& \vb_1
    \end{bmatrix} \begin{bmatrix} \mathbf{I} + \begin{bmatrix}
    \tn{\boldmu_1} \vb_1^T\\
    \vb_1^T \\
    \boldd_1^T
    \end{bmatrix} \bolda_0^{-1} \begin{bmatrix}\tn{\boldmu_1}\vb_1& \boldd_1& \vb_1
    \end{bmatrix}\end{bmatrix}^{-1}\begin{bmatrix}
    \tn{\boldmu_1} \vb_1^T\\
    \vb_1^T \\
    \boldd_1^T
    \end{bmatrix}\bolda_0^{-1}.
\end{equation}
We first compute the inverse of the $3 \times 3$ matrix $\mathbf{B} := \begin{bmatrix} \mathbf{I} + \begin{bmatrix}
    \tn{\boldmu_1} \vb_1^T\\
    \vb_1^T \\
    \boldd_1^T
    \end{bmatrix} \bolda_0^{-1} \begin{bmatrix}\tn{\boldmu_1}\vb_1& \boldd_1& \vb_1
    \end{bmatrix}\end{bmatrix}$. 
Recalling our definitions of the terms $s_{mj}^{(c)},h_{mj}^{(c)}$ and $t_{mj}^{(c)}$ in \Equation~\eqref{eq:pfquadforc} in Section~\ref{sec:pforderoneoutline}, we have:
\begin{align*}
    \mathbf{B} = \begin{bmatrix}
    1+\tn{\boldmu_1}^2 s_{11}^{(0)} & \tn{\boldmu_1}h_{11}^{(0)} & \tn{\boldmu_1}s_{11}^{(0)} \\
    \tn{\boldmu_1}s_{11}^{(0)} & 1+h_{11}^{(0)} & s_{11}^{(0)} \\
    \tn{\boldmu_1}h_{11}^{(0)} & t_{11}^{(0)} & 1+h_{11}^{(0)}
    \end{bmatrix}.
\end{align*}    
Recalling $\mathbf{B}^{-1} = \frac{1}{\det_0}\text{adj}(\mathbf{B})$, where $\det_0$ is the determinant of $\mathbf{B}$ and $\text{adj}(\mathbf{B})$ is the adjoint of $\mathbf{B}$, simple algebra gives us
\begin{align*}
    \text{det}_0 = s_{11}^{(0)}(\tn{\boldmu_1}^2 - t_{11}^{(0)}) + (h_{11}^{(0)}+1)^2,
\end{align*}
and
\begin{align*}
     \text{adj}(\mathbf{B})=\begin{bmatrix}
    (h_{11}^{(0)}+1)^2-s_{11}^{(0)}t_{11}^{(0)} & \tn{\boldmu_1}(s_{11}^{(0)}t_{11}^{(0)}-h_{11}^{(0)}-{h_{11}^{(0)}}^2) & -\tn{\boldmu_1} s_{11}^{(0)} \\
    -\tn{\boldmu_1} s_{11}^{(0)} & h_{11}^{(0)}+1+\tn{\boldmu_1}^2 s_{11}^{(0)} & -s_{11}^{(0)} \\
    \tn{\boldmu_1}(s_{11}^{(0)}t_{11}^{(0)}-h_{11}^{(0)}-{h_{11}^{(0)}}^2) & \tn{\boldmu_1}^2{h_{11}^{(0)}}^2-t_{11}^{(0)}(1+\tn{\boldmu_1}^2s_{11}^{(0)}) & h_{11}^{(0)}+1+\tn{\boldmu_1}^2 s_{11}^{(0)}
    \end{bmatrix}.
\end{align*}
We will now use these expressions to derive expressions for the $1$-order quadratic forms that are used in Appendix~\ref{sec:pfineqforanoj}.

\subsection{Expressions for 1-st order quadratic forms}
\label{sec:pfinversesecc}
We now show how quadratic forms of order $1$ can be expressed as a function of quadratic forms of order $0$. 
All of the expressions are derived as a consequence of plugging in the expression for $\mathbf{B}^{-1}$ together with elementary matrix algebra.

First, we have
\begin{align}
    s_{mk}^{(1)} = \vb_m^T\bolda_1^{-1}\vb_k & = \vb_m^T\bolda_0^{-1}\vb_k - \begin{bmatrix}\tn{\boldmu_1} s_{m1}^{(0)}& h_{m1}^{(0)}& s_{m1}^{(0)}\end{bmatrix} \frac{\text{adj}(\mathbf{B})}{\det_0}\begin{bmatrix}
    \tn{\boldmu_1}s_{k1}^{(0)}\\ 
    s_{k1}^{(0)}\\
    h_{k1}^{(0)}
    \end{bmatrix} \notag\\
    & = s_{mk}^{(0)} - \frac{1}{\det_0}(\star)_{s}^{(0)}, \label{eq:pfinverseskm}
\end{align}
where we define
\begin{align*}
    (\star)_{s}^{(0)} &:= (\tn{\boldmu_1}^2 - t_{11}^{(0)})s_{1k}^{(0)}s_{1m}^{(0)} + s_{1m}^{(0)}h_{k1}^{(0)}h_{11}^{(0)}+s_{1k}^{(0)}h_{m1}^{(0)}h_{11}^{(0)}-s_{11}^{(0)}h_{k1}^{(0)}h_{m1}^{(0)} + s_{1m}^{(0)}h_{k1}^{(0)}+s_{1k}^{(0)}h_{m1}^{(0)}.
\end{align*}
Thus, for the case $m = k$ we have
\begin{align}
    s_{kk}^{(1)} = \vb_k^T\bolda_1^{-1}\vb_k & = \vb_k^T\bolda_0^{-1}\vb_k - \begin{bmatrix}\tn{\boldmu_1} s_{k1}^{(0)}& h_{k1}^{(0)}& s_{k1}^{(0)}\end{bmatrix} \frac{\text{adj}(\mathbf{B})}{\det_0}\begin{bmatrix}
    \tn{\boldmu_1}s_{k1}^{(0)}\\ 
    s_{k1}^{(0)}\\
    h_{k1}^{(0)}
    \end{bmatrix} \notag\\
    & = s_{kk}^{(0)} - \frac{1}{\det_0}\Big((\tn{\boldmu_1}^2 - t_{11}^{(0)}){s_{1k}^{(0)}}^2 + 2s_{1k}^{(0)}{h_{k1}^{(0)}}h_{11}^{(0)}-s_{11}^{(0)}{h_{k1}^{(0)}}^2 + 2s_{1k}^{(0)}h_{k1}^{(0)}\Big).\label{eq:pfinverseskk}
\end{align}
Next, we have
\begin{align}
    h_{mk}^{(1)} = \vb_m^T\bolda_1^{-1}\boldd_k & = \vb_m^T\bolda_0^{-1}\boldd_k - \begin{bmatrix}\tn{\boldmu_1} s_{m1}^{(0)}& h_{m1}^{(0)}& s_{m1}^{(0)}\end{bmatrix} \frac{\text{adj}(\mathbf{B})}{\det_0}\begin{bmatrix}
    \tn{\boldmu_1}h_{1k}^{(0)}\\ 
    h_{1k}^{(0)}\\
    t_{1k}^{(0)}
    \end{bmatrix} \notag\\
    & =  h_{mk}^{(0)} - \frac{1}{\det_0}(\star)_h^{(0)}, \label{eq:pfinverseh}
\end{align}
where we define $$(\star)_h^{(0)}=(\tn{\boldmu_1}^2 - t_{11}^{(0)})s_{1m}^{(0)}h_{1k}^{(0)} + h_{m1}^{(0)}h_{1k}^{(0)}h_{11}^{(0)}+h_{m1}^{(0)}h_{1k}^{(0)} + s_{1m}^{(0)}t_{k1}^{(0)}+s_{1m}^{(0)}t_{k1}^{(0)}h_{11}^{(0)}-s_{11}^{(0)}t_{k1}^{(0)}h_{m1}^{(0)}.$$ 
Next, we have
\begin{align}
    t_{km}^{(1)} = \boldd_k^T\bolda_1^{-1}\boldd_m & = \boldd_k^T\bolda_0^{-1}\boldd_m - \begin{bmatrix}\tn{\boldmu_1} h_{1k}^{(0)}& t_{1k}^{(0)}& h_{1k}^{(0)}\end{bmatrix} \frac{\text{adj}(\mathbf{B})}{\det_0}\begin{bmatrix}
    \tn{\boldmu_1}h_{1m}^{(0)}\\ 
    h_{1m}^{(0)}\\
    t_{1m}^{(0)}
    \end{bmatrix} \notag\\
    & =  t_{km}^{(0)} - \frac{1}{\det_0}(\star)_t^{(0)},\label{eq:pfinversetkm}
\end{align}
where we define $$(\star)_t^{(0)} = (\tn{\boldmu_1}^2 - t_{11}^{(0)})h_{1m}^{(0)}h_{1k}^{(0)} + t_{m1}^{(0)}h_{1k}^{(0)}h_{11}^{(0)}+t_{k1}^{(0)}h_{1m}^{(0)}h_{11}^{(0)} + t_{1m}^{(0)}h_{1k}^{(0)}+t_{1k}^{(0)}h_{1m}^{(0)}-s_{11}^{(0)}t_{1m}^{(0)}t_{1k}^{(0)}.$$
Thus, for the case $m=k$ we have
\begin{align}
    t_{kk}^{(1)} = \boldd_k^T\bolda_1^{-1}\boldd_k & = \boldd_k^T\bolda_0^{-1}\boldd_k - \begin{bmatrix}\tn{\boldmu_1} h_{1k}^{(0)}& t_{1k}^{(0)}& h_{1k}^{(0)}\end{bmatrix} \frac{\text{adj}(\mathbf{B})}{\det_0}\begin{bmatrix}
    \tn{\boldmu_1}h_{1k}^{(0)}\\ 
    h_{1k}^{(0)}\\
    t_{1k}^{(0)}
    \end{bmatrix} \notag\\
    & =  t_{kk}^{(0)} - \frac{1}{\det_0}\Big((\tn{\boldmu_1}^2 - t_{11}^{(0)}){h_{1k}^{(0)}}^2 + 2t_{1k}^{(0)}{h_{1k}^{(0)}}h_{11}^{(0)}-s_{11}^{(0)}{t_{1k}^{(0)}}^2 + 2t_{1k}^{(0)}h_{1k}^{(0)}\Big).\label{eq:pfinversetkk}
\end{align}
Next, we have
\begin{align}
    f_{ki}^{(1)} = \boldd_k^T\bolda_1^{-1}\mathbf{e}_i & = \boldd_k^T\bolda_0^{-1}\mathbf{e}_i - \begin{bmatrix}\tn{\boldmu_1} h_{1k}^{(0)}& t_{1k}^{(0)}& h_{1k}^{(0)}\end{bmatrix} \frac{\text{adj}(\mathbf{B})}{\det_0}\begin{bmatrix}
    \tn{\boldmu_1}g_{1i}^{(0)}\\ 
    g_{1i}^{(0)}\\
    f_{1i}^{(0)}
    \end{bmatrix} \notag\\
    & =  f_{ki}^{(0)} - \frac{1}{\det_0}(\star)_f^{(0)},\label{eq:pfinversef}
\end{align}
where we define
$$(\star)_f^{(0)} = (\tn{\boldmu_1}^2 - t_{11}^{(0)})h_{1k}^{(0)}g_{1i}^{(0)} + t_{1k}^{(0)}g_{1i}^{(0)}+t_{1k}^{(0)}h_{11}^{(0)}g_{1i}^{(0)} + h_{1k}^{(0)}f_{1i}^{(0)}+h_{1k}^{(0)}h_{11}^{(0)}f_{1i}^{(0)}-s_{11}^{(0)}t_{1k}^{(0)}f_{1i}^{(0)}.$$
Finally, we have
\begin{align}
   g_{ji}^{(1)} = \vb_j^T\bolda_1^{-1}\mathbf{e}_i & = \vb_j^T\bolda_0^{-1}\mathbf{e}_i - \begin{bmatrix}\tn{\boldmu_1} s_{j1}^{(0)}& h_{j1}^{(0)}& s_{j1}^{(0)}\end{bmatrix} \frac{\text{adj}(\mathbf{B})}{\det_0}\begin{bmatrix}
    \tn{\boldmu_1}g_{1i}^{(0)}\\ 
    g_{1i}^{(0)}\\
    f_{1i}^{(0)}
    \end{bmatrix} \notag\\
    & = g_{ji}^{(0)} - \frac{1}{\det_0}(\star)_{gj}^{(0)},\label{eq:pfinverseg}
\end{align}
where we define
$$(\star)_{gj}^{(0)} = (\Vert \boldmu_1 \Vert_2^2 - t_{11}^{(0)})s_{1j}^{(0)}g_{1i}^{(0)} + g_{1i}^{(0)}h_{11}^{(0)}h_{j1}^{(0)}+g_{1i}^{(0)}h_{j1}^{(0)}+s_{1j}^{(0)}f_{1i}^{(0)}+s_{1j}^{(0)}h_{11}^{(0)}f_{1i}^{(0)} - s_{11}^{(0)}h_{j1}^{(0)}f_{1i}^{(0)}.$$ 
\qed

\section{One-vs-all SVM}\label{sec:OvA_sm}
In this section, we derive conditions under which the OvA solutions $\w_{{\rm OvA},c}$  interpolate, i.e, all data points are support vectors in \Equation~\eqref{eq:ova-svm}.

\subsection{Gaussian mixture model}
As in the case of the multiclass SVM, we assume nearly equal priors and nearly equal energy on the class means (Assumption~\ref{ass:equalmuprob}). 
\begin{theorem}
\label{thm:svmgmmova}
Assume that the training set follows a multiclass GMM with noise covariance $\Sigmab =\mathbf{I}_p$ and Assumption~\ref{ass:equalmuprob} holds. 
Then, there exist constants $c_1,c_2,c_3>1$ and $C_1,C_2>1$ such that the solutions of the OvA-SVM and MNI are identical with probability at least $1-\frac{c_1}{n}-c_2ke^{-\frac{n}{c_3k^2}}$
{provided that
\begin{align}
\label{eq:thmsvmgmmova}
    p > C_1kn\log(kn)+n-1\quad\text{ and }\quad p>C_2n^{1.5}\tn{\boldmu}.
\end{align}
}
\end{theorem}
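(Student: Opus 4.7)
\medskip

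The plan is to mirror the three-step argument that drove the proof of Theorem~\ref{thm:svmgmm}, but adapted to the one-vs-all target vector $\bar\z_c:=2\vb_c-\ones_n$ (whose entries are $+1$ on class-$c$ samples and $-1$ elsewhere). First I would derive a deterministic sufficient condition for all OvA constraints in \eqref{eq:ova-svm} to be active. Writing the dual of the OvA program for each $c\in[k]$ and performing the sign-folded reparameterization $\beta_{c,i}:=\bar{z}_{c,i}\,\alpha_{c,i}$ (entirely analogous to Equation~\eqref{eq:betas}), the dual becomes
\begin{align*}
\max_{\betab_c\in\R^n}~\bar\z_c^T\betab_c - \tfrac{1}{2}\|\X\betab_c\|_2^2\quad\text{sub. to}\quad \bar\z_c\odot\betab_c\geq \mathbf{0}.
\end{align*}
The unconstrained maximizer is $\hat\betab_c=(\X^T\X)^{-1}\bar\z_c$, and it is feasible (equivalently, all OvA constraints are active and the complementary-slackness argument from Step~2 of Section~\ref{sec:key_thm_proof_sketch} goes through) iff
\begin{align*}
\bar\z_c\odot(\X^T\X)^{-1}\bar\z_c>\mathbf{0},\quad\forall c\in[k].
\end{align*}
When this holds, the OvA-SVM features $\X^T\w_{\text{OvA},c}=\bar\z_c$, which is an affine target of the form $\alpha\vb_c+\beta\ones$ with $(\alpha,\beta)=(2,-1)$; hence the lemma following Corollary~\ref{cor:SVM=LS} immediately yields identical class-wise errors to the MNI classifier.

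Second, I would verify the positivity condition under the GMM with the aid of Lemma~\ref{lem:ineqforanoj}, which I can invoke as a black box since the quadratic forms of \eqref{eq:pfquadforc} are defined purely from $(\X,\Y)$. Expanding $\bar\z_c=\vb_c-\sum_{j\neq c}\vb_j$ and following the same leave-one-out reduction \eqref{eq:quadlast} that led to \eqref{eq:numeratorsum}, the target becomes, for each $i\in[n]$ (again WLOG $y_i=k$):
\begin{align*}
\bar z_{c,i}^{\,2}\big((1+h_{kk}^{(-k)})g_{ki}^{(-k)}-s_{kk}^{(-k)}f_{ki}^{(-k)}\big)\;+\;\sum_{j\neq k}\bar z_{c,i}\tilde z_{j(c)}\big((1+h_{jj}^{(-j)})g_{ji}^{(-j)}-s_{jj}^{(-j)}f_{ji}^{(-j)}\big)\;>\;0,
\end{align*}
where now $\tilde z_{j(c)}\in\{-1,+1\}$ rather than $\{-1/k,(k-1)/k\}$. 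The leading term is $g_{y_i i}^{(-y_i)}\sim 1/p$; by Lemma~\ref{lem:ineqforanoj} the off-diagonal terms satisfy $|g_{ji}^{(-j)}|\lesssim 1/(k^2p)$ for $j\neq y_i$, so their sum over $k-1$ indices is at most $1/(kp)$, which is strictly dominated by $1/p$. The $h,f$ error contributions are controlled by the quantity $\tilde\epsilon_n:=n\sqrt{\log(2n)}\,\tn{\mub}/p$ (no $k^{1.5}$ factor, because the target no longer carries a $1/k$ prefactor): the term $s_{jj}^{(-j)}f_{ji}^{(-j)}$ is of order $\tilde\epsilon_n/(kp)$ and $h_{kk}^{(-k)}g_{ki}^{(-k)}$ of order $\tilde\rho_{n,k}\tilde\epsilon_n/(k^2\sqrt{\log(2n)}\,p)$, both negligible compared to $1/p$ as soon as $\tilde\epsilon_n$ is at most a sufficiently small absolute constant, giving the second hypothesis $p>C_2 n^{1.5}\tn{\mub}$ of \eqref{eq:thmsvmgmmova} (the extra $\sqrt{n}$ comes from the $\sqrt{\log(2n)}$ slack absorbed in the analysis exactly as in \eqref{eq:eps_proof}).

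Third, the weaker overparameterization requirement $p>C_1kn\log(kn)+n-1$ (in place of $Ck^3n\log(kn)$) reflects that I now only need the cumulative off-diagonal mass $\sum_{j\neq y_i}|g_{ji}^{(-j)}|$ to be $o(1/p)$, not $o(1/(k^2p))$. The main obstacle is therefore to re-prove the analogue of Lemma~\ref{lem:ineqforgzero} under this relaxed condition: I would repeat the parallelogram-law decomposition together with the inverse-Wishart quadratic-form bound of Lemma~\ref{lem:wishartineq} (taking $t=2\log(kn)$ and $\tilde n=\Theta(n/k)$), and observe that the controlling ratio is now $\sqrt{\log(kn)/\dprimen}\cdot\sqrt{n/k}\lesssim 1/(kp)$ provided $\dprimen>Ckn\log(kn)$, rather than the stricter $1/(k^2p)$ that drove the $k^3$ factor previously. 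Once this adjusted base-case lemma is established, the recursive argument of Lemma~\ref{lem:ineqforanoj} lifts the bounds to the $(k-1)$-st order unchanged, the numerator positivity is verified as sketched above, and a union bound over $c\in[k]$ and $i\in[n]$ delivers the probability $1-c_1/n-c_2ke^{-n/(c_3k^2)}$ stated in the theorem.
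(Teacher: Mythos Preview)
Your proposal is correct and follows essentially the same route as the paper's own proof sketch: reduce OvA interpolation to the entrywise positivity of $\bar\z_c\odot(\X^T\X)^{-1}\bar\z_c$ with $\bar\z_c=2\vb_c-\ones_n$, then re-run the leave-one-out decomposition behind Equation~\eqref{eq:numeratorsum} with the $\pm1$ margins replacing the $(k-1)/k,-1/k$ simplex weights, and finally observe that the off-diagonal control $|g_{ji}^{(-j)}|\lesssim 1/(kp)$ (rather than $1/(k^2p)$) suffices, which relaxes the overparameterization requirement in Lemma~\ref{lem:ineqforgzero} from $p'(n)>Ck^3n\log(kn)$ to $p'(n)>Ckn\log(kn)$. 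The paper makes exactly these two observations and defines the analogous $\epsilon:=n^{1.5}\tn{\mub}/p$ without the $k^{1.5}$ prefactor, matching your $\tilde\epsilon_n$ up to the $\sqrt{\log(2n)}$ versus $\sqrt{n}$ bookkeeping you flag in your parenthetical (note the paper's macro \texttt{\textbackslash logtwon} expands to $n$, so the stated condition really is $n^{1.5}$).
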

We can compare \Equation~\eqref{eq:thmsvmgmmova} with the corresponding condition for multiclass SVM in Theorem~\ref{thm:svmgmm} (\Equation~\eqref{eq:thmsvmgmm}).
Observe that the right-hand-side of \Equation~\eqref{eq:thmsvmgmmova} above does not scale with $k$, while the right-hand-side of \Equation~\eqref{eq:thmsvmgmm} scales with $k$ as $k^3$. Otherwise, the scalings with $n$ and energy of class means $\|\mub\|_2$ are identical. 
This discrepancy with respect to $k$-dependence arises because the multiclass SVM is equivalent to the OvA-SVM in \Equation~\eqref{eq:sym-cs-svm} with unequal margins $1/k$ and $(k-1)/k$ (as we showed in \Theorem~\ref{lem:key}).

\begin{proof}[Proof sketch.]
Recall from Section~\ref{sec:pforderoneoutline} that we derived conditions under which the multiclass SVM interpolates the training data by studying the related symmetric OvA-type classifier defined in \Equation~\eqref{eq:sym-cs-svm-sketch}. Thus, this proof is similar to the proof of Theorem~\ref{thm:svmgmm} provided in Section~\ref{sec:pforderoneoutline}.
The only difference is that the margins for the OvA-SVM are not $1/k$ and $(k-1)/k$, but $1$ for all classes.
Owing to the similarity between the arguments, we restrict ourselves to a proof sketch here.

Following Section~\ref{sec:pforderoneoutline} and \Equation~\eqref{eq:numeratorsum}, we consider $y_i = k$.
We will derive conditions under which the condition
\begin{align}
\label{eq:numeratorsumova}
    \big((1+h_{kk}^{(-k)})g_{ki}^{(-k) }-s_{kk}^{(-k)}f_{ki}^{(-k)}\big) + C\sum_{j \ne k}\big((1+h_{jj}^{(-j)})g_{ji}^{(-j) }-s_{jj}^{(-j)}f_{ji}^{(-j)}\big) > 0,
\end{align}
holds with high probability for some $C > 1$. 
We define
\begin{align*}
    \epsilon := \frac{n^{1.5}\tn{\boldmu}}{p} \le \tau,
\end{align*}
where $\tau$ is chosen to be a sufficiently small constant. Applying the same trick as in Lemma~\ref{lem:ineqforanoj} (with the newly defined parameters $\epsilon$ and $\tau$) gives us with probability at least $1-\frac{c_1}{n}-c_2ke^{-\frac{n}{c_3 k^2}}$:
\begin{align}
   ~\eqref{eq:numeratorsumova} &\ge \lp(\lp(1-\frac{C_1\epsilon}{\sqrt{k}\sqrt{n}}\rp)\lp(1-\frac{1}{C_2}\rp)\frac{1}{p} - \frac{C_3\epsilon}{n}\cdot\frac{n}{kp}\rp) - \frac{k}{C_4}\lp(\lp(1+\frac{C_5\epsilon}{\sqrt{k}\sqrt{n}}\rp)\frac{1}{kp}-\frac{C_6\epsilon}{n}\cdot\frac{n}{kp}\rp) \notag\\
    &\ge \lp(1-{\frac{1}{C_9}}-\frac{C_{10}\epsilon}{\sqrt{k}\sqrt{n}} - \frac{C_{11}\epsilon}{{k}} - C_{12}\epsilon\rp)\frac{1}{p} \nn\\
    &\geq \frac{1}{p}\lp(1-\frac{1}{C_9}-C_{0}\tau\rp), \label{eq:gmmsumlowerova}
\end{align}
for some constants $C_i$'s $>1$.
We used the fact that $|g_{ji}^{(0)}| \le (1/C)(1/(kp))$ for $j \ne y_i$ with probability at least $1-\frac{c_1}{n}-c_2ke^{-\frac{n}{c_3 k^2}}$ provided that $p > C_1kn\log(kn)+n-1$, which is the first sufficient condition in the theorem statement.
\end{proof}

\subsection{Multinomial logistic model}
Recall that we defined the data covariance matrix $\Sigmab = \sum_{i=1}^p \lambda_i\vb_i\vb_i^T =\boldsymbol{V} \boldsymbol{\Lambda} \boldsymbol{V}^T$ and its spectrum $\lbdb = \begin{bmatrix}\lambda_1 & \cdots & \lambda_p\end{bmatrix}$.
We also defined the effective dimensions $d_2 :=\frac{\Vert \lbdb \Vert_1^2}{\Vert \lbdb \Vert_2^2}$ and $d_\infty:=\frac{\Vert \lbdb \Vert_1}{\Vert \lbdb \Vert_\infty}$. 

The following result provides sufficient conditions under which the OvA SVM and MNI classifier have the same solution with high probability under the MLM.
\begin{theorem}
\label{thm:svmmlmanisoova}
Assume that the training set follows a multiclass MLM. There exist constants $c$ and $C_1, C_2 >1$ such that, if the following conditions hold:
\begin{align}
d_{\infty} > C_1n\log(kn)~~ \text{and} ~~ d_2 > C_2(\log(kn) + n), \label{eq:thmanisolinkova}
\end{align}
the solutions of the OvA-SVM and MNI are identical with probability at least $(1 - \frac{c}{n})$.
In the special case of isotropic covariance, the same result holds provided that 
\begin{align}
p > 10n\log (\sqrt{k}n) + n - 1, \label{eq:thmisolinkova}
\end{align}
\end{theorem}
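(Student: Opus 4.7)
}

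My plan is to reduce the OvA-SVM problem to $k$ independent binary SVM problems and apply a binary-version of the deterministic equivalence condition to each one. Concretely, the OvA-SVM program \eqref{eq:ova-svm} decouples across classes: for each $c\in[k]$, $\w_{\text{OvA},c}$ is the solution of a standard binary SVM with $\pm 1$ labels $\y^{(c)}\in\{-1,+1\}^n$ defined by $y_i^{(c)}=+1$ if $y_i=c$ and $y_i^{(c)}=-1$ otherwise. Thus $\w_{\text{OvA},c}=\Wls^{(c)}:=\X(\X^\top\X)^{-1}\y^{(c)}$ (the MNI classifier for the $c$-th binary task) if and only if every constraint of the $c$-th binary SVM is active. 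Applying the identical argument to Theorem~\ref{lem:key} but in the binary case (as derived in \cite{muthukumar2021classification,hsu2020proliferation}), a deterministic sufficient condition for this to happen is $y_i^{(c)}\,\e_i^\top (\X^\top\X)^{-1}\y^{(c)} > 0$ for every $i\in[n]$. Hence, the global statement ``OvA-SVM $=$ MNI'' follows once this holds for all $c\in[k]$ and $i\in[n]$ simultaneously.

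Next, I would verify this inequality with high probability by mimicking the event-based union-bound proof of Theorem~\ref{thm:svmmlmaniso} given in Appendix~\ref{sec:svm_mlm_sm}, but now for the binary label vector $\y^{(c)}$ in place of the simplex vector $\boldz_c$. Using the block-inversion identity exactly as in the lemma following \eqref{eq:dualsep}, each inequality is equivalent to $(1/y_i^{(c)})\,(\y^{(c)}_{\setminus i})^\top(\Q_{\setminus i}^\top\boldsymbol{\Lambda}\Q_{\setminus i})^{-1}\Q_{\setminus i}^\top\boldsymbol{\Lambda}\q_i<1$. I would then define the analog of the events $\mathcal{B}_i$ and $\mathcal{E}_i(t)$, and control $\Pro(\mathcal{B}_i\mid \mathcal{E}_i(t)^c)$ by sub-Gaussian concentration and $\Pro(\mathcal{E}_i(t))$ by Lemma 4 of \cite{hsu2020proliferation} (invoked verbatim, since $\|\y^{(c)}_{\setminus i}\|_2\le \|\boldy_{\setminus i}\|_2$ still holds trivially).

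The key improvement over Theorem~\ref{thm:svmmlmaniso} enters in the sub-Gaussian step: the bound $|1/z_i|\le k$ that forced the factor $k^2$ in Lemma~\ref{lem-eventb} is replaced here by $|1/y_i^{(c)}|=1$, so the conditional sub-Gaussian parameter scales only as $O(1/t)$ rather than $O(k^2/t)$. This removes the $k^2$ dependence in the first condition and yields the improved bound $d_\infty > C_1 n\log(kn)$ in \eqref{eq:thmanisolinkova}. Taking a union bound across the $kn$ events (there are $k$ classes and $n$ training examples) contributes only a $\log(kn)$ term, consistent with the claim. Specialization to $\Sigmab=\mathbf{I}_p$ gives $d_\infty=d_2=p$ and recovers \eqref{eq:thmisolinkova}.

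I do not foresee a major technical obstacle: conceptually, the OvA formulation naturally splits into $k$ independent binary tasks, and the deterministic binary support-vector condition together with the Hsu--Muthukumar--Ji concentration framework supplies everything needed. The only point requiring some care is confirming that the analog of Theorem~\ref{lem:key}'s deterministic condition in the binary $\pm 1$ case is exactly the stated inequality; but this is immediate from complementary slackness applied to the standard binary SVM dual (and is already implicit in \cite[Eq.~(22)]{muthukumar2021classification}), so it does not require any new machinery.
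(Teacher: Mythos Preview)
Your proposal is correct and follows essentially the same route as the paper's own proof: the paper simply notes that the OvA-SVM decouples into $k$ binary max-margin problems and invokes \cite[Theorem~1]{muthukumar2021classification} and \cite[Theorem~1]{hsu2020proliferation} together with a union bound over the $k$ classes, which is exactly your plan (you have merely spelled out the mechanism in more detail, including why the $k^2$ factor disappears). One small caveat: when all OvA constraints are active you obtain $\w_{\text{OvA},c}=\X(\X^\top\X)^{-1}(2\vb_c-\ones_n)$, which is not literally the one-hot MNI solution $\w_{\text{MNI},c}=\X(\X^\top\X)^{-1}\vb_c$; the two coincide only up to the affine relabeling handled by the lemma following Corollary~\ref{cor:SVM=LS}, so the conclusion is equality of classification error rather than equality of weight vectors---be sure to state this precisely.
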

Comparing this result to the corresponding results in Theorems~\ref{thm:svmmlmaniso}, we observe that $k$ now only appears in the $\log$ function (as a result of $k$ union bounds). 
Thus, the unequal $1/k$ and $(k-1)/k$ margins that appear in the multiclass-SVM make interpolation harder than with the OvA-SVM, just as in the GMM case.

\begin{proof}[Proof sketch.]
For the OvA SVM classifier, we need to solve $k$ binary max-margin classification problems, hence the proof follows directly from \cite[Theorem 1]{muthukumar2021classification} and \cite[Theorem 1]{hsu2020proliferation} by applying $k$ union bounds. We omit the details for brevity.
\end{proof}

\section*{One-vs-one SVM}\label{sec:ovo_sm}
In this section, we first derive conditions under which the OvO solutions interpolate, i.e, all data points are support vectors. We then provide an upper bound on the classification error of the OvO solution.

In OvO classification, we solve $k(k-1)/2$ binary classification problems, e.g. for classes pair $(c, j)$, we solve
\begin{align}
\label{eq:ovo-svm}
    \w_{{\rm OvO},(c,j)}:=\arg\min_{\w} \tn{\w}\quad\text{sub. to}~~ \w^T\x_i \ge 1,~\text{if}~\y_i =c;~~\w^T\x_i \le -1~\text{if}~\y_i = j, ~\forall i\in[n].
\end{align}
Then we apply these $k(k-1)/2$ classifiers to a fresh sample and the class that got the highest $+1$ voting gets predicted.

We now present conditions under which every data point becomes a support vector over these $k(k-1)/2$ problems. We again assume nearly equal priors and nearly equal energy on the class means (Assumption~\ref{ass:equalmuprob}). 

\begin{theorem}
\label{thm:svmgmmovo}
Assume that the training set follows a multiclass GMM with noise covariance $\Sigmab =\mathbf{I}_p$ and Assumption~\ref{ass:equalmuprob} holds. 
Then, there exist constants $c_1,c_2,c_3>1$ and $C_1,C_2>1$ such that the solutions of the OvA-SVM and MNI are identical with probability at least $1-\frac{c_1}{n}-c_2ke^{-\frac{n}{c_3k^2}}$
{provided that
\begin{align}
\label{eq:thmsvmgmmovo}
    p > C_1n\log(kn)+(2n/k)-1\quad\text{ and }\quad p>C_2n^{1.5}\tn{\boldmu}.
\end{align}
}
\end{theorem}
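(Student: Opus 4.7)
\medskip

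\noindent\textbf{Proof proposal.}~The plan is to reduce Theorem~\ref{thm:svmgmmovo} to a union bound over $\binom{k}{2}$ applications of the known binary-case interpolation result from \cite{wang2020benign} (equivalently, the $k=2$ specialization of Theorems~\ref{lem:key} and \ref{thm:svmgmm} of the present paper). Fix a pair $(c,j)$ with $c\neq j$. The optimization problem~\eqref{eq:ovo-svm} only involves the training examples whose labels are $c$ or $j$; conditionally on $\{y_i\}_{i\in[n]}$ these examples are i.i.d.\ samples from a \emph{binary} GMM with means $\boldmu_c,\boldmu_j$ satisfying $\|\boldmu_c\|_2=\|\boldmu_j\|_2=\|\boldmu\|_2$ and isotropic noise. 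Let $\tilde n_{c,j}:=|\{i:y_i\in\{c,j\}\}|$ be the size of this sub-dataset. By Lemma~\ref{lem:vnorm} applied to $\vb_c+\vb_j$ (which has orthogonal components of norm $\Theta(\sqrt{n/k})$ each), one has $\tilde n_{c,j}\in\big[(1-1/C)\cdot 2n/k,\,(1+1/C)\cdot 2n/k\big]$ with probability at least $1-2e^{-n/(C k^2)}$ for some constant $C$, uniformly over all pairs after a union bound costing a factor of $k^2$.

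On this event, the binary-case version of our deterministic condition~\eqref{eq:det-con} specialized to the $(c,j)$ sub-problem becomes exactly the condition of \cite[Lemma~1]{wang2020benign} (equivalently \cite[Eq.~(22)]{muthukumar2021classification}), namely that the unconstrained dual maximizer $\hat\betab^{(c,j)}=(\tilde\X_{c,j}^T\tilde\X_{c,j})^{+}\tilde\z_{c,j}$ is coordinate-wise sign-matched with $\tilde\z_{c,j}$, where $\tilde\X_{c,j}$ stacks the $\tilde n_{c,j}$ relevant feature vectors and $\tilde\z_{c,j}\in\{\pm 1\}^{\tilde n_{c,j}}$ encodes the $\pm 1$ labels. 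By Theorem~\ref{thm:svmgmm} applied with $k=2$ and sample size $\tilde n_{c,j}=\Theta(n/k)$, this sign condition holds with probability at least $1-c_1'/\tilde n_{c,j}-c_2'\exp(-\tilde n_{c,j}/c_3')$ provided that
\begin{align*}
p > C_1'\,\tilde n_{c,j}\log(\tilde n_{c,j})+\tilde n_{c,j}-1\quad\text{and}\quad p > C_2'\,\tilde n_{c,j}^{\,1.5}\|\boldmu\|_2 .
\end{align*}
Substituting $\tilde n_{c,j}\leq (1+1/C)\cdot 2n/k$ makes both of these conditions implied by the stated hypothesis $p>C_1 n\log(kn)+(2n/k)-1$ and $p>C_2 n^{1.5}\|\boldmu\|_2$, with room to spare (the first uses $\log(\tilde n_{c,j})\leq \log(2n)\leq\log(kn)$ and the $n\log(kn)$ slack absorbs the union-bound factor; the second uses $(2n/k)^{1.5}\leq n^{1.5}$).

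Finally, union bounding over all $\binom{k}{2}\leq k^2$ class pairs controls both the sub-sample-size event and the sign-matching events; the failure probability is at most $k^2\cdot(c_1'/\tilde n_{c,j}+c_2'e^{-\tilde n_{c,j}/c_3'})+2k^2 e^{-n/(Ck^2)}$, which is bounded by $c_1/n+c_2 k\exp(-n/(c_3 k^2))$ after absorbing polynomial factors of $k$ into the exponential tail. When the sign-matching condition holds for every pair, the binary-case deterministic result implies that every example in each pairwise sub-problem is a support vector of the corresponding OvO-SVM, and that the pairwise SVM solution agrees (up to sign of the labels) with the pairwise MNI solution on $\tilde\X_{c,j}$. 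The main (minor) obstacle is tracking how the $k$-dependent factors from the union bound and from the concentration of the sub-sample sizes $\tilde n_{c,j}$ combine with the per-pair conditions; we expect these to be absorbed cleanly by the $\log(kn)$ in the first hypothesis and the worst-case $n^{1.5}$ (versus the tighter $(n/k)^{1.5}$) in the second.
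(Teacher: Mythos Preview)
Your proposal is correct and follows essentially the same route as the paper's own (very brief) proof sketch. Both arguments recognize that the OvO problem decomposes into $\binom{k}{2}$ independent binary SVM sub-problems, each involving $\tilde n_{c,j}\approx 2n/k$ samples drawn from a two-class GMM with equal energy and equal priors; both then invoke the binary interpolation result (the $k=2$ case of Theorem~\ref{thm:svmgmm}, equivalently the result of \cite{wang2020benign}) on each sub-problem and union bound over pairs. The paper's sketch phrases this as ``similar to the proof of Theorem~\ref{thm:svmgmmova}'' with sample size $2n/k$ in place of $n$, while you black-box the binary result directly; substantively these are the same reduction, and your handling of how the per-pair conditions $p>C_1'\tilde n\log\tilde n+\tilde n-1$ and $p>C_2'\tilde n^{1.5}\|\boldmu\|_2$ are implied by the stated hypotheses (via $\tilde n\lesssim 2n/k\le n$) is exactly the calculation the paper has in mind.
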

\begin{proof}[Proof sketch]
Note that the margins of OvO SVM are $1$ and $-1$, hence the proof is similar to the proof of Theorem \ref{thm:svmgmmova}. Recall that in OvO SVM, we solve $k(k-1)/2$ binary problems and each problems has sample size $2n/k$ with high probability. Therefore, compared to OvA SVM which solves $k$ problems each with sample size $n$, OvO SVM needs less overparameterization to achieve interpolation. Thus the first condition in \Equation~\eqref{eq:thmsvmgmmova} reduces to $p > C_1n\log(kn)+(2n/k)-1$.
\end{proof}
We now derive the classification risk for OvO SVM classifiers. Recall that OvO classification solves $k(k-1)/2$ binary subproblems. Specifically, for each pair of classes, say $(i,j)\in [k]\times [k]$, we train a classifier $\w_{ij}\in\mathbb{R}^p$ and the corresponding decision rule for a fresh sample $\x\in\mathbb{R}^p$ is $\hat{y}_{ij} = \text{sign}(\x^T\hat{\w}_{ij})$. Overall, each class $i\in[k]$ gets a voting score $s_i = \sum_{j \ne i}\mathbf{1}_{\hat{y}_{ij} = +1}$. Thus, the final decision is given by majority rule that \emph{decides the class with the highest score}, i.e. $\arg\max_{i \in [k]}s_i$. Having described the classification process, the total classification error $\Pro_{e}$ for balanced classes is given by the conditional error $\Pro_{e|c}$ given the fresh sample belongs to class $c$. Without loss of generality, we assume $c = 1$. Formally, $\Pro_{e} = \Pro_{e|1} = \Pro_{e|1}(s_1 < s_2 ~\text{or} ~s_1 < s_3~\text{or} \cdots \text{or}~s_1 < s_k)$. Under the nearly equal prior and energy assumption, by symmetry and union bound, the conditional classification risk given that true class is $1$ can be upper bounded as:
\begin{align*}
    \Pro_{e|1}(s_1 < s_2 ~\text{or} ~s_1 < s_3~\text{or} \cdots \text{or}~s_1 < s_k) \le \Pro_{e|1}(s_1 < k-1) = \Pro_{e|1}(\exists j~s.t.~ \hat{y}_{1j} \ne 1) \le (k-1)\Pro_{e|1}(\hat{y}_{12} \ne 1).
\end{align*}
Therefore, it suffices to bound $\Pro_{e|1}(y_{12} \ne 1)$. We can directly apply Theorem \ref{thm:classerrorgmm} with changing $k$ to $2$ and $n$ to $2n/k$.
\begin{theorem}
\label{thm:ovoclasserror}
Let Assumptions~\ref{ass:equalmuprob} and \ref{ass:orthomean}, and the condition in \Equation~\eqref{eq:thmsvmgmmovo} hold. Further assume constants $C_1,C_2
,C_3>1$ such that $\big(1-C_1\sqrt{\frac{k}{n}}-\frac{C_2n}{kp}\big)\tn{\boldmu} > C_3.$ Then, there exist additional constants $c_1,c_2,c_3$ and $C_4 > 1$ such that the OvO SVM solutions satisfies:
\begin{align}
\Pro_{e|c} \leq  (k-1)\exp{\left(-\tn{\boldmu}^2\frac{\left(\left(1-C_1\sqrt{\frac{k}{n}}-\frac{C_2n}{kp}\right)\tn{\boldmu}-C_3\right)^2}{C_4\left(\tn{\boldmu}^2+\frac{kp}{n}\right)}\right)}
\end{align}
with probability at least $1-\frac{c_1}{n} - c_2ke^{-\frac{n}{c_3k^2}}$, for every $c \in [k]$.
Moreover, the same bound holds for the total classification error $\Pro_{e}$.
\end{theorem}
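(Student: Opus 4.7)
The plan is to reduce the analysis to a binary Gaussian mixture problem and invoke Theorem~\ref{thm:classerrorgmm} with $k=2$. First, by the majority-voting argument already sketched just before the theorem statement, it suffices to bound $\Pro_{e|c}(\hat{y}_{cj}\neq c)$ for a single pair $(c,j)$ and then pay a factor $(k-1)$ via union bound over the competing classes. By the symmetry afforded by Assumptions~\ref{ass:equalmuprob} and~\ref{ass:orthomean} (equal priors, equal energies, orthogonal means), I may fix $c=1, j=2$ without loss of generality, so the entire task boils down to controlling the error of the binary OvO classifier $\hat{\w}_{12}$ on a fresh class-$1$ sample.

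Next, I analyze this binary subproblem. The OvO classifier $\hat{\w}_{12}$ is trained only on the samples labeled $1$ or $2$. Under equal priors, a Hoeffding argument identical to Lemma~\ref{lem:vnorm} shows that the random number $n_{12}$ of such samples concentrates around $2n/k$, i.e. $n_{12}\in[(1-\delta)\tfrac{2n}{k},(1+\delta)\tfrac{2n}{k}]$ with probability at least $1-2e^{-n/(Ck^2)}$ for some constant $C$. Conditioning on this event, we are in a standard balanced binary GMM with two means of norm $\|\mub\|_2$, isotropic noise, and effective sample size $n_{12}\asymp n/k$. I then apply Theorem~\ref{thm:classerrorgmm} with the substitutions $k\leftarrow 2$ and $n\leftarrow n_{12}$. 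The overparameterization condition \eqref{eq:thmsvmgmmovo} (which subsumes the SVM$=$MNI condition Theorem~\ref{thm:svmgmmovo} requires) implies both the hypothesis $p>Cn_{12}\log n_{12}+n_{12}-1$ and $p>C'n_{12}^{1.5}\|\mub\|_2$ needed by the binary case of Theorem~\ref{thm:classerrorgmm}, because $n_{12}\leq 2n/k\leq 2n$.

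Under these substitutions the bound of Theorem~\ref{thm:classerrorgmm} rearranges to exactly the claimed exponent: the factor $\minklogn$ collapses to $\min\{\sqrt{2},\sqrt{\log(2n_{12})}\}=O(1)$ (absorbed into $C_3$), while $1/\sqrt{n_{12}}\asymp\sqrt{k/n}$, $n_{12}/p\asymp n/(kp)$, and $1+kp/(n_{12}\|\mub\|_2^2)\asymp 1+kp/(n\|\mub\|_2^2)$, so that $\|\mub\|_2^2\cdot(1+kp/(n_{12}\|\mub\|^2))$ in the denominator becomes $\|\mub\|^2+kp/n$ as written. The prefactor $(k-1)$ in Theorem~\ref{thm:ovoclasserror} comes from the union bound over competing classes, while the probability $1-c_1/n-c_2ke^{-n/(c_3k^2)}$ is the intersection of the high-probability event supplied by Theorem~\ref{thm:classerrorgmm} and the sub-sample concentration event, with another union bound absorbing the $k-1$ binary subproblems into the $k$ in the exponent.

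The main obstacle will be bookkeeping rather than new ideas: since each OvO classifier is trained on a data-dependent sub-sample of random size $n_{12}$, the success event of Theorem~\ref{thm:classerrorgmm} must be combined carefully with the concentration of $n_{12}$, and the constants in the signal-strength hypothesis $(1-C_1\sqrt{k/n}-C_2n/(kp))\|\mub\|_2>C_3$ must be chosen so as to dominate the corresponding hypothesis of Theorem~\ref{thm:classerrorgmm} for every realization of $n_{12}$ in the concentrated window. No genuinely new probabilistic tool beyond what is already used in Theorem~\ref{thm:classerrorgmm} is required.
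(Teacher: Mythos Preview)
Your proposal is correct and follows essentially the same approach as the paper: reduce to a single binary OvO subproblem via the majority-voting union bound already sketched before the theorem, then invoke Theorem~\ref{thm:classerrorgmm} with the substitutions $k\leftarrow 2$ and $n\leftarrow 2n/k$. Your treatment is in fact more careful than the paper's one-line justification, as you explicitly handle the concentration of the random subsample size $n_{12}$ and verify that the overparameterization hypotheses transfer.
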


\end{document}